\definecolor{myglscolor}{RGB}{3, 36, 110}
\definecolor{mycitecolor}{RGB}{201, 38, 82}
\definecolor{blueshade}{RGB}{79,149,189}
\definecolor{redshade}{RGB}{207,76,115}
\newcolumntype{L}{>{\centering\arraybackslash}m{6cm}}
\newcolumntype{G}{>{\centering\arraybackslash}m{3cm}}
\newcolumntype{H}{>{\centering\arraybackslash}m{2.7cm}}
\newtheorem{definition}{Definition}
\newtheorem{lemma}{Lemma}
\newtheorem{theorem}{Theorem}
\newtheorem{prop}{Proposition}
\newtheorem{assume}{Assumption}
\newcommand{\y}[1]{\gls{#1}}
\newcommand{\rom}[1]{\uppercase\expandafter{\romannumeral #1\relax}}
\renewcommand{\paragraph}{\@startsection{paragraph}{4}{0ex}%
   {-3.25ex plus -1ex minus -0.2ex}%
   {1.5ex plus 0.2ex}%
   {\normalfont\normalsize\bfseries}}
\date{}
\definecolor{revCol1}{RGB}{0, 0, 0}
\definecolor{revCol2}{RGB}{0,0,0}
\renewenvironment{abstract}
 {\quotation\small\noindent\rule{\linewidth}{.5pt}\par\smallskip
  {\centering\bfseries\abstractname\par}\medskip}
 {\par\noindent\rule{\linewidth}{.5pt}\endquotation}
\crefname{paragraph}{paragraph}{paragraphs}
\Crefname{paragraph}{Paragraph}{Paragraphs}
\algrenewcommand\algorithmicrepeat{\textbf{Repeat}}
\algrenewcommand\algorithmicuntil{\textbf{Until}}
\algrenewcommand\algorithmicif{\textbf{If}}
\algrenewcommand\algorithmicthen{\textbf{Then}}
\algrenewcommand\algorithmicelse{\textbf{Else}}
\algrenewcommand\algorithmicfor{\textbf{For}}
\algrenewcommand\algorithmicwhile{\textbf{While}}
\algrenewcommand\algorithmicreturn{\textbf{Return}}
\title{Shape-from-Template with Generalised Camera}
\author[a,$\ast$]{Agniva Sengupta}
\author[a]{Stefan Zachow}
\affil[a]{\small \textit{Zuse Institute Berlin (ZIB), 7 Takustra$\beta$e, 14195, Berlin, Germany}}
\affil[$\ast$]{\small \textit{Corresponding author:} {\tt i.agniva+sengupta@gmail.com}}
\begin{document}

\maketitle

\begin{abstract}
This article presents a new method for non-rigidly registering a 3D shape to 2D keypoints observed by a constellation of multiple cameras. Non-rigid registration of a 3D shape to observed 2D keypoints, i.e., \y{sft}, has been widely studied using single images, but \y{sft} with information from multiple-cameras jointly opens new directions for extending the scope of known use-cases such as 3D shape registration in medical imaging and registration from hand-held cameras, to name a few. We represent such multi-camera setup with the generalised camera model; therefore any collection of perspective or orthographic cameras observing any deforming object can be registered. We propose multiple approaches for such \y{sft}: the \textit{first} approach where the corresponded keypoints lie on a direction vector from a known 3D point in space, the \textit{second} approach where the corresponded keypoints lie on a direction vector from an unknown 3D point in space but with known orientation w.r.t some local reference frame, and a \textit{third} approach where, apart from correspondences, the silhouette of the imaged object is also known. Together, these form the first set of solutions to the \y{sft} problem with generalised cameras. The key idea behind \y{sft} with generalised camera is the improved reconstruction accuracy from estimating deformed shape while utilising the additional information from the mutual constraints between multiple views of a deformed object. The correspondence-based approaches are solved with convex programming while the silhouette-based approach is an iterative refinement of the results from the convex solutions. We demonstrate the accuracy of our proposed methods on many synthetic and real data\footnote{Code in \href{https://git.zib.de/asengupta/sft-generalised}{git.zib.de/asengupta/sft-generalised}}.\\

\vspace{1mm}

\noindent \textbf{Keywords:} Monocular, Registration, Reconstruction, Deformable, Shape-from-Template, Generalised cameras
\end{abstract}

\glsreset{sft}
\section{Introduction}

The problem of non-rigidly registering the 3D template of an object to some observed 2D keypoints from cameras, also know as the \y{sft} problem, has been widely studied based on standard camera models such as perspective \cite{bartoli2015shape, fuentes2021texture,parashar2019local}, orthographic \cite{wang2014robust, zhu2015single} or weak-perspective \cite{chhatkuli2016stable}. But there exists many applications where a deforming object has been imaged with multiple cameras and it is desired that \y{sft} considers data from all such cameras simultaneously. \y{mcp} \cite{baker2020history} is a classical example of such a case involving a network of cameras observing an usually deformable object. More recently, medical applications such as multiplanar radiography \cite{melhem2016eos, ehlke20213d} has become increasingly popular and need \y{sft} solutions of a very similar nature as \y{mcp}, but with volumetric 3D models of anatomical structures. Given such a problem of \y{sft} with a network of multiple cameras, it is natural to consider the most generic projection model of cameras, i.e., the \textit{generalised camera} setup \cite{grossberg2005raxel}, which models each observed keypoint by some camera centre and a direction vector in Euclidean space, these direction vectors need not necessarily intersect or obey parallelism. Such a generalised camera is very well-suited for the multi-camera setup \cite{pless2003using} but has not yet been used in the context of deformable reconstruction methods such as \y{sft}, despite motivating use-cases such as \y{mcp}, \y{sft} from radiography, etc. In this article, we present solutions to this problem of \y{sft} with multiple cameras, modelled as the generalised camera, and demonstrate how this generalised camera based approach offers valuable advantage over traditional approaches of \y{sft}. 

\begin{figure}[t]
\centering
\begin{overpic}[width=0.44\textwidth]{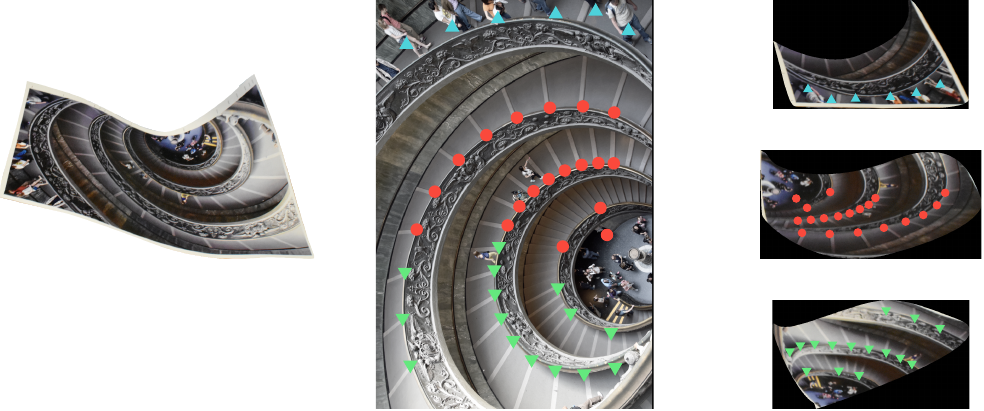}
\put(15,11){\small (a)}
\put(0,38){\footnotesize \parbox{2.0cm}{Object to be reconstructed \tiny (groundtruth)}}
\put(51,-4){\small (b)}
\put(71,32){\small (c)}
\put(71.1,17){\small (d)}
\put(87,-4){\small (e)}
\put(34.5,1){\rotatebox{90}{\footnotesize Undeformed template}}
\put(44,43.5){\small \textbf{Input-1}}
\put(82.5,43.1){\scriptsize Image-1}
\put(82.5,27.7){\scriptsize Image-2}
\put(82.5,12){\scriptsize Image-3}
\end{overpic} 
\begin{overpic}[width=0.54\textwidth]{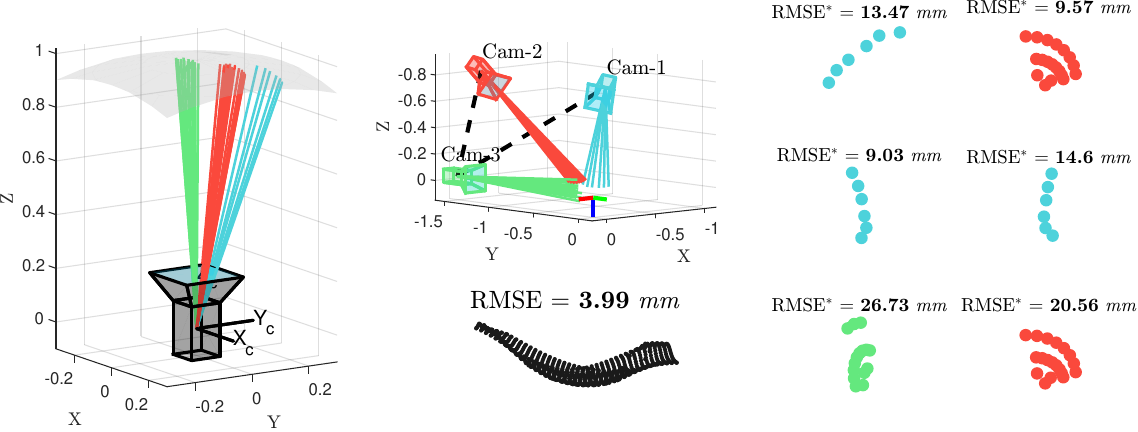}
\put(10,35){\small \textbf{Input-2}}
\put(6.5,18){\tiny \parbox{0.95cm}{3~sets of direction vectors}}
\put(22.6,6.7){\footnotesize $\mathbf{C}$}
\put(54,17){\footnotesize $\mathbb{O}$}
\put(41,35){\small \textbf{Output-1}}
\put(41,1){\small \textbf{Output-2}}
\put(55,24.3){\tiny \parbox{0.95cm}{Relative camera pose}}
\put(14.5,-3){\small (f)}
\put(47,8){\tiny Reconstruction}
\put(32,16){\small (g)}
\put(48.4,-3){\small (h)}
\put(83,-3){\small (i)}
\put(69,27.5){\tiny \cite{bartoli2012template} + \cite{brunet2014monocular}}
\put(90,27.5){\tiny \cite{chhatkuli2014stable}}
\put(69,14){\tiny \cite{chhatkuli2014stable} + \cite{brunet2014monocular}}
\put(90,14){\tiny \cite{brunet2011monocular}}
\put(72.5,1){\tiny \cite{parashar2019local}}
\put(89.5,1){\tiny \cite{shetab2024robusft}}
\put(84,38) {\line(0,-1){38}}
\put(67,26) {\line(1,0){33}}
\put(67,12.5) {\line(1,0){33}}
\put(100.5,30){\rotatebox{-90}{\footnotesize Baseline methods}}
\vspace{2mm}
\end{overpic} 
\caption{An example demonstrating the \y{sft} problem with generalised cameras and an overview of our proposed solution: (a) shows a deformed object that we seek to reconstruct using some template keypoints, (b) shows the undeformed template that has correspondences with three separately captured images of this deformed object, (c), (d), and (e) shows these three images with no overlapping/common keypoints but with correspondences to the undeformed template (colour-coded) and, importantly, all three images have been photoed from three widely separated and unknown camera-poses in space, (f) shows the unit vectors in $\mathbb{S}^2 \subset \mathbb{R}^3$ representing the orientation of each keypoint across the three images, unified in one virtual camera reference frame (colour coded), but the relative pose of each camera w.r.t each other and the shape to be reconstructed are unknown, (g) shows the relative pose of each camera recovered by our proposed method, (h) shows the reconstruction of the complete surface by our proposed method which is significantly more accurate than the ones obtained from state-of-the-art methods, and (i) shows the most accurately reconstructed subset of correspondences (from the three images) for six of the most accurate state-of-the-art methods, the reported error metrics are obtained by combining three separate reconstructions; the details of this experiment are expanded upon in \cref{sec_def_gsftp}}
\label{fig_teaser}
\end{figure}

The generalised camera system has been well-studied in the context of rigid objects including minimal solvers such as a 2-point \cite{hee2013motion} and a 6-point \cite{henrikstewenius2005solutions} algorithm for deriving essential matrix and relative pose of generalised camera systems respectively. But the marriage of generalised camera with deformable objects lead to some challenging but appealing possibilities which are unlike their rigid equivalent. Indeed, unlike rigid scenes, epipolar constraints are not definable on non-rigid scenes. Thus, any hopes of deriving anything similar to a $n$-point minimal solution for relative motion or essential matrix (e.g.: in \cite{hee2013motion}) in non-rigid scene are futile. However, thankfully, we show that there still remains the possibility of \y{sft} with generalised cameras to achieve two broadly defined objectives: 1) when a deformed object has been viewed by multiple cameras with known extrinsics, we want to reconstruct the shape of this deformed object, and 2) when a deformed object has been viewed by multiple cameras with unknown extrinsics, we want to reconstruct the shape of this deformed object and estimate the extrinsics of all the cameras. To achieve these two objectives, we define a problem setup involving an undeformed known template shape that has correspondences to multiple non-overlapping patches of an unknown deformed object, each of these patches being observed by separate cameras. Such a problem setup with non-overlapping observed patches leads to the case of reconstruction from non-stereo corresponding keypoints, also termed as intra-camera correspondences by some authors, which is indeed a challenging setup. Contrast this problem setup of \y{sft} with multiple cameras with the existing literature on \y{sft} \cite{bartoli2015shape, fuentes2021texture,parashar2019local, chhatkuli2016stable, shetab2024robusft, kairanda2022f} where the reconstruction problem is solved in a single camera's local reference frame with no provision for solving \y{sft} from multiple non-overlapping observations jointly; our generalised \y{sft} problem thus offers significant broadening of the scope of \y{sft} and expands its applicability to many new use-cases. Specifically, we show that in the case of known extrinsics of the multiple cameras involved: 1) solving \y{sft} with generalised camera is identical to solving \y{sft} with a single camera (\cref{sol_ns}), and 2) thus there are no minimal requirements of correspondences in each camera, i.e., even one correspondence observed from one of the cameras is reconstructible accurately as long as the constellation of cameras meet some trivial requirements, allowing us to handle combination of perspective and orthographic projections seamlessly. Furthermore, we show that in the case of unknown extrinsics of the multiple cameras involved: 1) \y{sft} with generalised cameras, though ill-posed with unknown extrinsics, is solvable and, using a rigidity prior, the unknown camera extrinsics can also be estimated (\cref{sec_nsc_sol}), 2) the reconstruction accuracy of solving \y{sft} from multiple cameras jointly with our proposed approach is higher than what is achievable by solving the \y{sft} problem separately for each camera (or any trivial combination of these separate results). Importantly, we solve both the problems of \y{sft} with generalised cameras of known and unknown extrinsics using a convex programming approach, leading to the first convex solution of the generalised-\y{sft} problem. An example of the generalised-\y{sft} problem with a summary of our proposed solution is highlighted in \cref{fig_teaser}.

{\color{revCol1} \vspace{2mm}
\noindent \textbf{Motivation.}~In multi-camera setups, it is common to encounter real-world cases where no two cameras share correspondences, yet inter-image rigidity of the observed shape remains exploitable. While classical \y{mcp} systems using industrial rigs are highly accurate and mature, the rise of handheld cameras and widespread amateur practices in video compositing and rotoscoping necessitates extending these capabilities to uncalibrated, unconstrained setups \cite{van2024comparison, mehta2020xnect} - e.g., realistic volumetric rendering of garments from sparse images. Critical applications include medical scenarios such as biplanar radiography \cite{laporte2003biplanar, baka_2d3d_2011}, where clinicians align generic 3D templates to fluoroscopic images captured from multiple views. These images often lack reliable inter-view correspondences and rely on sparse, manually annotated matches. Moreover, extrinsic calibration in such setups is costly and error-prone \cite{zhou2020real,jianu2024guide3d}, making methods that work without it especially valuable. A broader example involves deforming volumetric objects where self-occlusion prevents any single camera from capturing the full surface detail. Although capturing multiple overlapping images is a common workaround, it is often impractical - due to costs (e.g., amateur photography) or other constraints (e.g., reducing radiation exposure from X-rays) - or both. Hence, accurately registering a 3D template to such sparse, non-stereo image sets is a highly desirable capability.}

\vspace{2mm}
\noindent \textbf{Distinction from similar problems.}~Given these above mentioned motivation for deformably registering a 3D template to 2D keypoints in images across multiple views, it is natural to draw comparisons with many parallel methods in computer vision that address problems which are highly similar, at least to verify if a ready solution already exists. Indeed, registering a 3D template to 2D keypoints without deformation falls under the purview of \y{pnp} methods, registering a 3D template to 2D keypoints from one image with deformation gives the \y{sft} methods, while recovering the shape of deformable objects with correspondences but without template \textcolor{revCol2}{and from multiple images with deformations of the object shape inbetween every image} forms the \y{nrsfm} problem and deformably combining multiple shapes is the \y{gpa} problem, all of which have their own respective solution methods but, nonetheless, are very sharply distinguishable from \y{sft} with generalised cameras in the sense that none of these existing problem statements reconstruct a deformable object with a template but across multiple observations with no overlap. We offer a summary of the comparison between all such problem statements in \cref{tab_comp}. \textcolor{revCol1}{Importantly, \y{sft} with generalised cameras cannot be directly compared to \y{pnp}, \y{gpa}, or \y{nrsfm} due to the differences mentioned above, however, \y{sft} in the traditional format remain a special case of \y{sft} with generalised cameras and are thus directly comparable in the case of images acquired via just a single camera}.

\vspace{2mm}
\noindent \textbf{Deformation model.}~
{\color{revCol1} To solve \y{sft} with a generalised camera under potentially strong deformations between the template and the observed object, it is natural to seek methods that leverage inter-camera rigidity while permitting the template to deform significantly from its undeformed state. We adopt learned priors to constrain deformation within a shape space, enforcing consistency of shape across all camera views. This approach, well-established in the classical single-view case~\citep{wang2014robust, zhu2015single, zhou20153d}, lies between two extremes: physically and mathematically grounded models~\citep{bartoli2015shape, casillas2019equiareal, parashar2017isometric, parashar2019local}, which impose structural or differential constraints but break down under non-physical deformations (e.g., free-form or articulated motion), and deep learning methods~\citep{fuentes2021texture, fuentes2022deep, kairanda2022f, stotko2024physics}, which rely on large training datasets and are unsuitable when convexity is essential. In applied settings, deformation models such as isometry, conformality, equiareality, and volume preservation are indeed particularly attractive due to their physical interpretability. However, these constraints are fundamentally incompatible with convex joint estimation of shape and pose. To date, no solution exists that reconciles these properties within a convex framework. We expand upon this challenge in \cref{prob_setup} and \cref{app_challenge_phy}.}

\begin{table}[t]
\begin{adjustbox}{width=\textwidth,center} 
\begin{tabular}{rcLccGGH}
\toprule 
\hline
\multicolumn{1}{l}{}   & \multicolumn{2}{c}{\textbf{What is estimated?}}   & \multirow{2}{*}{\textbf{Shape model}} & \multicolumn{3}{c}{\textbf{Input requirements}}              & \multirow{2}{=}{\centering \textbf{Reconstruction of non-overlapping object-patches}} \\ 
\cline{2-3}  \cline{5-7}  \\    & Shape & Pose   &  & No. of images & Image to image matching & Template to image matching &         \\    \cline{2-8} \\
\multicolumn{1}{r}{\textbf{P\textit{n}P}}                  & \ding{55}                & Camera-to-shape transformation ($\mathbb{SE}(3)$)  & Rigid     & 1 &  \ding{55} & \checkmark  &   N/A \\
\\
\multicolumn{1}{r}{\textbf{NRS\textit{f}M}}                & \checkmark                 & Point depths in camera coordinates and partial pose for the orthographic case \citep{gotardo2011kernel,hamsici2012learning,dai2014simple,kumar2022organic}${}^{\mathparagraph}$            & Deformable                             & $P > 1$\textcolor{revCol2}{${}^{\dagger\dagger}$}                      & \checkmark &  N/A  &  \ding{55}  \\
\\
\multicolumn{1}{r}{\textbf{S\textit{f}T}}                  & \checkmark                 & Point depths in camera coordinates and partial pose for orthographic cases \cite{wang2014robust, zhu2015single, zhou20153d}${}^{\mathparagraph}$                           & Deformable    & 1   &  \ding{55}    & \checkmark  & \ding{55}$^{\ddagger}$  \\
\\
\multicolumn{1}{r}{\textbf{GPA}}                  & \checkmark$^{\dagger}$                 & Non-rigid transformation between shapes in $\mathbb{R}^n$ for input in $\mathbb{R}^n$ & Deformable    & $P > 1$   &  \checkmark    & N/A  & \ding{55}  \\
\\
\multicolumn{1}{r}{\textbf{Generalised-\y{sft}}}              & \checkmark                 & Point depths in camera coordinates, camera-to-shape transformation ($\mathbb{SE}(3)$) and relative motion of cameras for any combination of orthographic and perspective projections & Deformable   & $P \geq 1$  &  \ding{55}   & \checkmark  & \checkmark   \\ \hline

\bottomrule
\end{tabular}
\end{adjustbox}
\caption{Comparison of the proposed \y{sft} with generalised camera problem with similar problems in computer vision}
\label{tab_comp}
\begin{itemize}
\item[\tiny $\dagger$] \tiny A mean shape is estimated
\item[\tiny \textcolor{revCol2}{$\dagger\dagger$}] \tiny \textcolor{revCol2}{Typically, $P \gg 1$}
\item[\tiny $\ddagger$] \tiny Although repeated \y{sft} across patches is an option but it does not lead to useful results (details in \cref{sec_nec_gen})
\item[\tiny $\mathparagraph$] Pose from orthographic projection recovers 5-\y{dof} pose, translation along $Z$-axis is irrecoverable; no method recovers pose from perspective projection 
\end{itemize}
\end{table}

\vspace{2mm}
\noindent \textbf{Additional input.}~In multi-camera setup, there could exist cases where the number of template-to-image correspondences are simply insufficient for \y{sft}, e.g.: in objects with textures which are hard to match, and if the object has deformed significantly from its template shape. In such a case of scant correspondence, it is usual to seek additional information to aid reconstruction. We utilise silhouettes, which are relatively easy to extract and has many extraction methods with modern approaches \cite{ascenso2020review}, to improve the reconstruction accuracy when correspondences are insufficient. Modifying the convex solution of purely correspondence-based \y{sft}, we propose an iterative, step-wise globally optimal refinement using silhouettes, that succeeds to reconstruct strong deformations from very few correspondences and silhouettes.

\vspace{2mm}
\noindent \textbf{Summary of our contributions}. We summarise our contributions below:
\begin{itemize}
    \item Conceptually, we introduce the generalised camera model to \y{sft} and establish the need for studying \y{sft} with generalised cameras
    \item We propose a convex solution for the \y{sft} problem with multiple cameras when the pose of each camera are known, which we term the \y{ns} problem
    \item We propose a convex solution for the \y{sft} problem and, with the aid of a local rigidity prior, a convex solution for the camera-pose estimation problem (jointly with \y{sft}) with multiple cameras when the pose of all such cameras are unknown, which we term the \y{nsc} problem    
    \item We demonstrate a method to utilise silhouettes to boost the accuracy of \y{ns}/\y{nsc} in the presence of few correspondences, in iterative steps, with guaranteed convergence to some local minima, each of these steps being globally optimal in their own domain
\end{itemize}

\noindent We validate our proposed solution on many synthetic and real data and demonstrate that our method provides the first accurate solution to the \y{ns} and \y{nsc} problem and improves upon the accuracy of \y{ns}/\y{nsc} solution when silhouette information is available.

\section{Previous work}
We discuss the state-of-the-art for correspondence-based and silhouette-based \y{sft} approaches.

\subsection{Correspondence based S\textit{f}T}
\y{sft} is most commonly solved with point-correspondences between a template and a 2D image, therefore we begin our literature survey with point-correspondence based \y{sft}. We first group the existing methods based on their underlying approach and then discuss their applications and the unsolved challenges in such applications.

\subsubsection{Existing approaches for correspondence based S\textit{f}T}
Methodologically, the existing approaches are grouped into three categories: the \textit{first} are the `traditional' ones that uses, invariably, some variant of classical optimisation to solve \y{sft}, the \textit{second} are the learning-based approaches and the \textit{third} are the methods using the generalised camera model which are, technically, not \y{sft} methods, since they are confined to rigid objects, but are nonetheless strongly relevant in the context of our pursuit for \y{sft} with generalised cameras.

\vspace{2mm}
\noindent \textbf{Traditional approaches}. With the advent of computer vision in \y{mcp} techniques, the initial ideas for \y{sft} were driven by \y{ssm} to model the deformation of templates and use classical optimisation to fit these templates to the observed data \cite{zia2013detailed, wang2014robust, zhu2015single, zhou20153d}. Later, many new methods were developed with the assumption that the imaged object tends to be a surface, hence surface-based differential constraints such as isometry, conformity or equiareality could be used to drive the \y{sft} \cite{bartoli2015shape, chhatkuli2016stable, fuentes2021texture, casillas2019equiareal, gallardo2020shape, parashar2019local, parashar2017isometric}.  Such surface-based approaches allowed \y{sft} on larger deformations as long as they obeyed their respective physical model, but were inapplicable to many deformations that breaks physics, e.g.: strong shearing, stretching, torsion, etc., and were also inapplicable to volumetric or articulated objects that cannot be represented as a surface. There exists another group of methods that model the structural mechanics of deformations allowing for reconstruction or tracking of volumetric objects \cite{malti2015linear, kairanda2022f, sengupta2021visual}, but the true structural mechanics of deforming objects tend to be highly non-linear and complicated, resulting in spurious local-minimas.

\vspace{2mm}
\noindent \textbf{Learning-based approaches}. Alternatively, deep learning methods for \y{sft} have rapidly expanded in recent years \cite{orumi2019unsupervised, fuentes2021texture, fuentes2022deep, tretschk2023state}. These methods perform well on datasets with extensive training samples. However, they still face significant challenges with domain gaps and, more crucially, are not suitable for arbitrary data with limited training samples, hindering their broader adoption.

\vspace{2mm}
\noindent \textbf{Generalised camera in \y{sft}}. While rigid methods for object and camera pose recovery have widely contributed to the tracking and localisation problem with generalised cameras \cite{grossberg2005raxel, pless2003using}, \y{sft} with generalised cameras are yet to be addressed. The rigid methods \cite{li2008linear} for generalised cameras have diverse modalities including methods proposing 2-point minimal solutions for deriving the essential matrix \cite{hee2013motion},  6-point minimal solver for the relative-pose \cite{henrikstewenius2005solutions}, calibration of generalised cameras with three intersecting reference planes \cite{nishimura2015linear}, addressing the pose synchronisation problem \cite{ibuki2011visual, kim2009motion},  pose from known relative rotation angles \cite{martyushev2020efficient}, methods specialised for near-central camera systems \cite{choi2023noniterative}, and in visual \y{slam} systems \cite{kaveti2023design}. It is unsurprisingly essential for such methods using generalised cameras to be adapted to a non-rigid world. This is especially necessary given that many of the applications of \y{sft}, as we discuss hereafter, involves the generalised camera. The conventional approach for dealing with such problems of non-rigid registration in a multi-camera setup (which leads to the generalised camera) is to solve the problem with a non-linear or learning-based approach \cite{furukawa2009dense, gall2009motion, maheshwari2023transfer4d}, which leaves out the cases with sparse correspondences and/or cases with no or few training samples.

\subsubsection{Applications of correspondence based S\textit{f}T}
A thorough review of the applications of \y{sft} is, unfortunately, beyond the scope of our article (we refer the interested readers to some highly detailed domain specific surveys \cite{tian2023recovering, morales2021survey} or a survey of general non-rigid reconstruction \cite{tretschk2023state} methods and applications), but we certainly want to highlight some specific applications that motivates our study of \y{sft} with the generalised camera.

\glsreset{mcp}\vspace{2mm}
\noindent \textbf{\y{mcp}}. While modern industrial MoCap techniques \cite{plantard2017validation, bortolini2020motion, mehrizi2018computer} uses multi-modal sensing involving camera based sensors, \y{imu} and other sensors (e.g.: time-of-flight cameras), the use of monocular cameras are still relevant and an active area of research \cite{cai2019exploiting, pavllo20193d, cheng2019occlusion, wang2020motion, liu2020attention}. The target application of many such \y{mcp} systems involve fitting a template to the captured motion, i.e., an \y{sft} problem, e.g.: computerised rotoscoping \cite{baker2020history, li2016roto++, sturman1994brief}, animated avatars and faces \cite{hartbrich2023eye, grewe2021statistical}, etc. Owing to the widespread proliferation of consumer grade handheld cameras, novel applications such as \y{mcp} with mobile phones are emerging. While almost all of these methods are learning based due to widespread improvement of human body models and the addition of many large scale dataset \cite{wang2021deep}, there still remains the necessity of improved methods for capturing motion of entities without large scale training data, e.g.: \y{mcp} of animals \cite{naik2020animals, zhang2023animaltrack, yao2023openmonkeychallenge} and low-cost, consumer-grade \y{mcp} of arbitrary deformable and/or articulated shapes in sports \cite{suo2024motion} and entertainment \cite{baker2020history} applications without an industrial \y{mcp} rig.

\vspace{2mm}
\noindent \textbf{Medical imaging with colour cameras}. Use of colour cameras for \y{sft} in medical imaging involves applications such as endoscopy \cite{sengupta2024totem, zhang2020template, sengupta2024specular}, laparoscopy \cite{modrzejewski2019vivo, prokopetc2015automatic}, image-guided surgical assistance \citep{barcali2022augmented, fida2018augmented}, maxillo-facial analysis \cite{wang2019practical, park2024automatic}, dermal visualisations \cite{waked2022visualizing, francese2021mobile}, etc. \y{sft} on such medical images enable \y{ar} and visualisation on human internal organs, which are invaluable tools for improving outcome of surgical and invasive diagnosis and procedures. While many modern approaches for \y{sft} on medical images are learning based \cite{venkatesan2021virtual} in-keeping with the modern trend, some \y{sft} use-cases suffer from acute lack of training data (e.g.: endoscopy), leading to difficulties for learning-based methods. 

\vspace{2mm}
\noindent \textbf{X-rays}. \y{sft} from a single X-ray image has been well studied (e.g.: \cite{zhang2021unsupervised}). However, many recent \y{sft} and 3D reconstruction methodologies from X-rays are based on a multi-view setup, i.e., biplanar X-rays \citep{gajny2019quasi, bousigues20233d, girinon2020quasi} where the assumption is the presence of some stereo-corresponding pairs between two orthogonal views and a generic template of the observed bone-structure. However, such correspondences are not guaranteed to be adequate and may not be available in many cases. Moreover, most of these approaches estimate the shape with the assumption of rigidity (e.g.: \citep{gajny2019quasi, zhang2024rigorous}), but many approaches also estimate the shape irrespective of deformations (e.g.: \citep{mitton20003d, nerot20153d}). There has been approaches where such \y{sft} approaches utilised the shape-contour/silhouette information \citep{boussaid20113d, verstraete2017accuracy}. However, all such approaches are non-convex and are prone to being stuck in local minima. Recent approaches for \y{sft} from biplanar X-rays have seen a boom in learning based approaches \citep{an2021robust, shakya2023benchmarking, van2022deep, gao20233dsrnet, kyung2023perspective, chen2024automatic, cheng2023sdct}. But most of the available literature shows the importance of extensive, highly domain-specific training required for such reconstructions, e.g.: chest \citep{cheng2023sdct, kyung2023perspective}, spine \citep{chen2024automatic, gao20233dsrnet}, femur \citep{van2022deep}, and cranium \citep{an2021robust}; implying that domain adaptation remains problematic, e.g.: as highlighted in \citep{shakya2023benchmarking}.

\subsection{Silhouettes in S\textit{f}T}

Silhouette-based \y{sft} has received much attention, primarily from the application of fitting human body model to silhouettes \cite{cheung2005shape, ilic2007implicit, mugaludi2021aligning}. Silhouette-based \y{sft} with a 3D template, which is not necessarily restricted to human body models, follow two major paradigms: the \textit{first} paradigm initialises the pose of the template with some heuristic alignment and then uses some non-convex optimisation to iteratively refine this pose \cite{cheung2005shape, ilic2007implicit,da2010iterative, kong2017using, sunkel2007silhouette, saito2014model}, even till recent times \citep{perez2024alignment}, the most prevalent method for this initial alignment remains a \y{pca} \citep{abdi2010principal} based approach, which can suffer from many inaccuracies depending on viewing directions; the \textit{second} paradigm uses deep-learning based priors, which are very specific to their particular domains \cite{dibra2016hs, mugaludi2021aligning, liu2022concise}. \cite{prasad2005fast, menudetmodel} are examples of convex methods for silhouette-based \y{sft}, but only does so with the help of pre-defined, matchable textures. In the absence of correspondences or some alignment heuristics, the silhouette-based \y{sft} problem remains understandably unsolvable up to global optimality; the utility of silhouettes are therefore primarily to boost the accuracy of some correspondence based method.

\section{Background}
We begin by explaining the background of the \y{sft} and camera-pose estimation problem that we study in this article in three steps: 1) we first recap the generalised camera model, 2) we define the \y{sft} and camera-pose estimation problems with the generalised camera, 3) we highlight why the generalised camera based \y{sft} is an important problem to solve and how the existing solutions are insufficient in many applications. The idea of representing multiple cameras as one generalised camera is due to \citep{pless2003using} and has been widely used in the rigid context. We begin by recalling this generalised camera model and then expound why we believe it is necessary to study the \y{sft} problem with generalised cameras. But first, we need to summarize some basic notations used throughout the article.

\vspace{2mm}
\noindent \textbf{Notations.} For any matrix $\mathbf{X} \in \mathbb{R}^{m \times n}$, we use $\mathrm{vec}(\mathbf{X}) \in \mathbb{R}^{mn}$ to denote the vectorisation of the matrix $\mathbf{X}$ by stacking along the row-major order. $\mathbf{1}_{m \times n}$ denotes a $[m \times n]$ dimensional matrix of all ones. Similarly, $\mathbf{0}_{m \times n}$ denotes a $[m \times n]$ dimensional matrix of all zeros. $\mathbbm{1}_m$ denotes an $[m \times m]$ unit matrix. $\mathcal{S}_+^m$ denotes the manifold of $[m \times m]$ Positive Semi-Definite (PSD) matrices. For any vector $\mathbf{v}_a$, the term $\mathbf{v}_{a,i}$ denotes its $i$-th element; for the matrix $\mathbf{X}_b$, the term $\mathbf{X}_{b,i,j}$ denotes the element at $i$-th row and $j$-th column, $\mathbf{X}_{b,i,[j:j']}, ~ j < j'$ denotes the vector of elements from $\mathbf{X}_{b,i,j}$ to $\mathbf{X}_{b,i,j'}$ and $\mathbf{X}_{b,[i:i'],[j:j']}, ~ i < i',j < j'$ denotes the sub-matrix of $\mathbf{X}_b$ with rows indexed from $i$ to $i'$ and columns indexed from $j$ to $j'$.

\subsection{The generalised camera}\label{intro_gen}
The generalised camera model abstracts away the path traced by light rays for reaching the optic sensor of the camera. Indeed, the geometry of the path of light rays are multifarious and varies between projection models and optical configuration of the camera. Generalised cameras, on the other hand, models the optics as a collection of cones emanating from some points in space. Classically, these cones are parametrised by their direction $(\phi, \theta)$, their aspect ratio $(f_a, f_b)$, and the direction of their eccentricity $\Upsilon$, originating from some point $\mathbf{C} \in \mathbb{R}^3$. But for geometric analysis, we can ignore this conic structure (since it relates to photometric image formation) and focus only on the ray that the pixel samples. A convenient way to denote these rays in space is with Pl\"{u}cker vectors \citep{plucker1865xvii}. The Pl\"{u}cker vectors for a line are defined by the tuple $(\mathfrak{d} \in \mathbb{S}^2, \mathfrak{d}')$ of its direction and moment vector. For any point $\mathbf{P} \in \mathbb{R}^3$ lying on this line, $\mathfrak{d}' = \mathfrak{d} \times \mathbf{P}$ and all points on this line can be denoted by $\{(\mathfrak{d} \times \mathfrak{d}') + \beta \mathfrak{d}, ~\forall \beta \in \mathbb{R}\}$. Importantly, for any point $\mathbf{P}' \in \mathbb{R}^3$ that does not lie on this line, the orthogonal distance of $\mathbf{P}'$ to the line is:
\begin{equation}
     d(\mathbf{P}', \mathfrak{d}) \propto \mathfrak{d} \times \mathbf{P}'.
\end{equation}
\noindent If such a 3D point $\mathbf{P}$ projects to a 2D point with normalised, homogeneous coordinates $\hat{\mathbf{p}}$ in some camera with its optic centre at $\mathbf{C}$, we denote by $\Pi(\mathbf{P})$, the projection of $\mathbf{P}$. The resulting reprojection error, if any, is $\propto \mathfrak{d} \times (\mathbf{P} - \mathbf{C})$ agnostic of the actual projection model of the camera, as long as $\mathfrak{d}$ is an unit vector along the \y{sl} of $\hat{\mathbf{p}}$, i.e., $\mathfrak{d} = \frac{\hat{\mathbf{p}}}{\|\hat{\mathbf{p}}\|}$.

\vspace{2mm}
\noindent \textbf{Silhouettes with generalised cameras}. When it comes to silhouettes viewed from a camera, known models for silhouette formation from a given 3D object demand some conventional projection model; silhouettes from generalised camera are not known in the literature. To deal with silhouettes in the generalised camera setup, we introduce the notion of `viewpoint'. A viewpoint is a position in space from which a bunch of keypoints have been observed, we impose the mere requirement that all keypoints from any given viewpoint must intersect a point in $\mathbb{R}^3$ which are either identical or are at known position w.r.t each other, e.g., the origin of \y{sl}s in a perspective camera. Given $P$ viewpoints, each viewpoint is allowed to have $n_x$ keypoints for $x \in [1,P]$ and each $n_x$ keypoints for the $x$-th viewpoint must be identifiable by a direction vector and the origin of the $x$-th viewpoint. Therefore, the classical generalised camera setup \citep{pless2003using} is realised with $n_x = 1, ~\forall x \in [1,P]$. Note that a viewpoint need not necessarily induce perspective projection (which only happens for intersection at a common point), and neither does a perspective camera necessarily imply a viewpoint (e.g.: for $n_x = 1$), hence the need for the distinction between `viewpoints' and cameras. For orthographic projection, the viewpoint origin is set to the origin of one canonical keypoint, the origin of all other \y{sl}s are at known translation from this canonical keypoint origin. Thus, if silhouettes are needed, an obvious requirement is $n_x \gg 1$, and one can define some $n'_x \leq n_x$ keypoints which are tangent to a 3D object and forms the silhouette at the $x$-th viewpoint.

\subsection{\y{sft} with generalised camera}
\y{sft} is formally defined as the problem of reconstructing the 3D shape of a deformable object from correspondences between a 3D undeformed template and 2D keypoints on an image; the `template' being a known 3D pointcloud. An extension of such \y{sft} setup is the case with generalised cameras, where each keypoint from the generalised camera have correspondences to the 3D template. Therefore, the problem which we are interested to study in this article is formally defined as follows:
\glsreset{nsc}
\begin{definition}{\y{nsc}.}\label{defn_nsc}
    Given observation of a static scene across $P$ different viewpoints with a generalised camera consisting of
    $P$ set of $n_x, ~\forall x \in [1,P]$ correspondences between a non-rigid 3D template $\mathbf{V} \in \mathbb{R}^{3 \times N}$ and $\mathbf{p}_x \in \mathbb{R}^{2 \times n_x}$ 2D normalised keypoints on the $x$-th viewpoint, \y{nsc} is the problem of determining the deformed shape and pose of $\mathbf{V}$ and the pose of all $\{n_x\}$ \y{sl}s expressed as a direction vector and a point on the \y{sl}. 
\end{definition}

\noindent  Our definition of \y{nsc} does not make a distinction between the special cases of generalised projections, e.g.: the `locally-central' case where a group of \y{sl}s are considered fixed in some local coordinate system, or the `axial' case where all projections are assumed to intersect a common line or axis. Instead, our definition of \y{nsc} considers the most generic setup of generalised camera, i.e., no special constraints on the \y{sl}. It must however be noted that there exists applications where the pose of the $P$ camera are known apriori, hence we define an alternative problem formulation that strives to recover only the deformed shape of the template $\mathbf{V}$, given by:

\glsreset{ns}
\begin{definition}{\y{ns}.}\label{defn_ns}
    Given observation of a static scene across $P$ different viewpoints with a generalised camera consisting of
    $P$ set of $n_x, ~\forall x \in [1,P]$ correspondences between a non-rigid 3D template $\mathbf{V} \in \mathbb{R}^{3 \times N}$ and $\mathbf{p}_x \in \mathbb{R}^{2 \times n_x}$ 2D normalised keypoints on the $x$-th viewpoint and with known pose of all $\{n_x\}$ \y{sl}s, expressed as a direction vector and a point on the \y{sl}, \y{ns} is the problem of determining the deformed shape and pose of $\mathbf{V}$. 
\end{definition}
\noindent The classical \y{sft} problem with one-view and either perspective or orthogonal model is contained as a special-case of \y{ns} with $P=1$; in such a case, we assume all \y{sl}s to pass through world origin (perspective) or the normalised keypoints (orthographic), following usual convention of \y{sft} methods, and solve only for the deformed shape of $\mathbf{V}$. While \y{ns} can be considered a somewhat simplified variant of \y{nsc} due to lesser unknown parameters, there are some details of \y{nsc} which are worth taking a closer look to motivate the study of \y{sft} under generalised camera. 

\subsection{Advantages of \y{sft} with generalised camera}\label{sec_nec_gen}
A trivial solution for \y{nsc} could be to solve \y{sft} in each of the $P$ viewpoints and rigidly align the reconstructed shape aposteriori, giving us the camera pose. However, such a simplistic interpretation can be detrimental: 1) as the number of correspondences begin to sparsify, or 2) if the global deformation of the object consists of multiple, disjoint local deformations and the correspondences are also restricted to smaller patches. To understand the implications of sparsity and spatial extent of correspondences on observability of global deformation, we begin with a very intuitive observation about the possible minimal requirements for classical \y{sft}:
\begin{lemma}\label{lemma_min_4}
    To accurately perform \y{sft} with correspondences between a 3D template and image with a perspective camera, there exists a number for \y{mnc}, denoted by $\psi \geq 4$ as long as the deformation of the 3D template has at least one \y{dof}.
\end{lemma}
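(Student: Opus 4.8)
The plan is to prove the bound by a degrees-of-freedom count, showing that with three or fewer correspondences the perspective \y{sft} problem is underdetermined, so that an accurate (finitely-determined) reconstruction is impossible, forcing the \y{mnc} $\psi$ to be at least $4$. First I would enumerate the unknowns. The quantity to be recovered is the configuration of the deformed template expressed in the camera frame, which splits into the rigid placement of the object relative to the camera --- contributing $6$ \y{dof} through $\mathbb{SE}(3)$, none of which can be gauged away since the perspective camera fixes both the reference frame and the overall scale --- and the non-rigid deformation, which by hypothesis carries at least one further \y{dof}. The number of free parameters is therefore at least $7$.

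Next I would count the constraints each correspondence supplies. Using the generalised-camera reprojection residual recalled in \cref{intro_gen}, namely $\mathfrak{d} \times (\mathbf{P} - \mathbf{C})$, a single template-to-image match pins the deformed point $\mathbf{P}$ onto its \y{sl}. Though written as a $3$-vector, this residual has rank two, because its component along $\mathfrak{d}$ vanishes identically, so each correspondence contributes exactly two independent scalar equations; $\psi$ correspondences thus yield $2\psi$ equations.

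The reconstruction can be isolated only when the equations are at least as numerous as the unknowns, i.e.\ $2\psi \geq 7$, and since $\psi$ is a positive integer this gives $\psi \geq \lceil 7/2 \rceil = 4$. Equivalently, $\psi = 3$ leaves $6$ equations for at least $7$ unknowns, hence a solution family of positive dimension and a genuine ambiguity, so three correspondences cannot suffice. This also clarifies the role of the hypothesis: deleting the single deformation \y{dof} collapses the problem to rigid \y{pnp}, whose $6$ pose unknowns are met by the $6$ equations of three points, dropping the threshold from $4$ back to the familiar $3$.

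The hard part will be turning this count into a rigorous statement rather than a heuristic: I would need to argue that the $6$ rigid \y{dof} and the deformation \y{dof} are genuinely independent and jointly unobservable a priori, and that for generic camera geometry and generic template points the pairs of equations from distinct correspondences are mutually independent, so that no hidden rank deficiency undercuts the tally. A secondary matter is to pin down ``accurately perform \y{sft}'' as demanding a zero-dimensional solution set, after which the argument delivers the necessity of $\psi \geq 4$ (though not its sufficiency, which the lemma does not claim).
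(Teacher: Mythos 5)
Your proof is correct and reaches the same bound, but it is a more explicit version of the counting than the paper gives. The paper's own proof simply takes as a primitive fact that three point correspondences suffice to localize a rigid shape in $\mathbb{R}^3$, observes that at least one additional deformation \y{dof} must be paid for, and concludes that since correspondences come in integer steps a fourth one is required. You instead derive the bound from first principles: at least $6 + 1 = 7$ unknowns (the full $\mathbb{SE}(3)$ placement, which the perspective camera prevents from being gauged away, plus one deformation \y{dof}) against $2$ independent scalar equations per correspondence (the rank-two cross-product residual), giving $\psi \geq \lceil 7/2 \rceil = 4$. This buys you something the paper's argument leaves implicit: it explains \emph{why} one extra \y{dof} cannot be absorbed by the three existing correspondences --- their six equations are already exactly consumed by the six rigid unknowns --- whereas the paper's phrase ``we need at least one more point correspondence'' asserts this without justification. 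Your closing remark that the hypothesis is needed precisely because deleting the deformation \y{dof} collapses the threshold back to the rigid three-point case is a nice sanity check absent from the paper. The genericity caveats you flag (independence of the equation pairs, no hidden rank deficiency) are real but apply equally to the paper's own argument, which is at least as heuristic as yours; given that the authors themselves note immediately after the lemma that an exact formulation of the minimum is unimportant for the rest of the paper, your level of rigour is more than adequate.
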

\begin{proof}
    For rigid objects, we need a minimum of 3 point correspondences to localize a shape in $\mathbb{R}^3$. For \y{sft}, assuming only 1 \y{dof} for deformation (which is impractically low, but nonetheless, the trivial theoretical lower-bound), we need at least one more point correspondence to solve the problem (since correspondences increase in integer steps), therefore 4 correspondences are the minimum. 
\end{proof}
\noindent Thankfully, an exact formulation of \y{mnc} is unimportant for the rest of our discussion, but it is necessary to establish that there exists some \y{mnc} for every \y{sft} problem. While the intuition behind \cref{lemma_min_4} is straightforward, for \y{ns} and \y{nsc}, \cref{lemma_min_4} allows us to establish some minimum necessary requirements on the input, later explained in \cref{sol_ns} and \cref{sec_nsc_sol}, which are not known or required for the generalised camera in rigid settings. More importantly, it is known that the number of available correspondences affect the global shape reconstruction accuracy of many purely correspondence-based \y{sft} approaches (e.g.: experiments from \citep{zhou20153d, bartoli2015shape, chhatkuli2016stable} offers some evidence). We offer additional empirical evidence to support the dependency of global shape reconstruction accuracy on correspondences in \cref{taueta}, leading to \cref{assum_basic} formalised as follows:
\begin{assume}\label{assum_basic}
There exists cases where at least \y{mnc} $\tau \in \mathbb{Z}_+$ is required for solving the \y{sft} problem to a minimum desired accuracy of $\eta$, evaluated across the entire shape to be reconstructed, such that for any number of correspondences $\tau' < \tau$, the shape reconstruction accuracy changes to  $\eta' < \eta$. We denote such a threshold by \y{emnc}. 
\end{assume}
\noindent The case of \cref{assum_basic} arises frequently in realistic \y{sft} problems as the correspondences become sparser, e.g.: for many modern \y{sft} methods on mildly deforming objects, it can be argued that the difference in accuracy between 100 and 110 correspondences is negligible, but the same cannot be said for 5 and 15 correspondences. Moreover, in real-world cases involving large/volumetric objects captured with non-overlapping image patches, it could be expected that \cref{assum_basic} gets violated with even stronger requirements, e.g.: to achieve $\eta$ accuracy, it is not sufficient to just have $\tau$ correspondences, but there exists additionally some requirements about the pose from which these $\tau$ correspondences are captured and some necessary sampling distribution might also need to be considered. Therefore, it is but a mild assumption to consider the existence of \y{emnc}, especially when sparser correspondences are concerned. We utilise \y{emnc} to highlight the necessity of generalised \y{sft} with an example.

\vspace{2mm}
\noindent \textbf{A minimal example of \y{sft} with multiple cameras}. To understand the various possibilities arising due to \y{emnc} on \y{sft}, it is sufficient to analyse the case of $P=2$, this is generalisable to any value of $P>2$. 

Let us imagine a toy problem consisting of a 2D template with correspondences to two images of the deformed template, not necessarily unique. The requirement is to solve the combined \y{sft} problem up to some accuracy $\eta$ and $\tau$ is the minimum number of correspondence required to achieve this. There can be \textit{four} possibilities in terms of template-to-image correspondence: \rom{1}) there exists at least \y{emnc} correspondences between template and image in both views (SPV$_2$) separately, \rom{2}) there exists at least \y{emnc} correspondences between template and image in only one view (SPV$_1$), \rom{3}) there does not exist \y{emnc} correspondence between template and image in any of the views separately, but together, it is exceeds the \y{emnc} (SPV$_0$), and \rom{4}) there does not exist \y{emnc} correspondence between template and image in any of the views separately, and together, it remains below the \y{emnc} (SPV$_{\emptyset}$). In terms of stereo correspondences between images, there can be \textit{three} possibilities: \rom{1}) no stereo-correspondences (SC$_\emptyset$), \rom{2}) less than 3 stereo correspondences (SC$_{< 3}$), and \rom{3}) 3 or more stereo correspondences (SC$_{\geq 3}$) but less than 5, to avoid opening additional possibilities of trivial solutions from the five-point algorithm \cite{nister2004efficient}.

These combination of template-to-image and stereo correspondences create a total of 12 different possibilities, among which, as we demonstrate below, at least five possibilities happen to be highly interesting:
\begin{enumerate}
    \item \textbf{SPV$_2$/SPV$_1$ + SC$_{\geq 3}$}. Solvable; a possible trivial solution is to solve \y{sft} in the view with sufficient template-to-image correspondences (any one for SPV$_2$), followed by \y{pnp} \cite{gao2003complete} for the other view
    \item \textbf{SPV$_2$ + SC$_{< 3}$/SC$_{\emptyset}$}. Solvable; a possible trivial solution is to solve \y{sft} for both views separately followed by Horn's method \cite{horn1987closed} for estimating relative camera pose
    \item \textbf{SPV$_1$ + SC$_{< 3}$/SC$_\emptyset$}. Solvable if the sum of number of template-to-image correspondences and image-to-image correspondences in the image with less than \y{mnc} correspondences is 3 or more; unsolvable otherwise. For the solvable case, a possible trivial solution is to solve \y{sft} in the view with SPV$_1$ and merge the template-to-image and image-to-image correspondences together to solve the \y{pnp}
    \item \textbf{SPV$_0$ + SC$_{\geq 3}$/SC$_{< 3}$/SC$_\emptyset$ and SPV$_{\emptyset}$ + SC$_{\geq 3}$/SC$_{< 3}$}. Sum of number of template-to-image correspondences and image-to-image correspondences being 4 or more on the two images combined is a necessary criteria for solvability (due to \cref{lemma_min_4}). Sufficient criteria for solvability is unclear. \textit{Existing methods are insufficient when solvable}
    \item \textbf{SPV$_{\emptyset}$ + SC$_\emptyset$}. Unsolvable up to $\eta$ precision
\end{enumerate}

\noindent Clearly, the case of SPV$_0$ + SC$_{\geq 3}$/SC$_{< 3}$/SC$_\emptyset$ and SPV$_{\emptyset}$ + SC$_{< 3}$ cannot be readily solved with existing methods. However, at least SPV$_{\emptyset}$ + SC$_{\geq 5}$ can be solved with existing \y{sfm} techniques, therefore not sufficiently interesting. But among the SPV$_0$ + SC$_{\geq 3}$/SC$_{< 3}$/SC$_\emptyset$ cases, we focus on the most challenging case of non stereo corresponding images, i.e.,  SPV$_{\geq 0}$ + SC$_\emptyset$.


\vspace{2mm} \noindent \textbf{A toy example}. We utilise a toy example in $\mathbb{R}^2$ to expand upon the utility of \y{nsc} problem concretely in the case of SPV$_{\geq 0}$ + SC$_\emptyset$. As shown in \cref{fig_toy}, we suppose there exists a circle, represented by a semi-dense collection of 2D points, which is the \textit{template shape} in \rom{1}. An upper subregion of \rom{1}, shown in blue, corresponds to a subregion of the deformed shape in \rom{2}, this is {\tt correspondence-1} in some reference frame. Another lower subregion of \rom{1}, shown in red, corresponds to another subregion of the deformed shape in \rom{2}, given as {\tt correspondence-2} in another reference frame. Obviously, given the blue and red patches are at opposite ends of the shape in \rom{2}, no camera situated outside of \rom{2} can observe both the patches in one image\footnote{Excluding the case of stereographic projection of \rom{2} by inscribing it inside a circumcircle, in which case the projection may be surjective and does not represent any commonly known use-case, thus safely ignored}, irrespective of their projection model. Existing \y{sft} methods can only solve two \y{sft} problems separately, lacking the means to combine multiple reference frames, for the two sets of correspondences each; assuming they are accurate, we get two reconstructed shapes shown in \rom{3} and \rom{4}, none of which are similar to the \y{gt} shape \rom{2}. Obviously, there exists the trivial solution of combining \rom{3} and \rom{4}, a reasonable choice for combining shapes is to obtain an Euclidean mean shape \cite{bai2022procrustes} of \rom{3} and \rom{4}, which gives us the bold-red shape in \rom{5}, which is still not the \y{gt} shape \rom{2}, shown as the superimposed black shape in \rom{5}. In this article, we show, with extensive experimental validation, that it is possible to reconstruct the \y{gt} shape for \y{ns} and \y{nsc} in any real-world use-case corresponding to \cref{fig_toy}.

\begin{wrapfigure}{r}{0.3\textwidth}
\begin{overpic}[width=0.3\textwidth]{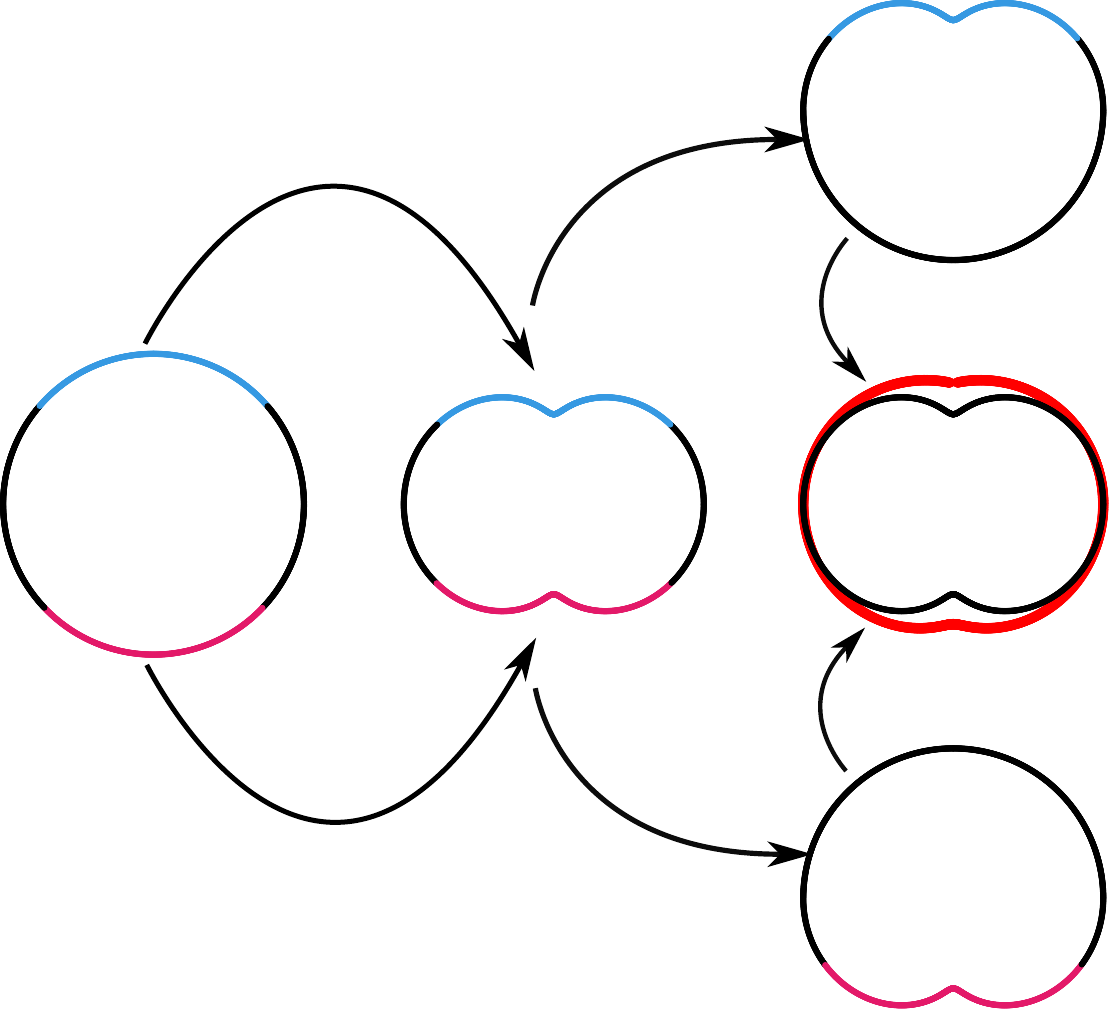}
\put(12,44){\small{\rom{1}}}
\put(47.4,44){\small{\rom{2}}}
\put(83,44){\small{\rom{5}}}
\put(83,78){\small{\rom{3}}}
\put(83,10){\small{\rom{4}}}
\put(13,77){\tiny{Correspondence-1}}
\put(13,13){\tiny{Correspondence-2}}
\put(55,70){\tiny{\y{sft}-1}}
\put(55,22){\tiny{\y{sft}-2}}
\end{overpic} 
\caption{A toy example showing the necessity of \y{ns}/\y{nsc}}
\label{fig_toy}
\end{wrapfigure}

Our article concerns the solution for the SPV$_{\geq 0}$ + SC$_\emptyset$ case, a special $\mathbb{R}^2$-case of which is shown in the toy example of \cref{fig_toy}, but generalised to any number of images captured from any number of views with any projection model. The methodological description follows.

\section{Methodology}\label{sec_method}
The presentation of our methodological contribution is organised as follows: \textit{first} we describe our problem setup thoroughly in \cref{prob_setup}, \textit{second} we formalise our problem statement for \y{ns} in \cref{subsec_nsc_first} followed by the solution in \cref{sol_ns}, \textit{third} we formalise our problem statement for \y{nsc} in \cref{sec_unknown} followed by the solution in \cref{sec_nsc_sol}, and \textit{fourth} we present a silhouette-boosted \y{ns} and \y{nsc} problem in \cref{pron_silh_nsc} and its solution in \cref{sec_synth_sol}.

\subsection{Problem setup}\label{prob_setup}
We first describe two important components of the problem setup, the \textit{object model} including parametrisation of shape deformation and the \textit{image formation model}.

\vspace{2mm}
\noindent \textbf{Deformation model}. We use the well-know SSM \citep{davies2008statistical} to represent the deformable shape of the object being imaged. The shape is described as a pointcloud, where $\bar{\mathbf{P}} \in \mathbb{R}^{3 \times N}$ is the mean shape and the basis pointcloud is $\mathbf{P}_{i} \in \mathbb{R}^{3 \times N} \quad \forall i \in [1,M]$, assuming we have $M$ basis shapes, all coordinates in object reference frame. Assuming $\mathbf{R} \in \mathbb{SO}(3)$ to be the object to world coordinate rotation and $\mathbf{t} \in \mathbb{R}^3$ to be the associated translation, the shape can be represented as $\mathbf{V} = \mathbf{R}\Big(\bar{\mathbf{P}} + \sum_{i=1}^M w_i \mathbf{P}_{i}\Big) + \mathbf{t} = \mathbf{R} \mathbf{Q} + \mathbf{t}$, where $\{w_i\}$ are the basis weights and $\mathbf{Q}$ is the deformed shape. $\mathbf{Q}$ is represented in the local coordinates of the object; this object-centric reference frame is denoted by $\mathbb{O}$. 

\textcolor{revCol1}{Physics-based deformation models - such as isometry (length-preservation), conformality (angle-preservation), equiareality (area-preservation), and volume preservation - are highly appealing for their applicability, yet remain intractable for joint shape and pose estimation within a convex framework. We expand upon this challenge in \cref{app_challenge_phy}.}

\vspace{2mm}
\noindent \textbf{Image-formation model}. We assume that there are $P$ viewpoints which are imaging a shape, the centre of these viewpoints are at $\mathbf{C}_x \in \mathbb{R}^{3}$ w.r.t some world coordinates $\mathbb{W}$ and the collection of \y{sl}s at the $x$-th viewpoint is denoted as $\mathcal{I}_x$. We further impose the following mild assumption on the \y{ns}/\y{nsc} setup:
\begin{assume}\label{assum_1}
    The $Z$-coordinates of the every 3D keypoint observed by every $x$-th viewpoint, expressed in their respective reference frame centred at $\mathbf{C}_x$, is in $\mathbb{R}_+$.
\end{assume}

\noindent The input correspondences from the shape, expressed as projected 2D points, are always normalised with the known intrinsics, expressed as $\{\mathbf{v}_x \in \mathbb{R}^{2 \times n_x}\}$, obtained with some known extrinsics $({}^{C_x}\mathbf{R}_{W}  \in \mathbb{SO}(3), {}^{C_x}\mathbf{t}_{W} \in \mathbb{R}^3 )$, i.e.,
\begin{equation}
        \mathbf{v}_x = \Pi_x \Big( {}^{C_x}\mathbf{R}_{W} \mathbf{V} +  {}^{C_x}\mathbf{t}_{W}\Big) = \Pi \big( g_x( \mathbf{V} )\big), \quad \forall ~x \in [1,P],
\end{equation}
\noindent where $\Pi_x(\cdot)$ is the suitable projection function for the $x$-th viewpoint and $g_x(\cdot)$ represents the corresponding rigid transformation. 
\begin{figure}[t]
\centering
\begin{overpic}[unit=1mm,scale=1.2]{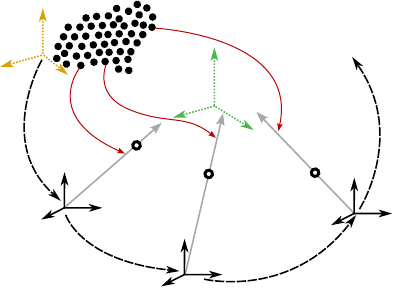}
\put(5,47.3){\small{$\mathbb{O}$}}
\put(39,37){\small{$\mathbb{W}$}}
\put(15,12.7){$\mathbf{C}_1$}
\put(38.0,-0.5){$\mathbf{C}_2$}
\put(72.5,12){$\mathbf{C}_3$}
\put(63,48){\small{$P$ cameras}}
\put(0,0){(a)}
\end{overpic}\hspace{20mm}
\begin{overpic}[abs,unit=1mm,scale=0.85]{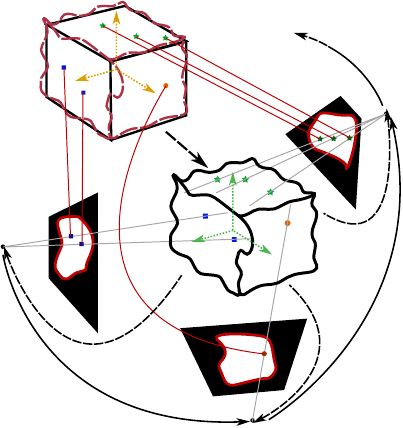}
\put(-2,27){$\mathbf{C}_1$}
\put(36,-2.5){$\mathbf{C}_2$}
\put(57,45){$\mathbf{C}_3$}
\put(37,57){\small{$P$ cameras}}
\put(5.5,40){$\Omega_1$}
\put(18,28){$\Omega_2$}
\put(33,52){$\Omega_3$}
\put(28,41){$(\mathbf{R}, \mathbf{t})$}
\put(16,10){$g_1^{-1}$}
\put(45.5,17){$g_2^{-1}$}
\put(49,26){$g_3^{-1}$}
\put(30,28){\small{$\mathbb{W}$}}
\put(17.8,50.7){\small{$\mathbb{O}$}}
\put(0,0){(b)}
\end{overpic} 
\caption{{\color{revCol1}(a) The \y{nsc} setup employs a generalised camera model, with inputs comprising correspondences between known 3D template points (black filled circles) in the object frame $\mathbb{O}$ and direction-only sightlines (\y{sl}s) defined in local coordinate frames of $P$ viewpoints. The deformed object lies in an unknown configuration within the world frame $\mathbb{W}$. The goal is to estimate the 3D locations of these template points in $\mathbb{W}$ (black hollow circles), along with the global positions ${\mathbf{C}_1, \ldots, \mathbf{C}_P}$ in $\mathbb{W}$. (b) The \y{ns} configuration instantiates this setup with $P$ calibrated perspective cameras and known poses ${g_1, \ldots, g_P}$. In the silhouette-augmented variant (cf. \cref{eqn_pt_based_silhBoost_final}), each image additionally provides silhouette information. The objective is to reconstruct keypoint positions in the world frame $\mathbb{W}$.}}
\end{figure}

\subsection{Correspondence-based \y{ns} and \y{nsc}}\label{pt_based}
We assume as input, $P$ set of two-way point correspondences between $\mathcal{I}_x$ and the mean shape $\bar{\mathbf{P}}$, one set for each viewpoint. The correspondences are specified as $P$ set of tuples $\Omega_x = \Big\{\big(j, j',\big), ~j \in [1,N], ~j' \in [1, n_x]\}$, with $|\Omega_x| = n_x$, and denotes that $\mathbf{Q}_j \in \mathbb{R}^3$ projects to the keypoint $\mathbf{p}_{x,j'} \in \mathbb{R}^2$ in $\mathcal{I}_x$ for all $x \in [1, P]$.

\subsubsection{Problem statement for \y{ns}}\label{subsec_nsc_first} 

We wish to determine the optimal shape basis weights $\{w_i\}$ and the object to world coordinate transformation $\mathbf{R}$ such that $g_x(\mathbf{V}_j)$ projects to $\mathbf{p}_{x,j_x}$ in $\mathcal{I}_x$, for all $x \in [1,P]$. In \y{ns}, all $\{g_x\}$ are known. Posing the projective requirements as an L$_1$-cost, the problem statement for \textit{correspondence-based \y{nsc} with known extrinsics} is:
\begin{equation}\label{eqn_pt_based}
    \begin{gathered}
       \min_{\mathbf{R}, \mathbf{t}, \{w_i\}}  \rho(\bar{\mathbf{P}}, \mathbf{Q}) + \sum_{x = 1}^P \sum_{(j,j') \in \Omega_x} \gamma\Big(\Pi\big(g_x(\mathbf{V}_j)\big), \mathbf{p}_{x,j'}\Big), \\
        \text{s.t.:} \quad \mathbf{V}_j = \mathbf{R} \mathbf{Q}_j + \mathbf{t}, ~\mathbf{Q} = \Big(\bar{\mathbf{P}} + \sum_{i=1}^M w_i \mathbf{P}_{i}\Big), \quad \mathbf{R} \in \mathbb{SO}(3).
     \end{gathered}
\end{equation}
\noindent where $\rho(\bar{\mathbf{P}}, \mathbf{Q})$ is a regularizer preserving local rigidity, defined as $\rho(\bar{\mathbf{P}}, \mathbf{Q}) = \sum_i w_i^2$. Importantly, apart from imparting local rigidity on the reconstructed shape, $\rho(\bar{\mathbf{P}}, \mathbf{Q})$ also acts as an object pose prior in \cref{eqn_pt_based}, which can be verified by noticing that removal of $\rho(\bar{\mathbf{P}}, \mathbf{Q})$ from \cref{eqn_pt_based} results in multiple ambiguous solutions to $(\mathbf{R}, \mathbf{t})$. On the other hand, $\gamma\Big(\Pi\big(g_x(\mathbf{V}_j)\big), \mathbf{p}_{x,j'}\Big)$ is a projective cost penalising the deviation of the projected model point $\Pi\big(g_x(\mathbf{V}_j)\big)$ from $\mathbf{p}_{x,j'}$, which we define using the L$_1$-norm of a vector product:
\begin{equation}\label{reproj_cost}
    \gamma\Big(\Pi\big(g_x(\mathbf{V}_j)\big), \mathbf{p}_{x,j'}\Big) = \| \big(g_x(\mathbf{V}_j) - \mathbf{C}_x\big)  \times \mathfrak{d}_{x,j'}\|_1,
\end{equation}
where $\mathfrak{d}_{x,j'} \in \mathbb{S}^2$ is the directional component of Pl\"{u}cker vectors from the viewpoint centre directed towards the input keypoint $\mathbf{p}_{x,j'}$ on the image plane and $\mathbf{C}_x \in \mathbb{R}^3$ is the $x$-th viewpoint centre; $(\mathfrak{d}_{x,j'}, \mathbf{C}_x)$ are data terms trivially derivable from $\{\mathbf{p}_x\}$ and $\{g_x\}$. However, a simple but important observation from \cref{eqn_pt_based} is as follows:
\begin{theorem}\label{theorem_sft_1}
    The solution to \cref{eqn_pt_based} is identical to the classical one-view \y{sft} problem.
\end{theorem}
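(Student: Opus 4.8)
The plan is to show that fixing all extrinsics $\{g_x\}$ strips \cref{eqn_pt_based} of every unknown except those already present in classical one-view \y{sft}, and leaves the objective with an identical functional form. First I would enumerate the decision variables in \cref{eqn_pt_based}: the object-to-world pose $(\mathbf{R}, \mathbf{t})$ and the shape weights $\{w_i\}$. These are exactly the unknowns of single-view \y{sft}; crucially, because each $g_x = ({}^{C_x}\mathbf{R}_W, {}^{C_x}\mathbf{t}_W)$ is given in \y{ns}, no per-viewpoint pose variable appears, so the two problems share the same search space.

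Next I would examine the objective term by term. The regularizer $\rho(\bar{\mathbf{P}}, \mathbf{Q}) = \sum_i w_i^2$ is attached to the shape weights alone and coincides with the pose prior of single-view \y{sft}. Each reprojection term $\gamma(\Pi(g_x(\mathbf{V}_j)), \mathbf{p}_{x,j'}) = \|(g_x(\mathbf{V}_j) - \mathbf{C}_x) \times \mathfrak{d}_{x,j'}\|_1$ is, by the Pl\"{u}cker construction recalled in \cref{intro_gen}, a point-to-\y{sl} distance in which the pair $(\mathfrak{d}_{x,j'}, \mathbf{C}_x)$ and the map $g_x$ are pure data. This is precisely the form of the single-view reprojection cost of \cref{reproj_cost}; the only syntactic difference is the outer sum over $x$.

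I would then collapse that outer sum conceptually: since every $(\mathfrak{d}_{x,j'}, \mathbf{C}_x, g_x)$ is fixed, the $P$ viewpoints together constitute a single rigid constellation of \y{sl}s, i.e., one generalised camera held fixed in space, and the double sum over $(x, j')$ becomes a single sum over one pooled correspondence set observed by this one camera. Hence \cref{eqn_pt_based} is, term for term, the minimization of $\rho$ plus a sum of point-to-\y{sl} costs over $(\mathbf{R}, \mathbf{t}, \{w_i\})$ --- exactly a classical one-view \y{sft} instance in which the lone view is this generalised camera. The familiar single-camera perspective or orthographic case is recovered as the sub-case where the pooled \y{sl}s happen to be concurrent (or parallel), a geometric property of the data that never enters the optimization. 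This identification also delivers the advertised corollary that correspondences need only meet the \y{mnc} of \cref{lemma_min_4} in the pool, not within any single viewpoint.

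The statement is close to an observation, so the main obstacle is purely rigor: pinning down the sense of ``identical.'' The delicate point is to argue the equivalence structurally --- regarding the known-extrinsics constellation as a single fixed generalised camera --- rather than by literally rotating every \y{sl} into one common frame, because the $\mathrm{L}_1$ reprojection cost of \cref{reproj_cost} is not invariant under the orthogonal parts of the $g_x$, so a naive reframing would alter the numerical objective. Once the reduction is framed at the level of variables and objective form, matching \cref{eqn_pt_based} against the single-view template is immediate.
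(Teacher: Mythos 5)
Your proposal is correct and rests on the same core reduction as the paper: since every $g_x$ is known in \y{ns}, the only unknowns in \cref{eqn_pt_based} are $(\mathbf{R}, \mathbf{t}, \{w_i\})$, every reprojection term is a known function of them, and the problem therefore collapses to a single-view instance. The one genuine difference is how the ``single view'' is realised. The paper designates an anchor viewpoint, rotates the Pl\"{u}cker directions of the non-anchor viewpoints by the known relative rotations $\tilde{\mathbf{R}}_x^{\top}$, sets $\mathbf{C}_x = \mathbf{0}_3$, and expresses the transformed shape $\tilde{\mathbf{V}}_j$ accordingly; you instead leave every \y{sl} where it is and regard the whole known-extrinsics constellation as one fixed generalised camera with a pooled correspondence set. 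Your framing buys a little extra rigor: as you observe, the L$_1$ cost in \cref{reproj_cost} is not invariant under the orthogonal part of a frame change, so the paper's anchor-frame rewriting is only cost-preserving if each residual continues to be evaluated in its own viewpoint's local frame (which is what the paper implicitly does, but does not state); your structural argument never has to confront this. What the paper's route buys in exchange is an explicit constructive recipe --- transformed directions and the formula for $\tilde{\mathbf{V}}_j$ --- that feeds directly into the \y{sdp} implementation of \cref{eqn_final_sdp_known}. Both routes deliver the advertised corollary (\cref{lem_gsft_min}) that only the pooled correspondence count across viewpoints matters.
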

\begin{proof}
    First we assign a arbitrary viewpoint, say the first viewpoint, as an `anchor'. Now, let the relative transformation of the $x$-th viewpoint w.r.t the anchor be $\tilde{\mathbf{R}}_x, \tilde{\mathbf{t}}_x$. We transform all the directional component of Pl\"{u}cker vectors of the non-anchor viewpoints as $\tilde{\mathbf{R}}_x^{\top} \mathfrak{d}_{x,j'}$ and set $\mathbf{C}_x = \mathbf{0}_3, \forall x$. Therefore, the $j$-th keypoint of the transformed shape $\tilde{\mathbf{V}}$ is given as:
    \begin{equation}
        \tilde{\mathbf{V}}_j = \tilde{\mathbf{R}}_x^{\top} (\mathbf{R}\mathbf{Q}_j + \mathbf{t}) - \tilde{\mathbf{R}}_x^{\top}\tilde{\mathbf{t}}_x.
    \end{equation}
\noindent Since all $\{\tilde{\mathbf{R}}_x, \tilde{\mathbf{t}}_x\}$ are known in the \y{ns} setup, the proof is complete.
\end{proof}
\noindent An important outcome of \cref{theorem_sft_1} is:
\begin{lemma}\label{lem_gsft_min}
    A sufficient criterion for achieving $\eta$ accuracy from \cref{eqn_pt_based} is $\sum_x n_x = \tau$, if \y{emnc} applies to the deformable object being reconstructed and \cref{assum_basic} holds. 
\end{lemma}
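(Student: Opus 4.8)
The plan is to obtain the result as an immediate consequence of \cref{theorem_sft_1} together with \cref{assum_basic}, by tracking how the total correspondence count behaves under the reduction established in the theorem. First I would invoke \cref{theorem_sft_1}: anchoring the first viewpoint and transforming each non-anchor sightline direction as $\tilde{\mathbf{R}}_x^{\top}\mathfrak{d}_{x,j'}$ with $\mathbf{C}_x = \mathbf{0}_3$ collapses \cref{eqn_pt_based} into an instance of the classical one-view \y{sft} problem. Since each $\tilde{\mathbf{R}}_x$ is a rotation, every transformed direction remains a unit vector in $\mathbb{S}^2$, so each original correspondence $(j,j') \in \Omega_x$ maps to exactly one legitimate correspondence of the single-view instance; no correspondence is created or destroyed by the reduction.

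Next I would simply count. The equivalent single-view \y{sft} instance carries precisely $\sum_{x=1}^{P} |\Omega_x| = \sum_x n_x$ correspondences. By \cref{assum_basic}, because \y{emnc} applies to the object being reconstructed, there exists a threshold $\tau$ such that $\tau$ correspondences suffice for a single-view \y{sft} solution to attain the desired accuracy $\eta$ across the entire shape. Setting $\sum_x n_x = \tau$ therefore furnishes the reduced single-view instance with exactly the number of correspondences that \cref{assum_basic} guarantees to be adequate, and the accuracy $\eta$ then follows by the equivalence of solutions asserted in \cref{theorem_sft_1}. As the criterion is only required to guarantee attainability of $\eta$, it is sufficient rather than necessary, which is precisely the claim.

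The main obstacle I anticipate is not the counting but justifying that correspondences pooled from distinct viewpoints are interchangeable with correspondences of a genuine single view, i.e., that they are equally informative for reconstruction. This is exactly the content of \cref{theorem_sft_1}, so the lemma reduces to verifying that the reduction preserves both the number and the independence of the projective constraints. In particular, one should note that the regulariser $\rho(\bar{\mathbf{P}}, \mathbf{Q})$ and the deformation parametrisation $\mathbf{Q} = \bar{\mathbf{P}} + \sum_{i=1}^M w_i \mathbf{P}_{i}$ are untouched by the frame transformation, so the degrees of freedom of the deformation, and hence the notion of \y{mnc}, are identical before and after the reduction. A minor caveat worth stating explicitly is that \cref{assum_basic} is an existence statement, so the lemma asserts sufficiency of $\sum_x n_x = \tau$ only within those cases where \y{emnc} is the governing threshold, which matches the stated hypotheses.
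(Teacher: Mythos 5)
Your proposal is correct and matches the paper's intended argument exactly: the paper omits the proof, stating only that it is ``a straightforward combination of \cref{assum_basic} with \cref{theorem_sft_1}'', and your write-up is precisely that combination, with the useful added observation that the reduction of \cref{theorem_sft_1} preserves the number and informativeness of the correspondences so the count $\sum_x n_x = \tau$ carries over to the single-view instance.
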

\noindent The proof of \cref{lem_gsft_min} is a straightforward combination of \cref{assum_basic} with \cref{theorem_sft_1}, hence omitted, but \cref{lem_gsft_min} leads us to an interesting observation about the \y{ns} problem setup. Namely, there are no minimum requirements for the minimum number of correspondences from each viewpoint (the only requirement of $\sum_x n_x = \tau$ is globally across viewpoints), therefore even one correspondence from one camera might add to the accuracy of \y{ns}, which is an useful outcome. Indeed, we later offer experimental validation of \cref{lem_gsft_min} in \cref{sec_gsft}.

But first, we present the solution to \cref{eqn_pt_based}, which remains a non-convex problem due to the non-linear constraints on $\mathbf{R}$ and the final convex solution needs some additional discussion, expanded below in \cref{sol_ns}. 


\subsubsection{Solution for \y{ns}}\label{sol_ns} 
There is but one challenge for posing the solution of \cref{eqn_pt_based} as a convex optimisation, we need to linearise the rotational constraints in \cref{eqn_pt_based}, such that \cref{reproj_cost} can also be linearized. This exists many well-known methods for linearizing rotation in the context of \y{sft}, but we offer a brief recap in the following paragraph.

\vspace{2mm}
\noindent \textbf{Parameterising rotations}.~When considering rotations as a parameter of optimisation, there exists many options including axis-angle representations \citep{parsons1995inability, giulietti2007optimal, chen2014rotation}, representation on the Lie manifolds \citep{jacobson1979lie, govindu2004lie, gilmore2006lie}, quaternions \citep{zhang1997quaternions, ude1998nonlinear, brambley2020unit} and their \y{sdp} based relaxations \citep{yang2019quaternion}, representation on rotation or Stiefel manifolds \citep{krishnan2007optimisation, ma1998motion, kumar2018scalable, boumal2014manopt}, and \y{sdp} based relaxations of rotation matrices \citep{chaudhury2015global, briales2017convex, iglesias2020global}. The existence of Lasserre's hierarchy to linearise \y{sos} polynomials \citep{anjos2011handbook} helps the case of \y{sdp} based relaxations, since it conveniently fits a wide variety of costs and constraints, which is essential for solving \cref{eqn_pt_based}. 

\vspace{2mm}
\noindent \textbf{\y{sdp} solution.}~We chose to utilise the \y{sdp} based approach to linearise the rotational constraints. First we define a Gram matrix $\Delta_e = \xi_e \xi_e^{\top}$, where $\xi_e  = \big(1, \mathrm{vec}(\mathbf{R})^{\top}, w_1, \hdots, w_M\big)^{\top}$. Following the strategy of Lasserre's hierarchy for \y{sos} polynomials, we can write all the costs/constraints in \cref{eqn_pt_based} as linear combination of the elements of $\Delta_e$. $\Delta_e$ is a Gram matrix, therefore a \y{psd} matrix by construction, hence ideally suited to be posed as an \y{sdp} problem. Therefore, one initial \y{sdp} based solution for the problem in \cref{eqn_pt_based} is:
\begin{equation}\label{eqn_init_sdp_known}
    \begin{gathered}
        \min_{\Delta_e, \mathbf{t}} ~\epsilon \omega_a(\Delta_e) + \sum_{x = 1}^P \sum_{(j,j') \in \Omega_x}\omega_b \Big(\Delta_e, \mathbf{p}_{x,j'}, \bar{\mathbf{P}}_j, \{\mathbf{P}_{i,j}\}, \mathbf{t}\Big), \\ \text{s.t.:} \quad \omega_{c, r}(\Delta_e) = 1, ~\omega_{d, r}(\Delta_e) = 0, ~\Delta_{e,1,1} = 1, \Delta_e \in \mathcal{S}_+^{10+M} \quad \forall ~r \in [1,3], ~i \in [1,M].
    \end{gathered}
\end{equation}
\noindent {\color{revCol1} In \cref{eqn_init_sdp_known}, the functions $\omega_a$, $\omega_b$, ${\omega_c}$, and ${\omega_d}$ are sampling operators that generate specific linear combinations of the entries of $\Delta_e$ (see \cref{appen_sample} for details). Their roles are as follows: $\omega_a$ extracts the diagonal elements of $\Delta_e$ to yield $\varrho(\bar{\mathbf{P}}_j, \mathbf{Q}j)$; $\omega_b\Big(\Delta_e, \mathbf{p}{x,j'}, \bar{\mathbf{P}}j, {\mathbf{P}{i,j}}, \mathbf{t}\Big)$ corresponds to the reprojection cost described in \cref{reproj_cost}; and the sets ${\omega_c}$ and ${\omega_d}$ arise from enforcing the rotational constraint $\mathbf{R}\mathbf{R}^{\top} = \mathbbm{1}_3$. Notably, the reprojection term $\omega_b$ is derived from the generalised camera model introduced in \cref{intro_gen}, in contrast to the conventional perspective projection model.}

\vspace{2mm}
\noindent\textbf{Low-rank solution}. The presence of noise or shortcomings of the deformation model (e.g.: \y{ssm} with inadequately learned basis) can cause \y{sdp} formulations of the nature of \cref{eqn_init_sdp_known} to converge to spurious high-rank solutions, i.e., $\mathrm{rank}(\Delta_e) > 1$. This is caused by failure of strict feasibility in cases with such data or model issues \cite{roux2018validating} and remains an undesirable side-effect of such \y{sdp} formulations. As a remedy, one tries to minimise the rank of the \y{psd} matrix, $\Delta_e$ in our case, in conjunction with all the other costs and constraints. Rank being non-convex, the usual alternative is to minimise its convex conjugate, the nuclear norm, which is equal to the trace for a \y{psd} matrix. Thankfully, the trace also occurs in the formulation of our rigidity prior, simplifying our primal in the optimisation.

Combining trace minimisation with \cref{eqn_init_sdp_known} leads to our final solution to \cref{eqn_pt_based}:
\begin{equation}\label{eqn_final_sdp_known}
    \begin{gathered}
        \min_{\Delta_e, \mathbf{t}} ~\epsilon' \mathrm{tr}(\Delta_e) + \sum_{x = 1}^P \sum_{(j,j') \in \Omega_x}\omega_b \Big(\Delta_e, \mathbf{p}_{x,j'}, \bar{\mathbf{P}}_j, \{\mathbf{P}_{i,j}\}, \mathbf{t}\Big), \\ \text{s.t.:} \quad \omega_{c, r}(\Delta_e) = 1, ~\omega_{d, r}(\Delta_e) = 0, ~\Delta_{e,1,1} = 1, \Delta_e \in \mathcal{S}_+^{10+M} \quad \forall ~r \in [1,3], ~i \in [1,M].
    \end{gathered}
\end{equation}
\noindent \Cref{eqn_final_sdp_known} is a convex optimisation problem, solved with standard SDP solvers using the CVX library \citep{cvx, gb08} with Mosek \citep{mosek} on Matlab.

Due to \cref{theorem_sft_1}, it may appear that nothing new is achieved in \cref{eqn_final_sdp_known} over and above the many existing solutions to classical \y{sft}. However, such a conclusion would be misleading because: 1) \cref{eqn_final_sdp_known} already enables us to solve \y{sft} with diverse projection models (e.g.: object pose from \y{sft} with perspective projection was previously unknown, so is the combination of perspective and orthographic cameras for \y{sft}), and 2) \cref{eqn_final_sdp_known} sets the background for studying the significantly more ill-posed \y{nsc} problem in \cref{sec_unknown}.

\subsubsection{Problem statement for \y{nsc}} \label{sec_unknown}

In this second sub-variant, i.e., correspondence based \y{nsc}, we follow the same problem setup as \cref{subsec_nsc_first}, but we assume all $\{g_x\}$ are unknown. We denote the pose of each viewpoint by its position w.r.t $\mathbb{W}$ given by $\mathbf{C}_x$ as before and its unknown orientation $\mathbf{R}_x$. We therefore obtain the following problem statement for the \textit{correspondence-based \y{nsc}}:
\begin{equation}\label{eqn_pt_based_unext}
    \begin{gathered}
       \min_{(\mathbf{R}, \mathbf{t}),\{\mathbf{R}_x\},\{\mathbf{C}_x\},\{w_i\} }   \rho(\bar{\mathbf{P}}, \mathbf{Q}) + \sum_{x = 1}^P \sum_{(j,j') \in \Omega_x} \gamma_P\Big(\Pi\big(\mathbf{R}\mathbf{Q}_j + \mathbf{t})\big), \mathbf{p}_{x,j'}\Big), \\
        \text{s.t.:} \quad \mathbf{Q} = \Big(\bar{\mathbf{P}} + \sum_{i=1}^M w_i \mathbf{P}_{i}\Big), \quad \{\mathbf{R}_x \in \mathbb{SO}(3), ~\forall x \in [1,P]\}.
     \end{gathered}
\end{equation}
\noindent \Cref{eqn_pt_based_unext} is identical to \cref{eqn_pt_based} except modifying the reprojection cost to 
\begin{equation}\label{eqn_modif_rep}
    \gamma_P\Big(\Pi\big(\mathbf{R}\mathbf{Q}_j + \mathbf{t})\big), \mathbf{p}_{x,j'}\Big) = \| \big(\mathbf{R}\mathbf{Q}_j + \mathbf{t} - \mathbf{C}_x\big) \times \mathbf{R}_x \mathfrak{d}_{x,j'}\|_1
\end{equation}
\noindent \Cref{eqn_modif_rep} is needed because for \y{nsc} because the camera poses $\big(\{\mathbf{R}_x\}, \{\mathbf{C}_x\}\big)$ are unknown. For the case of $P=1$, the problem statement in \cref{eqn_pt_based_unext} becomes the classical one-view \y{sft} problem with clear analogies to similar problems in the literature \cite{hejrati2012analyzing, ramakrishna2012reconstructing, zia2013detailed, wang2014robust, zhu2015single, zhou20153d, bartoli2015shape, chhatkuli2016stable, fuentes2021texture, casillas2019equiareal, gallardo2020shape, parashar2019local, parashar2017isometric}. But importantly, although the definition of $\rho$ remains identical to \cref{eqn_pt_based}, it has an important side-effect, namely, it acts as a object-pose prior in \cref{eqn_pt_based_unext}, similar to \y{ns}. 

\subsubsection{Solution for \y{nsc}} \label{sec_nsc_sol}
The key idea here is to solve $P$ simultaneous one-view \y{sft} problems with rigidity constraints between all the $P$ shapes, but to understand the necessity of such a formulation, it is important to set forth some intricacies of \cref{eqn_pt_based_unext}. In this subsection, we first analyse these intricacies and then slowly proceed to the final solution formulation. 

While the key problem in \cref{eqn_pt_based_unext} may sound intuitive, there remains some caveats to its solvability, beginning with the following:

\begin{theorem}[Gauge freedom in \cref{eqn_pt_based_unext}] \label{threorem_1} The viewpoint absolute pose $(\{\mathbf{R}_x\},\{\mathbf{C}_x\})$ is not uniquely solvable in \cref{eqn_pt_based_unext}, i.e., there exists multiple ambiguous rigid transformations that solve \cref{eqn_pt_based_unext}. 
\end{theorem}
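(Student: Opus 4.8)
The plan is to prove non-uniqueness constructively, by exhibiting an explicit continuous family of parameter changes under which both terms of the objective in \cref{eqn_pt_based_unext} are left exactly invariant; the existence of such a family immediately implies that the minimiser cannot be a single point. The natural candidate is a global rigid transformation of the world frame $\mathbb{W}$ applied \emph{simultaneously} to the reconstructed object and to every viewpoint, motivated by the fact that the cost constrains only the relative geometry between the world-frame object points $\mathbf{R}\mathbf{Q}_j + \mathbf{t}$ and the transported sightlines $\mathbf{R}_x\mathfrak{d}_{x,j'}$, never their absolute placement.

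First I would dispatch the translational part, which is the cleanest and is already sufficient on its own. Fixing an arbitrary $\mathbf{t}_g \in \mathbb{R}^3$, I map $\mathbf{t} \mapsto \mathbf{t} + \mathbf{t}_g$ and $\mathbf{C}_x \mapsto \mathbf{C}_x + \mathbf{t}_g$ for every $x$, while leaving $\mathbf{R}$, all $\{\mathbf{R}_x\}$, and all $\{w_i\}$ untouched. The regulariser $\rho(\bar{\mathbf{P}}, \mathbf{Q}) = \sum_i w_i^2$ depends only on the weights and is therefore trivially invariant. In each residual of \cref{eqn_modif_rep} the viewpoint centre enters only through the difference $\mathbf{R}\mathbf{Q}_j + \mathbf{t} - \mathbf{C}_x$, in which the two copies of $\mathbf{t}_g$ cancel, so every cross product—and hence the full objective—is unchanged. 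This produces a three-parameter continuum of equally optimal solutions whose absolute centres $\{\mathbf{C}_x\}$ genuinely differ, which already establishes the claimed non-uniqueness of the absolute pose.

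For completeness I would then analyse the rotational part, where the operative identity is the equivariance of the cross product, $(\mathbf{R}_g\mathbf{a}) \times (\mathbf{R}_g\mathbf{b}) = \mathbf{R}_g(\mathbf{a} \times \mathbf{b})$ for $\mathbf{R}_g \in \mathbb{SO}(3)$. Applying $\mathbf{R}_g$ globally through $\mathbf{R} \mapsto \mathbf{R}_g\mathbf{R}$, $\mathbf{t} \mapsto \mathbf{R}_g\mathbf{t}$, $\mathbf{C}_x \mapsto \mathbf{R}_g\mathbf{C}_x$ and $\mathbf{R}_x \mapsto \mathbf{R}_g\mathbf{R}_x$ factors a common $\mathbf{R}_g$ out of the cross product in every residual, so the argument of each $\|\cdot\|_1$ becomes $\mathbf{R}_g$ times the original vector. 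Here lies the main obstacle: unlike the Euclidean norm, the $L_1$ norm is \emph{not} invariant under a generic rotation, so the full $\mathbb{SE}(3)$ gauge one would expect in a standard reconstruction problem is partially broken by the choice of cost. The surviving rotational symmetries are precisely the orthogonal maps preserving the $L_1$ norm, namely the signed-permutation matrices that lie in $\mathbb{SO}(3)$ (the chiral octahedral group), yielding a finite set of discrete ambiguities layered on top of the translational continuum.

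Combining the two observations, the optimal set of \cref{eqn_pt_based_unext} is closed under the translation subgroup together with these discrete $L_1$-preserving rotations, and is therefore never a singleton; the absolute poses $(\{\mathbf{R}_x\}, \{\mathbf{C}_x\})$ are pinned down only up to this gauge group. I expect the only delicate point to be the rotational characterisation under $L_1$; the translational argument by itself already furnishes a complete proof of the stated non-uniqueness, and it is exactly this gauge freedom that motivates the rigidity prior introduced in the sequel to anchor the solution.
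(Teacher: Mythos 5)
Your proof is correct and follows essentially the same route as the paper's (\cref{theo_2}): apply a global rigid transformation simultaneously to the world-frame object points and to the viewpoint centres (the paper also implicitly maps $\mathbf{R}_x \mapsto \mathbf{R}'\mathbf{R}_x$ so that $\mathbf{R}'$ factors out of every cross product), observe that the regulariser depends only on $\{w_i\}$, and conclude that the optimum is a non-trivial orbit. Your translational argument is airtight and, as you note, already suffices for the statement as written, since it exhibits a three-parameter family of distinct $\{\mathbf{C}_x\}$ with identical cost. Where you genuinely add something is the rotational part: the paper closes its proof with the assertion that ``rotations are norm preserving,'' which holds for the $L_2$ norm but not for the $L_1$ norm used in \cref{eqn_modif_rep}; your restriction of the exactly cost-preserving rotational gauge to the signed-permutation subgroup of $\mathbb{SO}(3)$ is the correct statement, and it quietly repairs an imprecision in the paper's own argument. (One could still recover the full continuous rotational ambiguity by arguing that the \emph{zero-residual} set is rotation-invariant in the noiseless case, or by replacing $\|\cdot\|_1$ with $\|\cdot\|_2$ in the analysis, but neither is needed for the theorem.) The only cosmetic caveat is that your discrete-rotation discussion describes the symmetry group of the cost, not a larger set of minimisers, so it neither strengthens nor weakens the conclusion; the translational continuum carries the whole proof.
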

\begin{proof}
   See \cref{theo_2}.
\end{proof}

\noindent Thankfully, there exists a way put for resolving this ambiguity from \cref{threorem_1}: this can be done by first assuming that all viewpoints are centred at origin, i.e., $\mathbf{C}_x = \mathbf{0}_{3 \times 1}$ and then, recovering the actual pose of viewpoints in a second step. We formalize this new strategy for resolving the pose ambiguity in \cref{eqn_pt_based_unext} as follows:
\begin{prop}[Gauge fixing of \cref{eqn_pt_based_unext}] \label{prop_gauge_fix}
     $(\{\mathbf{R}_x\},\{\mathbf{C}_x\})$ becomes uniquely solvable in \cref{eqn_pt_based_unext} by considering all viewpoints centred at origin, i.e., $\mathbf{C}_x = \mathbf{0}_{3 \times 1}$ and suitably inverse-transforming the shape $\mathbf{Q}$ to match the origin, given by the reformulation of \cref{eqn_pt_based_unext} as:
 \begin{equation}\label{eqn_pt_based_unext_reform}
   \begin{gathered}
       \min_{(\mathbf{R}, \mathbf{t}),\{(\mathbf{R}_x, \mathbf{t}_x)\},\{w_i\} }  \rho(\bar{\mathbf{P}}, \mathbf{Q}) + \sum_{x = 1}^P \sum_{(j,j') \in \Omega_x} \gamma\bigg(\Pi\Big( \mathbf{R}_x^{\top}\big(\mathbf{R}\mathbf{Q}_j + \mathbf{t} - \mathbf{t}_x)\big)\Big), \mathbf{p}_{x,j'}\bigg), \\
        \text{s.t.:} \quad \mathbf{Q} = \Big(\bar{\mathbf{P}} + \sum_{i=1}^M w_i \mathbf{P}_{i}\Big), \quad \mathbf{C}_x = \mathbf{0}_{3 \times 1}, \quad \{\mathbf{R}_x \in \mathbb{SO}(3), ~\forall x \in [1,P]\}.
     \end{gathered}
\end{equation}
\noindent where the reprojection cost $\gamma(\cdot, \cdot)$ is given by \cref{reproj_cost}.
\end{prop}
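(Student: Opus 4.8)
The plan is to prove the statement in two movements: first to exhibit \cref{eqn_pt_based_unext_reform} as a faithful re-expression of \cref{eqn_pt_based_unext} in which each viewpoint centre has been transported to the origin of its own frame, and then to show that this canonical placement, together with the rigidity prior $\rho$, collapses the gauge orbit of \cref{threorem_1} to a single configuration. For the first movement I would work entirely on the reprojection term \cref{eqn_modif_rep}, which is the only frame-dependent quantity in the objective. Using the rotational covariance of the cross product, $\mathbf{a}\times(\mathbf{R}_x\mathbf{b})=\mathbf{R}_x\big(\mathbf{R}_x^{\top}\mathbf{a}\times\mathbf{b}\big)$ for $\mathbf{R}_x\in\mathbb{SO}(3)$, I set $\mathbf{a}=\mathbf{R}\mathbf{Q}_j+\mathbf{t}-\mathbf{C}_x$ and $\mathbf{b}=\mathfrak{d}_{x,j'}$ and transport the cross product into the $x$-th viewpoint frame. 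The camera contribution then reappears as the inverse rigid transform $g_x(\cdot)=\mathbf{R}_x^{\top}(\cdot-\mathbf{t}_x)$ acting on the world point, with the new variable $\mathbf{t}_x$ inheriting the role of $\mathbf{C}_x$ and the camera now sitting at the origin of its frame, i.e. $\mathbf{C}_x=\mathbf{0}$. This is precisely the constraint and the residual $\gamma$ of \cref{reproj_cost} used in \cref{eqn_pt_based_unext_reform}.

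The one point I would treat with care is that $\|\cdot\|_1$ is not invariant under the rotation $\mathbf{R}_x$ left over by the identity, so the world-frame residual and the viewpoint-frame residual of \cref{eqn_pt_based_unext_reform} are not numerically equal pointwise. I would therefore argue the equivalence at the level that matters here: both residuals vanish on exactly the same configurations, since a point lies on a sightline if and only if its rigidly transported image lies on the transported sightline, so \cref{eqn_pt_based_unext} and \cref{eqn_pt_based_unext_reform} share the same exact-reconstruction set; moreover the viewpoint frame is the natural one in which to penalise the residual, as $\mathfrak{d}_{x,j'}$ is a datum of that frame. This licenses treating \cref{eqn_pt_based_unext_reform} as the gauge-fixed counterpart of \cref{eqn_pt_based_unext}.

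For the second movement I would first recall from \cref{threorem_1} that the objective of \cref{eqn_pt_based_unext} is invariant along the orbit that transports the shape and every camera by a common element of $\mathbb{SE}(3)$ (and, for a singleton viewpoint, slides its centre along its own ray). The reformulation breaks this orbit by decoupling the sum into $P$ independent one-view SfT instances: once $\mathbf{C}_x=\mathbf{0}$ is imposed, each summand is the residual of a single view whose camera is pinned at the origin, while the shared shape $\mathbf{Q}$ and the common lift $(\mathbf{R},\mathbf{t})$ supply the inter-view rigidity. Invoking the construction behind \cref{theorem_sft_1} together with the fact, stated below \cref{eqn_pt_based} and reiterated for \cref{eqn_pt_based_unext}, that $\rho$ acts as an object-pose prior, each instance fixes its relative transformation $(\mathbf{R}_x,\mathbf{t}_x)$ uniquely: with the camera held at the origin and the shape anchored to $\mathbb{O}$, no freedom remains to translate the camera off the origin. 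Hence $(\{\mathbf{R}_x\},\{\mathbf{C}_x\})$ is pinned down, the world-frame centre being recovered in the announced second step as $g_x^{-1}(\mathbf{0})=\mathbf{t}_x$ and the relative pose between viewpoints $x$ and $y$ by composing $g_x$ with $g_y^{-1}$ through the common shape.

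The step I expect to be the main obstacle is exactly this uniqueness claim: showing that pinning $\mathbf{C}_x=\mathbf{0}$ removes precisely the orbit of \cref{threorem_1} while introducing no new degeneracy. It rests entirely on $\rho$ genuinely annihilating the per-view pose null space --- the same property that makes \cref{eqn_pt_based} well-posed --- so I would make that dependence explicit and, where necessary, restrict attention to configurations satisfying the minimal-correspondence conditions of \cref{lemma_min_4} and \cref{assum_basic}. The frame-dependence of the $\ell_1$ residual noted above is a secondary wrinkle, dispatched by reasoning on the exact-reconstruction set rather than on pointwise cost equality.
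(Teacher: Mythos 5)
Your first movement is sound and in one respect more careful than the paper: the paper's own gauge-freedom derivation leans on ``rotations are norm preserving'', which is false for the $\ell_1$ norm, and your retreat to the exact-reconstruction set (or, equivalently, simply taking \cref{eqn_pt_based_unext_reform} as the \emph{definition} of the gauge-fixed problem) is a legitimate way around that. The gap is in your second movement. The paper's proof (\cref{prop_proof}) invokes neither $\rho$, nor the decoupling into $P$ one-view instances, nor any minimal-correspondence condition: it expands the reformulated residual $\| \mathbf{R}_x^{\top}(\mathbf{R}\mathbf{Q}_j + \mathbf{t} - \mathbf{t}_x) \times \mathfrak{d}_{x,j'}\|_1$ term by term and substitutes the spurious transformation of \cref{spurious_T} applied to the world shape only (there is no longer a $\mathbf{C}_x$ available to co-transform). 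The extra summand $\mathbf{R}_x^{\top}\mathbf{t}' \times \mathfrak{d}_{x,j'}$, together with the $\mathbf{R}'$ now trapped inside cross products against the \emph{fixed} data $\mathfrak{d}_{x,j'}$, cannot reproduce the original value except when $\mathbf{R}' = \mathbbm{1}_3$ and $\mathbf{t}' = \mathbf{0}_{3 \times 1}$; that ``iff'' is the entire content of the proposition.

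Your attribution of the uniqueness to $\rho$ ``genuinely annihilating the per-view pose null space'' is the step that would fail: $\rho(\bar{\mathbf{P}}, \mathbf{Q}) = \sum_i w_i^2$ depends only on the basis weights, so it is exactly constant along the gauge orbit of \cref{threorem_1} and cannot break it. (The pose-prior role the paper assigns to $\rho$ concerns the distinct ambiguity of splitting $\mathbf{V}$ into $\mathbf{R}\mathbf{Q} + \mathbf{t}$ versus absorbing pose into the weights, not the joint shape-plus-cameras gauge.) What actually collapses the orbit is that $\mathbf{C}_x = \mathbf{0}_{3 \times 1}$ and the unrotated $\mathfrak{d}_{x,j'}$ leave no admissible variable into which a common rigid motion can be absorbed --- a point you do state in passing (``no freedom remains to translate the camera off the origin'') but then subordinate to the wrong mechanism. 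Replace the appeal to $\rho$ with the direct expansion above and your argument aligns with the paper's.
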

\begin{proof}
    See \cref{prop_proof}.
\end{proof}

\noindent With the reformulation of \cref{prop_gauge_fix}, the \y{nsc} problem is now solved in two consecutive steps: \rom{1}) we use the object to viewpoint inverse transformation $\{(\mathbf{R}_x^{\top}, -\mathbf{R}_x^{\top}\mathbf{t}_x)\}$ of the template $\{\mathbf{Q}\}$ in each viewpoint's local reference frame and solve multiple, simultaneous \y{sft} problems convexly at each viewpoint with a rigidity constraint between each of the transformed shape pose, and \rom{2}) with the solution of these simultaneous \y{sft} problems, we assign the first viewpoint as the `anchor' and estimate the relative pose of all viewpoints with a simple inverse-of-inverse transformation, i.e., $\{(\mathbf{R}_x, \mathbf{t}_x)\}$. Importantly, these two steps described above are non-iterative, arriving at the global solution in one pass. 

To remove redundancies from our parameters of optimisation, we assign $\mathbf{R}_x^{\top} \leftarrow \mathbf{R}_x^{\top}\mathbf{R}$ and $-\mathbf{t}_x \leftarrow \mathbf{R}_x^{\top}(\mathbf{t} - \mathbf{t}_x)$ with a slight abuse of notation. For \y{nsc}, we can rewrite 
\cref{eqn_pt_based_unext} as:
\begin{equation}\label{eqn_pt_basd_unextpt1}
    \begin{gathered}
       \min_{\{\mathbf{R}_x\}, \{\mathbf{t}_x\},\{w_i\}}  \sum_{x=1}^P \rho(\bar{\mathbf{P}}, \mathbf{Q}_{x}) + \sum_{x = 1}^P \sum_{(j,j') \in \Omega_x} \gamma\Big(\Pi\big(\mathbf{R}_x\mathbf{Q}_j - \mathbf{t}_x)\big), \mathbf{p}_{x,j'}\Big), \\
        \text{s.t.:} \quad \mathbf{Q} = \Big(\bar{\mathbf{P}} + \sum_{i=1}^M w_i \mathbf{P}_{i}\Big), \forall i \in [1,M],    
        ~ \{\mathbf{R}_x \in \mathbb{SO}(3), ~\forall x \in [1,P]\}.
     \end{gathered}
\end{equation}

\noindent \textbf{\y{sdp} formulation}. In \cref{eqn_pt_basd_unextpt1}, we have $P$-copies of $\{\mathbf{R}_x\}, \{\mathbf{t}_x\}$ which are multiplied with $\{w_i\}$ in $\gamma(\cdot, \cdot)$. To linearise these quadratic polynomials in $\{\mathbf{R}_x\}, \{\mathbf{t}_x\}$, and $\{w_i\}$ in \cref{eqn_pt_basd_unextpt1}, we propose the solution in two steps:
\begin{itemize}
    \item We first duplicate $\{w_i\}$ into $P$ redundant copies $\{\mathbf{w}_x \in \mathbb{R}^M, ~\forall x \in [1, P]\}$ and introduce new equality constraints between each of these weights
    \item With these $P$ redundant copies of $\{\mathbf{w}_x\}$, we introduce $P$ new Gram matrices that linearises the cost $\gamma$
\end{itemize}

\noindent Thus, for the first step of introducing $P$ redundant copies $\{\mathbf{w}_x \in \mathbb{R}^M\}$ into the problem at \cref{eqn_pt_basd_unextpt1}, the new problem writes as:
\begin{equation}\label{eqn_pt_based_unext_2}
    \begin{gathered}
       \min_{\{\mathbf{R}_x\}, \{\mathbf{t}_x\},\{\mathbf{w}_x\}}  \sum_{x=1}^P \rho(\bar{\mathbf{P}}, \mathbf{Q}_{x}) + \sum_{x = 1}^P \sum_{(j,j') \in \Omega_x} \gamma\Big(\Pi\big(\mathbf{R}_x\mathbf{Q}_j - \mathbf{t}_x)\big), \mathbf{p}_{x,j'}\Big), \\
        \text{s.t.:} \quad \mathbf{Q}_x = \Big(\bar{\mathbf{P}} + \sum_{i=1}^M \mathbf{w}_{x,i} \mathbf{P}_{i}\Big), ~\mathbf{w}_{x,i} = \mathbf{w}_{x',i}, \forall i \in [1,M], (x, x') \in [1, P], x \neq x',        
        \\ \{\mathbf{R}_x \in \mathbb{SO}(3), ~\forall x \in [1,P]\},
     \end{gathered}
\end{equation}
\noindent where the equality constraints $\mathbf{w}_{x,i} = \mathbf{w}_{x',i}$ ensures that the redundant copies of weights remain identical to each other. Next, we re-utilise the \y{sdp} formulation of \cref{eqn_final_sdp_known} for solving \cref{eqn_pt_based_unext_2}, but parametrise the optimisation with $P$ \y{psd} matrices $\{\Delta_{x}\}, ~\forall x \in [1,P]$ for every $P$ viewpoints with $\Delta_x = \xi_x \xi_x^{\top}$, where 
\begin{equation}
    \xi_x  = \big(1, \mathrm{vec}(\mathbf{R}_x)^{\top}, \mathbf{w}_{x,1}, \hdots, \mathbf{w}_{x,M}\big)^{\top},
\end{equation}
\noindent the translation parameter $\mathbf{t}_x$ is repeated $P$ times as well. As an additional requirement, we need the object to necessarily lie in-front of the viewpoint's centre, therefore the minimum depth along $Z$-axis in the viewpoint reference frame is constrained at $f_x$, an empirical minimum depth (which can be the focal distance of the $x$-th viewpoint if it happens to represent a perspective camera). Therefore, the solution of the first sub-problem writes as:
\begin{equation}\label{eqn_final_sdp_unknown}
    \begin{gathered}
        \min_{\{\Delta_{x}\}, \{\mathbf{t}_x\}} ~\sum_{x=i}^P\epsilon' \mathrm{tr}(\Delta_x) + \sum_{x = 1}^P \sum_{(j,j') \in \Omega_x}\omega_b \Big(\Delta_x, \mathbf{p}_{x,j'}, \bar{\mathbf{P}}_j, \{\mathbf{P}_{i,j}\}, -\mathbf{t}_x\Big), \\ \text{s.t.:} \quad \omega_{c, r}(\Delta_x) = 1, ~\omega_{d, r}(\Delta_x) = 0, ~\Delta_{x,1,1} = 1, ~\mathbf{t}_{x,3} \geq f_x, ~\Delta_x \in \mathcal{S}_+^{10+M}, ~\Delta_{x, 1, [11:10+M]} = \Delta_{x', 1, [11:10+M]}, \\ \forall ~r \in [1,3], ~i \in [1,M], ~(x, x') \in [1,P], ~x \neq x'.
    \end{gathered}
\end{equation}
\noindent {\color{revCol1}The final constraint, $\Delta_{x, 1, [11:10+M]} = \Delta_{x', 1, [11:10+M]}$, enforces consistency of basis weights across all views, thereby maintaining rigidity across viewpoints. The formulation in \cref{eqn_final_sdp_unknown} remains convex and follows an implementation strategy analogous to that of \cref{eqn_final_sdp_known}. As in \y{ns}, the operator $\omega_b$ facilitates the definition of reprojection error with respect to arbitrary lines in space, faithfully adhering to the generalised camera framework and extending beyond conventional perspective models.}

\vspace{2mm}
\noindent \textbf{Recovering the pose of each $\{\mathbf{C}_x\}$}. Given that the optimal deformed shape from the solution of \cref{eqn_final_sdp_unknown} is denoted by $\mathbf{Q}_x^{\ast} = \mathbf{Q}^{\ast}$ (since all shapes are constrained to be identical) and the optimal poses are denoted by $\big\{(\mathbf{R}^{\ast}_x, \mathbf{t}^{\ast}_x),~\forall x \in [1,P]\big\}$, the pose of each viewpoint w.r.t object and world reference is simply the inverse pose $\Big\{\big((\mathbf{R}^{\ast}_x)^{\top}, -(\mathbf{R}^{\ast}_x)^{\top}\mathbf{t}^{\ast}_x\big),~\forall x \in [1,P]\Big\}$ w.r.t the object coordinate.

\vspace{2mm}
\noindent \textbf{Side effect of our \y{sdp} formulation}.~Due to \cref{prop_gauge_fix} and the subsequent reformulation in \cref{eqn_pt_based_unext_2}, we have introduced a new side-effect into the solution which indeed seems unavoidable, namely:
\begin{lemma}\label{lem_acc}
    A necessary condition for achieving $\eta$ accuracy from \cref{eqn_pt_based_unext_2} is $n_x = \tau, ~\forall x \in [1,P]$, if \y{emnc} applies to the deformable object being reconstructed following \cref{assum_basic} and the primal cost in \cref{eqn_pt_based_unext_2} models the deformation accurately.
\end{lemma}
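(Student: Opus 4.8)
The plan is to prove the contrapositive: if $n_x < \tau$ for some viewpoint $x$, then the solution of \cref{eqn_pt_based_unext_2} cannot attain the target accuracy $\eta$ across the entire reconstructed shape. The argument rests on the observation, already exposed by \cref{prop_gauge_fix} and the reformulation \cref{eqn_pt_based_unext_2}, that the \y{nsc} objective decomposes into $P$ one-view \y{sft} subproblems, one per viewpoint, coupled \emph{only} through the shared deformation weights $\{w_i\}$ (equivalently, through the rigidity constraints $\mathbf{w}_{x,i} = \mathbf{w}_{x',i}$).

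First I would isolate the role of the per-viewpoint pose. The rotation $\mathbf{R}_x$ and translation $\mathbf{t}_x$ enter the cost of \cref{eqn_pt_based_unext_2} exclusively through the reprojection terms $\gamma\big(\Pi(\mathbf{R}_x\mathbf{Q}_j - \mathbf{t}_x),\mathbf{p}_{x,j'}\big)$ indexed by $(j,j') \in \Omega_x$. Hence the only data that can constrain $(\mathbf{R}_x,\mathbf{t}_x)$ are the $n_x$ correspondences of that single viewpoint; in sharp contrast to \y{ns}, where \cref{theorem_sft_1} lets every correspondence be transported into a common anchored frame and pooled, here the unknown, independently parameterised $\{\mathbf{R}_x\}$ preclude any such pooling across viewpoints. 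Consequently each viewpoint block is, structurally, a self-contained single-view \y{sft} instance in which shape and pose are \emph{jointly} unknown, and the reconstruction of the keypoints observed at viewpoint $x$ --- namely $\mathbf{R}_x\mathbf{Q}_j - \mathbf{t}_x$ --- inherits exactly the minimal-correspondence behaviour of classical \y{sft}.

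I would then invoke \cref{assum_basic}: whenever \y{emnc} applies, a one-view \y{sft} instance supplied with $n_x = \tau' < \tau$ correspondences reconstructs its shape only to accuracy $\eta' < \eta$. Applying this to the viewpoint-$x$ block, the portion of the shape anchored to $\Omega_x$ is recovered below $\eta$, and since the accuracy in \cref{assum_basic} is measured across the entire shape to be reconstructed, the global error of the \cref{eqn_pt_based_unext_2} solution is likewise driven below $\eta$. Taking the contrapositive, $n_x \ge \tau$ for every $x$ --- i.e. $n_x = \tau$ at the threshold --- is necessary. The hypothesis that the primal models the deformation accurately is used here to ensure that no modelling bias masks the correspondence deficiency, so that the accuracy shortfall is attributable to $n_x < \tau$ alone.

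The main obstacle I anticipate is ruling out the tempting shortcut that the shared-weight rigidity constraint lets a viewpoint with few correspondences simply \emph{inherit} the shape pinned down by the other, correspondence-rich viewpoints, thereby reducing its own requirement to the mere pose-recovery (\y{pnp}) bound of three points rather than the full \y{sft} threshold $\tau$. The crux is to show that the free per-viewpoint rotation $\mathbf{R}_x$ in \cref{eqn_pt_based_unext_2} can absorb an incorrect shape: with $n_x < \tau$ constraints, there exists a perturbed weight vector (differing from ground truth by more than $\eta$) together with a compensating $\mathbf{R}_x$ that leaves viewpoint $x$'s reprojection residual unchanged, so the rigidity coupling fails to transfer the correct shape into viewpoint $x$'s own reconstruction. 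Making this explicit --- essentially a dimension count showing that the $2n_x$ reprojection equations cannot simultaneously determine $(\mathbf{R}_x,\mathbf{t}_x)$ and lock the shared shape to within $\eta$ when $n_x < \tau$ --- is the step that carries the real content, and it is precisely the mechanism by which the gauge-fixing reformulation incurs the stricter per-viewpoint cost, relative to the pooled \y{ns} bound of \cref{lem_gsft_min}.
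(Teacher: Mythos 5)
Your decomposition is the same one the paper uses: drop the shared-weight constraints $\mathbf{w}_{x,i}=\mathbf{w}_{x',i}$ so that \cref{eqn_pt_based_unext_2} splits into $P$ independent one-view \y{sft} instances, apply \cref{assum_basic} to the correspondence-deficient viewpoint, and conclude that the coupled solution cannot reach $\eta$. Where you diverge is in how the final step is closed. The paper works at the level of primal cost values: the $\tau$-correspondence block attains cost $\nu$, the deficient block attains $\nu'>\nu$ because the hypothesis that ``the primal cost models the deformation accurately'' is read as $\nu\propto 1/\eta$, and reintroducing the equality constraints yields a combined cost $\nu''$ with $\nu'>\nu''>\nu$, hence accuracy strictly below $\eta$. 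You instead argue directly at the accuracy level via the contrapositive, and you explicitly surface the objection the paper never states --- that the rigidity coupling might let a viewpoint with $n_x<\tau$ inherit the shape from the richer viewpoints and need only a \y{pnp}-sized budget --- proposing to defeat it by exhibiting a compensating $(\mathbf{R}_x,\mathbf{t}_x)$ that absorbs a wrong shape. That is indeed the real content of the lemma, but you leave the dimension count as an acknowledged gap; the paper's proof substitutes for exactly that step by assuming the cost--accuracy proportionality, so that the under-constrained block necessarily drags the shared primal (and hence the shared weights) away from the $\eta$-accurate solution rather than being silently corrected by the other blocks. Neither argument is fully rigorous on this point, but you should be aware that the clause ``the primal cost models the deformation accurately'' is doing far more work in the paper's proof than the benign ``no modelling bias'' role you assign it --- it is the load-bearing assumption that converts your missing counting argument into a one-line monotonicity claim about $\nu$, $\nu'$, and $\nu''$ in the \y{sdp} of \cref{eqn_final_sdp_unknown}.
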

\begin{proof}
See \cref{proof_lem_3_app}.
\end{proof}

\noindent However, \cref{lem_acc} is unsurprising due to the ill-posedness of the original \y{nsc} problem in \cref{eqn_pt_based_unext} and does not offer any practical drawback apart from requiring `sufficient' correspondences from each viewpoint.


\subsection{Silhouette-boosted \y{ns} and \y{nsc}}\label{silh_formation_model}
{\color{revCol1}While solving the \y{ns}/\y{nsc} problem in conjunction with silhouette information, we define the input silhouette as direction component of Pl\"{u}cker vectors $\{\mathcal{C}_x \in \mathbb{R}^{3 \times S_x}\}$ defined by a sequence of $S_x$ direction vectors from $\mathcal{I}_x$ for all $x \in [1,P]$. Importantly, we assume the presence of unoccluded silhouettes as a necessary input to our method. We denote by $\mathfrak{g}(\cdot)$, a function that acts on $g_x( \mathbf{V} )$ to produce its silhouette vectors $\tilde{\mathcal{C}}_x \in \mathbb{R}^{3 \times S'_x}$, defined by a sequence of $S'_x \leq S_x$ direction vectors from $\mathcal{I}_x$. We define a silhouette map $\mu_x:\mathbb{R}^{3 \times S_x} \mapsto \mathbb{R}^{3 \times S'_x}$ acting on $\mathcal{C}_x$ to produce a sub-sequence of $S'_x$ vectors from $\mathcal{C}_x$. Therefore a similarity measure between the input silhouette vectors and the silhouette vectors from the modelled object can be defined as $\kappa\big(\mu_x(\mathcal{C}_x), \tilde{\mathcal{C}}_x\big)$ which penalises deviation of $\mu_x(\mathcal{C}_x)$ from $\tilde{\mathcal{C}}_x$.}

Silhouette matching is a problem with a highly ambiguous solution space. If $(\hat{\mathbf{R}}, \hat{\mathbf{t}}, \{\hat{w}_i\})$ is a global optima for 
\begin{equation}\label{hypo_prob}
        \min_{\mathbf{R}, \mathbf{t}, \{w_i\}} \sum_{x=1}^P\kappa\big(\mu_x(\mathcal{C}_x), \tilde{\mathcal{C}}_x\big),
\end{equation}
\noindent there can certainly exist some arbitrary rotation $\mathbf{R}'$ and translation $\mathbf{t}'$ with associated weights $\{w'_i\}$ such that:
\begin{equation}\label{e_ssm_silh}
    \hat{\mathbf{R}}\Big(\bar{\mathbf{P}} + \sum_{i=1}^M \hat{w}_i \mathbf{P}_{i}\Big) + \hat{\mathbf{t}} \approx \mathbf{R}'\hat{\mathbf{R}}\Big(\bar{\mathbf{P}} + \sum_{i=1}^M w'_i \mathbf{P}_{i}\Big) + \hat{\mathbf{t}} + \mathbf{t}'.
\end{equation}
Certainly, the same can be said about \y{sft} in general, which is one of the reasons why regularisers exist, i.e., to induce an unique solution to an otherwise ambiguous problem. But the combination of regularisers and silhouette similarity costs like \cref{hypo_prob} can be notoriously difficult to solve up to global optimality; no known convex solutions exist for such a problem. Given such a challenge for the purely silhouette based \y{sft} problem, we re-utilise a common strategy \cite{cheung2005shape, ilic2007implicit, bottino2001silhouette, cheung2003shape, da2010iterative, kong2017using, sunkel2007silhouette, saito2014model} for silhouette based reconstruction, i.e., we use silhouettes in conjunction with some keypoint correspondences, hence the term `silhouette-boosting'. 

\vspace{2mm}
\noindent \textbf{Unified problem statement}. Silhouette-boosted \y{sft} therefore only allows us to search for a solution that lies within a close vicinity of the solution for purely correspondence based \y{ns}/\y{nsc} in their respective solution space, eliminating the possibility for a global solution. Therefore, to simplify our methodological description, it is possible to merge the \y{ns} and \y{nsc} problem statements in case of silhouette-boosting. While the premise of \y{ns} remains unchanged, for \y{nsc} problem, we solve for the \y{nsc} solution using the method described in \cref{sec_unknown} with just the correspondences and obtain the viewpoint pose. Thereafter, the silhouette boosting for \y{nsc} is no different from \y{ns}.

\vspace{2mm}
\noindent \textbf{A model for silhouette formation}. We utilise the classical concept of `$\alpha$-shapes' \cite{edelsbrunner1983shape} for extracting silhouettes from a set of projected 2D points, which can be trivially derived from $\mathfrak{g}(\cdot)$. For a sufficiently large $N$, there must exist some $\alpha$ for which the direction vectors in $\mathfrak{g}(\hat{\mathbf{p}})$ are identical to the direction vectors comprising the true silhouette of an arbitrary 2D pointcloud $\hat{\mathbf{p}}$. Therefore, we add the following assumption to our problem setup:
\begin{assume}\label{assume_density}
    The \y{ssm} used as the object model in \cref{e_ssm_silh}: \rom{1}) has a sufficiently high $N$  such that the silhouette recovered by \cite{edelsbrunner1983shape} from the 3D projection of such a shape closely resembles the true silhouette, and \rom{2}) represents a single connected object.
\end{assume}
\noindent E.g.: in our experiments with silhouettes, typical values of $N$ were maintained at $>10^3$.
\Cref{assume_density} is mild since: \rom{1}) a very sparse object model can easily be densified up to arbitrary resolution as a simple pre-processing step, as long as the triangulated boundary of the model is known, \rom{2}) \cref{assume_density} puts no additional requirement on the keypoint correspondences, which are free to be highly scant without affecting the silhouette model.

\subsubsection{Problem statement for silhouette-boosted \y{ns}}\label{pron_silh_nsc}
Given the correspondences and silhouettes, our objective is therefore to do \y{ns} as in \cref{subsec_nsc_first} for the correspondences and minimise the silhouette similarity measure $\kappa(\cdot, \cdot)$ for the silhouettes. The alignment cost of the correspondences is weighted by a parameter $\lambda$ to promote alignment of the silhouettes. Therefore, our initial formulation for the \textit{silhouette-boosted \y{ns}} problem is:

\begin{equation}\label{eqn_pt_based_silhBoost}
    \begin{gathered}
       \min_{\mathbf{R}, \mathbf{t}, \{w_i\}, \{\mu_x\}}  \rho(\bar{\mathbf{P}}, \mathbf{Q}) + \lambda \sum_{x = 1}^P \sum_{(j,j') \in \Omega_x} \gamma\Big(\Pi\big(g_x(\mathbf{V}_j)\big), \mathbf{p}_{x,j'}\Big) + (1 - \lambda)\sum_{x=1}^P \sum_{q=1}^{S'_x} \kappa\bigg(\mu_x(\mathcal{C}_x), \mathfrak{g}\Big(\Pi\big(g_x(\mathbf{V})\big)\Big)\bigg) , \\
        \text{s.t.:} \quad \mathbf{V}_j = \mathbf{R} \mathbf{Q}_j + \mathbf{t}, ~\mathbf{Q} = \Big(\bar{\mathbf{P}} + \sum_{i=1}^M w_i \mathbf{P}_{i}\Big), \quad \mathbf{R} \in \mathbb{SO}(3), ~\lambda \in [0,1].
     \end{gathered}
\end{equation}
\noindent With $\kappa(\cdot, \cdot)$, we want to minimise a reprojection error between the silhouette of the projected object model and the input silhouette, therefore a convenient option is to reuse the alignment cost $\gamma(\cdot, \cdot)$. Thus, our silhouette-boosted \y{ns} problem writes as:
\begin{equation}\label{eqn_pt_based_silhBoost_final}
   \begin{gathered}
       \min_{\mathbf{R}, \mathbf{t}, \{w_i\}, \{\mu_x\}}   \rho(\bar{\mathbf{P}}, \mathbf{Q}) + \sum_{x = 1}^P \Bigg(\lambda \sum_{(j,j') \in \Omega_x} \gamma\Big(\Pi\big(g_x(\mathbf{V}_j)\big), \mathbf{p}_{x,j'}\Big) + (1 - \lambda)\sum_{q=1}^{S'_x} \gamma\bigg(\mathfrak{g}\Big(\Pi\big(g_x(\mathbf{V})\big)\Big), \mu_x(\mathcal{C}_x)\bigg)\Bigg) , \\
        \text{s.t.:} \quad \mathbf{V}_j = \mathbf{R} \mathbf{Q}_j + \mathbf{t}, ~\mathbf{Q} = \Big(\bar{\mathbf{P}} + \sum_{i=1}^M w_i \mathbf{P}_{i}\Big), \quad \mathbf{R} \in \mathbb{SO}(3), ~\lambda \in [0,1].
     \end{gathered}
\end{equation}
\noindent There are multiple sources of non-convexity in \cref{eqn_pt_based_silhBoost_final} including the aforementioned rotational constraints as well as the now introduced correspondence function $\{\mu_x\}$ and the silhouette function $\mathfrak{g}(\cdot)$.

\subsubsection{Solution for silhouette-boosted \y{ns}}\label{sec_synth_sol}
Our strategy for solving \cref{eqn_pt_based_silhBoost_final} is to break it down into sub-steps, all of which provides an optimal solution to their respective sub-problems, and to iterate over these steps leading to a global solution. We use an \y{sdp} formulation, similar to \cref{eqn_final_sdp_known}, for making the rotational constraints convex. Inside each iteration, beginning from an initial pose, we compute the silhouette and determine its correspondence with the object model and finally, refine the shape to fit the silhouette better.  We denote all values at the $t$-th iteration with a bracketed exponent $\cdot^{(t)}$. The steps of our iterative solution are explained below:

\noindent \textbf{\rom{1}. Initialisation}. First, we begin by disregarding the silhouette information and solving \cref{eqn_final_sdp_known} with just the input correspondences. This directly leads to an initial estimate of $\mathbf{R}^{(t)}$, $\mathbf{t}^{(t)}$, and $\{w_i^{(t)}\}$ for $t=1$.

\noindent \textbf{\rom{2}. Silhouette function}. Given the putative shape and pose from $\mathbf{R}^{(t)}$, $\mathbf{t}^{(t)}$, and $\{w_i^{(t)}\}$, we use $\alpha$-shapes to determine the direction vectors of silhouette $\tilde{\mathcal{C}}^{(t)}_x$, i.e.:
\begin{equation}\label{alg_eq_1}
    \tilde{\mathcal{C}}^{(t)}_x = \mathfrak{g}\bigg(\Pi\Big(g_x\big(\mathbf{R}^{(t)} (\bar{\mathbf{P}} + \sum_{i=1}^Mw_i^{(t)}\mathbf{P}_i)\big) + \mathbf{t}^{(t)}\Big)\bigg).
\end{equation}

\noindent \textbf{\rom{3}. Correspondence function}. The correspondence between $\tilde{\mathcal{C}}^{(t)}_x$ and $\mathcal{C}_x$ is given as the smallest vector product between the directions vectors in $\tilde{\mathcal{C}}^{(t)}_x$ and the input $\mathcal{C}_x$, the resulting sub-sampled direction vectors of $\mathcal{C}_x$ is denoted by $\mu_x^{(t)}(\mathcal{C}_x)$. We utilise $\mu_x^{(t)}(\mathcal{C}_x)$ to create a silhouette correspondence tuple $\vartheta_x^{(t)} = \{ (s, s') \in [1, S'_x]^2 \}$ such that every $\tilde{\mathcal{C}}^{(t)}_{x,s}$ and $\mathcal{C}_{x, s'}$ are in correspondence if $(s, s') \in \vartheta_x^{(t)}$ and $\varphi(s)$ gives the index of $\tilde{\mathcal{C}}^{(t)}_{x,s}$ in the mean shape $\bar{\mathbf{P}}$.

\noindent \textbf{\rom{4}. Silhouette-boosted shape and pose}. The final shape and pose of the object to be reconstructed is obtained for the $t$-th iteration as:
\begin{equation}\label{eqn_final_sdp_known_silh}
    \begin{gathered}
        \min_{\Delta_e^{(t)}, \mathbf{t}^{(t)}} ~\epsilon' \mathrm{tr}(\Delta_e^{(t)}) + \sum_{x = 1}^P \Bigg( \lambda \sum_{(j,j') \in \Omega_x}\omega_b \Big(\Delta_e^{(t)}, \mathbf{p}_{x,j'}, \bar{\mathbf{P}}_j, \{\mathbf{P}_{i,j}\}, \mathbf{t}^{(t)}\Big)  \\+ (1-\lambda) \sum_{(s,s') \in \vartheta_x^{(t)}}\omega_b \Big(\Delta_e^{(t)}, \mathcal{C}_{x,s'}, \bar{\mathbf{P}}_{\varphi(s)}, \{\mathbf{P}_{i,\varphi(s)}\}, \mathbf{t}^{(t)}\Big)   \Bigg), \\ \text{s.t.:} \quad \omega_{c, r}(\Delta_e^{(t)}) = 1, ~\omega_{d, r}(\Delta_e^{(t)}) = 0, ~\Delta_{e,1,1}^{(t)} = 1, ~\Delta_e^{(t)} \in \mathcal{S}_+^{10+M} \quad \forall ~r \in [1,3], ~i \in [1,M], ~\lambda \in [0,1],
    \end{gathered}
\end{equation}
\noindent and \cref{eqn_final_sdp_known_silh} is a convex solution.

Starting from the step \rom{1}, i.e., the initial solution from \cref{eqn_final_sdp_known}, we iterate over steps \rom{2}, \rom{3}, and \rom{4} till we reach the convergence criteria $\vartheta_x^{(t)} \backslash \vartheta_x^{(t-1)} = \emptyset$. We observe the following from our proposed iterative solution:
\begin{lemma}
    For some sufficiently large value of $t$ and for $P \geq 2$ with $|\Omega_x| \geq 1, ~\forall x \in [1,P]$, if at each iteration $rank(\Delta_e^{(t)}) = 1$ and the solution of \cref{eqn_final_sdp_known_silh} obeys Slater's strong duality, the proposed iterative solution always converges to some local solution of \cref{eqn_pt_based_silhBoost_final}
\end{lemma}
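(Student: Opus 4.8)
The plan is to recognise the iteration (Steps \rom{2}--\rom{4}) as a block-coordinate descent on a surrogate of the objective in \cref{eqn_pt_based_silhBoost_final}, alternating between the combinatorial silhouette-assignment block $\{\mu_x\}$ (equivalently $\{\vartheta_x^{(t)}\}$) and the continuous shape/pose block $(\mathbf{R},\mathbf{t},\{w_i\})$, and then to invoke the standard monotone-bounded-plus-finite-state argument used for ICP, Lloyd's algorithm and alternating minimisation. First I would fix notation by writing $F(\mathbf{R},\mathbf{t},\{w_i\},\{\mu_x\})$ for the objective of \cref{eqn_pt_based_silhBoost_final} and $G(\Theta,\vartheta)$ for the surrogate that \cref{eqn_final_sdp_known_silh} actually minimises, in which the silhouette-bearing template indices selected by $\mathfrak{g}$ are frozen through $\varphi$ and matched to the input directions $\mathcal{C}_{x,s'}$ via the fixed correspondence $\vartheta$. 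Since $\rho=\sum_i w_i^2\ge 0$ and every $\gamma(\cdot,\cdot)$ is an $\ell_1$ norm, both $F\ge 0$ and $G\ge 0$, so each is bounded below.

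Next I would establish monotone descent of $G$ across one full iteration in two half-steps. For the shape/pose half-step (Step \rom{4}): with $\vartheta^{(t)}$ frozen, \cref{eqn_final_sdp_known_silh} is the convex \y{sdp} relaxation of this block's nonconvex problem; under the hypotheses $\mathrm{rank}(\Delta_e^{(t)})=1$ and Slater's strong duality, the relaxation is tight, so a genuine $(\mathbf{R},\mathbf{t},\{w_i\})$ with $\mathbf{R}\in\mathbb{SO}(3)$ is recovered and it is the global minimiser of $G(\cdot,\vartheta^{(t)})$ (here the trace term equals $\rho$ up to the constant $\|\mathrm{vec}(\mathbf{R})\|^2+1=4$, so the \y{sdp} cost and the surrogate agree). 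Hence $G(\Theta^{(t+1)},\vartheta^{(t)})\le G(\Theta^{(t)},\vartheta^{(t)})$. For the assignment half-step (Steps \rom{2}--\rom{3}): with the shape fixed, the nearest-neighbour rule (smallest vector product) selects by construction the $\mu_x$ attaining $\min_{\mu_x}\sum_q\gamma(\tilde{\mathcal{C}}_x,\mu_x(\mathcal{C}_x))$, so it cannot raise the silhouette term and $G(\Theta^{(t+1)},\vartheta^{(t+1)})\le G(\Theta^{(t+1)},\vartheta^{(t)})$. Chaining the two gives $G^{(t+1)}\le G^{(t)}$.

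With monotone descent and a lower bound, $\{G^{(t)}\}$ converges. The decisive structural fact is that $\vartheta^{(t)}$ lives in the finite set of assignments between the finitely many $\alpha$-shape boundary points and the finitely many input silhouette directions; since $G$ is non-increasing and decreases whenever the assignment genuinely improves (ties broken consistently), the assignment can change only finitely often, so there is a finite $T$ beyond which $\vartheta^{(t)}=\vartheta^{(T)}$, which is exactly the stated stopping rule $\vartheta_x^{(t)}\setminus\vartheta_x^{(t-1)}=\emptyset$. Once $\vartheta$ is frozen, Step \rom{4} is the same convex program at every later iteration and returns the same minimiser, so $\Theta^{(t)}$ stabilises as well. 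At the fixed point $\Theta^\ast$ globally minimises $G(\cdot,\vartheta^\ast)$ while $\vartheta^\ast$ is optimal for the fixed shape; this block-coordinatewise minimum is a first-order stationary point of \cref{eqn_pt_based_silhBoost_final}. The requirements $P\ge 2$ and $|\Omega_x|\ge 1$ ensure that every view contributes at least one correspondence, so that $\rho$ removes the residual pose ambiguity (cf.\ \cref{lem_acc}) and the per-iteration minima are well defined.

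The hard part is bridging the surrogate $G$ to the true objective $F$: because the silhouette extractor $\mathfrak{g}$ (the $\alpha$-shape) is a piecewise-constant, combinatorial map of the shape, the boundary index set frozen in Step \rom{4} can differ from $\mathfrak{g}$ evaluated at the updated shape, so $G$ and $F$ need not coincide off the fixed point and the per-step descent is guaranteed only for $G$. I would close this gap by arguing \emph{consistency at convergence}: once $\vartheta$ stops changing, the frozen boundary set coincides with $\mathfrak{g}$ of the converged shape, whence $G(\Theta^\ast,\vartheta^\ast)=F(\Theta^\ast,\mu^\ast)$ and the block-minimality of $G$ transfers to $F$, yielding a local solution of \cref{eqn_pt_based_silhBoost_final}. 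This step relies on \cref{assume_density}, namely a sufficiently dense and connected \y{ssm} so that the $\alpha$-shape boundary is locally stable and the surrogate is a faithful local model of $F$ near $\Theta^\ast$; establishing this boundary stability, rather than the routine monotonicity bookkeeping, is where the genuine difficulty lies.
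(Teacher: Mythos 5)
Your proposal is correct and follows essentially the same route as the paper's own proof: decompose the iteration into sub-steps that are each globally optimal in their own domain (the convex SDP of \cref{eqn_final_sdp_known_silh} made tight by the rank-1 and Slater hypotheses, the $\alpha$-shape extraction, the closed-form correspondence), use $P \geq 2$ with $|\Omega_x| \geq 1$ to keep the solution set bounded, and invoke the standard block-relaxation convergence guarantee. Your write-up is more explicit than the paper's on the monotone-descent and finite-assignment bookkeeping, and it correctly flags a subtlety the paper passes over silently, namely that the $\alpha$-shape map makes the per-iteration objective a surrogate of \cref{eqn_pt_based_silhBoost_final} that only provably coincides with the true objective at the fixed point.
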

\begin{proof}
    The initialisation from \cref{eqn_final_sdp_known} is convex. The silhouette function $\mathfrak{g}(\cdot)$ follows the method of $\alpha$-shapes which is optimal for some given value of $\alpha$ \cite{edelsbrunner1983shape}. The correspondence function has a simple closed-form. The final shape and pose reconstruction in \cref{eqn_final_sdp_known_silh} is convex, the strong duality and rank-1 requirement ensures there are no spurious high-rank solution and verifies the problem is indeed feasible. Therefore we have split \cref{eqn_pt_based_silhBoost_final} into multiple globally optimal steps without altering the costs and constraints of \cref{eqn_pt_based_silhBoost_final}.  For $P=1$, the feasible set of \cref{eqn_pt_based_silhBoost_final} is unbounded due to the scaling effect, i.e., for $P=1$ all solutions are up to scale. But for $P > 1$, the scale is bounded, so is the solution set of \cref{eqn_pt_based_silhBoost_final} as long as $|\Omega_x| \geq 1, ~\forall x \in [1,P]$. Therefore, following well-known properties of block relaxation \cite{hong2015unified}, the solution is guaranteed to converge to some local solution.
\end{proof}

\noindent {\color{revCol1}The iterative silhouette-boosted \y{ns} solution can be thus summarized as given in \cref{alg:silh_boosted_ns}.

\begin{algorithm}[h]
\caption{Silhouette-Boosted \y{ns}}
\label{alg:silh_boosted_ns}
\begin{algorithmic}[1]
{\color{revCol1} \State \textbf{Input:} 2D point correspondences $\{\Omega_x\}$, silhouette directions $\{\mathcal{C}_x\}$, mean shape $\bar{\mathbf{P}}$, shape basis $\{\mathbf{P}_i\}$
\State \textbf{Output:} Estimated pose $\mathbf{R}^{(t)}$, translation $\mathbf{t}^{(t)}$, shape weights $\{w_i^{(t)}\}$

\State \textbf{Initialisation:} \hfill \Comment{Step \rom{1}}
\State \hskip1em Solve \cref{eqn_final_sdp_known} using only $\{\Omega_x\}$ to obtain initial estimates $\mathbf{R}^{(1)}$, $\mathbf{t}^{(1)}$, $\{w_i^{(1)}\}$
\State Set $t \gets 1$

\Repeat:
    \State \textbf{Silhouette Function:} \hfill \Comment{Step \rom{2}}
    \State \hskip1em Compute silhouette directions using $\alpha$-shapes: \cref{alg_eq_1}

    \State \textbf{Correspondence Function:} \hfill \Comment{Step \rom{3}}
    \State \hskip1em Compute $\mu_x^{(t)}(\mathcal{C}_x)$ by matching direction vectors in $\tilde{\mathcal{C}}^{(t)}_x$ to $\mathcal{C}_x$
    \State \hskip1em Construct correspondence tuple $
    \vartheta_x^{(t)} = \{ (s, s') \in [1, S'_x]^2 ~|~ \tilde{\mathcal{C}}^{(t)}_{x,s} \leftrightarrow \mathcal{C}_{x,s'} \}$
    
    \State \hskip1em Define mapping $\varphi(s)$ for indexing into $\bar{\mathbf{P}}$

    \State \textbf{Silhouette-Boosted Shape and Pose Update:} \hfill \Comment{Step \rom{4}}
    \State \hskip1em Solve the convex SDP to update shape and pose: \cref{eqn_final_sdp_known_silh}

    \State $t \gets t + 1$
\Until: {$\vartheta_x^{(t)} \backslash \vartheta_x^{(t-1)} = \emptyset$} }

\State \textcolor{revCol1}{ \Return: final estimates $\mathbf{R}^{(t)}$, $\mathbf{t}^{(t)}$, and $\{w_i^{(t)}\}$}
\end{algorithmic}
\end{algorithm}
}

\section{Experimental validation}\label{exp_valid}

We now delve into the experimental validation of our proposed methodologies. We begin by describing the various dataset we use for validation in section \cref{sec_data_descr}, Next, we delineate the baseline methods we use to benchmark the performance of our proposed method in section \cref{sec_comp_met}. This is followed by the results from \y{ns} in section \cref{sec_gsft} and the results from \y{nsc} from section \cref{sec_exp_gsftp}. Finally, results from silhouette-boosted \y{ns} has been presented in section \cref{sec_exp_gsft_sb}.

\subsection{Data description}\label{sec_data_descr}
First, we describe the various data used in our experiments.

\vspace{2mm}
\noindent \textbf{Synthetic, volumetric data.} For synthetic tests on 3D, non-planar models, we use three standard triangulated mesh models: the {\tt Stanford bunny}\footnote{graphics.stanford.edu/data/3Dscanrep} (also abbreviated to just {\tt bunny}), the 3D scanned bust of {\tt Nefertiti} \cite{huppertz2009nondestructive} and the animated cow {\tt Spot} \cite{crane2013robust}. For each shape, we simulate 5 configurations of freeform deformation \cite{joshi2007harmonic} using the Blender\footnote{blender.org} software and use these deformed shapes to generate the \y{ssm}. For silhouette formation, the models are densified with randomly placed internal keypoints, which are mapped between their deformed configurations barycentrically. The number of internal points added are 23840, 24625, and 23844 for {\tt Stanford bunny}, {\tt Nefertiti}, and {\tt Spot} (resp.). The shape vertices are represented in \y{au}, the mean shapes for {\tt Stanford bunny}, {\tt Nefertiti}, and {\tt Spot} have a largest diagonal of 4.15 \y{au}, 3.62 \y{au}, and 2.58 \y{au} (resp.).

\vspace{2mm}
\noindent \textbf{Synthetic, planar data.} We simulated a paper-like, ruled isometric surface using the method of \cite{perriollat2013comp}. For building the \y{ssm}, we used 10 randomly deformed configuration to generate a semi-dense pointcloud of 100 points each. The coordinates are expressed in \y{au}, a typical largest diagonal of the simulated surface is {$\sim$0.35 \y{au}}.

\vspace{2mm}
\noindent \textbf{Real, \y{mcp}.} We use the {\tt CMU-MoCap} data \cite{cmumocap} for the articulated shapes, particularly the \textit{running} motion of {\tt subject-9} and {\tt subject-35}. We use all the 3D marker positions from the Vicon\footnote{vicon.com} captured data (instead of just the human skeleton). We create two separate \y{ssm}s from the two subjects. 

\vspace{2mm}
\noindent \textbf{Real, articulated objects.} We utilise the fourteen objects presented in the RBO dataset \cite{1806.06465} for testing our method on realistic, household objects. We test our approaches on all the fourteen objects in the dataset, termed {\tt book}, {\tt cabinet}, {\tt cardboard-box}, {\tt clamp}, {\tt folding-rule}, {\tt globe}, {\tt ikea}, {\tt ikea-small}, {\tt laptop}, {\tt microwave}, {\tt pliers}, {\tt rubik's-cube}, {\tt treasure-box}, and {\tt tripod}. We refer the interested readers to the project website\footnote{tu-rbo.github.io/articulated-objects} of \cite{1806.06465} for details on these objects. All dimensions are specified in $m$.

\vspace{2mm}
{\color{revCol1}\noindent\textbf{Real, $\varphi$-\y{sft} data}. This $\varphi$-\y{sft} dataset from \cite{kairanda2022f} contain 9 scenes of deforming fabrics, all of these scenes contain \y{gt} pointcloud and triangulated, textured mesh aligned with the first undeformed configuration of the fabric.}

\vspace{2mm}
\noindent \textbf{Real, planar data.} Owing to the unavailability of planar data with correspondences and silhouette that also suits the \y{ns} problem setup, we utilise a deformed sheet of a paper imaged across five different view points with a calibrated, perspective camera\footnote{This data is a modification of the \textit{Bramante39M} dataset: \href{https://github.com/agnivsen/Bramante39M}{github.com/agnivsen/Bramante39M}}. On this sheet of paper exists 40 keypoints, chosen semi-automatically, and tracked across the images. We use standard \y{sfm} methods \cite{hartley2003multiple, nousias2019large} to reconstruct the \y{gt} of these keypoints, the scale of the reconstruction is recovered by comparison with known dimensions of the paper. The silhouette of the paper is annotated, the depth of the silhouettes remain unknown and unnecessary for our method. We repeat the same reconstruction approach across another eight different poses of deformation (without silhouettes), to obtain eight different \y{gt} for these forty keypoints, allowing use to compute an \y{ssm} of deformation from these eight set of \y{gt} keypoints. The experimental setup is shown in \cref{silh_real}b, all data in $m$. We term this dataset, the {\tt deformed-paper} data.

\vspace{2mm}
{\color{revCol1}\noindent\textbf{Real, biplanar X-rays.}~ Despite recent advances in reconstruction methods leveraging biplanar X-rays, few publicly available datasets simultaneously satisfy the following four criteria: (1) availability of real biplanar X-ray images, (2) imaging of non-trivial structures - i.e., excluding isolated small nodules or 1-dimensional manifolds such as catheters and veins, (3) provision of calibrated intrinsic parameters of the X-ray imaging system, and (4) availability of a corresponding 3D template. Thankfully, the biplanar videoradiography dataset introduced by \cite{welte2022biplanar} meets all four of these requirements. It captures \textit{in-vivo} imaging of a patient’s foot during walking and jogging within a biplanar X-ray setup. The dataset includes calibrated camera intrinsics as well as \y{ct}-derived 3D models of the calcaneus, talus, and tibia, which serve as \y{gt} reference shapes.
}

\subsection{Compared methods}\label{sec_comp_met}

Although there are plenty of recent \y{sft} methods that show impressive performance on real-world data, to the best of our knowledge, there does not exist any method solving either the \y{ns} or \y{nsc} problem, hence incomparable to baseline approaches. Thus, we validate \y{ns} and \y{nsc} solutions (with and without silhouette boosting) on many real dataset and report the results while comparing them against a `trivial solution' (described below), which involves multiple, repeated \y{sft} by state of the art approaches across all viewpoints and combining them trivially. We also validate our method in the special case of $P=1$, which is the classical \y{sft} setup, just to demonstrate that we are at par with state of the art approaches in this classical problem setup.

\vspace{2mm}
\noindent \textbf{Comparison setup.}~We have two distinct setup for comparison:
\begin{itemize}
    \item \textbf{Trivial solution to \y{ns}/\y{nsc}}. Given correspondences from $P$ viewpoints, this comparison setup repeats \y{sft} $P$ times, once for each viewpoint. The final reconstruction is obtained by trivially combining the $P$ set of reconstruction with \y{gt} camera poses while the reconstruction accuracy is reported as the mean of accuracies across $P$ viewpoints
    \item \textbf{Special case of $P=1$}. This is the classical \y{sft} setup and therefore allows us to compare our method against a wide variety of baseline methods.
\end{itemize}

\vspace{2mm}
\noindent\textbf{Baseline approaches.}~We have two broad categories of methods we compare against:
\begin{enumerate}
\item \noindent\textbf{\y{sft} with \y{ssm}.}~We use the method from \cite{zhou20153d} as a baseline for \y{sft} method that utilises \y{ssm}, we term it {\tt Prox.-ADMM} since it uses an \y{admm} based solution of a \y{sdp} problem. 

\item \noindent\textbf{Surface-based~\y{sft}.}~We compared against six \y{sft} methods that are based on physical constraints, forming the surface-based baseline for our \y{sft} problem. They are {\tt BGCCP12} for the closed-form local isometric solution from~\cite{bartoli2012template}, {\tt BGCCP12IR} for {\tt BGCCP15} followed by non-linear refinement from~\cite{brunet2014monocular}, {\tt CPB14} for the normal-integration method from~\cite{chhatkuli2014stable}, {\tt CPB14-R} for {\tt CPB14} followed by non-linear refinement from~\cite{brunet2014monocular}, {\tt SOCP} for the Second-Order Cone Programming (SOCP) based method from \cite{brunet2011monocular}, {\tt CF} for the solution following Cartan's formulation from \cite{parashar2019local}, and {\tt RobuS\textit{f}T} for an adaptation\footnote{Non-real-time adaptation of \cite{shetab2024robusft} was developed in consultation with the authors} of the method of \cite{shetab2024robusft}. We term these combined methods of {\tt BGCCP12I}, {\tt BGCCP12IR}, {\tt CPB14I}, {\tt CPB14IR}, {\tt SOCP}, {\tt CF}, and {\tt RobuS\textit{f}T} as the \textit{surface-based} \y{sft} methods.
\end{enumerate}

\vspace{2mm}
\noindent\textbf{Excluded methods.}~While \y{sft} is an active field of research with many recently proposed approaches, we had to avoid comparisons with some of the recent methods due to one or more of the following factors:
\begin{itemize}
    \item \textbf{Dissimilar input}.~We exclude methods that requires inputs which are not keypoint correspondences and/or silhouettes, e.g.: photometric information or depth data
    \item \textbf{Trajectory priors}.~We exclude methods that requires the input data to follow a continuous trajectory; all the dataset used in our validation are either already separated by wide pose-baseline or suitably sub-sampled (temporally) to have wide pose-baseline 
    \item \textbf{Domain specificity}.~We exclude methods that are specific to particular domains of objects, e.g.: human or hand tracking/reconstruction approaches; our proposed \y{ns}/\y{nsc} methods require no domain-specific data beyond a valid \y{ssm}
    \item \textbf{Public availability of code}.~We exclude methods without publicly available code (and not trivially replicable either)
\end{itemize}
\noindent \Cref{tab_sft_baseline} shows a list of some recent approaches that were excluded from comparison and the reason for doing so.

\vspace{2mm}
\begin{threeparttable}[t]
\begin{adjustbox}{width=0.9\textwidth,center} 
\begin{tabular}{c|cccc}
\hline
\rowcolor[HTML]{C0C0C0} 
\cellcolor[HTML]{C0C0C0}                                          & \multicolumn{4}{c}{\cellcolor[HTML]{C0C0C0}\textbf{Reason for exclusion}}                                                     \\ \cline{2-5} 
\rowcolor[HTML]{C0C0C0} 
\multirow{-2}{*}{\cellcolor[HTML]{C0C0C0}\textbf{Method}} & \textbf{Dissimilar input} & \textbf{Trajectory priors} & \textbf{Domain specificity} & \textbf{Unavailability of public code} \\ \hline
\textbf{$\varphi$ - \y{sft}} (\cite{kairanda2022f})& Yes & Yes   & No  & No  \\
\textbf{Texture based deep-\y{sft}} (\cite{fuentes2021texture, fuentes2022deep})      & Yes\tnote{$\dagger$}   & No & No  & Yes \\
\textbf{DECAF} (\cite{DecafTOG2023}) & Yes & Yes   & Yes & Yes  \\
\textbf{Physics-guided \y{sft}} (\cite{stotko2024physics}) & Yes  & Yes & No  & No   \\ 
\textbf{Contour constrained \y{sft}} (\cite{tan2024improved}) & Yes\tnote{$\dagger$} & No & No & Yes \\
\textbf{2D-3D registration in medical imaging} (\cite{shanmuganathan2024two, zhang2024rigorous, espinel2024keyhole}) & No & No & No\tnote{$\ddagger$} & Yes \\
\hline
\end{tabular}
\end{adjustbox}
\caption{List of some recent \y{sft} approaches that were excluded from compared baseline and the reason for their exclusion (for a method to be a valid baseline approach, all four parameters need to be `no')}
\begin{tablenotes}
\item[$\dagger$] \tiny Requires textured template; although for \cite{tan2024improved}, texture is not strictly necessary
\item[$\ddagger$] \tiny Although \cite{shanmuganathan2024two, zhang2024rigorous, espinel2024keyhole} have been validated on specific human organs/bones each, we assume they can be generalised to any shape
\end{tablenotes}
\label{tab_sft_baseline}
\end{threeparttable}
\vspace{2mm}

\noindent\textbf{Metric.}~Since we always have some correspondences between the reconstruction and the \y{ssm} or input model in all cases, we use \y{rms} as the preferred metric for evaluation of accuracy. For evaluating the accuracy of camera absolute pose, we use the L$_2$-norm of the difference in pose. For evaluating the accuracy of camera orientation, we use the mean Euler angle (in degrees) of the orientation error. {\color{revCol1} While experimenting with the $\varphi$-\y{sft} dataset \cite{kairanda2022f}, to facilitate a consistent comparison with prior works utilizing the same dataset (e.g., \cite{kairanda2022f, stotko2024physics}), we adopt the \y{cd} metric. We follow the precise definition of \y{cd} as presented in equation (10) of \cite{kairanda2022f} and equivalently in equation (18) of \cite{stotko2024physics}.}

\subsection{Results}\label{sec_res}

We now describe our experimental results, highlighting both the quantitative and qualitative aspects, as well as some analysis of the presented results.

\subsubsection{\y{ns}}\label{sec_gsft}
The first set of our experimental results are from the \y{ns} case, where the pose of the multiple camera centres are known apriori.

\paragraph{\y{mcp} with \y{ns}} 
Using the {\tt CMU-MoCap} dataset, we create the \y{ns} setup in five configurations. Each configuration has between two and hundred viewpoints, and in each configuration, each viewpoint has the same number of non-stereo correspondences which always sums (across viewpoints) to one-hundred. The number of viewpoints and per-viewpoint correspondences for each configuration are detailed below:
\begin{itemize}
    \item \textbf{Configuration 1}. Two viewpoints, fifty correspondences each
    \item \textbf{Configuration 2}. Three viewpoints, thirty-three correspondences on the first two and thirty-four on the third viewpoint
    \item \textbf{Configuration 3}. Four viewpoints, twenty-five correspondences each
    \item \textbf{Configuration 4}. Ten viewpoints, ten correspondences each
    \item \textbf{Configuration 5}. One hundred viewpoints, one correspondence each
\end{itemize}
\noindent {\color{revCol1} The deformed shapes in the dataset are randomly roto-translated in $\mathbb{R}^3$, with Euler angles sampled uniformly from the range $[-90^\circ, 90^\circ]$ along each of the three principal axes. Viewpoints are randomly distributed around the target shape to simulate diverse camera configurations. For each of the five experimental setups, reconstructions are repeated 50 times to ensure statistical reliability. Representative results for {\tt subject-9} and {\tt subject-35} are presented in \cref{fig:gsft_sub_9} and \cref{fig:gsft_sub_35}, respectively.}

True to our theoretical observation, the accuracy for \y{ns} remains highly similar, despite splitting correspondences across viewpoints in space. In the absence of any directly comparable competing method, we use the trivial solution of \y{ns}, i.e., baseline method of \y{sft} repeated across all viewpoints in the five configurations. In this case of the {\tt CMU-MoCap} dataset, since there exists no surface representation of the tracked object, the baseline method used in {\tt Prox.-ADMM}. The results from the trivial solution of repeated \y{sft} across viewpoints suffer a strong drop in accuracy and, as must be expected, devolves to a degenerate solution in the fifth configuration. Some sample qualitative results comparing the results from \y{ns} with the trivial solution of repeated \y{sft}s across viewpoints are shown in \cref{fig_sub_9_gsft_qual} and \cref{fig_sub_35_gsft_qual}.

\begin{figure}[b]
\centering
\begin{subfigure}{0.4\textwidth}
  \centering
  \includegraphics[width=0.65\textwidth]{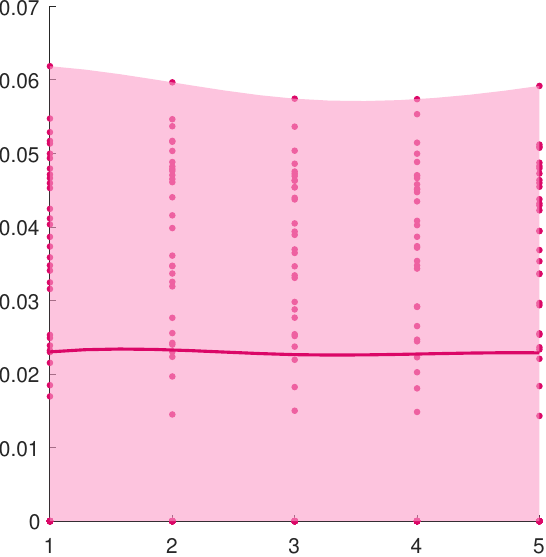}
  \caption{{\tt Subject-9} across five configurations}
  \label{fig:gsft_sub_9}
\end{subfigure}%
\begin{subfigure}{0.4\textwidth}
  \centering
  \includegraphics[width=0.65\textwidth]{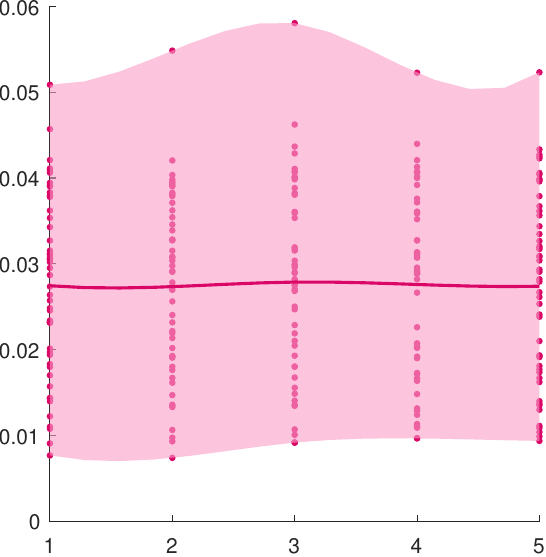}
  \caption{{\tt Subject-35} across five configurations}
  \label{fig:gsft_sub_35}
\end{subfigure}%
\caption{{\tt Subject-9} and {\tt subject-35}, viewed across five different configurations. Each configuration is repeated 50 times, the ordinate of the scatter plots denote the accuracy in terms of the \y{rms} of each reconstruction, the red line denotes their mean value}
\label{fig:gsft_quant}
\end{figure}

\begin{figure}[h]
\centering
  \subfloat{%
    \begin{overpic}[trim=0 20 0 10,clip=true,width=.18\linewidth]{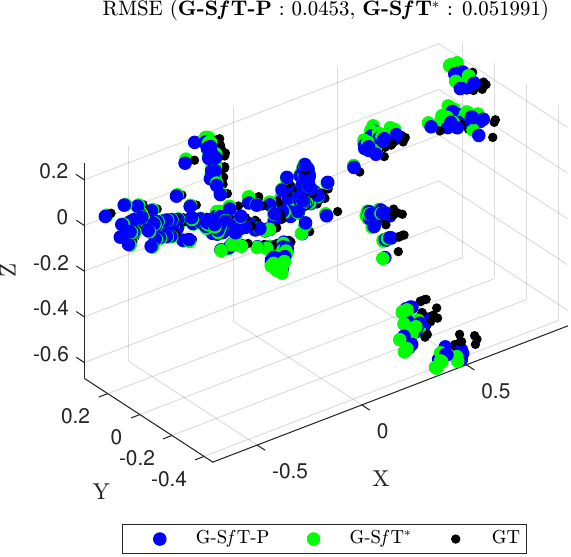}
    \end{overpic}
  }
  \subfloat{%
    \begin{overpic}[trim=0 20 0 10,clip=true,width=.18\linewidth]{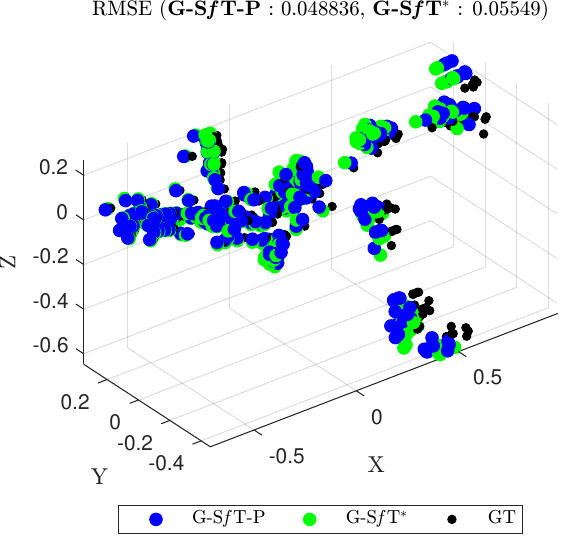}
    \end{overpic}
  }
  \subfloat{%
    \begin{overpic}[trim=0 20 0 10,clip=true,width=.18\linewidth]{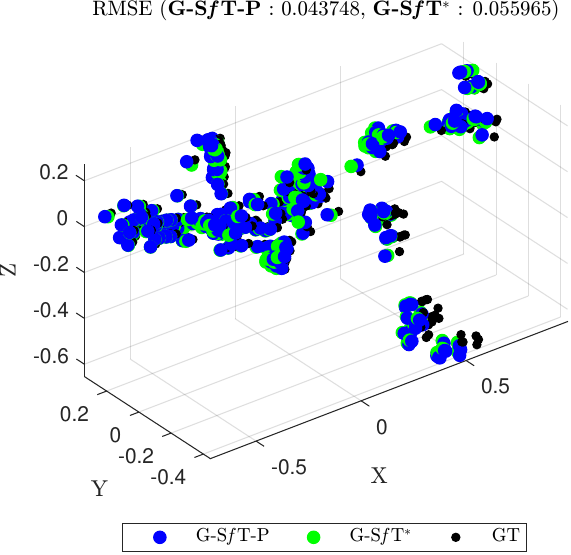}
    \end{overpic}
  }
  \subfloat{%
    \begin{overpic}[trim=0 20 0 10,clip=true,width=.18\linewidth]{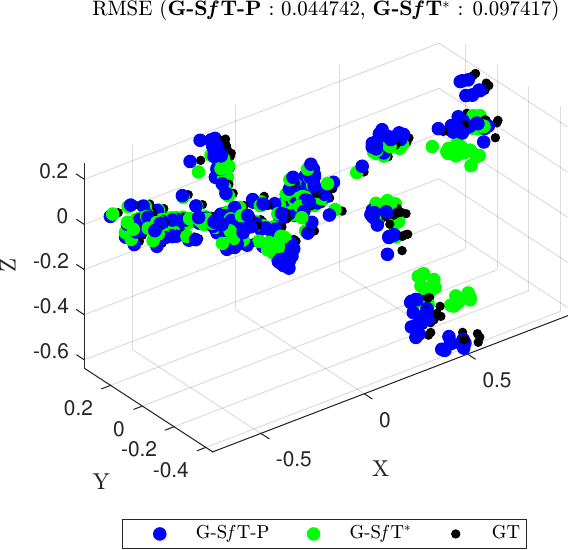}
    \end{overpic}
  }
  \subfloat{%
    \begin{overpic}[trim=0 20 0 10,clip=true,width=.18\linewidth]{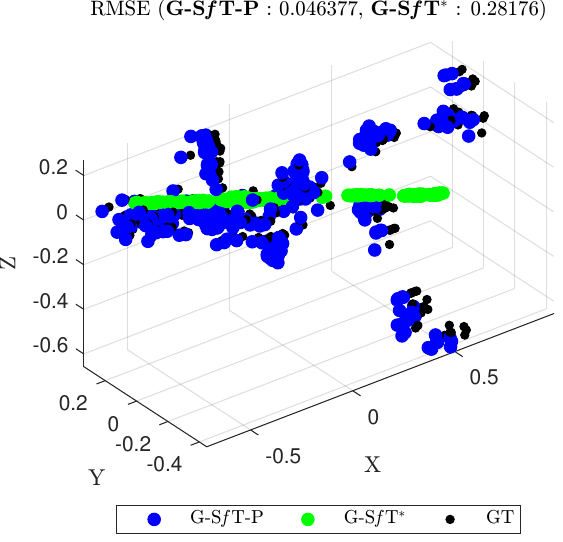}
    \end{overpic}
  }

  \subfloat{%
    \begin{overpic}[trim=0 20 0 10,clip=true,width=.18\linewidth]{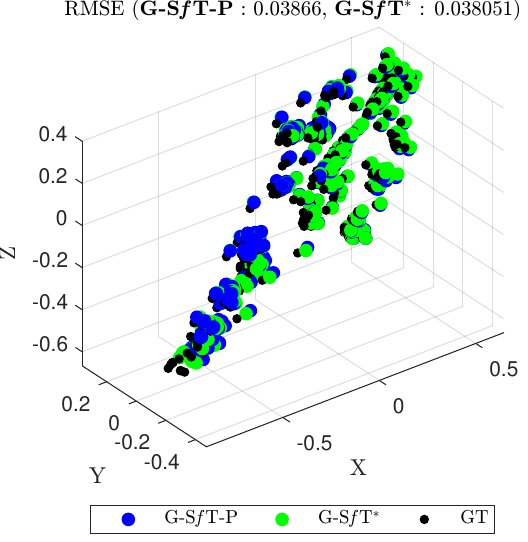}
    \end{overpic}
  }
  \subfloat{%
    \begin{overpic}[trim=0 20 0 10,clip=true,width=.18\linewidth]{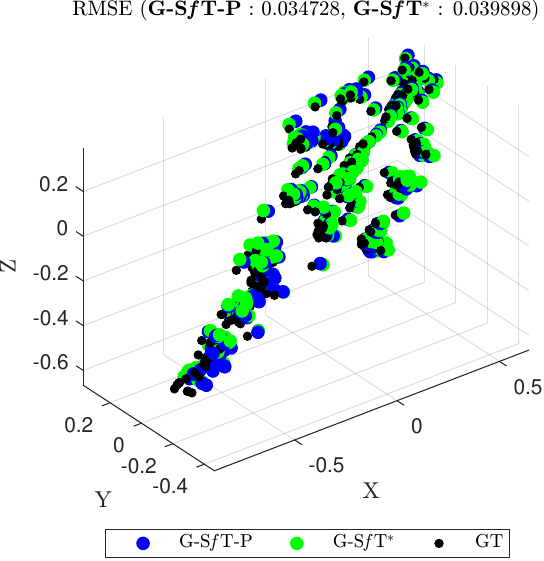}
    \end{overpic}
  }
  \subfloat{%
    \begin{overpic}[trim=0 20 0 10,clip=true,width=.18\linewidth]{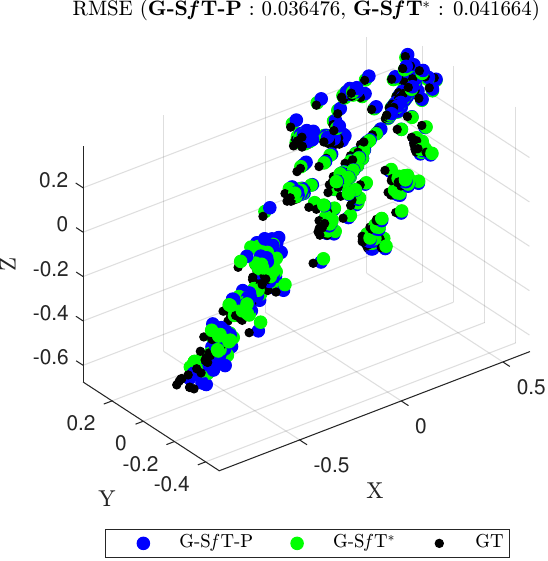}
    \end{overpic}
  }
  \subfloat{%
    \begin{overpic}[trim=0 20 0 10,clip=true,width=.18\linewidth]{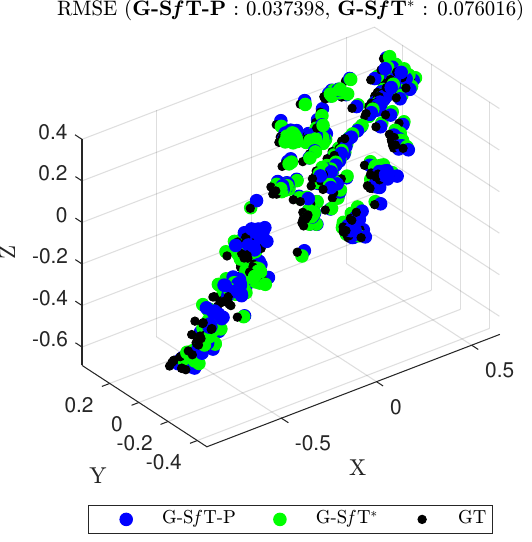}
    \end{overpic}
  }
  \subfloat{%
    \begin{overpic}[trim=0 20 0 10,clip=true,width=.18\linewidth]{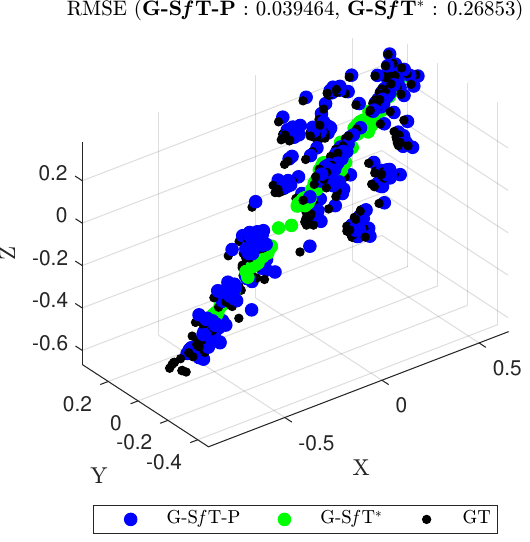}
    \end{overpic}
  }

   \subfloat{%
    \begin{overpic}[trim=0 20 0 10,clip=true,width=.18\linewidth]{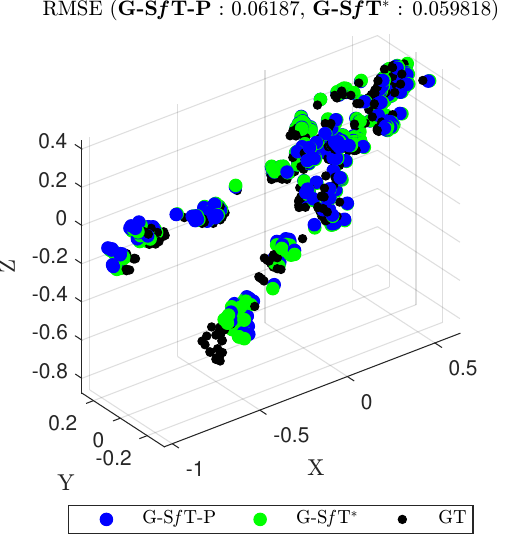}
    \end{overpic}
  }
  \subfloat{%
    \begin{overpic}[trim=0 20 0 10,clip=true,width=.18\linewidth]{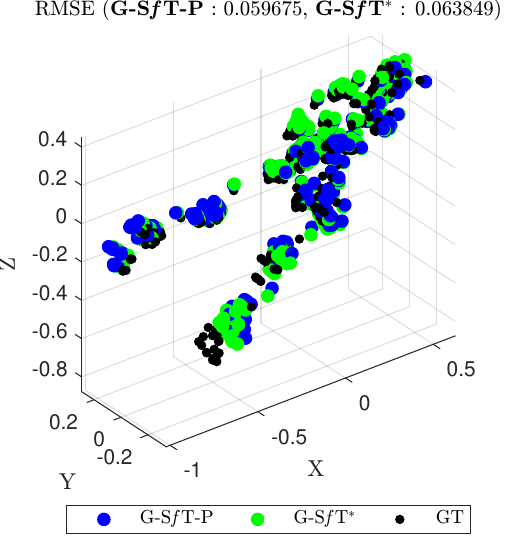}
    \end{overpic}
  }
  \subfloat{%
    \begin{overpic}[trim=0 20 0 10,clip=true,width=.18\linewidth]{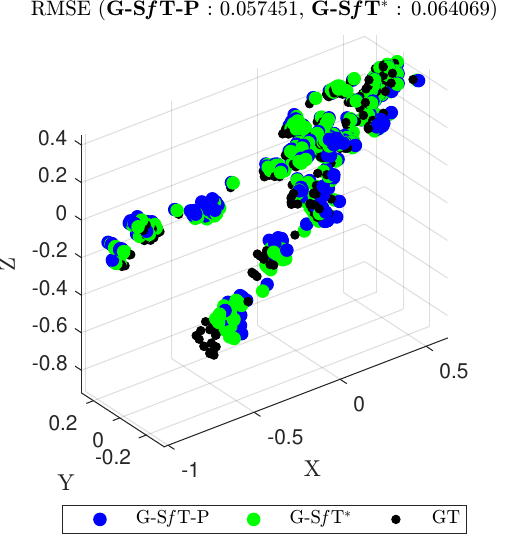}
    \end{overpic}
  }
  \subfloat{%
    \begin{overpic}[trim=0 20 0 10,clip=true,width=.18\linewidth]{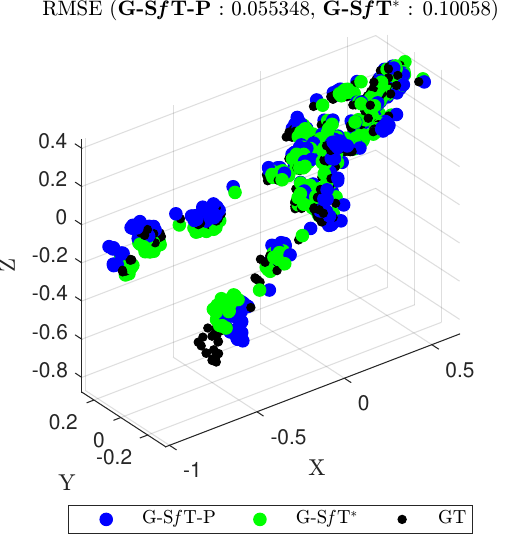}
    \end{overpic}
  }
  \subfloat{%
    \begin{overpic}[trim=0 20 0 10,clip=true,width=.18\linewidth]{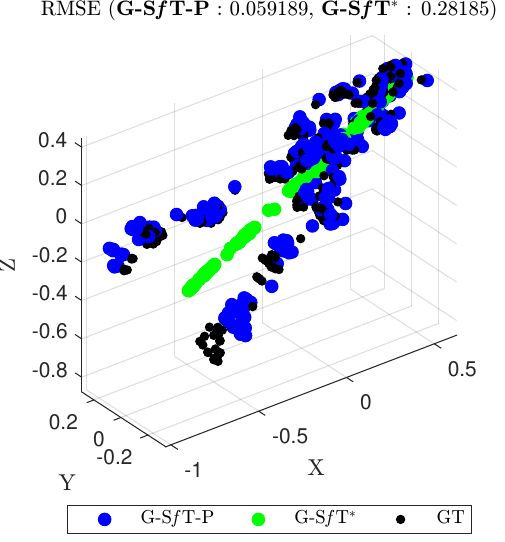}
    \end{overpic}
  }
\caption{Results from the experiments with {\tt Subject-9} of the {\tt CMU-MoCap} dataset with multiple viewpoints. The five columns show configurations one through five. At each configuration, the experiments have been repeated fifty times with randomised pose; the three rows show three random samples of these fifty repeats. In the plots, the \textbf{black} 3D keypoints are the \y{gt}, the \textcolor{blue}{\textbf{blue}} 3D keypoints are from \y{ns} and the \textcolor{green}{\textbf{green}} 3D keypoints are the combined reconstruction obtained by repeated \y{sft} across all the viewpoints in the configuration. Repeated \y{sft}s show roughly the same accuracy as \y{ns} for the first three configurations but the accuracy of \y{sft} is significantly lower than \y{ns} in configuration four and \y{sft} devolves to a degenerate solution at configuration five.}
\label{fig_sub_9_gsft_qual}
\end{figure}

\begin{figure}[h]
\centering
  \subfloat{%
    \begin{overpic}[trim=0 20 0 10,clip=true,width=.18\linewidth]{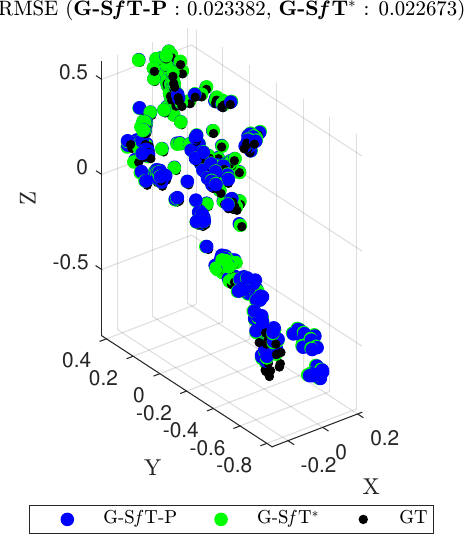}
    \end{overpic}
  }
  \subfloat{%
    \begin{overpic}[trim=0 20 0 10,clip=true,width=.18\linewidth]{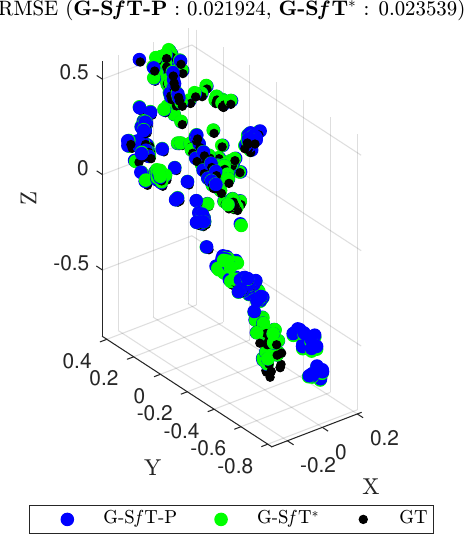}
    \end{overpic}
  }
  \subfloat{%
    \begin{overpic}[trim=0 20 0 10,clip=true,width=.18\linewidth]{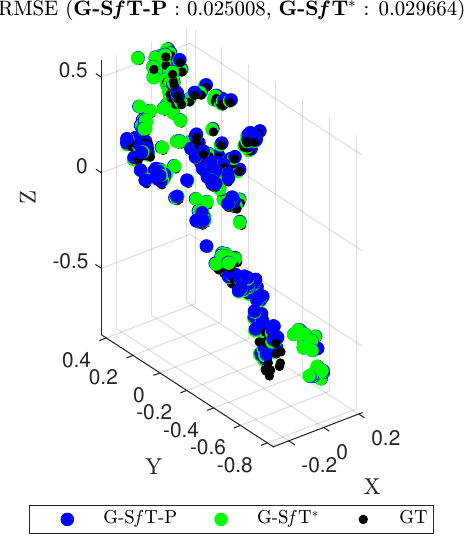}
    \end{overpic}
  }
  \subfloat{%
    \begin{overpic}[trim=0 20 0 10,clip=true,width=.18\linewidth]{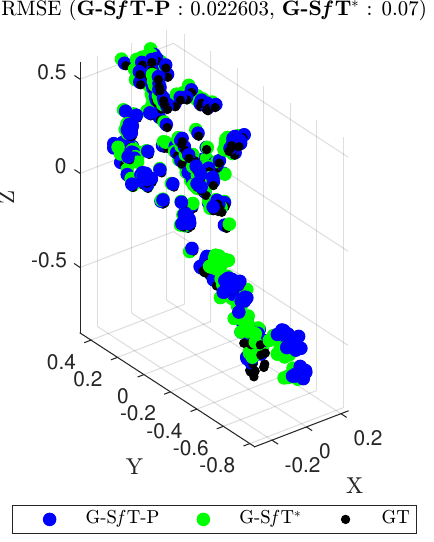}
    \end{overpic}
  }
  \subfloat{%
    \begin{overpic}[trim=0 20 0 10,clip=true,width=.18\linewidth]{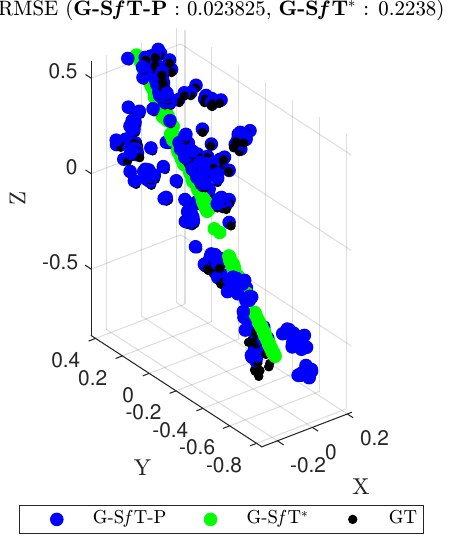}
    \end{overpic}
  }

  \subfloat{%
    \begin{overpic}[trim=0 20 0 10,clip=true,width=.18\linewidth]{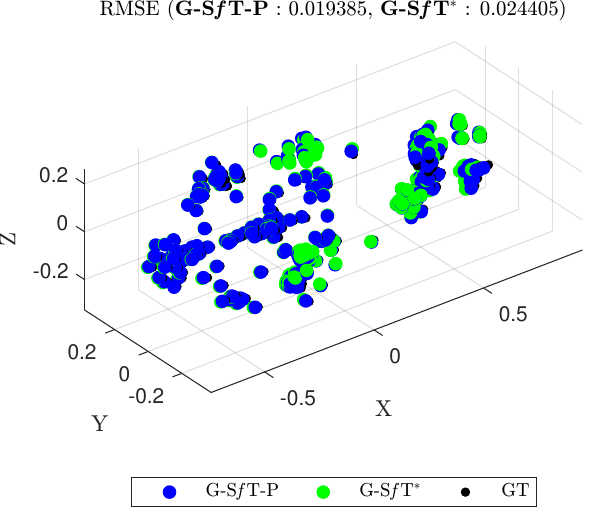}
    \end{overpic}
  }
  \subfloat{%
    \begin{overpic}[trim=0 20 0 10,clip=true,width=.18\linewidth]{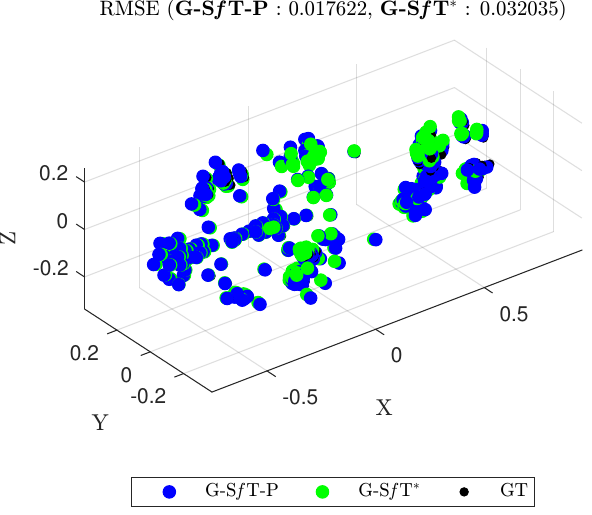}
    \end{overpic}
  }
  \subfloat{%
    \begin{overpic}[trim=0 20 0 10,clip=true,width=.18\linewidth]{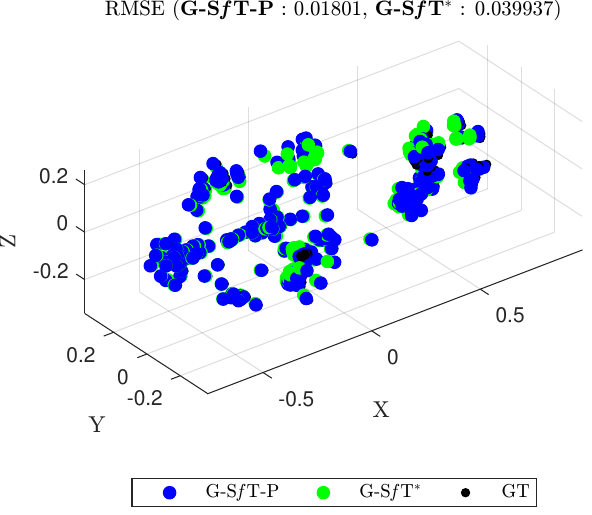}
    \end{overpic}
  }
  \subfloat{%
    \begin{overpic}[trim=0 20 0 10,clip=true,width=.18\linewidth]{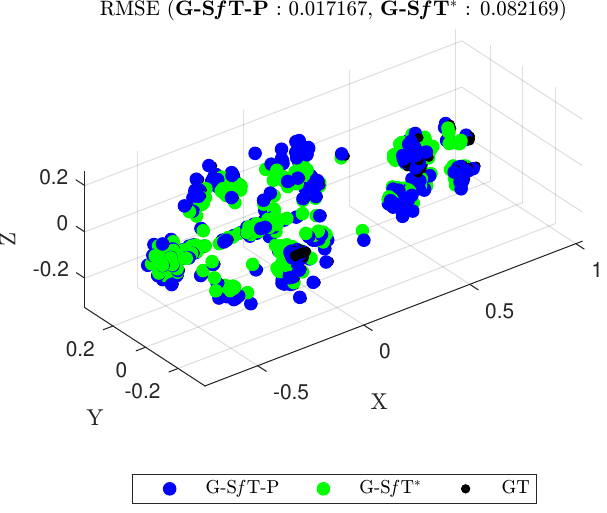}
    \end{overpic}
  }
  \subfloat{%
    \begin{overpic}[trim=0 20 0 10,clip=true,width=.18\linewidth]{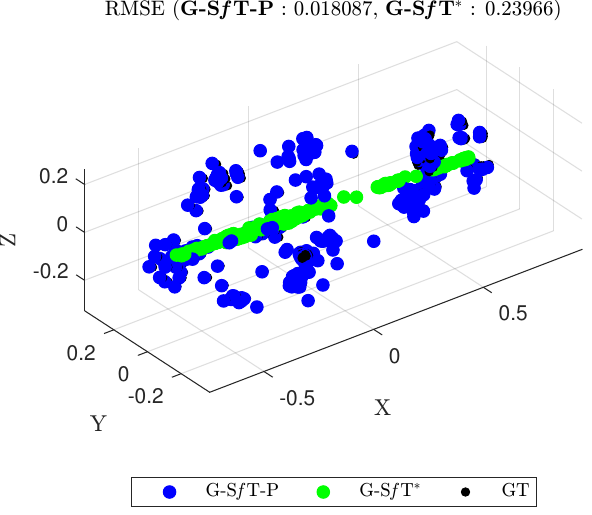}
    \end{overpic}
  }

   \subfloat{%
    \begin{overpic}[trim=0 20 0 15,clip=true,width=.18\linewidth]{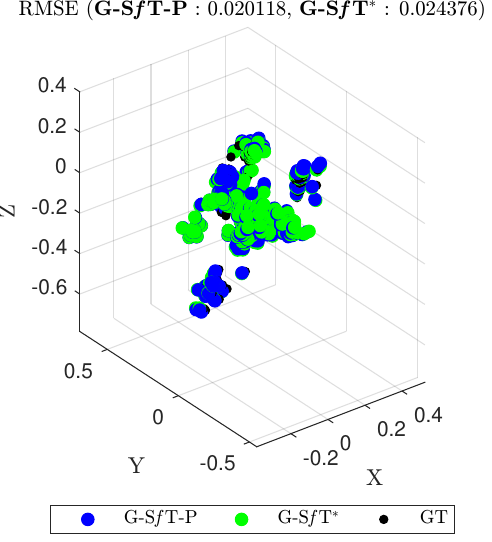}
    \end{overpic}
  }
  \subfloat{%
    \begin{overpic}[trim=0 20 0 15,clip=true,width=.18\linewidth]{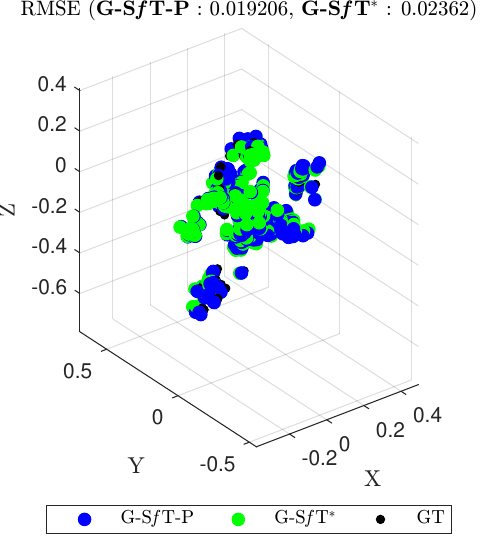}
    \end{overpic}
  }
  \subfloat{%
    \begin{overpic}[trim=0 20 0 15,clip=true,width=.18\linewidth]{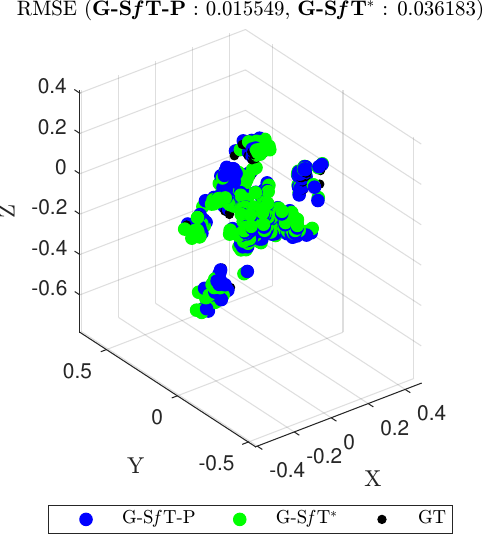}
    \end{overpic}
  }
  \subfloat{%
    \begin{overpic}[trim=0 20 0 15,clip=true,width=.18\linewidth]{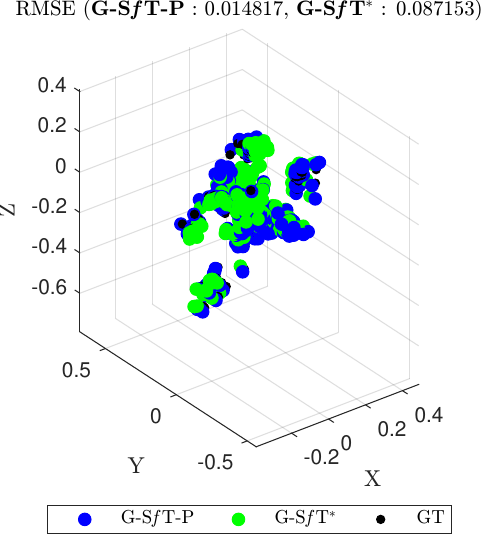}
    \end{overpic}
  }
  \subfloat{%
    \begin{overpic}[trim=0 20 0 15,clip=true,width=.18\linewidth]{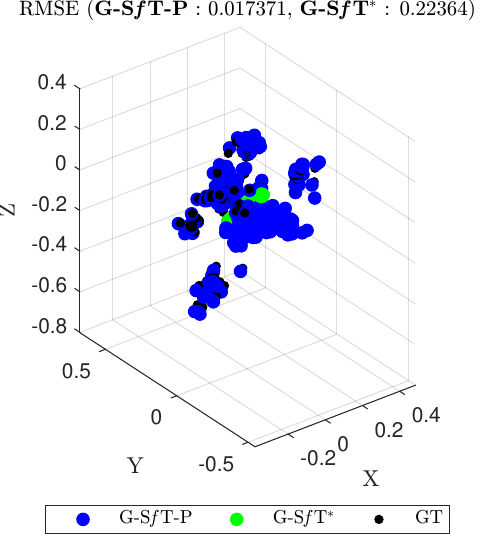}
    \end{overpic}
  }
\caption{Results from the experiments with {\tt Subject-35} of the {\tt CMU-MoCap} dataset with multiple viewpoints. The five columns show configurations one through five. At each configuration, the experiments have been repeated fifty times with randomised pose; the three rows show three random samples of these fifty repeats. In the plots, the \textbf{black} 3D keypoints are the \y{gt}, the \textcolor{blue}{\textbf{blue}} 3D keypoints are from \y{ns} and the \textcolor{green}{\textbf{green}} 3D keypoints are the combined reconstruction obtained by repeated \y{sft} across all the viewpoints in the configuration. Repeated \y{sft}s show roughly the same accuracy as \y{ns} for the first three configurations but the accuracy of \y{sft} is significantly lower than \y{ns} in configuration four and \y{sft} devolves to a degenerate solution at configuration five.}
\label{fig_sub_35_gsft_qual}
\end{figure}

\paragraph{Additional analysis of \y{ns}}\label{sec_addnl_analysis}

To demonstrate that the proposed \y{ns} methodology is at par with the state of the art approaches in the classical \y{sft} setup, we offer many additional results with the assumption of $P=1$, i.e., a special-case of \y{ns} with one viewpoint, allowing us to freely compare against many existing methods.

\subparagraph{Synthetic, volumetric objects.} We use the \y{ssm} from {\tt Stanford bunny}, {\tt Nefertiti}, and {\tt Spot} to generate arbitrary shapes with basis weights drawn from an uniform random distribution in the range $[0,1]$ and pose drawn from Euler angles sampled from an uniform random distribution in the range $[-90^{\circ},90^{\circ}]$ from the object pose of the mean shape. We sample random keypoints from the object model as correspondences. We use only {\tt Prox.-ADMM} for the comparison, since the surface-based \y{sft} methods are unsuitable for keypoints sampled from the interior of a volumetric object, all experiments done with $P=1$.

\begin{figure}[h]
\centering
\begin{subfigure}{.16\textwidth}
  \centering
  \includegraphics[width=0.76\textwidth]{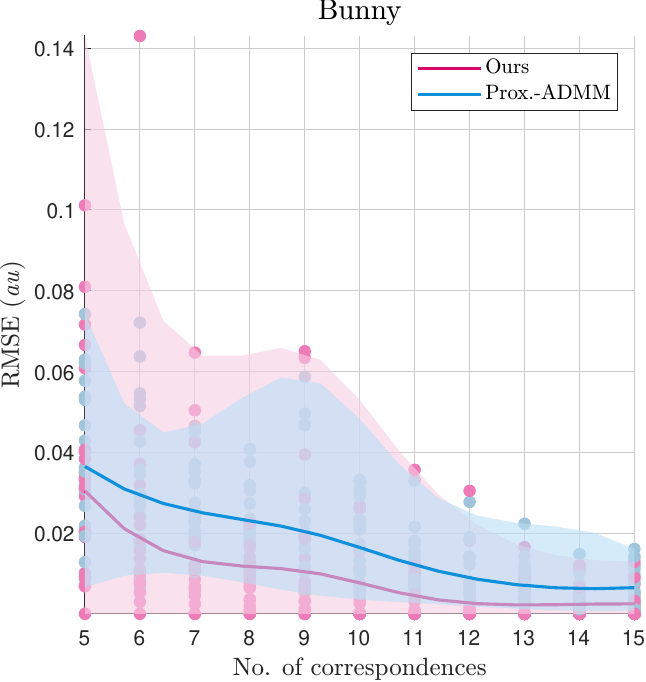}
  \caption{}
  \label{fig:bunny_noiseless}
\end{subfigure}%
\begin{subfigure}{.16\textwidth}
  \centering
  \includegraphics[width=0.76\textwidth]{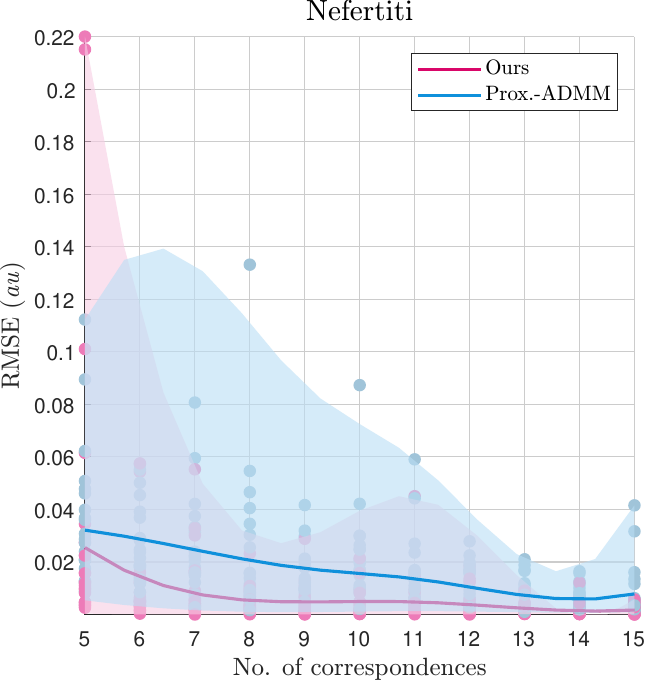}
  \caption{}
  \label{fig:nefertiti_noiseless}
\end{subfigure}
\begin{subfigure}{.16\textwidth}
  \centering
  \includegraphics[width=0.76\textwidth]{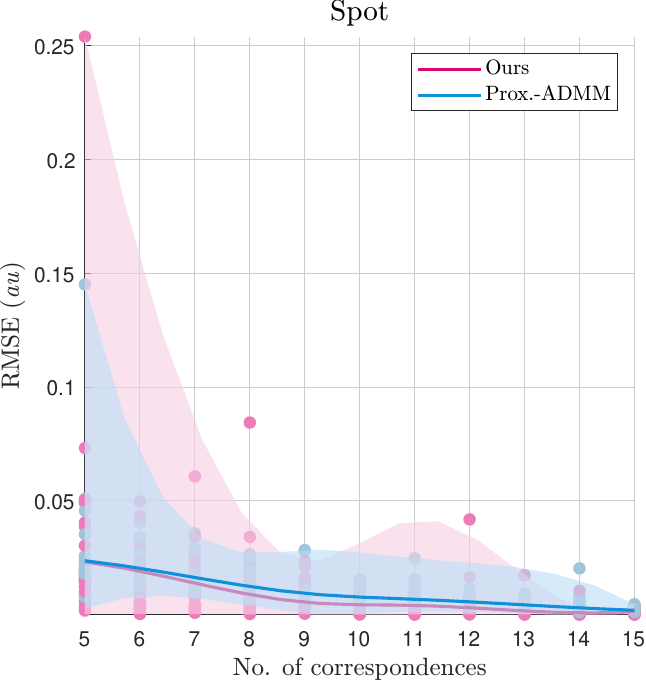}
  \caption{}
  \label{fig:spot_noiseless}
\end{subfigure}
\begin{subfigure}{.16\textwidth}
  \centering
  \includegraphics[width=0.76\textwidth]{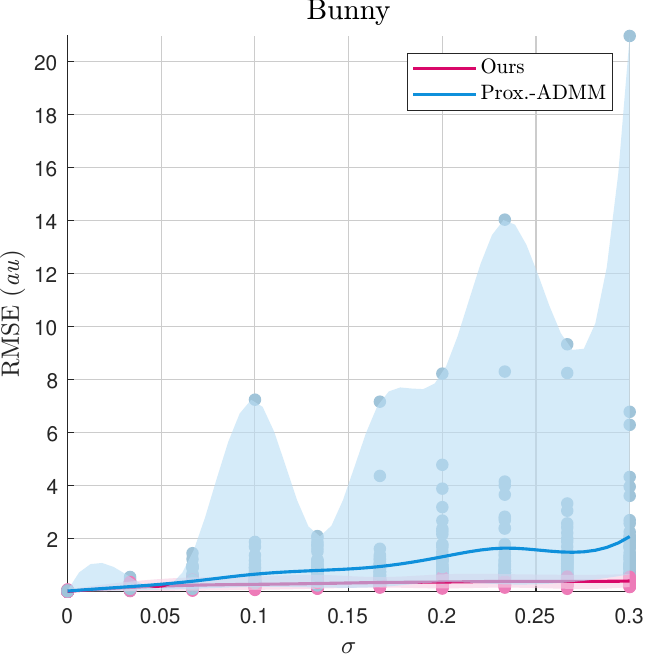}
  \caption{}
  \label{fig:bunny_noisy_7}
\end{subfigure}%
\begin{subfigure}{.16\textwidth}
  \centering
  \includegraphics[width=0.76\textwidth]{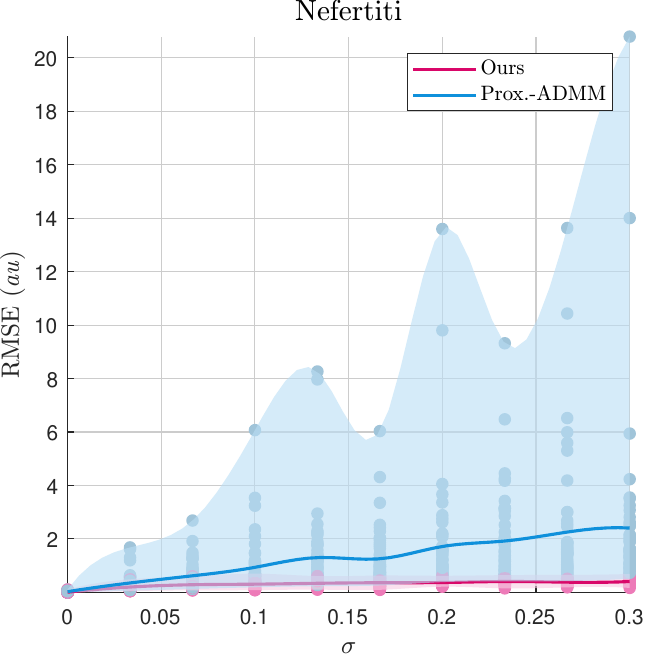}
  \caption{}
  \label{fig:nefertiti_noisy_7}
\end{subfigure}
\begin{subfigure}{.16\textwidth}
  \centering
  \includegraphics[width=0.76\textwidth]{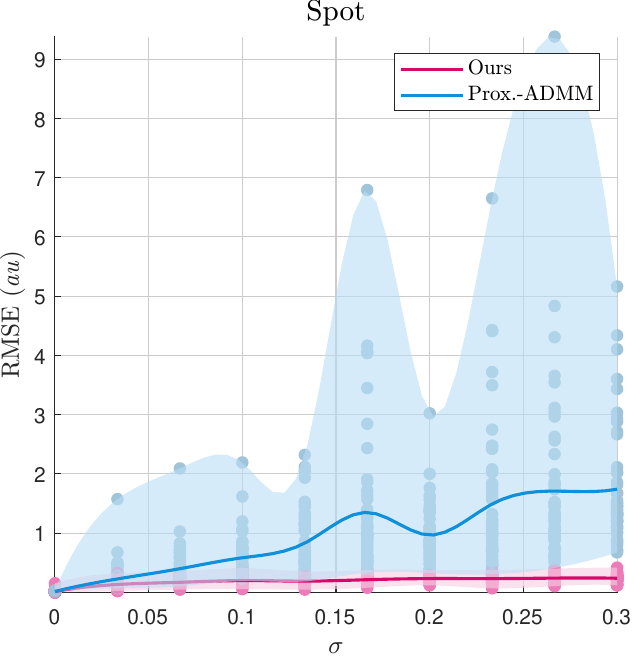}
  \caption{}
  \label{fig:spot_noisy_7}
\end{subfigure}

\begin{subfigure}{.2\textwidth}
  \centering
  \includegraphics[width=0.76\textwidth]{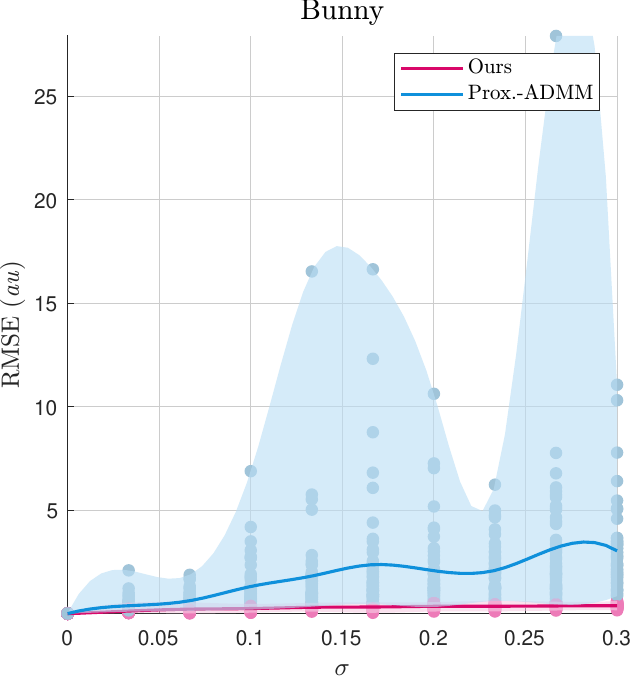}
  \caption{}
  \label{fig:bunny_noisy_12}
\end{subfigure}%
\begin{subfigure}{.2\textwidth}
  \centering
  \includegraphics[width=0.76\textwidth]{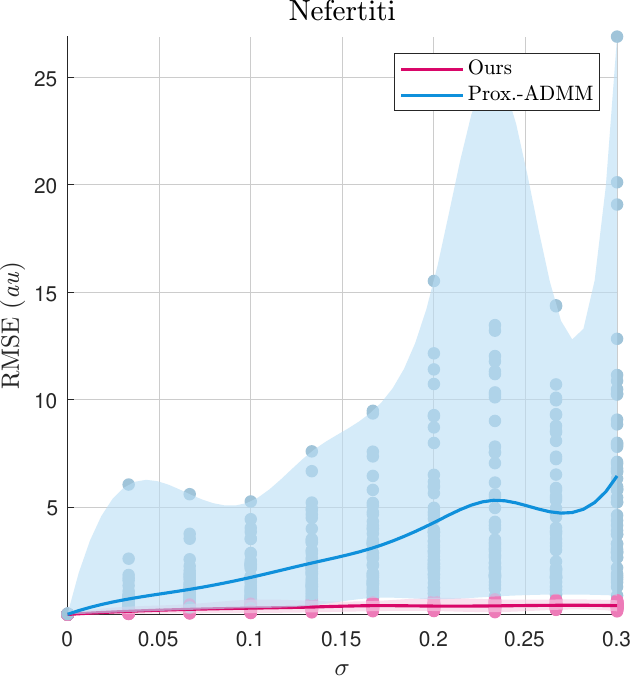}
  \caption{}
  \label{fig:nefertiti_noisy_12}
\end{subfigure}
\begin{subfigure}{.2\textwidth}
  \centering
  \includegraphics[width=0.76\textwidth]{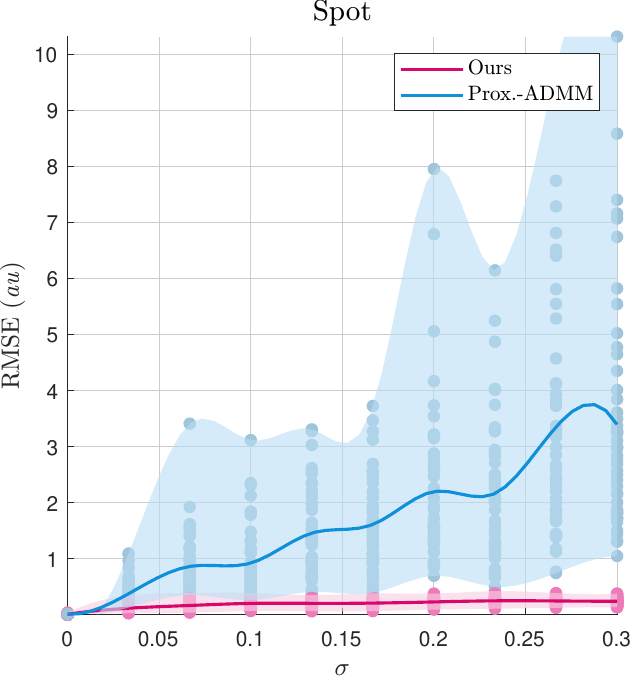}
  \caption{}
  \label{fig:spot_noisy_12}
\end{subfigure}
\caption{Results from synthetic experiments using the 3D models {\tt Stanford bunny}, {\tt Nefertiti}, and {\tt Spot}. (a), (b), and (c) shows results with no added noise on data with correspondences increasing from 5 through 15, each experiment in repeated twenty times. (d), (e), and (f) shows results with \y{sd} of additive noise on data for seven correspondences, each experiment in repeated fifty times. (g), (h), and (i) shows results with \y{sd} of additive noise on data for twelve correspondences, each experiment in repeated fifty times. The Y-axis is \y{rms}, all values in \textit{au}. The shaded area represents the range of accuracy values, the values in between steps are interpolated smoothly with B-splines, the mean values are denoted by the bold curve.}
\label{fig:noiseless}
\end{figure}

We use two setup for the experiments, the \textit{first} uses increasing number of keypoint correspondences from 5 through 15 with no noise added. The results are shown in \cref{fig:bunny_noiseless}, \cref{fig:nefertiti_noiseless}, and \cref{fig:spot_noiseless}. For the \textit{second} setup, we sample the case of 7 and 12 correspondences and add noises to the keypoint correspondences to test the resilience of the two methods to added noise. It is customary for such experiments to add the noise on the projected images/keypoints, but since we strive to remain agnostic to projection methods ({\tt Prox.-ADMM} accepts only orthographic projections), we decided to add the noise directly on the 3D keypoints before synthetically projecting them. For 7 keypoint correspondences, we add a noise sampled from the normal distribution with increasing \y{sd} as shown in \cref{fig:bunny_noisy_7}, \cref{fig:nefertiti_noisy_7}, and \cref{fig:spot_noisy_7}. Similar results for 12 correspondences are shown in \cref{fig:bunny_noisy_12}, \cref{fig:nefertiti_noisy_12}, and \cref{fig:spot_noisy_12}. 

In the noise-free cases of \cref{fig:bunny_noiseless}, \cref{fig:nefertiti_noiseless}, and \cref{fig:spot_noiseless}, both our proposed method and {\tt Prox.-ADMM} seems to be reasonably close in performance, however, our method does have a slightly lesser mean \y{rms} for {\tt Stanford bunny} and {\tt Nefertiti}. In the noisy cases of \cref{fig:bunny_noisy_7}, \cref{fig:nefertiti_noisy_7}, \cref{fig:spot_noisy_7}, \cref{fig:bunny_noisy_12}, \cref{fig:nefertiti_noisy_12}, and \cref{fig:spot_noisy_12}, \y{ns} significantly outperforms {\tt Prox.-ADMM} as the \y{sd} of the noise goes higher. Even at lower \y{sd}s, {\tt Prox.-ADMM} already produces significant outliers, attesting the improved noise tolerance of \y{ns} in this special case of $P=1$. 

\subparagraph{Synthetic, planar data.} We use the simulated paper-like, developable surfaces from \cite{perriollat2013comp} and test our method against the surface-based \y{sft} methods. The \y{ssm} is built up from samples created by the same surface generator. We use two testing conditions, one with \textit{milder} deformations where the deformation angle between successive rulings of the synthetic, developable surface is confined to an uniform distribution drawn from $[-5^{\circ}, 5^{\circ}]$, results shown in \cref{fig:iso_synth_mild}, and \textit{stronger} deformations where the deformation angle between successive rulings of the synthetic, developable surface is confined to an uniform distribution drawn from $[-30^{\circ}, 30^{\circ}]$. 

\begin{figure}[b]
\centering
\begin{subfigure}{0.33\textwidth}
  \centering
  \includegraphics[width=0.9\textwidth]{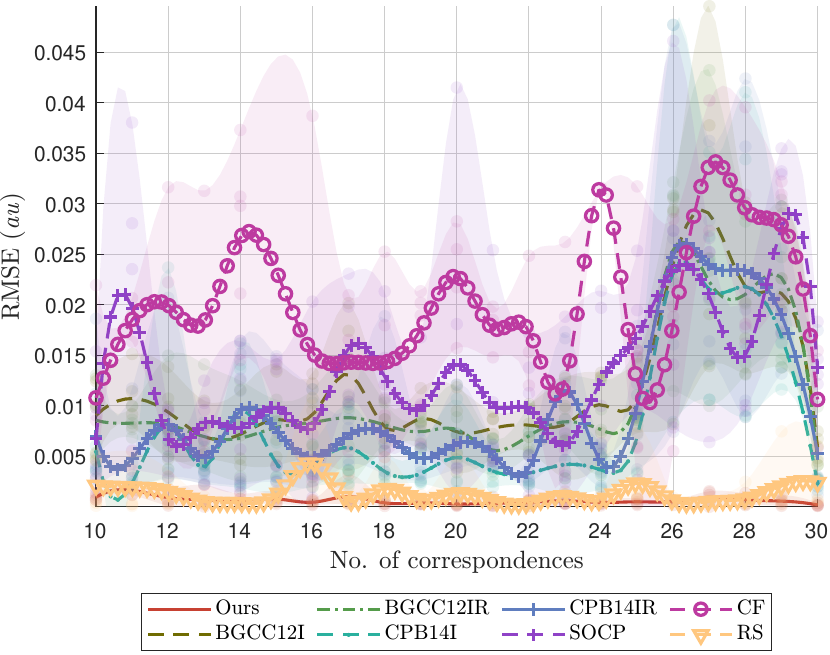}
  \caption{Mild deformation}
  \label{fig:iso_synth_mild}
\end{subfigure}%
\begin{subfigure}{0.33\textwidth}
  \centering
  \includegraphics[width=0.9\textwidth]{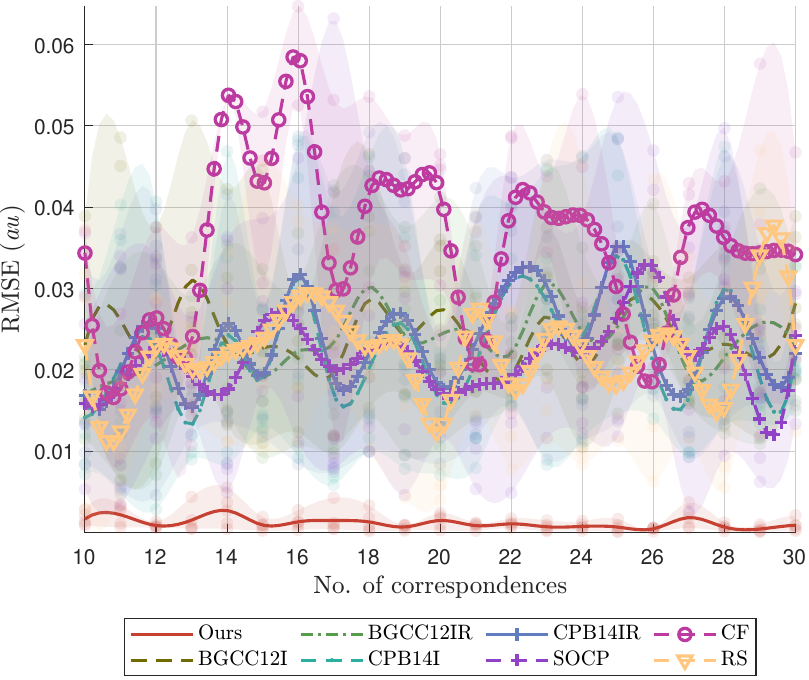}
  \caption{Strong deformation}
  \label{fig:iso_synth_strong}
\end{subfigure}%
\begin{subfigure}{0.33\textwidth}
  \centering
  \includegraphics[width=0.9\textwidth]{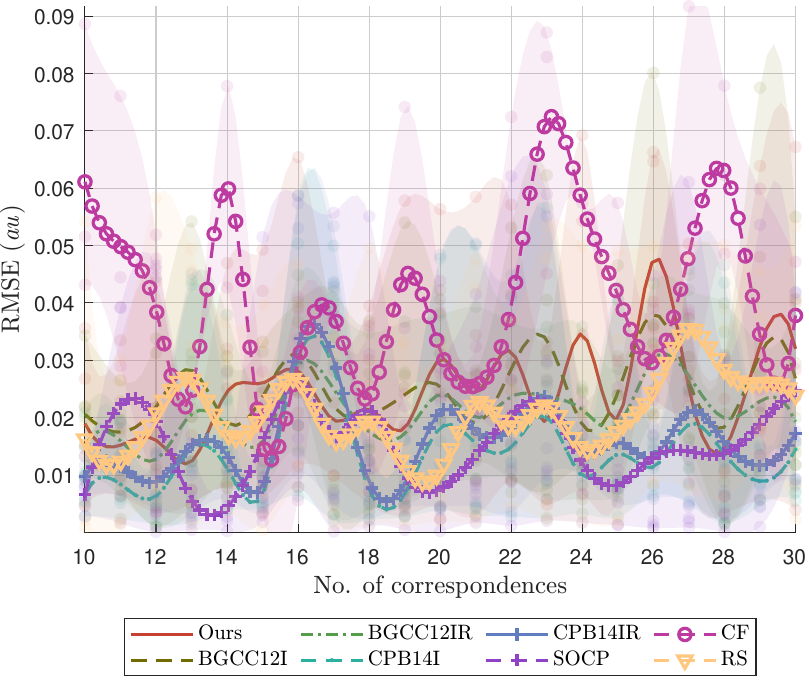}
  \caption{Strong deformation (random shape)}
  \label{fig:iso_synth_strong_oob}
\end{subfigure}%
\caption{Results for the synthetic, planar surface. (a) and (b) shows the results of our \y{ns} approach on mild and strong deformation (resp.), as compared to the state-of-the-art, surface-based \y{sft} methods with deformation targets generated by assigning random weights to the \y{ssm} used by \y{ns}. (c) shows the results on strong deformation, but with randomly generated new shapes, not necessarily related to the learned \y{ssm} of \y{ns}}
\label{fig:iso_synth}
\end{figure}

For the \textit{first} set of experiments, the testing target is generated by creating a random shape from the learned \y{ssm} with random weights, and assigning the resulting shape some random pose in space. The results on milder and stronger deformations are shown in \cref{fig:iso_synth_mild} and \cref{fig:iso_synth_strong} respectively. We compare against all available surface-based methods, i.e., {\tt BGCCP12}, {\tt BGCCP12IR}, {\tt CPB14I}, {\tt CPB14IR}, {\tt SOCP}, {\tt CF}, and {\tt RS}. Clearly, our proposed method beats all compared methods in this setting. In the milder case, our proposed \y{nsc} has a mean RMSE (across all correspondence steps) of $6.33 \times 10^{-4}$ \textit{au}, whereas the second best method of {\tt RS} has a mean RMSE of $1.31 \times 10^{-4}$ \textit{au}, a 52.3\% decrease of error. In the stronger case, our proposed \y{nsc} has a mean RMSE (across all correspondence steps) of $1.15 \times 10^{-3}$ \textit{au}, whereas the second best method of {\tt SOCP} has a mean RMSE of $2.16 \times 10^{-2}$ \textit{au}, a 94.6\% decrease of error.

For the \textit{second} set of experiments, the testing target is generated by creating a random shape separately from the learned \y{ssm} using the same simulator from \cite{perriollat2013comp} for stronger deformations, and assigning the resulting shape some random pose in space. The results are shown in \cref{fig:iso_synth_strong_oob}. The performance of our proposed approach declines in this setting and trails in accuracy as compared to some of the compared methods. This is unsurprising since the surface-based methods utilise physics based priors and, for strong deformations outside of the learned shape-space of \y{ssm}, has some clear advantages. However, this advantage is small ($<1 \times 10^{-2}$~\textit{au} compared to the best method of {\tt CPB14I}) and must be considered in the context of $P=1$ being a special case for \y{ns}; for any $P \geq 2$, none of these compared \y{sft} methods are applicable.

\subparagraph{Real data, \y{mcp}.}~We show the results on the {\tt CMU-MoCap} dataset for the subjects 9 and 35 in \cref{fig:art_real_noisy}. The provided \y{mcp} 3D keypoints in the dataset are randomly sampled, producing correspondences numbering 5 though 40, incremented in steps of 5. The \y{mcp} 3D keypoints are randomly roto-translated in space and projected with to generate the data. Comparison is just with {\tt Prox.-ADMM}, since none of the surface-based methods are applicable for this data due to articulation and the absence of a single, continuous surface. We follow a cross-validation approach of testing {\tt subject-9} with \y{ssm} learned using only the data from {\tt subject-35} and vice-versa.

\begin{figure}[h]
\centering
\begin{subfigure}{0.25\textwidth}
  \centering
  \includegraphics[width=0.9\textwidth]{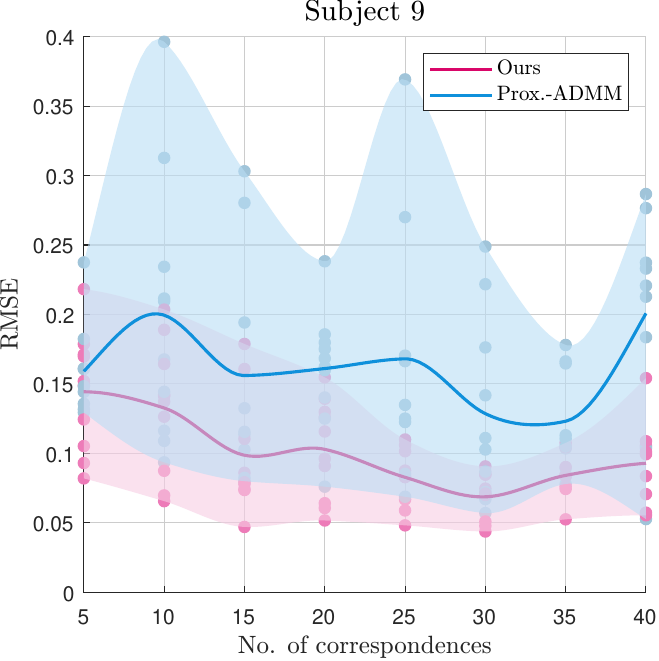}
  \caption{}
  \label{fig:sub_9}
\end{subfigure}%
\begin{subfigure}{0.25\textwidth}
  \centering
  \includegraphics[width=0.9\textwidth]{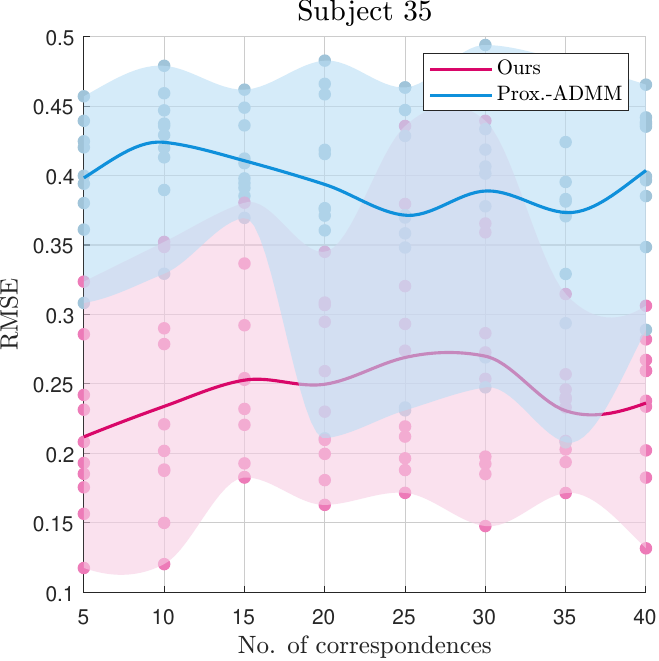}
  \caption{}
  \label{fig:sub_35}
\end{subfigure}%
\begin{subfigure}{0.25\textwidth}
  \centering
  \includegraphics[width=0.9\textwidth]{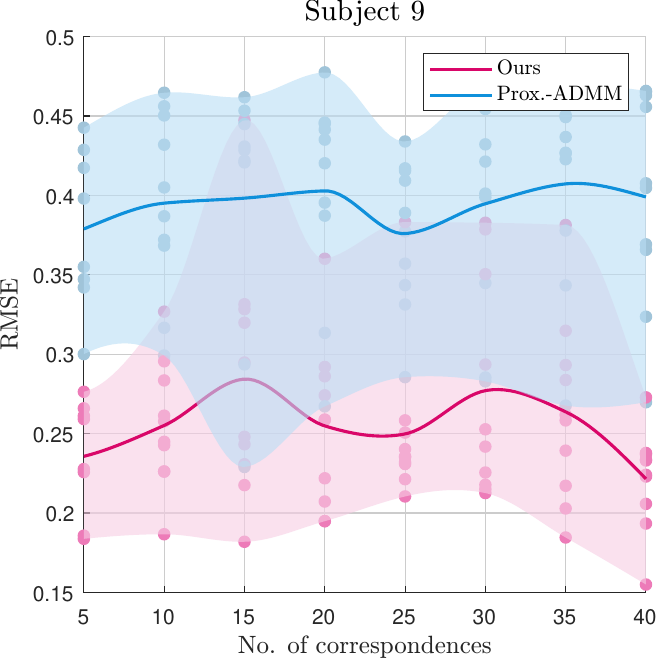}
  \caption{}
  \label{fig:sub_9_noisy}
\end{subfigure}%
\begin{subfigure}{0.25\textwidth}
  \centering
  \includegraphics[width=0.9\textwidth]{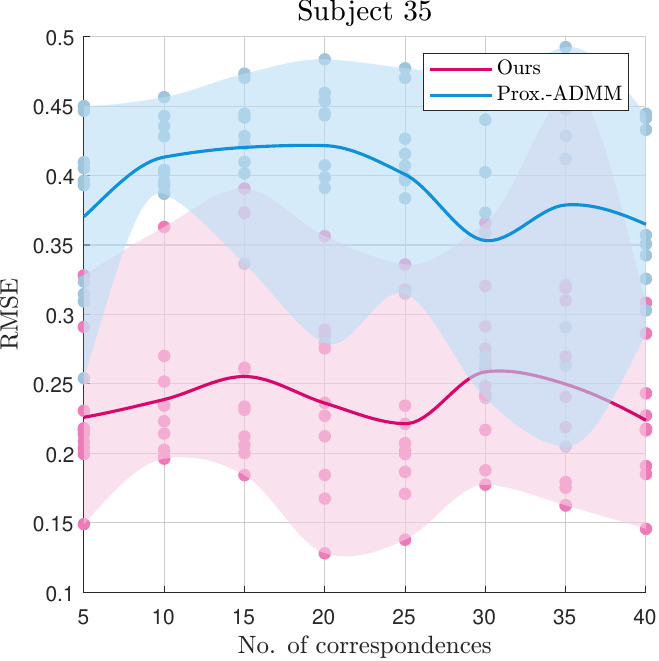}
  \caption{}
  \label{fig:sub_35_noisy}
\end{subfigure}%
\caption{Results from experiments on real, articulated data from the {\tt CMU-\y{mcp}} dataset. (a) and (b) shows the results with increasing correspondences while (c) and (d) shows the same results with added noise on the data}
\label{fig:art_real_noisy}
\end{figure}

\Cref{fig:sub_9} and \cref{fig:sub_35} shows the results obtained across varying correspondences, each correspondence configuration repeated 10 times. \Cref{fig:sub_9_noisy} and \cref{fig:sub_35_noisy} shows the same experiments, but with a noise of \y{sd} 0.1 added to the 3D data before projection. 

For both \y{ns} and {\tt Prox.-ADMM}, the variance with increasing correspondences are not too significant. However, in all cases, \y{ns} remains more accurate than {\tt Prox.-ADMM}. Some qualitative results are shown in \cref{art_qual_simpl}.

\subparagraph{Real data, RBO.}~As a validation on realistic, household objects, we offer the results from experiments on fourteen objects of the RBO dataset. Each of these fourteen objects in this dataset has between twenty-five and thirty-two sequences, each sequence equipped with the 3D \y{gt} position of some markers placed on the objects (refer \cite{1806.06465} for details) while they are being articulated. We use the \y{gt} of the first two sequence of each object to create the \y{ssm} for that particular object. We then test our reconstruction method on one-hundred frames sampled arbitrarily from the third sequence. This test of reconstruction is done by randomly roto-translating the \y{gt} and projecting to create the input correspondences; as with the previous experiments, the projection is perspective for our method and orthographic for the compared method of {\tt Prox.-ADMM} and all experiments are with $P=1$. As with {\tt CMU-MoCap}, none of the surface based methods are applicable to this dataset due to articulation and the absence of a single, continuous surface.

The results from these experiments are summarised in \cref{tab_rbo_all}. In this dataset, our proposed method is very close in accuracy to {\tt Prox.-ADMM}, however, the mean accuracy across all frames for all sequences is 1.31 $mm$ for our method and 1.40 $mm$ for {\tt Prox.-ADMM}, therefore, an improvement exists.


\begin{table}[b] \begin{adjustbox}{width=\textwidth,center} \begin{tabular}{rcccccccccccccc}  \toprule \hline & \cellcolor[RGB]{200,200,200} \textbf{Book} & \cellcolor[RGB]{200,200,200} \textbf{Cabinet} & \cellcolor[RGB]{200,200,200} \textbf{Cardboardbox} & \cellcolor[RGB]{200,200,200} \textbf{Clamp} & \cellcolor[RGB]{200,200,200} \textbf{Folding-rule} & \cellcolor[RGB]{200,200,200} \textbf{Globe} & \cellcolor[RGB]{200,200,200} \textbf{Ikea} & \cellcolor[RGB]{200,200,200} \textbf{Ikea-small} & \cellcolor[RGB]{200,200,200} \textbf{Laptop} & \cellcolor[RGB]{200,200,200} \textbf{Microwave} & \cellcolor[RGB]{200,200,200} \textbf{Pliers} & \cellcolor[RGB]{200,200,200} \textbf{Rubik's-cube} & \cellcolor[RGB]{200,200,200} \textbf{Treasure-box} & \cellcolor[RGB]{200,200,200} \textbf{Tripod}\\ \cline{2-15} 
\textbf{Ours}  & \cellcolor[RGB]{229.5, 240.8985, 248.3955}\underline{0.026} &\cellcolor[RGB]{0, 113.985, 188.955}\textbf{\textcolor{white}{\underline{0.001}}} &\cellcolor[RGB]{88.2692, 162.7979, 211.8167}\underline{0.011} &\cellcolor[RGB]{70.6154, 153.0353, 207.2444}\underline{0.008} &\cellcolor[RGB]{194.1923, 221.3733, 239.2508}\underline{0.023} &\cellcolor[RGB]{141.2308, 192.0856, 225.5338}0.015 &\cellcolor[RGB]{17.6538, 123.7476, 193.5273}\textbf{\underline{0.003}} &\cellcolor[RGB]{35.3077, 133.5102, 198.0997}\underline{0.006} &\cellcolor[RGB]{105.9231, 172.5605, 216.3891}\underline{0.011} &\cellcolor[RGB]{211.8462, 231.1359, 243.8232}0.025 &\cellcolor[RGB]{123.5769, 182.323, 220.9614}0.015 &\cellcolor[RGB]{158.8846, 201.8482, 230.1061}0.015 &\cellcolor[RGB]{176.5385, 211.6108, 234.6785}0.017 &\cellcolor[RGB]{52.9615, 143.2727, 202.672}\underline{0.007}\\
\textbf{Prox.-ADMM}  & \cellcolor[RGB]{229.5, 240.8985, 248.3955}0.027 &\cellcolor[RGB]{0, 113.985, 188.955}\textbf{\textcolor{white}{0.002}} &\cellcolor[RGB]{158.8846, 201.8482, 230.1061}0.019 &\cellcolor[RGB]{141.2308, 192.0856, 225.5338}0.018 &\cellcolor[RGB]{211.8462, 231.1359, 243.8232}0.024 &\cellcolor[RGB]{35.3077, 133.5102, 198.0997}\underline{0.006} &\cellcolor[RGB]{17.6538, 123.7476, 193.5273}\textbf{0.004} &\cellcolor[RGB]{176.5385, 211.6108, 234.6785}0.02 &\cellcolor[RGB]{123.5769, 182.323, 220.9614}0.014 &\cellcolor[RGB]{194.1923, 221.3733, 239.2508}\underline{0.023} &\cellcolor[RGB]{105.9231, 172.5605, 216.3891}\underline{0.013} &\cellcolor[RGB]{88.2692, 162.7979, 211.8167}\underline{0.012} &\cellcolor[RGB]{70.6154, 153.0353, 207.2444}\underline{0.009} &\cellcolor[RGB]{52.9615, 143.2727, 202.672}0.007\\\hline\bottomrule \end{tabular} \end{adjustbox} \caption{Results from experiments on the fourteen objects of the RBO dataset \cite{1806.06465}. We compare our method against {\tt Prox.-ADMM} and show the median \y{rms} across one hundred trials. The columns are colour-coded by accuracy, darker indicates higher accuracy, the most accurate method is highlighted in bold+white, the second-most accurate method is in bold; across rows, the most accurate method is underlined. All values in metres.}   \label{tab_rbo_all} \end{table}

\subsubsection{\y{nsc}}\label{sec_exp_gsftp}

The second set of our experimental results are from the \y{nsc} case, where the pose of the multiple viewpoint centres are unknown.

\paragraph{\y{mcp} with \y{nsc}} \label{par_mocap_gsftp}
Using the {\tt CMU-MoCap} dataset, we create the \y{nsc} setup in five configurations. Each configuration has between one and twenty viewpoints, and in each configuration, each viewpoint has the same number of non-stereo correspondences which always sums (across viewpoints) to one-hundred. The number of viewpoints and per-viewpoint correspondences for each configuration are detailed below:
\begin{itemize}
    \item \textbf{Configuration 1}. Two viewpoints, fifty correspondences each
    \item \textbf{Configuration 2}. Three viewpoints, thirty-three correspondences on the first two and thirty-four on the third viewpoint
    \item \textbf{Configuration 3}. Four viewpoints, twenty-five correspondences each
    \item \textbf{Configuration 4}. Ten viewpoints, ten correspondences each
    \item \textbf{Configuration 5}. Twenty viewpoints, five correspondences each
\end{itemize}
\noindent Each set of correspondences in each experiment are randomly drawn, the viewpoints are assigned with random roto-translation around the object. 

Due to lack of similar methods to compare \y{nsc} against, we compare the result of \y{nsc} against the solution of \y{ns} repeated for a single viewpoint across all the views, e.g.: for {\tt configuration-1}, \y{ns} is done twice with fifty correspondences each, for {\tt configuration-4}, \y{ns} is done ten times with 10 correspondences each, etc. We term this approach of repeated \y{ns} across viewpoints without inter-frame constraints, which is indeed a trivial approach to the \y{nsc} problem, as \y{ns}$^{\ast}$. The experiments are repeated ten times for each configuration and the resulting accuracies of \y{nsc} against \y{ns}$^{\ast}$ are reported in \cref{fig:art_real_mult}. Qualitative results for experiments on {\tt subject-9} are shown in \cref{fig_subj9_mc_qual} and for experiments on {\tt subject-35} are shown in \cref{fig_subj35_mc_qual}.

\begin{figure}[b]
\centering
\begin{subfigure}{0.5\textwidth}
  \centering
  \includegraphics[width=\textwidth]{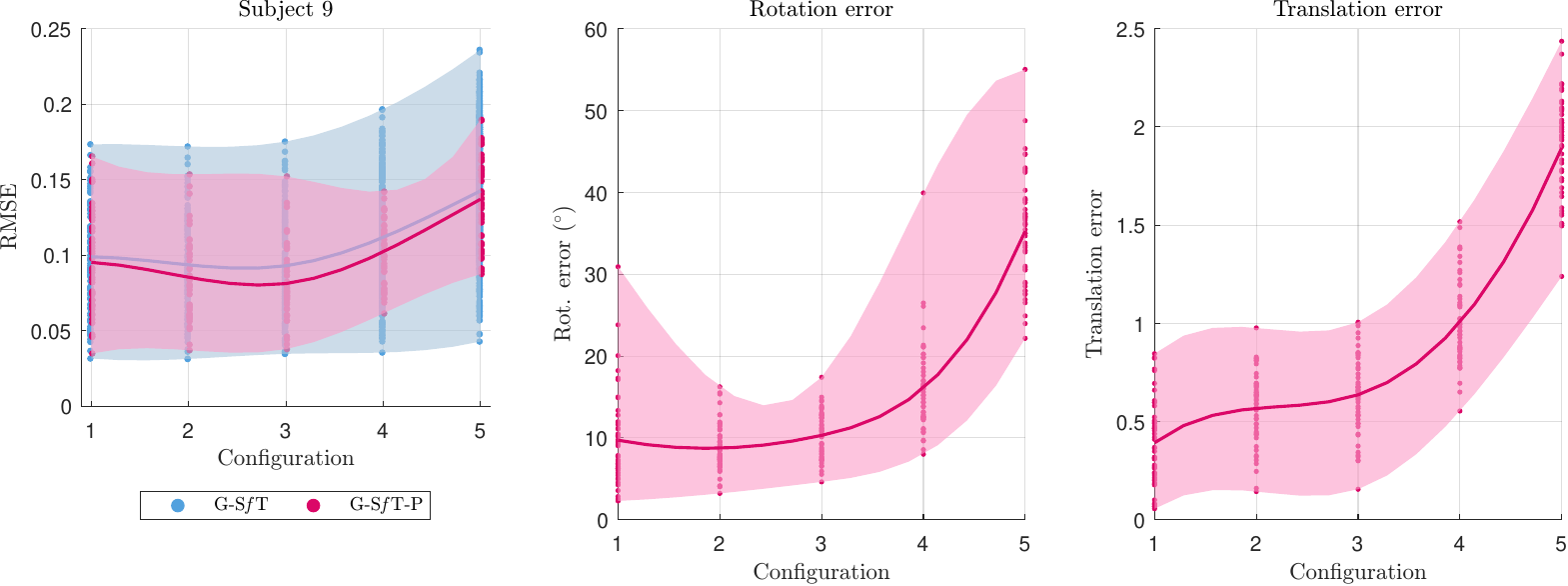}
  \caption{}
  \label{fig:sub_9_mult}
\end{subfigure}%
\begin{subfigure}{0.5\textwidth}
  \centering
  \includegraphics[width=\textwidth]{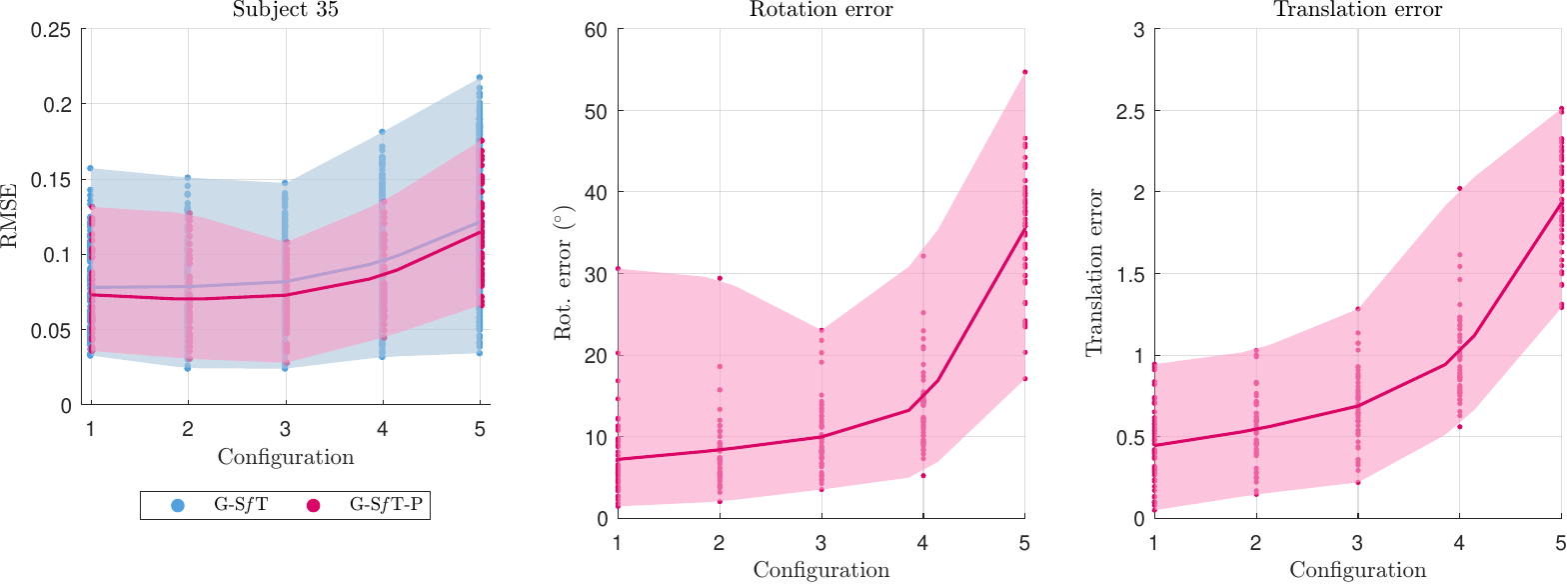}
  \caption{}
  \label{fig:sub_35_mult}
\end{subfigure}%
\caption{Results from the {\tt CMU-\y{mcp}} data with the \y{nsc} setup. (a) shows results obtained from {\tt subject-9} while (b) shows results obtained from {\tt subject-35}. The \textcolor{blueshade}{\textbf{blue}} line (and shaded area) represents the results from repeated \y{ns} across viewpoints (i.e., \y{ns}$^{\ast}$), the \textcolor{redshade}{\textbf{red}} line (and shaded area) represents the results from \y{nsc}}
\label{fig:art_real_mult}
\end{figure}

The important conclusion from \cref{fig:art_real_mult} is the fact that the accuracy of \y{nsc} is somewhat affected by splitting correspondences across cameras while with \y{ns}$^{\ast}$, the effect is pronounced, especially in {\tt configuration-2}, {\tt configuration-3} and {\tt configuration-4}. In {\tt configuration-1}, both methods are almost equally accurate, in {\tt configuration-5}, both methods are almost equally bad. However, \y{nsc} always maintains a significantly lower \y{sd} of \y{rms} than \y{ns}$^{\ast}$ and the mean \y{rms} of \y{nsc} always remains better than \y{ns}$^{\ast}$. Moreover, the difference between \y{ns}$^{\ast}$ and \y{nsc} is somewhat diminished due to random sampling of many keypoints from across the shape (\textit{c.f.}: out next experiment in \cref{sec_def_gsftp}, which shows a significant difference). 

Apart from reconstructing the shape of imaged structure, \y{nsc} also computes the pose of the viewpoints from which the \y{sl}s were images. In \cref{fig:art_real_mult}, we also report the accuracy of such pose estimation in terms of: a) \textit{rotational error} expressed as the difference in orientation of the estimated viewpoint pose with its \y{gt} represented by the mean Euler angle (in degrees), and b) \textit{translational error} expressed as the L$_2$-norm of the difference of the estimated viewpoint position in $\mathbb{R}^3$ with its \y{gt}. For both {\tt subject-9} and {\tt subject-35}, rotational accuracy remains quite stable at $\sim$ 10$^{\circ}$ between {\tt configuration-1} through {\tt configuration-4} but the accuracy drops sharply at {\tt configuration-5}. The translational accuracy is less resilient and shows a nearly linear decrease in accuracy with the configurations. 

The difference between the rotational and translational accuracy is explainable as follows. With lesser correspondences, the solution of \cref{eqn_final_sdp_unknown} often produces Gram matrices $\{\Delta_x^{\ast}\}$ such that $\mathrm{rank}(\Delta_x^{\ast}) > 1$, i.e., a rank-relaxed solution. Our empirical analysis of the results in \cref{fig:art_real_mult} suggests the most common artefact causing this rank relaxation is a scale-drift, i.e., say $\mathbf{R}_x^{\dagger}$ is the desired rotation matrix to be obtained from \cref{eqn_final_sdp_unknown}, for fewer correspondences (higher configurations), we obtain an $\mathbf{R}_x^{\ast} = s_x^{\ast} \mathbf{R}_x^{\dagger}$ from $\Delta_x^{\ast}$, where $s_x^{\ast}$ is a spurious scale artefact introduced due to the rank-relaxation. When $s_x^{\ast}$ is transmitted to the pose computation, the orientation is merely scaled but conveys accurate direction, however, the translation is wrongly affected resulting in spurious forward translation of the viewpoint centre if $s_x^{\ast} < 1$ and spurious backward translation if $s_x^{\ast} > 1$. Therefore, an observation is that our proposed solution method for \y{nsc} results in a method that is more accurate as an orientation estimator than an absolute pose estimator, but the cause is a side-effect of the solution methodology and not inherent to the \y{nsc} problem setup. Nonetheless, both pose and orientation gets estimated accurately in our solution for the cases with slightly higher correspondences.

\paragraph{\y{nsc} on deformed surfaces}\label{sec_def_gsftp}
We demonstrate an example of using \y{nsc} on deformed surfaces in \cref{fig_teaser} with the {\tt deformed-paper} data. The $\sum_{x=1}^3 n_x = 40$ keypoints in the template of {\tt deformed-paper} are observed across three images in batches of $n_1 = 6$, $n_2 = 19$ and $n_3 = 15$ correspondences with no overlap. These three images are captured from three different points in space with a wide pose-baseline. As with all \y{nsc} problem, there is a strong deformation between the template and the object, but in between the three images, the object is rigid, i.e., in a single deformed configuration. 

As comparison, we run all the seven surface-based \y{sft} methods on all the three patches. For a given patch, the reconstructed shape of all 40 keypoints is recovered via a \y{tps} \cite{bartoli2010generalized} warp from the $n_x < 40$ keypoints, for all $x \in [1,3]$. Naturally, we are left with three separate reconstructions for each method, which are then combined in the Euclidean sense to derive a mean shape \cite{bai2022procrustes}. \Cref{fig_teaser} shows the outcome of this reconstruction, with the result from our proposed \y{nsc} method in \cref{fig_teaser}h, which has an accuracy of 3.99 \textit{mm}, significantly ahead of the next best accuracy from {\tt CPB14IR} of 9.03 \textit{mm} and the other methods are even worse, demonstrating a clear success of our proposed \y{nsc} approach. The relative camera pose recovered from \y{nsc} is shown in \cref{fig_teaser}g.

{\color{revCol1}
\paragraph{\y{nsc} on $\varphi$-\y{sft} data}\label{sec_phi_sft}

\noindent While a direct comparison between our generalised camera setup for \y{sft} and existing approaches is not feasible owing to the distinctions outlined in \cref{tab_sft_baseline} and the overarching lack of prior methods tailored to generalised camera-based \y{sft} - we provide a comparative evaluation in the special case of $P=1$ using the $\varphi$-\y{sft} dataset. This enables a limited yet meaningful point of reference against recent \y{sft} techniques such as \cite{kairanda2022f, stotko2024physics}. However, given the fundamental differences in input modality - our method relies on point correspondences, whereas \cite{kairanda2022f, stotko2024physics} operate directly on image intensities - we recommend interpretation of the comparison with appropriate caution.

\vspace{2mm}
\noindent \textbf{Data pre-processing.} The \y{nsc} framework requires two key components that are not directly available in the $\varphi$-\y{sft} dataset: (a) feature correspondences between the template and the imaged deformed object, and (b) a representative population of deformed 3D structures suitable for learning the \y{ssm}.

To address the challenge of \textit{feature matching}, we leverage the pre-trained transformer-based architecture proposed by \cite{edstedt2024roma}, which yields approximately 10,000 highly accurate, semi-dense correspondences between the first frame and all subsequent frames in each scene of the $\varphi$-\y{sft} dataset. Subsequently, we enhance the provided ground truth (\y{gt}) meshes by applying the mesh subdivision technique of \cite{boye2010least}, resulting in refined meshes containing 1226, 1279, 1227, 1223, 730, 1010, 1028, 995, and 1069 vertices for the nine respective sequences.

These densified meshes are projected onto the image plane, and vertices are retained if they fall within a 1.5-pixel radius of any of the aforementioned $\sim$10,000 correspondences (as identified in the first image of each sequence using \cite{edstedt2024roma}). For instance, in a scene comprising 50 frames, this procedure yields 49 unique correspondence sets (i.e., between the first frame and each of the remaining frames). The average number of retained correspondences per scene, after projection and filtering, are 275.06, 337.21, 238.69, 171.58, 233.35, 142.21, 142.18, 148.86, and 155.17, respectively.

To \textit{learn the \y{ssm}}, we align the densified meshes to the provided ground truth point clouds via the classical non-rigid registration method of \cite{myronenko2010point}. For each experiment, a random subset of 25 non-rigidly registered meshes is sampled, and a shape model is learned using standard \y{pca}-based technique.

\vspace{2mm}
\noindent \textbf{Accuracy of \y{nsc}}. The predominant accuracy metric adopted in prior work \cite{kairanda2022f, stotko2024physics} is the \y{cd} between the reconstructed surface and the \y{gt} pointcloud. However, due to the pre-processing procedure described above, we gain access to an additional and informative error measure: the \y{rms} distance between the reconstructed mesh generated via \y{nsc} and the non-rigidly registered mesh obtained using the method of \cite{myronenko2010point}.

\begin{table}[t] \begin{adjustbox}{width=0.5\textwidth,center} \begin{tabular}{crccccccccc}  \toprule \hline & & \cellcolor[RGB]{200,200,200} \textbf{S1} & \cellcolor[RGB]{200,200,200} \textbf{S2} & \cellcolor[RGB]{200,200,200} \textbf{S3} & \cellcolor[RGB]{200,200,200} \textbf{S4} & \cellcolor[RGB]{200,200,200} \textbf{S5} & \cellcolor[RGB]{200,200,200} \textbf{S6} & \cellcolor[RGB]{200,200,200} \textbf{S7} & \cellcolor[RGB]{200,200,200} \textbf{S8} & \cellcolor[RGB]{200,200,200} \textbf{S9}\\ \cline{3-11} \multirow{2}{*}{\textbf{CD}} &\textbf{Mean}  $\downarrow$  & \cellcolor[RGB]{28.6875, 129.8492, 196.3851}\textbf{3.09} &\cellcolor[RGB]{0, 113.985, 188.955}\textbf{\textcolor{white}{2.49}} &\cellcolor[RGB]{86.0625, 161.5776, 211.2452}4.79 &\cellcolor[RGB]{229.5, 240.8985, 248.3955}58.63 &\cellcolor[RGB]{200.8125, 225.0343, 240.9654}16.46 &\cellcolor[RGB]{57.375, 145.7134, 203.8151}4.1 &\cellcolor[RGB]{172.125, 209.1701, 233.5354}10.84 &\cellcolor[RGB]{114.75, 177.4417, 218.6753}6.23 &\cellcolor[RGB]{143.4375, 193.3059, 226.1053}7.11\\ & \textbf{SD}  $\downarrow$  & \cellcolor[RGB]{0, 113.985, 188.955}\textbf{\textcolor{white}{1.23}} &\cellcolor[RGB]{28.6875, 129.8492, 196.3851}\textbf{1.36} &\cellcolor[RGB]{114.75, 177.4417, 218.6753}3.94 &\cellcolor[RGB]{229.5, 240.8985, 248.3955}34.98 &\cellcolor[RGB]{200.8125, 225.0343, 240.9654}9.26 &\cellcolor[RGB]{57.375, 145.7134, 203.8151}1.97 &\cellcolor[RGB]{172.125, 209.1701, 233.5354}5.64 &\cellcolor[RGB]{143.4375, 193.3059, 226.1053}4.39 &\cellcolor[RGB]{86.0625, 161.5776, 211.2452}2.7\\\multirow{2}{*}{\textbf{RMSE}} &\textbf{Mean}  $\downarrow$  & \cellcolor[RGB]{28.6875, 129.8492, 196.3851}\textbf{0.02} &\cellcolor[RGB]{0, 113.985, 188.955}\textbf{\textcolor{white}{0.02}} &\cellcolor[RGB]{86.0625, 161.5776, 211.2452}0.03 &\cellcolor[RGB]{229.5, 240.8985, 248.3955}0.14 &\cellcolor[RGB]{172.125, 209.1701, 233.5354}0.05 &\cellcolor[RGB]{57.375, 145.7134, 203.8151}0.03 &\cellcolor[RGB]{200.8125, 225.0343, 240.9654}0.06 &\cellcolor[RGB]{114.75, 177.4417, 218.6753}0.04 &\cellcolor[RGB]{143.4375, 193.3059, 226.1053}0.04\\ & \textbf{SD}  $\downarrow$  & \cellcolor[RGB]{0, 113.985, 188.955}\textbf{\textcolor{white}{0.01}} &\cellcolor[RGB]{28.6875, 129.8492, 196.3851}\textbf{0.01} &\cellcolor[RGB]{86.0625, 161.5776, 211.2452}0.02 &\cellcolor[RGB]{229.5, 240.8985, 248.3955}0.06 &\cellcolor[RGB]{143.4375, 193.3059, 226.1053}0.02 &\cellcolor[RGB]{57.375, 145.7134, 203.8151}0.01 &\cellcolor[RGB]{200.8125, 225.0343, 240.9654}0.03 &\cellcolor[RGB]{172.125, 209.1701, 233.5354}0.03 &\cellcolor[RGB]{114.75, 177.4417, 218.6753}0.02\\\hline\bottomrule \end{tabular} \end{adjustbox} \caption{{\color{revCol1}\y{cd} and \y{rms} of \y{nsc} on the nine scenes of $\varphi$-\y{sft} dataset, the \y{cd} is obtained in metres and multiplied by $10^4$ following convention from \cite{stotko2024physics}, \y{rms} is in metres. The colour-coding of the cells follow the same scheme as \cref{tab_rbo_all}}}   \label{tab_phi_sft_res} \end{table} 

Notably, during these experiments, we observed again, the rank-relaxation artifact in the solution to \cref{eqn_final_sdp_unknown}, manifested as a solution Gram matrix $\Delta_1^{\ast}$ with $\mathrm{rank}(\Delta_1^{\ast}) > 1$, somewhat similar to what had been reported in \cref{par_mocap_gsftp}. This behaviour is a well-documented limitation in \y{sdp}-based rank minimization problems \cite{dattoroConvex}. In practical terms, this means that although a feasible solution $\Delta_1^{\ast}$ to \cref{eqn_final_sdp_unknown} has been obtained, the rotation matrix $\mathbf{R_1}^{\ast}$ recovered from $\Delta_1^{\ast}$ no longer satisfies the orthonormality constraint, i.e., $\mathbf{R_1}^{\ast}(\mathbf{R_1}^{\ast})^{\top} \neq \mathbbm{1}_3$. As a result, the absolute pose is lost, though the underlying structure is still recovered with reasonable fidelity.

To address this, and following standard practice in monocular 3D reconstruction, we estimate the pose post hoc by solving the orthogonal Procrustes problem \cite{horn1988closed}. For quantitative evaluation of the reconstruction quality, we report both the mean and \y{sd} of the \y{cd} (between the reconstructed surface and the \y{gt} point cloud) and the \y{rms} (between the reconstruction and the non-rigidly aligned mesh) in \cref{tab_phi_sft_res} while one randomly selected frame from each scene is displayed qualitatively in \cref{fig:phi_sft}.

\begin{figure}[t]
\centering
\begin{subfigure}{0.7\textwidth}
  \centering
  \includegraphics[width=\textwidth]{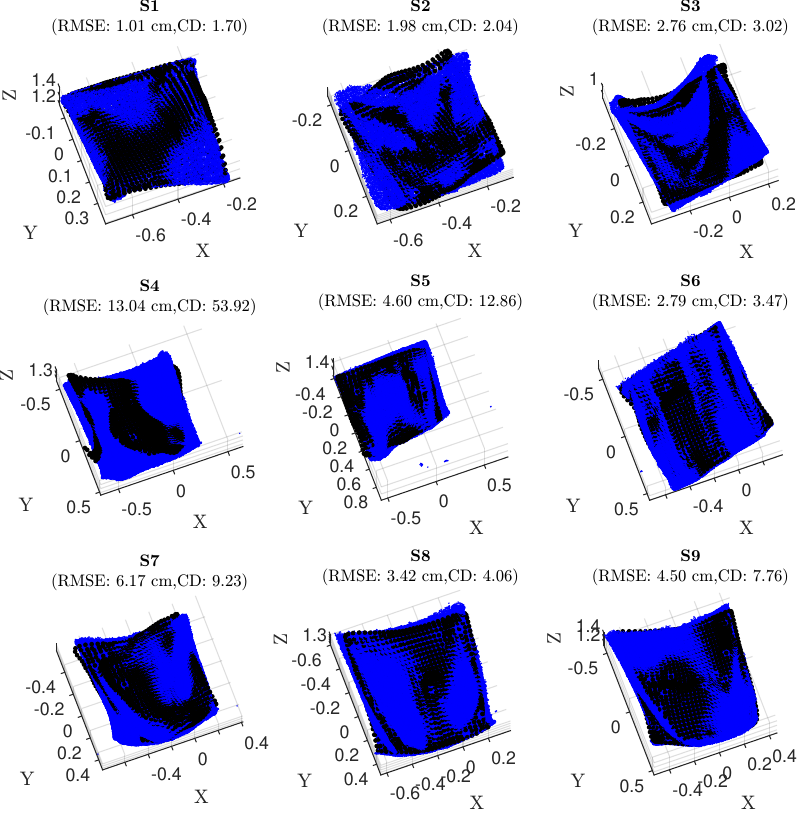}
  \caption{}
  \label{fig:phi_sft_qual}
\end{subfigure}%
\caption{{\color{revCol1}One randomly selected frame and our reconstruction result from each scene of the $\varphi$-\y{sft} dataset, the blue pointcloud is the \y{gt}, the black points are the vertices of the mesh reconstructed from \y{nsc}. \y{cd} is obtained in metres and multiplied by $10^4$ following convention from \cite{stotko2024physics}, \y{rms} is in centimetres}}
\label{fig:phi_sft}
\end{figure}


\vspace{2mm}
\noindent\textbf{Insights gained from applying \y{nsc} to the $\varphi$-\y{sft} dataset}~The \y{nsc} method demonstrates strong reconstruction accuracy on sequences S1, S2, S3, and S6, with reduced performance observed in S5, S7, S8, and S9, and notably lower accuracy in S4. This trend aligns with results in \cite{kairanda2022f, stotko2024physics}, where S4 and S5 appeared among the less accurately reconstructed scenes, suggesting they posed particular challenges. When viewed alongside the results of \cite{stotko2024physics} - a more recent approach - the performance of our proposed \y{nsc} appears improved in S3, marginally lower across several other sequences, and substantially lower in S4. Importantly, our method was able to reconstruct all sequences successfully. We emphasize yet again that these comparisons should be interpreted with care, as our method operates on a fundamentally different form of input (point correspondences) compared to the direct image-based inputs used in \cite{kairanda2022f, stotko2024physics}. As such, these results are intended to offer context rather than direct performance equivalence.

}

\subsubsection{Silhouette-boosted \y{ns}}\label{sec_exp_gsft_sb}
We now describe results obtained by boosting the accuracy of \y{ns} with silhouettes.

\vspace{2mm}
\noindent \textbf{Synthetic data}. With the {\tt Stanford bunny}, {\tt Nefertiti}, and {\tt Spot}, we feed the silhouettes of the object and use the solution from \cref{sec_synth_sol} for silhouette-boosted \y{sft}. We try with five configurations, with point correspondences from 3 through 7. 

We show the evolution of accuracy, in terms of \y{rms}, across iterations in \cref{fig:iter_silhouette} and some sample, qualitative results in \cref{fig:sil_synth_qual_bunny}, \cref{fig:sil_synth_qual_nefertiti}, and \cref{fig:sil_synth_qual_spot} for the three models. The accuracy from silhouette-boosting, for three and four correspondences are somehow of unclear utility and there remains a few instances where the accuracy decreases. This is unsurprising since with three or four correspondences, the initial estimate of deformation is inaccurate and empirically insufficient for reconstructing the complex shapes of {\tt Stanford bunny}, {\tt Nefertiti}, or {\tt Spot}. But with higher correspondences, the benefit of silhouette boosting becomes obvious. The accuracy improves in all cases, sometimes significantly. Importantly, the solution converges to a stationary point in all cases (even with lesser correspondences, although it takes larger number of iterations).

\begin{figure}[h]
\begin{subfigure}{\textwidth}
  \centering
  \includegraphics[width=0.9\textwidth]{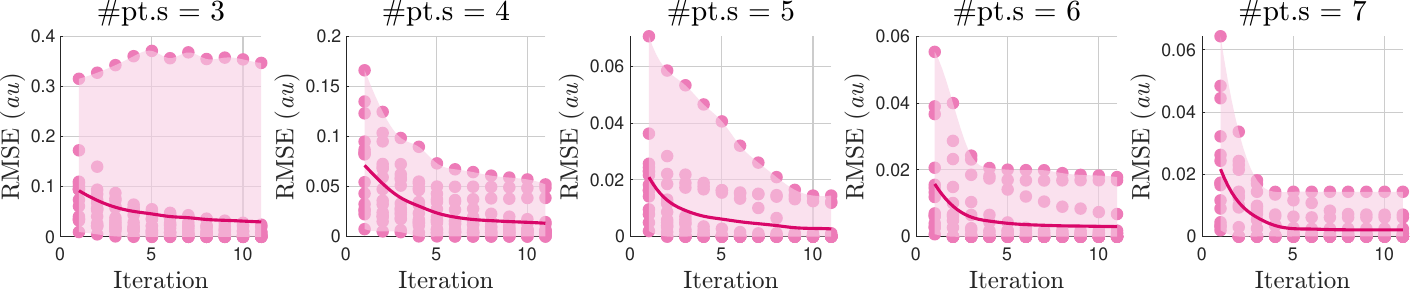}
  \caption{Stanford bunny}
  \label{fig:bunny_silhouette}
\end{subfigure}%

\begin{subfigure}{\textwidth}
  \centering
  \includegraphics[width=0.9\textwidth]{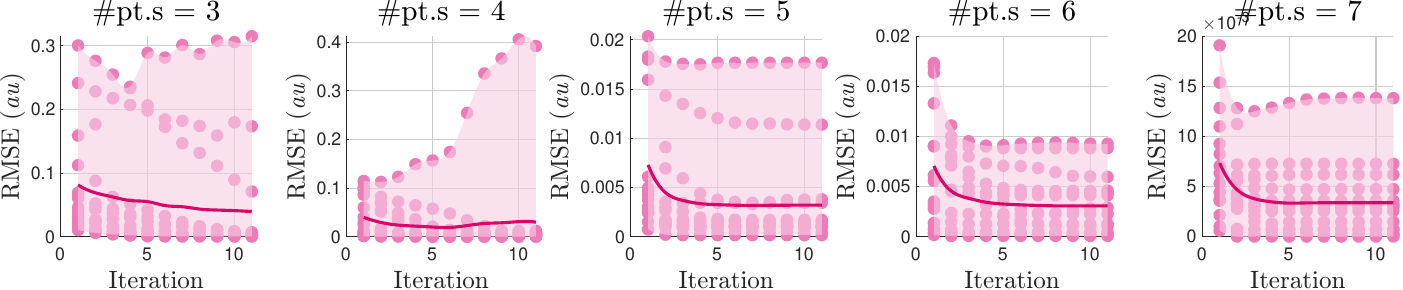}
  \caption{Nefertiti}
  \label{fig:nefertiti_silhouette}
\end{subfigure}%

\begin{subfigure}{\textwidth}
  \centering
  \includegraphics[width=0.9\textwidth]{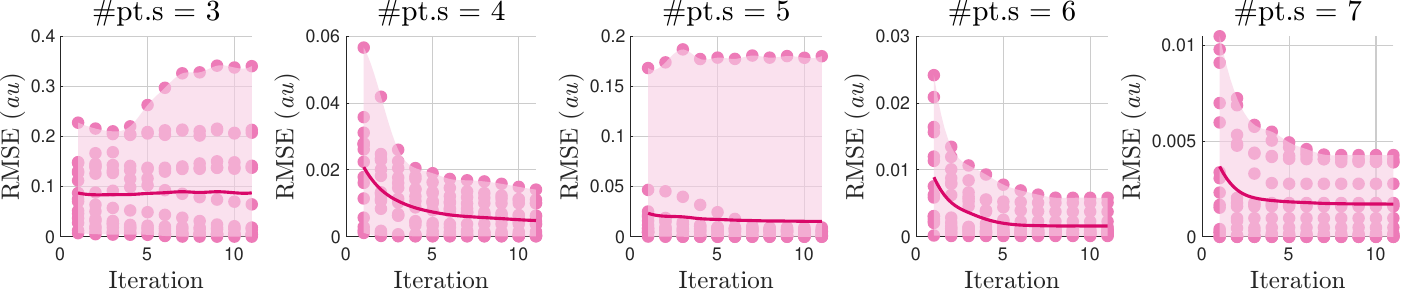}
  \caption{Spot}
  \label{fig:spot_noiseless}
\end{subfigure}%
\caption{Results obtained by silhouette-boosting \y{ns} for the synthetic 3D models (a) {\tt Stanford bunny}, (b) {\tt Nefertiti}, and (c) {\tt Spot}. The five columns represent the accuracy over iterations with correspondences numbering 3, 4, 5, 6, and 7 respectively}
\label{fig:iter_silhouette}
\end{figure}

\vspace{2mm}
{\color{revCol1}\noindent\textbf{Effect of noise on silhouettes}. To evaluate the robustness of silhouette-augmented \y{ns} to noise in silhouette detection, we selected the {\tt Spot} 3D model due to its relatively higher shape complexity. The experimental setup follows that described previously and illustrated in \cref{fig:sil_synth_qual_spot}. In this setting, silhouettes are generated by introducing Gaussian noise with a standard deviation of 0.005 \y{au} to the internal points. While this may appear to be a minor perturbation in 3D space, when projected onto the image plane using a camera model with intrinsics similar to the {\tt deformed-paper} sequence, it corresponds to an average reprojection error of approximately 34.2 pixels - an appreciable level of noise.

We report the reconstruction accuracy of both standard \y{ns} and silhouette-boosted \y{ns} under three correspondence configurations. In each case, the first camera provides two correspondences, while the second camera contributes 3, 4, and 5 correspondences, respectively - yielding a total of 5, 6, and 7 correspondences across the configurations. Results are summarized in \cref{fig:noisy_sillh}.

In the absence of noise, silhouette-boosting yields substantial improvements in reconstruction accuracy, with gains approaching $\sim$99\% across all configurations. Under noisy conditions, silhouette-boosted \y{ns} generally converges to more accurate solutions than standard \y{ns}. However, a subset of cases exhibited reduced accuracy following silhouette boosting. These instances correspond to scenarios where the initial \y{ns} reconstruction error exceeded $\sim$0.05 \y{au} \y{rms}. This outcome is not unexpected, as silhouette-boosted \y{ns} is a non-convex optimization, and poor initializations can lead to suboptimal convergence. Thus, these results underscore the well-known importance of initialization quality in non-convex refinement and demonstrate both the potential and limitations of silhouette-based \y{ns} under noisy conditions.}

\begin{figure}[h]
\centering
\begin{subfigure}{0.5\textwidth}
  \centering
  \includegraphics[width=0.9\textwidth]{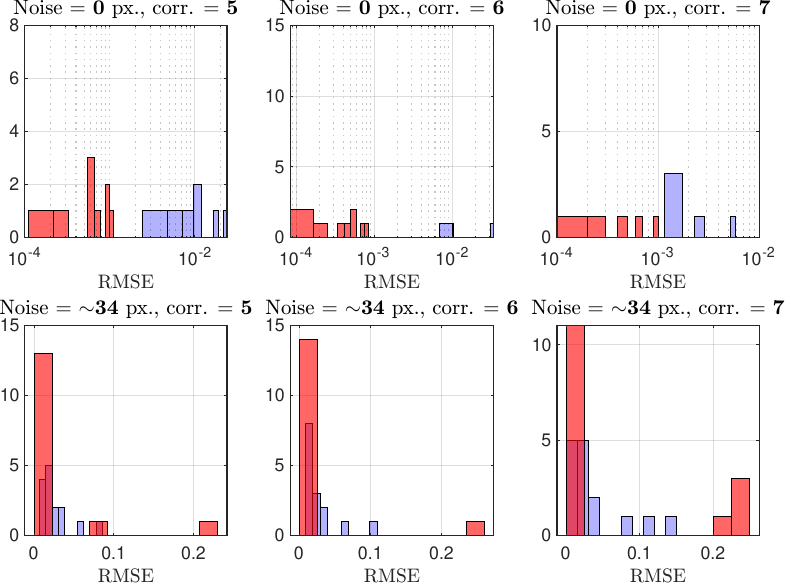}
  \caption{}
  \label{fig:sub_noisy_sil1}
\end{subfigure}%
\begin{subfigure}{0.5\textwidth}
  \centering
  \includegraphics[width=0.9\textwidth]{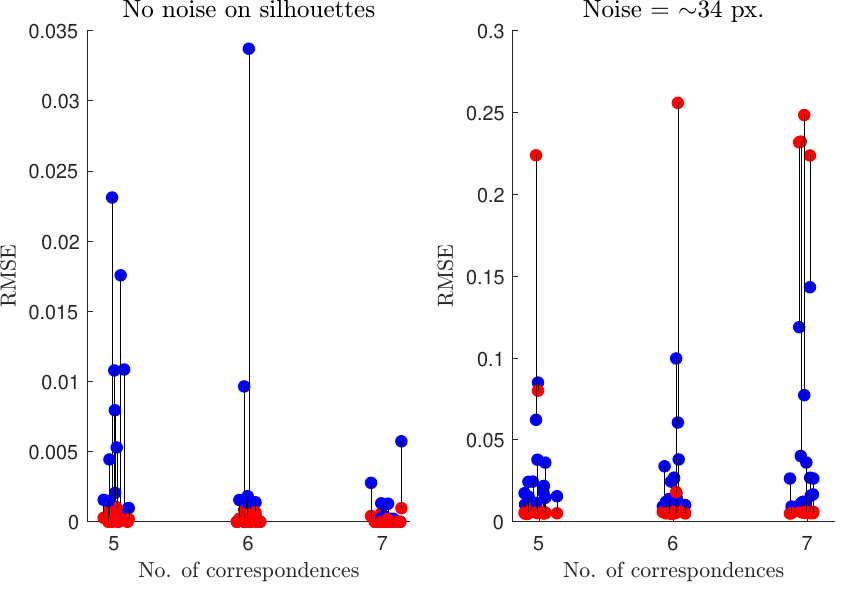}
  \caption{}
  \label{fig:sub_noisy_sil2}
\end{subfigure}%
\caption{{\color{revCol1} Effect of noise on silhouette-boosted \y{ns} for the {\tt Spot} 3D model. (a) \y{rms} error histograms: top row shows results without noise; bottom row shows results with added Gaussian noise (0.005 \y{au}, $\approx$34 pixels reprojected \y{rms} with typical intrinsics). Each row corresponds to 5, 6, and 7 correspondences, respectively. Blue bars indicate \y{ns}; red bars indicate results after silhouette boosting. (b) Plots linking each \y{ns} result (blue) to its corresponding silhouette-boosted result (red) by black lines, random jitter added to X-axis (showing correspondences) to improve readability.}}
\label{fig:noisy_sillh}
\end{figure}

\vspace{2mm}
\noindent \textbf{Real data - {\tt deformed-paper}}. We use the {\tt deformed-paper} proposed by us to validate the silhouette-boosted \y{ns} method with real data. Using the \y{ssm} generated out of eight deformed shape of the paper, we test the reconstruction accuracy of \y{ns} and silhouette-boosted \y{ns} on five different images of the paper, deformed in a configuration which was not learned by the \y{ssm}. The results are shown in \cref{silh_real}. Given that the \y{ssm} was learned from just eight poses, the \y{ns} problem was challenging and, without the silhouette boost, the reconstruction \y{rms} was 1.1 \textit{cm}, 1 \textit{cm}, 2 \textit{cm}, 1.8 \textit{cm}, and 1.6 \textit{mm} respectively, for the five images; therefore the accuracy was low. With silhouette boosting, the accuracies improved to 2.4 \textit{mm}, 2.2 \textit{mm}, 1.5 \textit{mm}, 1.5 \textit{mm}, and 1.9 \textit{mm} for the same five images; a significant magnitude of improvement. 

{\color{revCol1}
\vspace{2mm}
\noindent \textbf{Real data - biplanar X-ray}. As a comprehensive and realistic validation of our silhouette-boosted \y{ns} method, we employed the biplanar X-ray dataset from \cite{welte2022biplanar} to reconstruct the 3D morphology of the calcaneus bone. The experimental setup and corresponding results are presented in \cref{fig_biplanar}. The dataset provides an accurate 3D template of the calcaneus, serving as \y{gt}, derived from \y{ct} reconstruction, as illustrated in \cref{fig_biplanar}a. For one randomly selected X-ray image pair, three feature correspondences per image and silhouettes were semi-automatically annotated, with manual filtering and corrections. These annotations are visualized in the first two columns (from left) of \cref{fig_biplanar}a.

Due to the absence of a population dataset for calcaneus bones, we generated a synthetic population comprising nine deformed calcaneus models by applying harmonic splines \cite{joshi2007harmonic}. These deformations approximate scaling along the X, Y, and Z axes, shearing, torsion, and the manual protrusion or depression of bone regions via aligned control handles. This synthetic dataset is depicted in \cref{fig_biplanar}b.

Upon applying our silhouette-boosted \y{ns} method to this experimental framework, the reconstructed calcaneus shape achieved a root mean square (\y{rms}) error of 2.08 \textit{mm} relative to the ground truth (\y{gt}). The reconstructed morphology and corresponding error statistics are detailed in \cref{fig_biplanar}c and \cref{fig_biplanar}d. These findings underscore the potential efficacy and robustness of the silhouette-boosted \y{ns} methodology.

\begin{figure}[t]
\centering
\begin{subfigure}{\textwidth}
  \centering
  \includegraphics[width=0.9\textwidth]{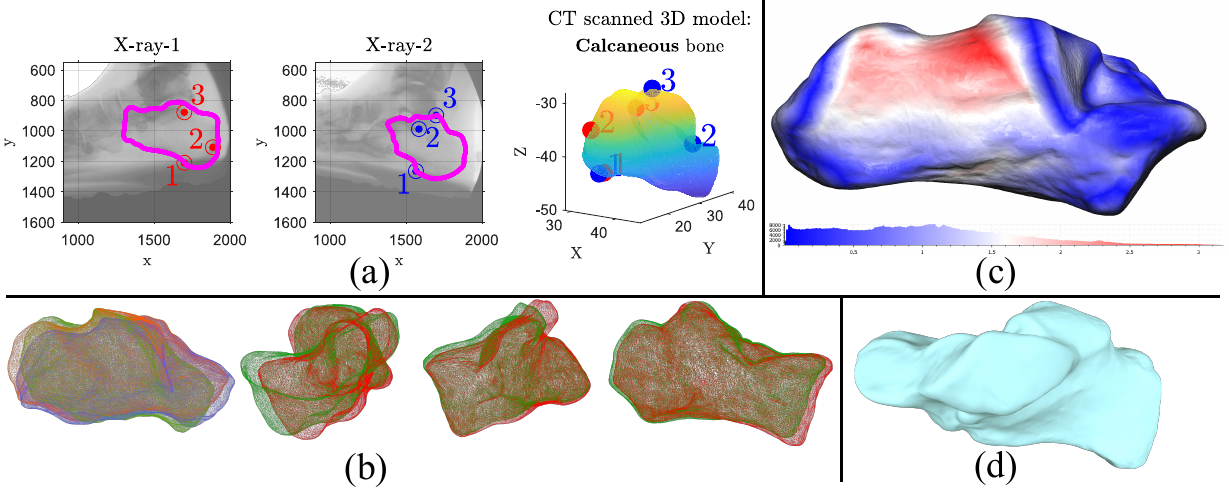}
\end{subfigure}%
\caption{{\color{revCol1}Results on the biplanar X-ray dataset from \cite{welte2022biplanar}. (a) Depicts input point correspondences and silhouettes: the first two images are X-rays; the third shows the \y{ct}-derived 3D \y{gt} model with annotated correspondences and silhouettes. (b) Illustrates synthetic calcaneus shape variations: scaling along principal axes (first), shearing and torsion (second), localised depressions (third), and protrusions (fourth); all shape variations are represented by meshes of contrasting colours. (c) Presents the reconstructed shape, colour-coded by Euclidean distance from the aligned \y{gt}; error histogram shown in \textit{mm}. (d) Displays an alternative view of the reconstructed shape.}}
\label{fig_biplanar}
\end{figure}
}

\begin{figure}[h]
\centering
\begin{overpic}[width=0.568\linewidth]{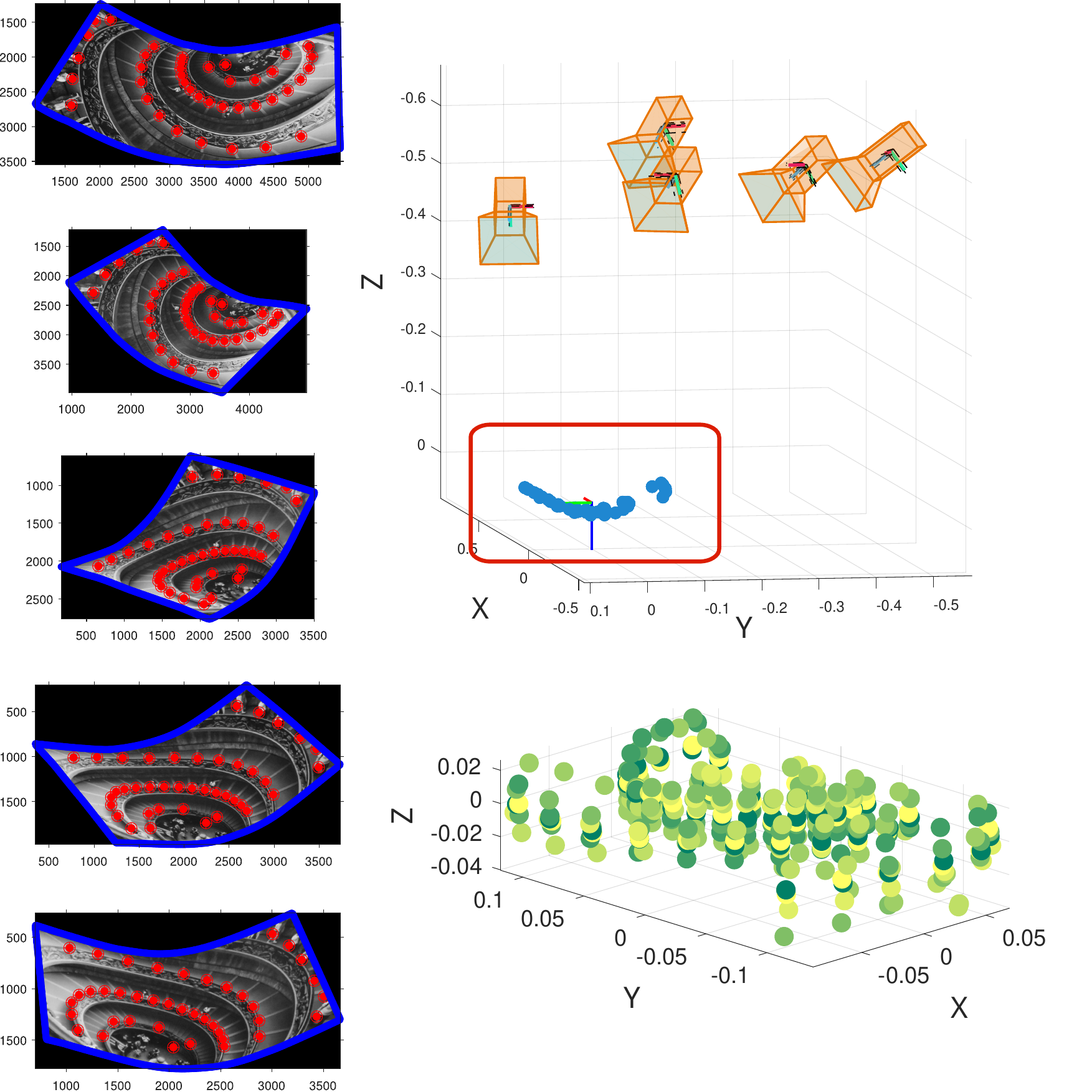}
    \put(20,-3){(a)}
    \put(70,40){(b)}
    \put(70,5){(c)}
    \put(49,62){3D \y{gt}}
\end{overpic}%
\begin{overpic}[width=0.232\linewidth]{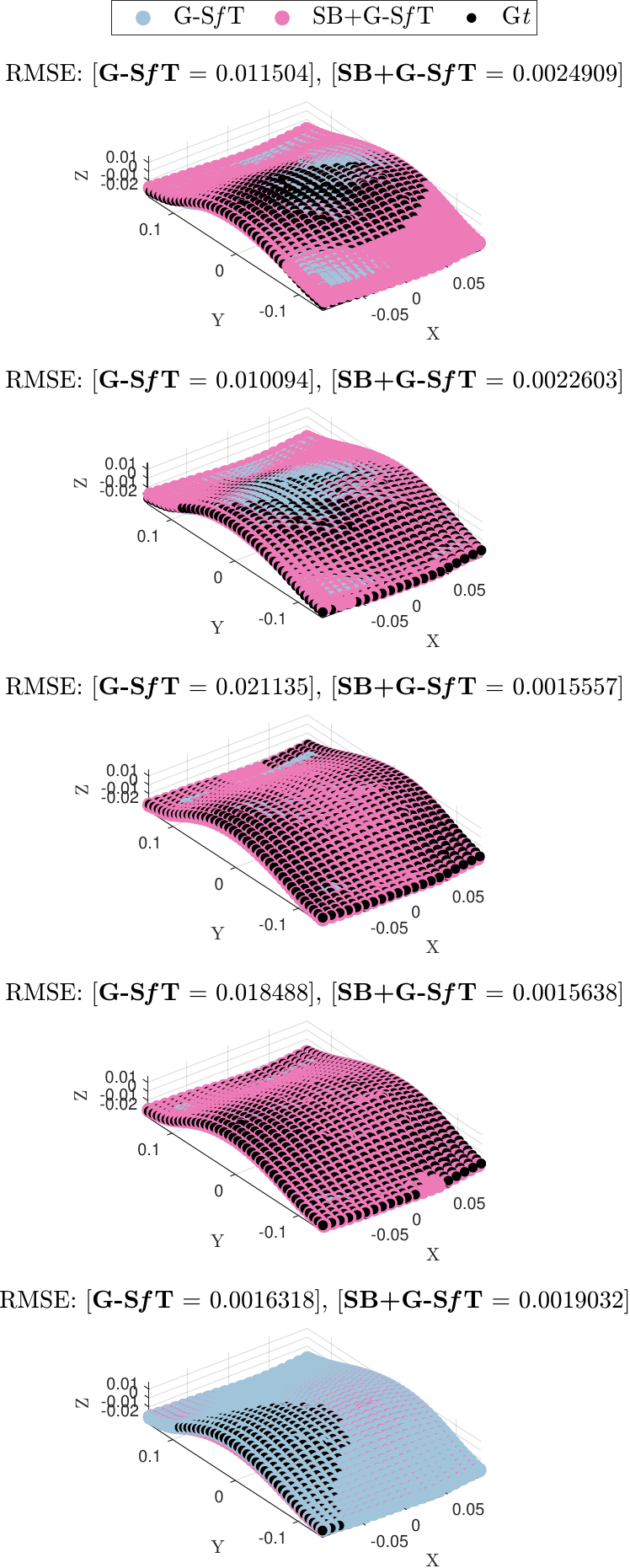}
    \put(20,-3){(d)}
\end{overpic}%
\vspace*{5mm}
\caption{Results from experiments with silhouette-boosted \y{ns}. (a) shows the input data, the red dots are the forty keypoints and the blue outline are the supplied silhouettes; (b) shows the \y{gt} 3D pointcloud obtained with \y{sfm} and the recovered \y{gt} camera poses (although the camera poses are not relevant for this particular experiment with \y{ns}, it has been visualised to clarify the experimental setup); (c) shows the 3D \y{gt} keypoints of eight other deformed poses of the paper \big(not the deformation shown in (a)\big) used for generating the \y{ssm}; (d) shows the results for the five images, the light-blue points denote the results from \y{ns}, the orchid points denote the results of silhouette-boosted \y{ns} (SB+\y{ns}) and the black points are \y{gt}, shown here are all the 844 keypoints used to generate the silhouette}
\label{silh_real}
\end{figure}

{\color{revCol1}
\subsubsection{Runtime analysis}
We evaluate the computational performance of our proposed methods, with particular emphasis on \y{nsc} - the generalised formulation of \y{ns} - as well as the silhouette-augmented variant of \y{ns}. As previously described in \cref{sol_ns}, all implementations rely on standard semidefinite programming (SDP) solvers via the CVX framework \citep{cvx, gb08} with MOSEK \citep{mosek}, executed within MATLAB. The reported timings correspond exclusively to solver runtimes; modelling overhead is excluded, as it is highly dependent on implementation details and largely optimisable.Three runtime evaluations are presented below.

\vspace{2mm}
\noindent \textcolor{revCol2}{\textbf{Effect of increasing number of views}}.~For \y{nsc}, we assess the impact of varying viewpoint count using the five configurations outlined in \cref{par_mocap_gsftp}, while fixing the total number of point correspondences to 100. As shown in \cref{fig_timing}a, solver time remains largely constant for 1, 2, 4, and 10 viewpoints. However, a pronounced increase is observed at 100 viewpoints - consistent with the degenerate and unsolvable configuration described in \cref{par_mocap_gsftp} - which substantially burdens the interior-point solver.

\vspace{2mm}
\noindent \textcolor{revCol2}{\textbf{Effect of increasing correspondences}}.~We fix the number of viewpoints to one and vary the number of correspondences from 20 to 100 in increments of 20. As depicted in \cref{fig_timing}b, runtime increases approximately linearly with the number of correspondences - a predictable outcome.

\vspace{2mm}
\noindent \textcolor{revCol2}{\textbf{Effect of increasing density of silhouette}}.~We analyse silhouette-boosted \y{ns}, where runtime variability is primarily driven by the number of silhouette points, which can be large. To ensure consistency, each instance was forced to run for exactly seven iterations (independent of convergence criteria), and the solver time was recorded. \Cref{fig_timing}c shows a clear linear correlation between solver time and the number of silhouette points, with each configuration repeated thrice. The trend is visually evident and requires no further statistical treatment.}

\vspace{2mm}
\color{revCol2}{
\noindent\textbf{Discussion on scalability}. The proposed methods exhibit favourable scalability across the key problem parameters of viewpoints, correspondences and silhouette density. Solver runtime remains largely stable with increasing numbers of camera viewpoints, except under degenerate configurations. Runtime grows approximately linearly with the number of 2D point correspondences and silhouette points, as expected from the underlying \y{sdp} formulation. These trends suggest the method is computationally efficient for moderate-scale problems and scales predictably with input size.}

\begin{figure}[t]
\centering
\begin{subfigure}{\textwidth}
  \centering
  \includegraphics[width=0.5\textwidth]{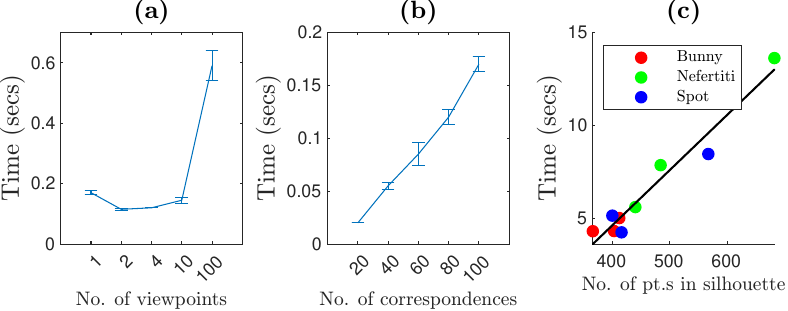}
\end{subfigure}%
\caption{{\color{revCol1}Runtime analysis of proposed methods.
(a) Solver time for \y{nsc} with increasing number of viewpoints (1, 2, 4, 10, 100) while keeping the number of correspondences fixed at 100. 
(b) Solver time for \y{nsc} with increasing number of correspondences (20 to 100 in steps of 20), with the number of viewpoints fixed at 1. 
(c) Solver time for silhouette-boosted \y{ns} as a function of the number of silhouette points. Each shape instance is run for exactly seven iterations and repeated three times. }}
\label{fig_timing}
\end{figure}

\color{revCol1}{\subsection{Discussion}\label{sec_disc}
The presented results demonstrate that the proposed methods effectively solve the \y{ns}, \y{nsc}, and silhouette-boosted \y{ns} formulations with high accuracy. While the approaches are robust across a wide range of scenarios, two intriguing challenges remain that present exciting opportunities for future investigation. The \textit{first} relates to the theoretical limitation observed in \cref{par_mocap_gsftp} and \cref{sec_phi_sft}: the inherent ambiguity in jointly recovering absolute camera pose and deformed shape, especially with $P = 1$, in \y{nsc}. The camera pose and deformation being coupled, the solution is non-unique. Thankfully, we successfully recover the deformed shape up to an arbitrary pose in all such cases, demonstrating the practical utility of our method even within this constraint. The \textit{second} challenge arises from the reliance on a learned \y{ssm}, as highlighted in \cref{fig:iso_synth}. Reconstruction accuracy can be affected when the target deformation lies outside the span of the learned shape bases. While this is a well-known limitation of \y{ssm}, it highlights the importance of expressive shape priors. Furthermore, as noted in \cref{app_challenge_phy}, extending the methodology to more general or non-statistical shape models while preserving convexity is a non-trivial endeavour. Developing such extensions represents a promising and impactful direction for future research.}

\vspace{2mm}
\color{revCol2}{
\noindent \textbf{Future research directions.}~Although the problem statements defined in \cref{subsec_nsc_first} and \cref{sec_unknown} have been solved convexly, several closely related problems remain open and merit further investigation. A particularly important direction is the lack of convex formulations for the silhouette-boosted variant (\cref{pron_silh_nsc}) and the absence of correspondence-free, silhouette-based \y{sft} methods. Both problems are technically challenging but potentially impactful. Beyond these, other critical challenges remain outside the scope of this article. These include the development of robust formulations capable of handling correspondence blunders, solutions for \y{sft} in the presence of uncalibrated or inaccurately calibrated cameras, and extensions toward correspondence-free \y{sft} formulations.}

\color{black}{
\section{Conclusion}
In this article, we described, discussed and experimentally validated the first set of method for \y{sft} with the generalised camera. In the presence of known poses of the origin of the \y{sl}s of the generalised camera, our method remains agnostic of the underlying projection model and successfully reconstructs the deformed object shape combining information from all the \y{sl}s. In the multi-view setup with unknown origin of the \y{sl}s of the generalised camera, our proposed approach effectively and accurately handles the reconstruction as well as relative camera pose estimation, providing the first method to do so. Moreover, in the presence of silhouette, the proposed silhouette-boosted \y{sft} method produces highly accurate results, even with very few keypoint correspondences.}

\begin{figure}
\centering
\begin{subfigure}{0.25\textwidth}
  \centering
  \includegraphics[width=0.95\textwidth]{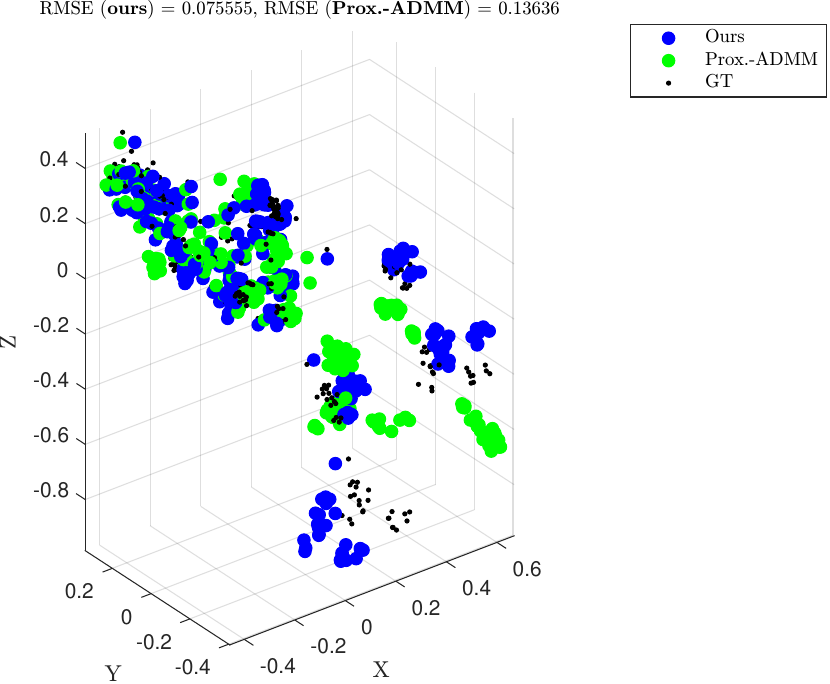}
  \label{fig:q9_1}
\end{subfigure}%
\begin{subfigure}{0.25\textwidth}
  \centering
  \includegraphics[width=0.95\textwidth]{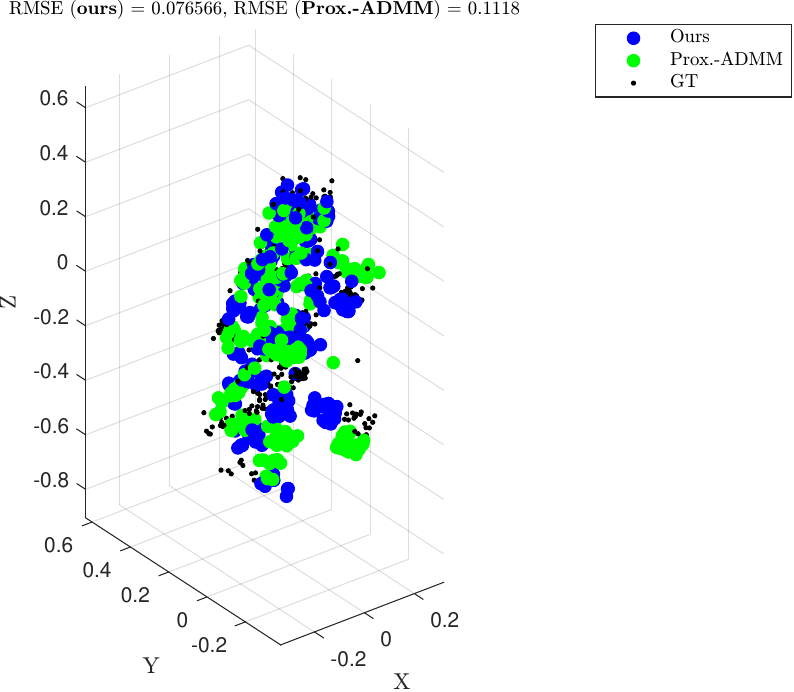}
  \label{fig:q9_3}
\end{subfigure}%
\begin{subfigure}{0.25\textwidth}
  \centering
  \includegraphics[width=0.95\textwidth]{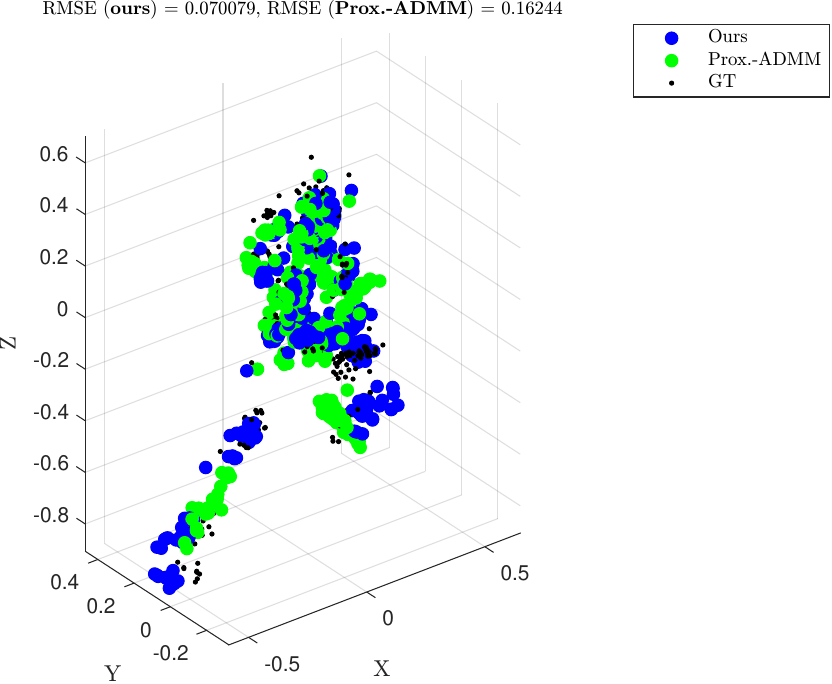}
  \label{fig:q9_5}
\end{subfigure}%
\begin{subfigure}{0.25\textwidth}
  \centering
  \includegraphics[width=0.95\textwidth]{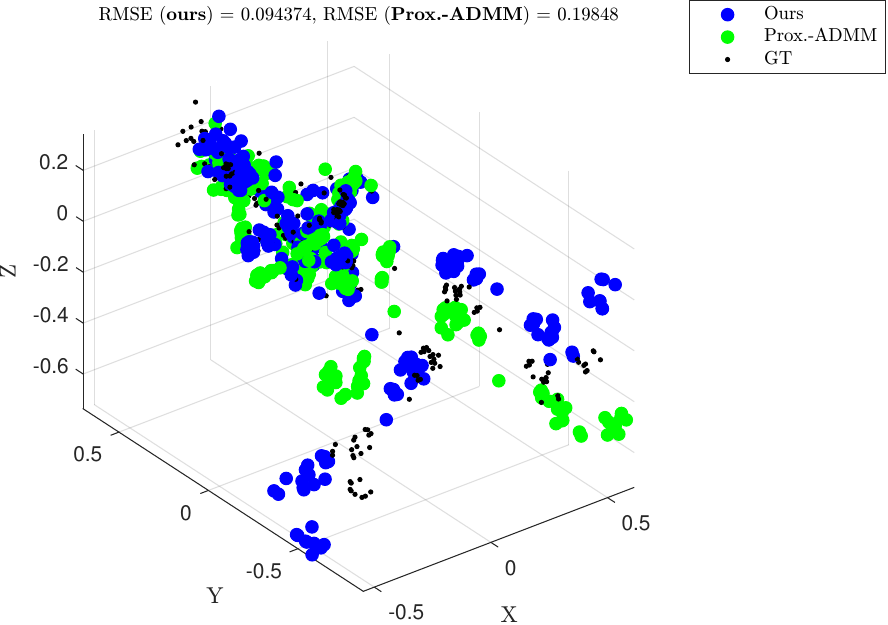}
  \label{fig:q9_7}
\end{subfigure}%

\begin{subfigure}{0.25\textwidth}
  \centering
  \includegraphics[width=0.95\textwidth]{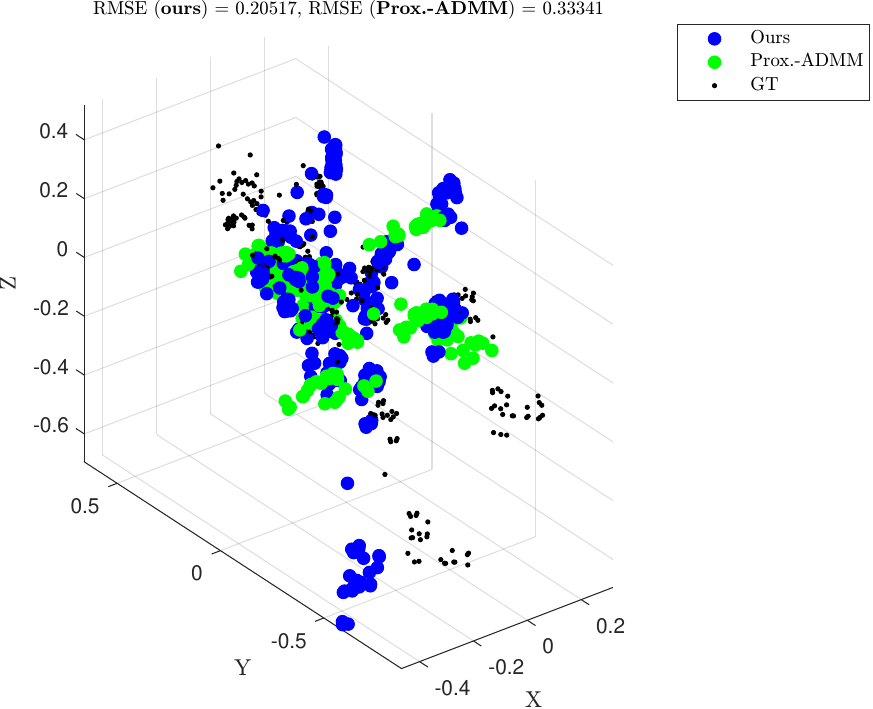}
  \label{fig:q9_2}
\end{subfigure}%
\begin{subfigure}{0.25\textwidth}
  \centering
  \includegraphics[width=0.95\textwidth]{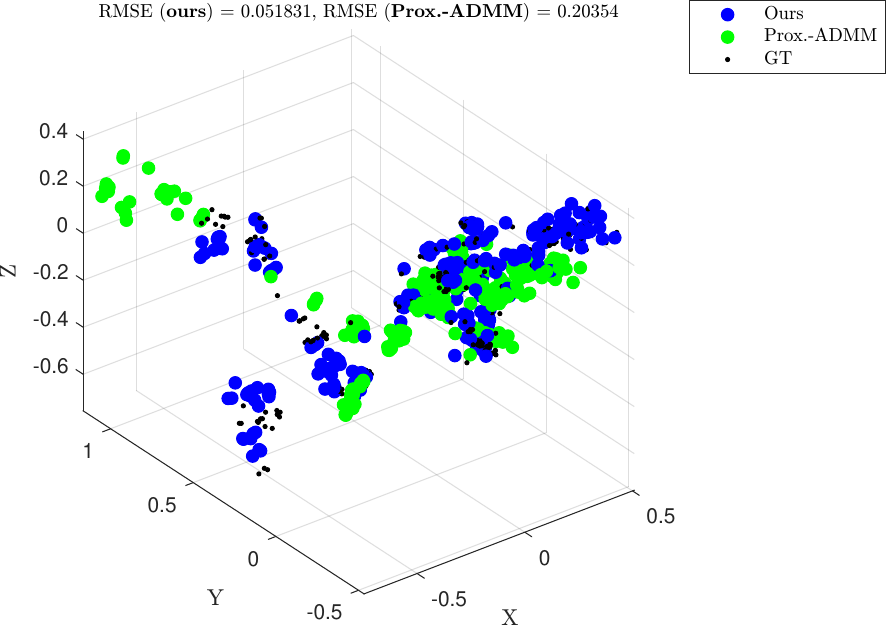}
  \label{fig:q9_4}
\end{subfigure}%
\begin{subfigure}{0.25\textwidth}
  \centering
  \includegraphics[width=0.95\textwidth]{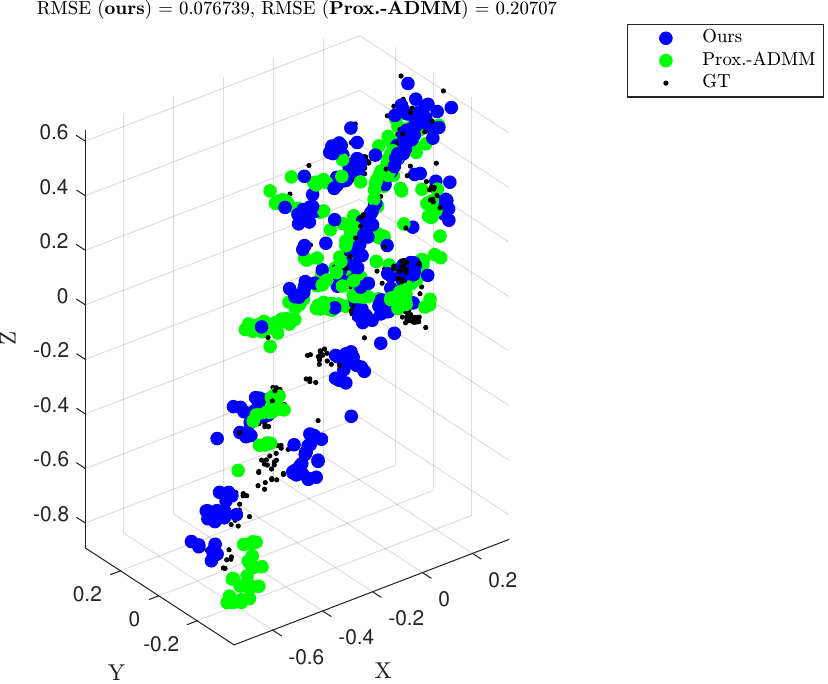}
  \label{fig:q9_6}
\end{subfigure}%
\begin{subfigure}{0.25\textwidth}
  \centering
  \includegraphics[width=0.95\textwidth]{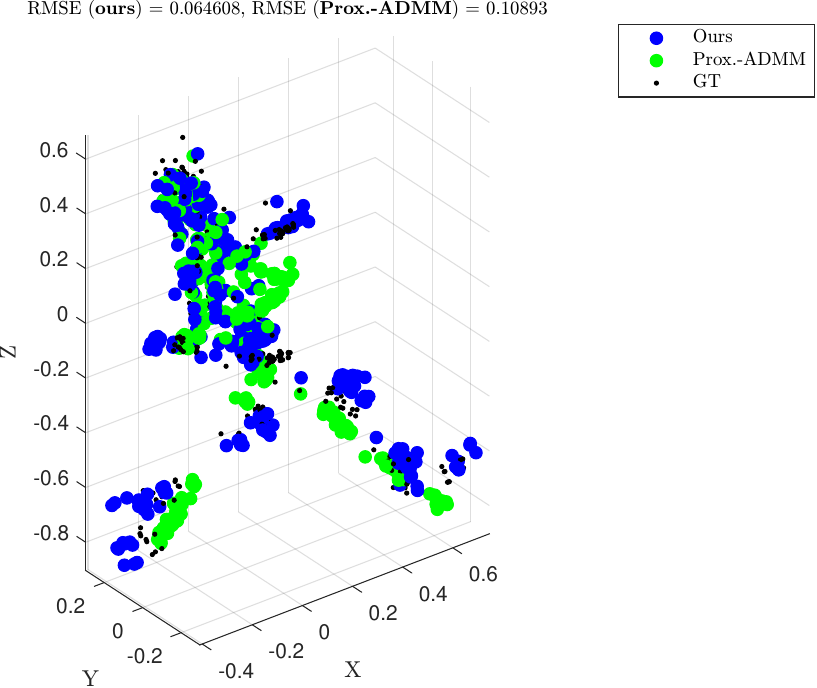}
  \label{fig:q9_8}
\end{subfigure}%

\begin{subfigure}{0.25\textwidth}
  \centering
  \includegraphics[width=0.95\textwidth]{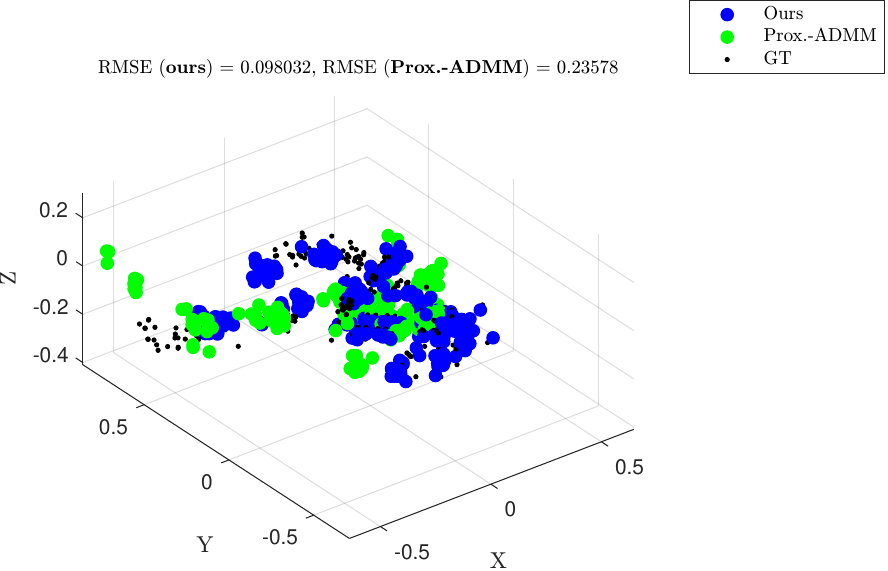}
  \label{fig:q35_1}
\end{subfigure}%
\begin{subfigure}{0.25\textwidth}
  \centering
  \includegraphics[width=0.95\textwidth]{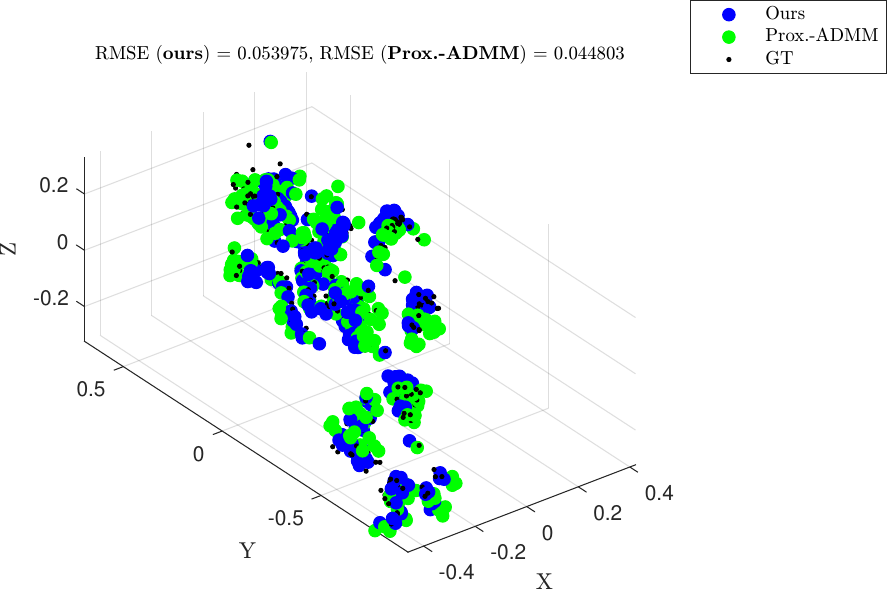}
  \label{fig:q35_3}
\end{subfigure}%
\begin{subfigure}{0.25\textwidth}
  \centering
  \includegraphics[width=0.95\textwidth]{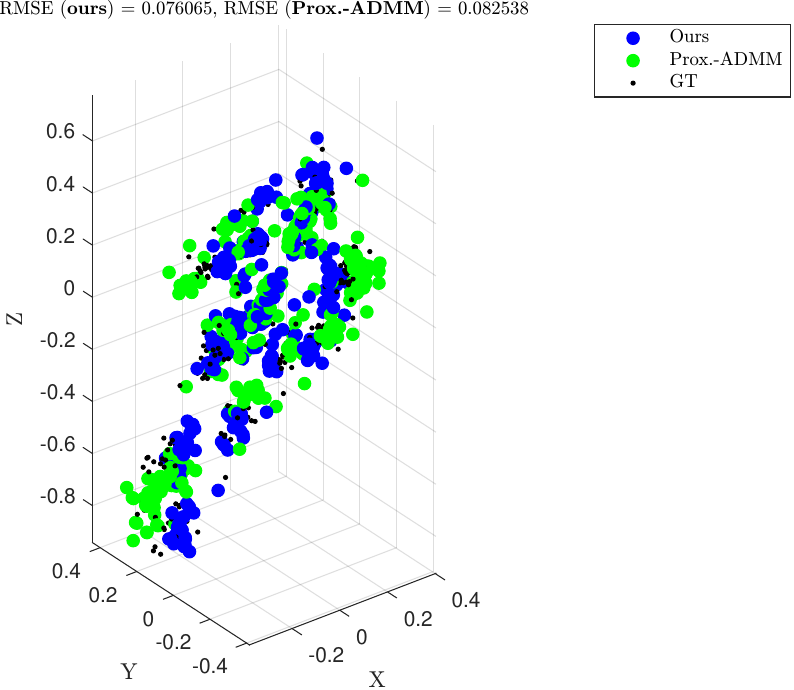}
  \label{fig:q35_5}
\end{subfigure}%
\begin{subfigure}{0.25\textwidth}
  \centering
  \includegraphics[width=0.95\textwidth]{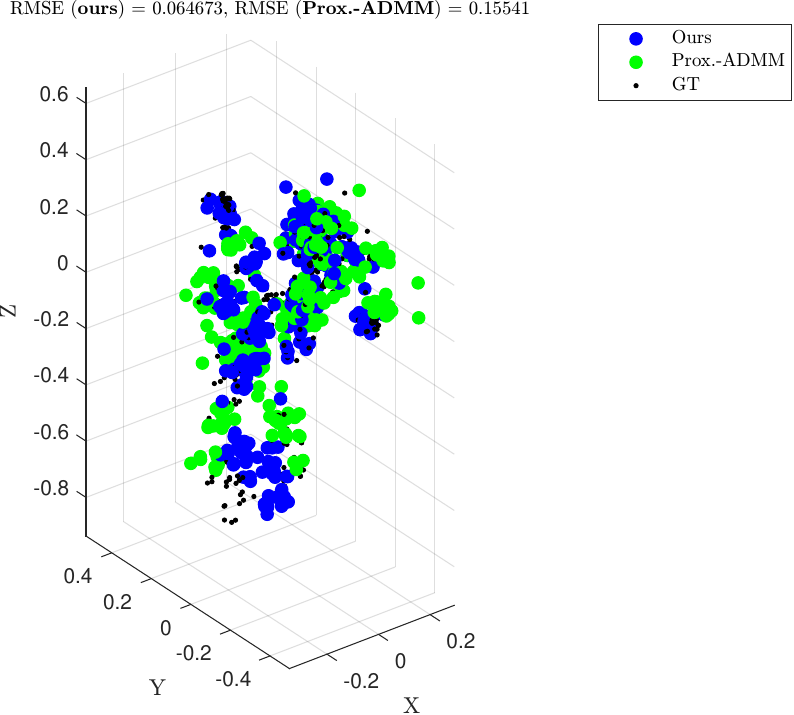}
  \label{fig:q35_7}
\end{subfigure}%

\begin{subfigure}{0.25\textwidth}
  \centering
  \includegraphics[width=0.95\textwidth]{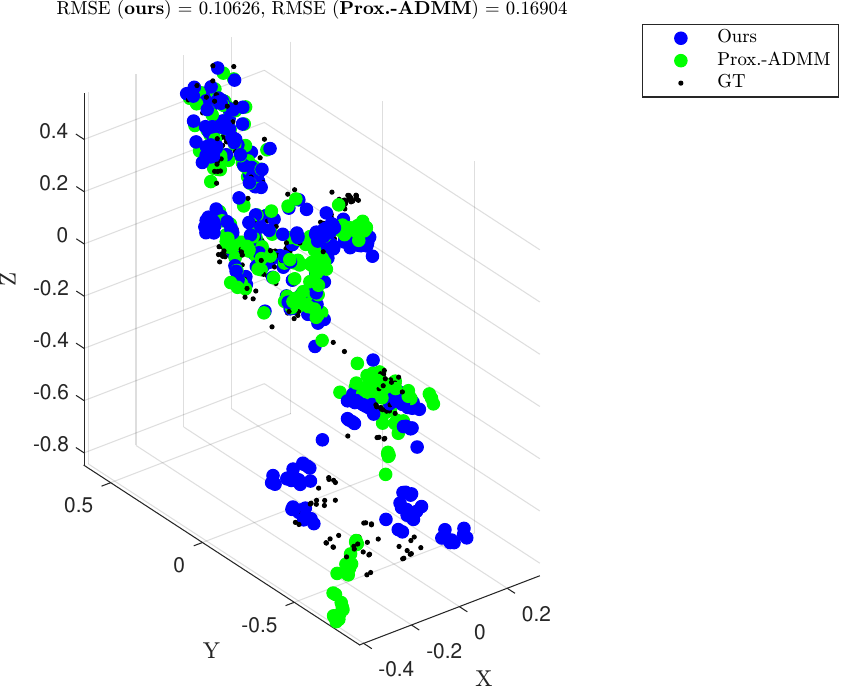}
  \label{fig:q35_2}
\end{subfigure}%
\begin{subfigure}{0.25\textwidth}
  \centering
  \includegraphics[width=0.95\textwidth]{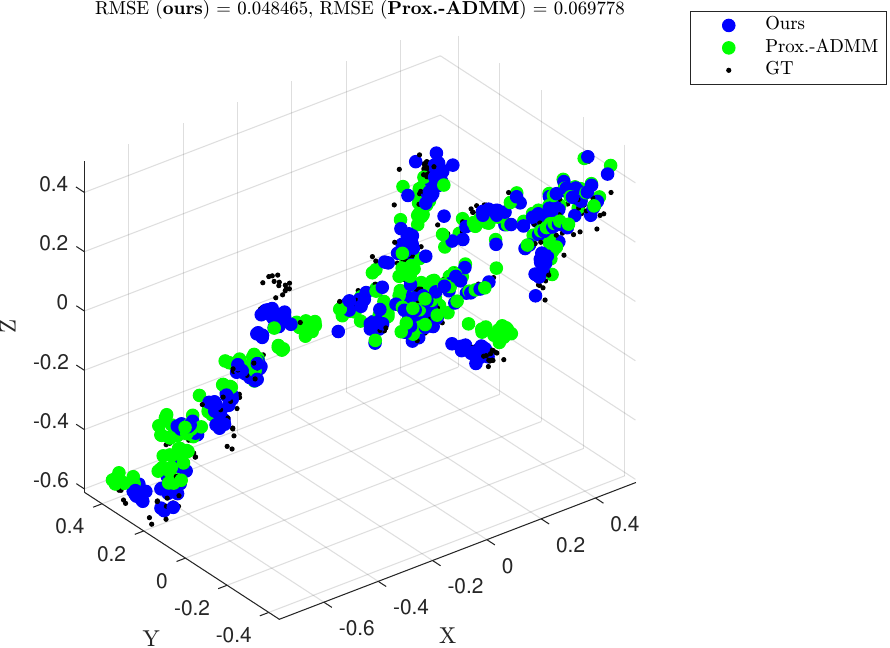}
  \label{fig:q35_4}
\end{subfigure}%
\begin{subfigure}{0.25\textwidth}
  \centering
  \includegraphics[width=0.95\textwidth]{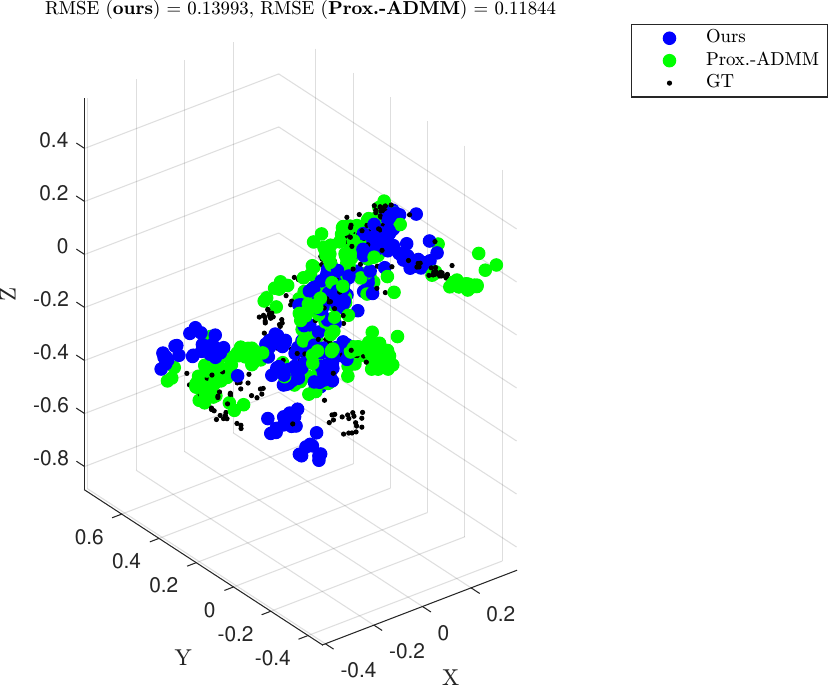}
  \label{fig:q35_6}
\end{subfigure}%
\begin{subfigure}{0.25\textwidth}
  \centering
  \includegraphics[width=0.95\textwidth]{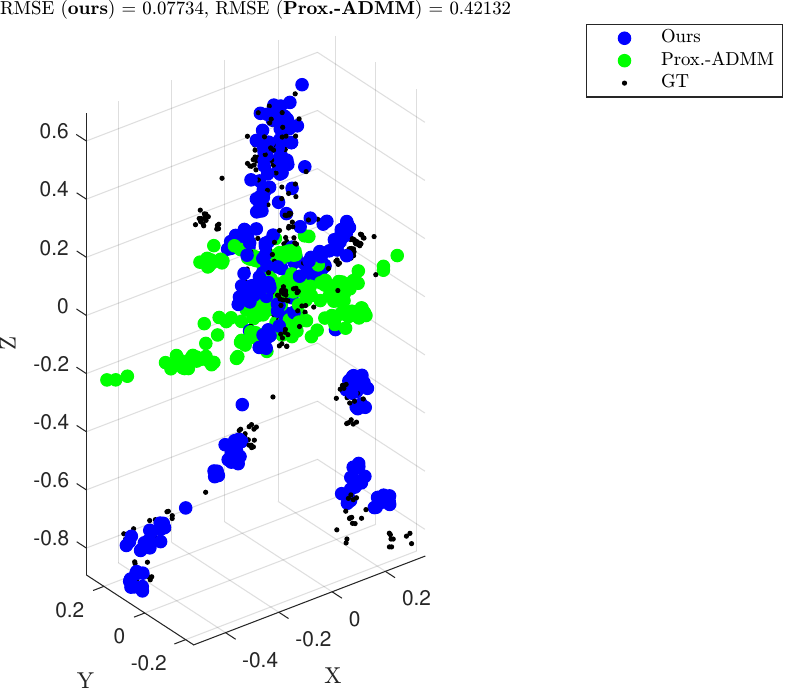}
  \label{fig:q35_8}
\end{subfigure}%
\caption{First two rows are {\tt subject-9}, the next two rows are {\tt subject-35}. The first column is with 10 correspondences, the second column is with 20 correspondences, the third column is with 30 correspondences, and the fourth column is with 40 correspondences.}
\label{art_qual_simpl}
\end{figure}

\begin{figure}
\centering
\begin{subfigure}{0.2\textwidth}
  \centering
  \includegraphics[width=0.95\textwidth]{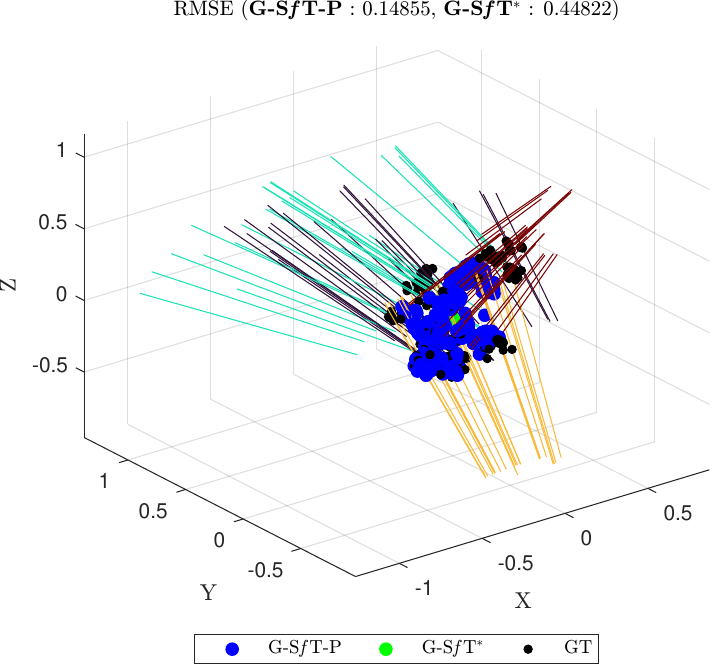}
  \label{fig:q9_mc_1_stp3}
\end{subfigure}%
\begin{subfigure}{0.2\textwidth}
  \centering
  \includegraphics[width=0.95\textwidth]{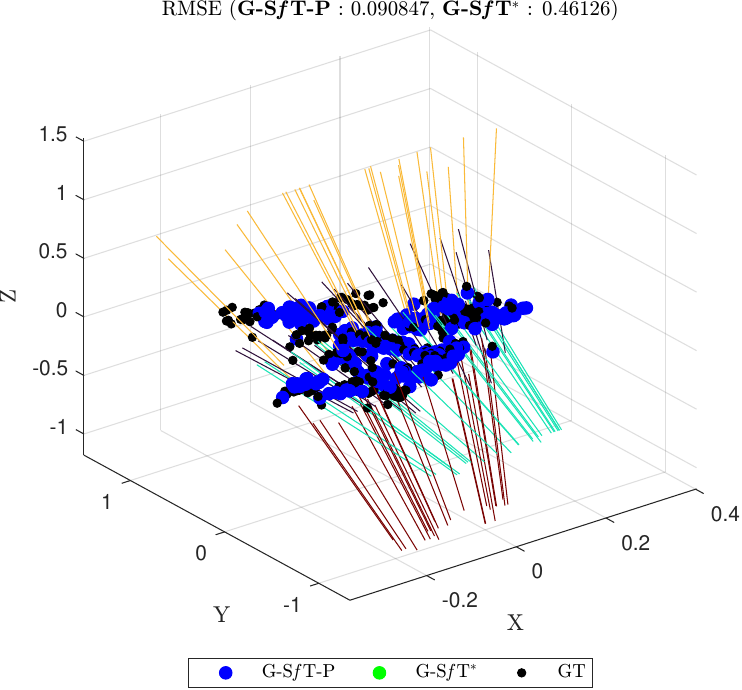}
  \label{fig:q9_mc_2_stp3}
\end{subfigure}%
\begin{subfigure}{0.2\textwidth}
  \centering
  \includegraphics[width=0.95\textwidth]{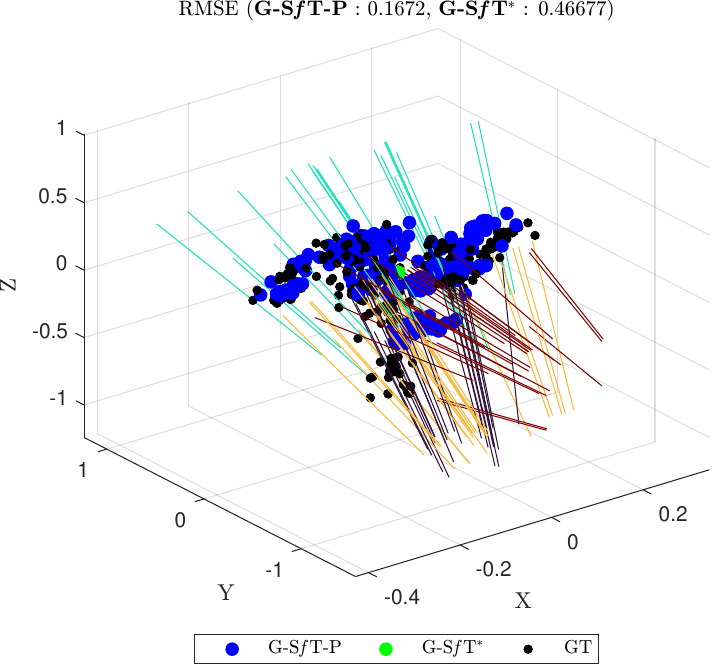}
  \label{fig:q9_mc_3_stp3}
\end{subfigure}%
\begin{subfigure}{0.2\textwidth}
  \centering
  \includegraphics[width=0.95\textwidth]{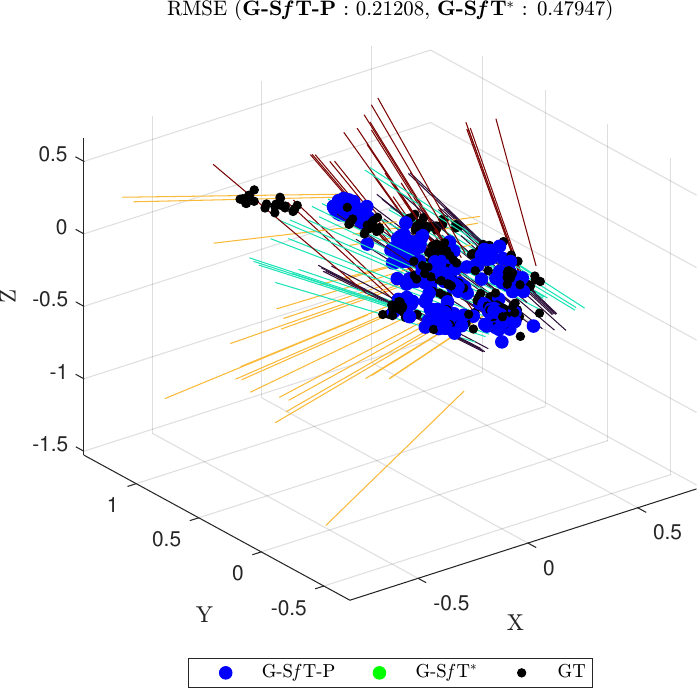}
  \label{fig:q9_mc_4_stp3}
\end{subfigure}%
\begin{subfigure}{0.2\textwidth}
  \centering
  \includegraphics[width=0.95\textwidth]{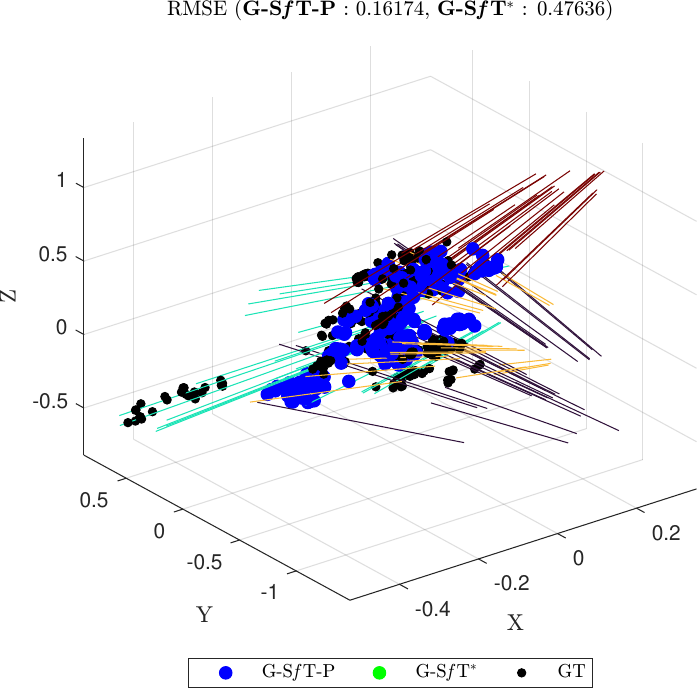}
  \label{fig:q9_mc_5_stp3}
\end{subfigure}%

\begin{subfigure}{0.2\textwidth}
  \centering
  \includegraphics[width=0.95\textwidth]{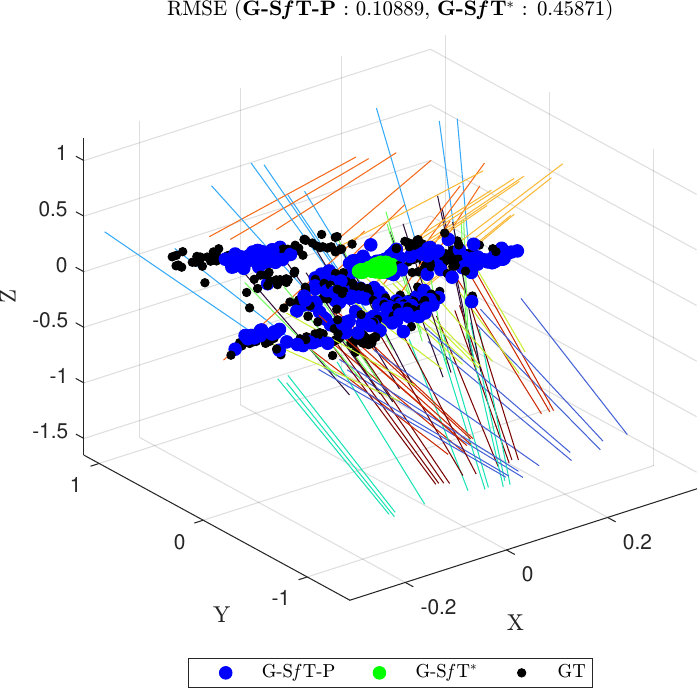}
  \label{fig:q9_mc_1_stp4}
\end{subfigure}%
\begin{subfigure}{0.2\textwidth}
  \centering
  \includegraphics[width=0.95\textwidth]{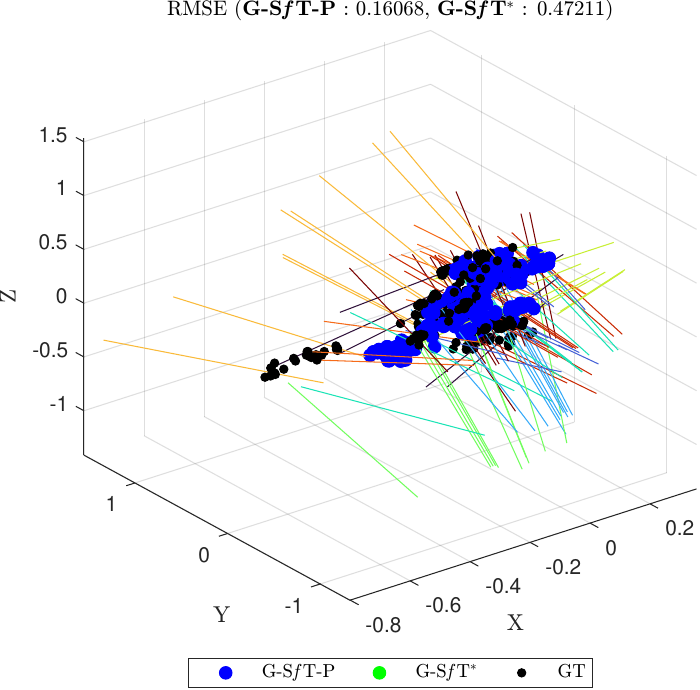}
  \label{fig:q9_mc_2_stp4}
\end{subfigure}%
\begin{subfigure}{0.2\textwidth}
  \centering
  \includegraphics[width=0.95\textwidth]{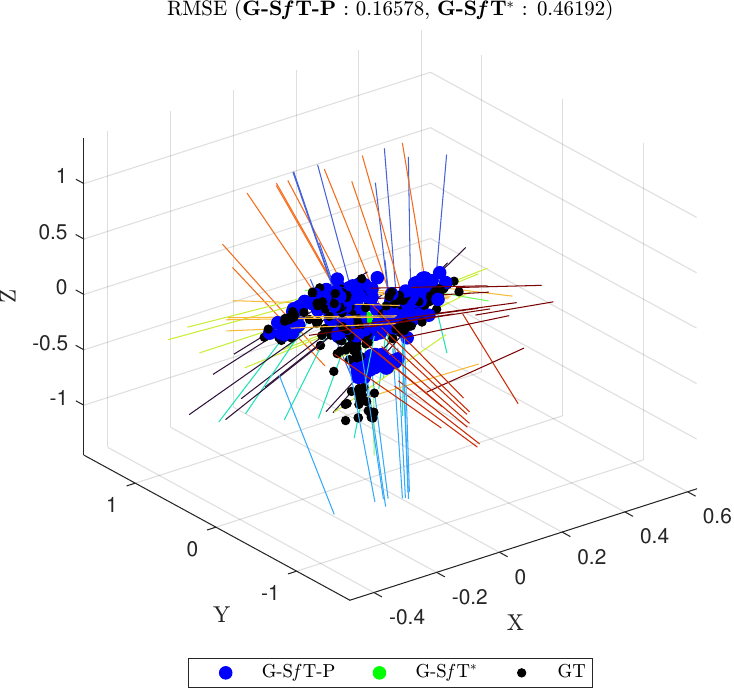}
  \label{fig:q9_mc_3_stp4}
\end{subfigure}%
\begin{subfigure}{0.2\textwidth}
  \centering
  \includegraphics[width=0.95\textwidth]{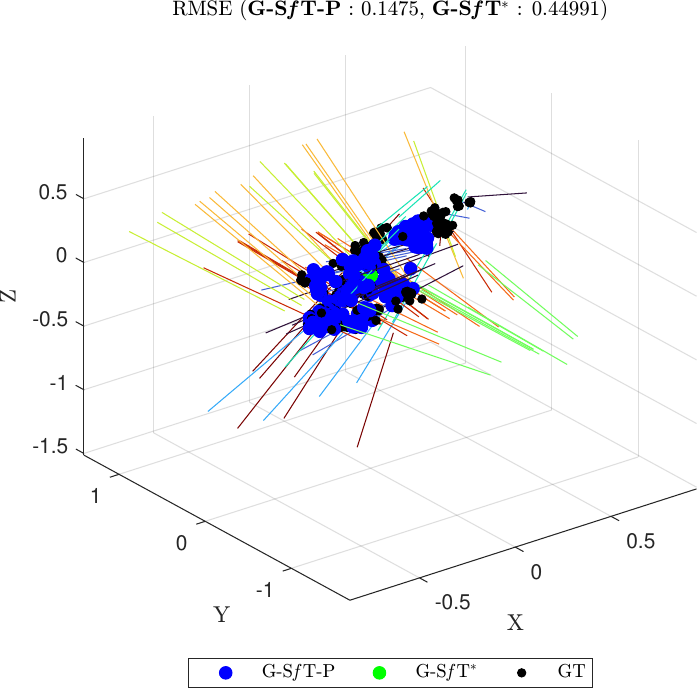}
  \label{fig:q9_mc_4_stp4}
\end{subfigure}%
\begin{subfigure}{0.2\textwidth}
  \centering
  \includegraphics[width=0.95\textwidth]{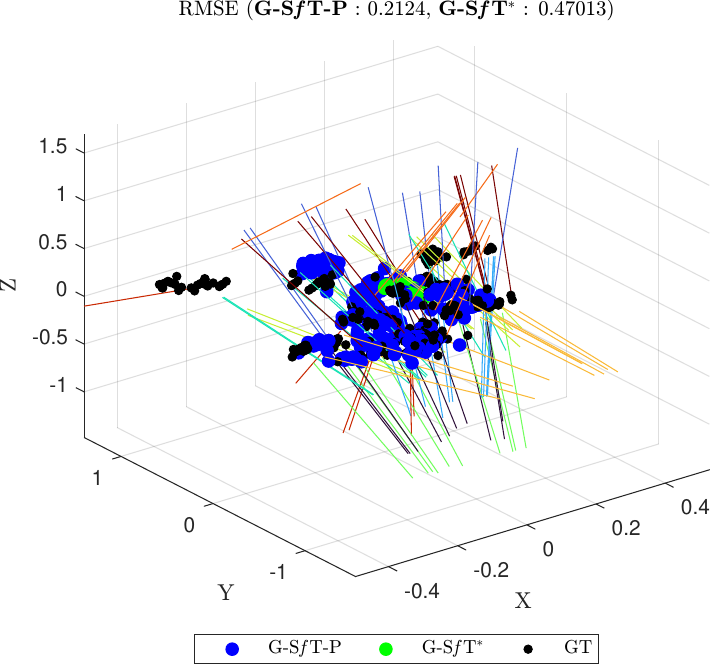}
  \label{fig:q9_mc_5_stp4}
\end{subfigure}%

\begin{subfigure}{0.2\textwidth}
  \centering
  \includegraphics[width=0.95\textwidth]{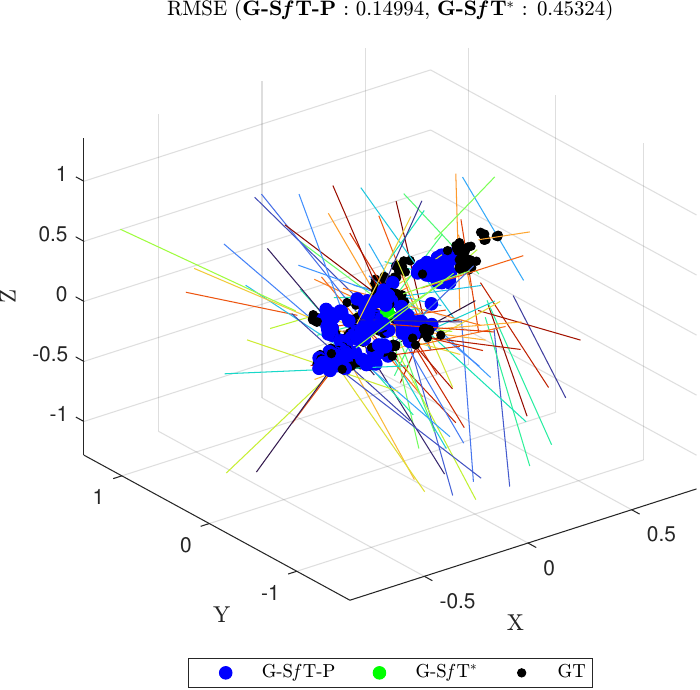}
  \label{fig:q9_mc_1_stp5}
\end{subfigure}%
\begin{subfigure}{0.2\textwidth}
  \centering
  \includegraphics[width=0.95\textwidth]{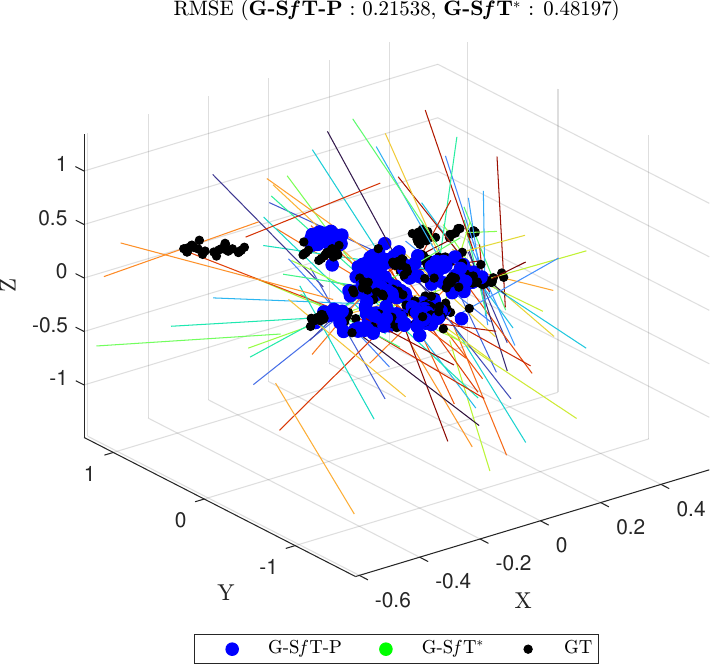}
  \label{fig:q9_mc_2_stp5}
\end{subfigure}%
\begin{subfigure}{0.2\textwidth}
  \centering
  \includegraphics[width=0.95\textwidth]{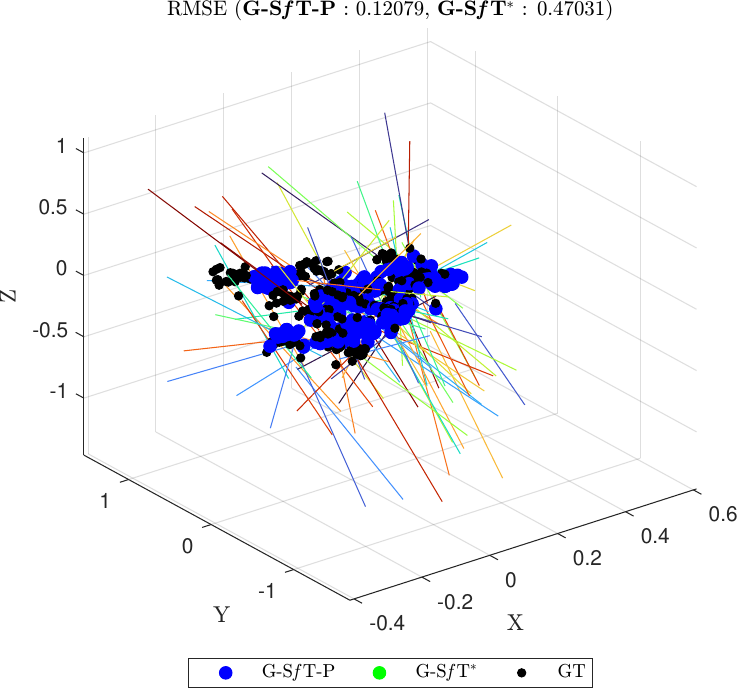}
  \label{fig:q9_mc_3_stp5}
\end{subfigure}%
\begin{subfigure}{0.2\textwidth}
  \centering
  \includegraphics[width=0.95\textwidth]{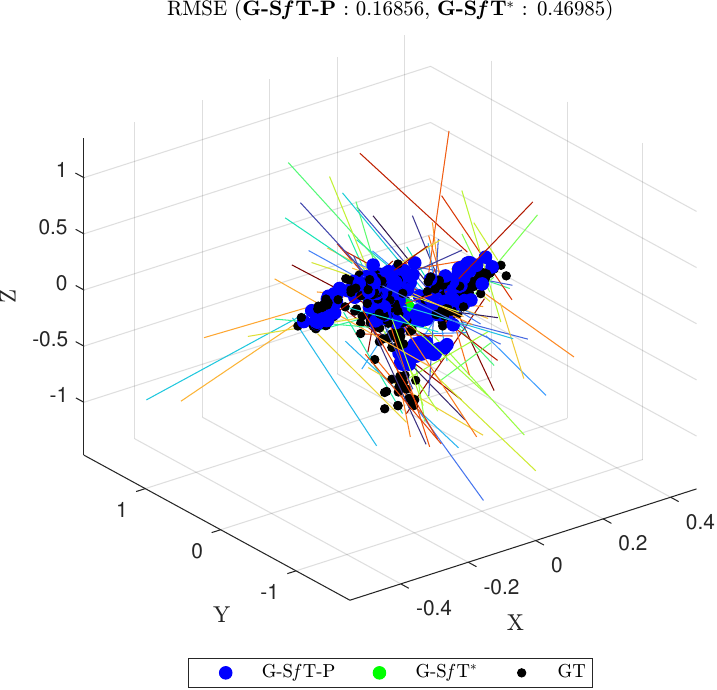}
  \label{fig:q9_mc_4_stp5}
\end{subfigure}%
\begin{subfigure}{0.2\textwidth}
  \centering
  \includegraphics[width=0.95\textwidth]{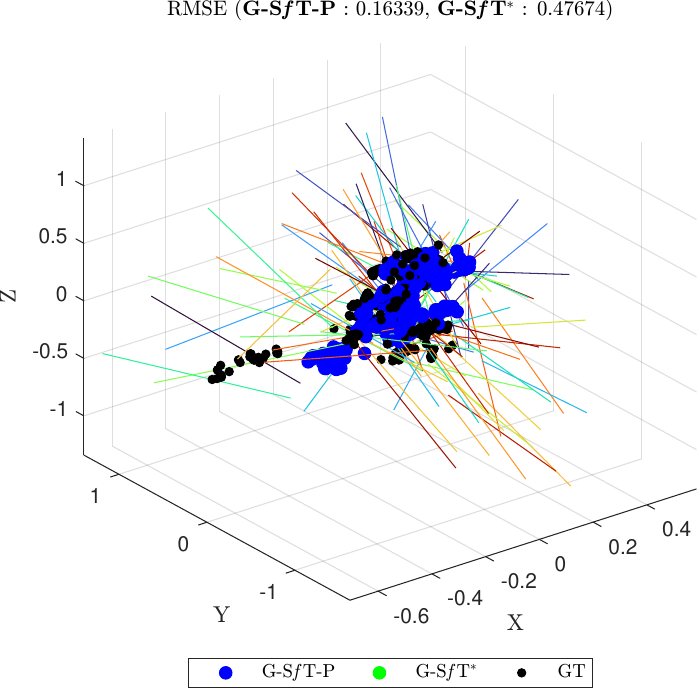}
  \label{fig:q9_mc_5_stp5}
\end{subfigure}%
\caption{Qualitative comparison of results from repeated \y{ns} and \y{nsc} on the generalised camera setup with {\tt subject-9}. The three rows show the configurations 3, 4, and 5, as described in \cref{sec_exp_gsftp}. The five columns are five different, randomly sampled experiments. The black keypoints are 3D \y{gt}, the blue keypoints are reconstruction using \y{nsc} and the green keypoints are one of the reconstruction of \y{ns}, this randomly sampled reconstruction is labelled {\y{ns}$^{\ast}$}. The lines incident on the reconstruction are sightlines, the lines sharing a common origin are of the same colour. The first-row/configuration-3 therefore has four groups of colours, the second-row/configuration-4 has ten groups of colours, the third-row/configuration-5 has one-hundred groups of colours on their respective sight-lines}
\label{fig_subj9_mc_qual}
\end{figure}

\begin{figure}
\centering
\begin{subfigure}{0.2\textwidth}
  \centering
  \includegraphics[width=0.95\textwidth]{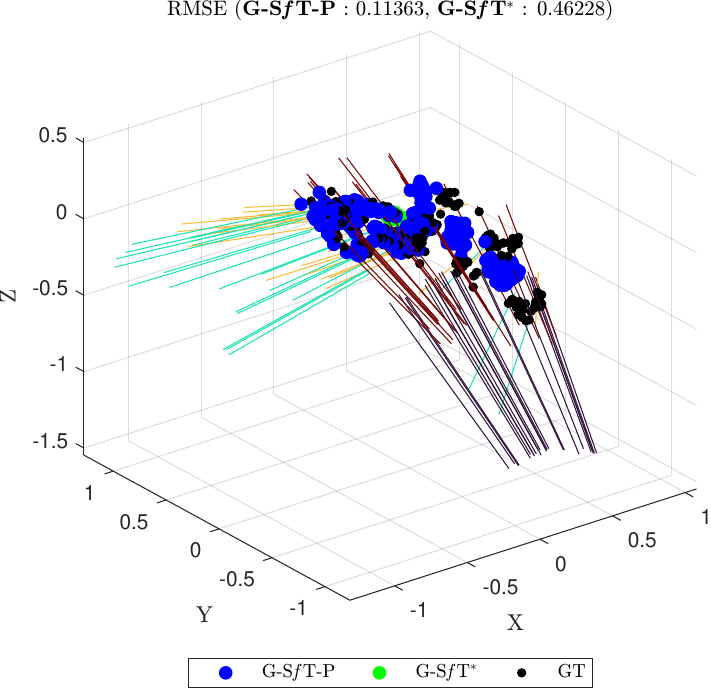}
  \label{fig:q35_mc_1_stp3}
\end{subfigure}%
\begin{subfigure}{0.2\textwidth}
  \centering
  \includegraphics[width=0.95\textwidth]{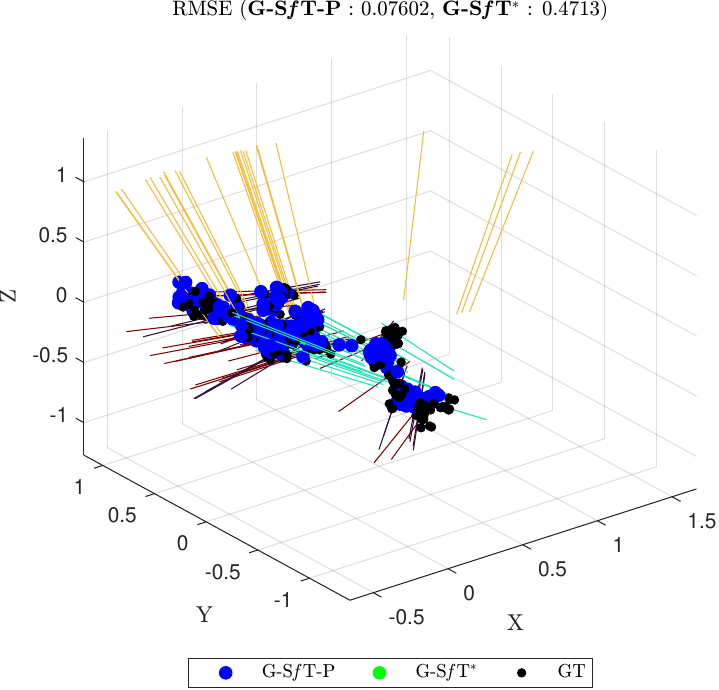}
  \label{fig:q35_mc_2_stp3}
\end{subfigure}%
\begin{subfigure}{0.2\textwidth}
  \centering
  \includegraphics[width=0.95\textwidth]{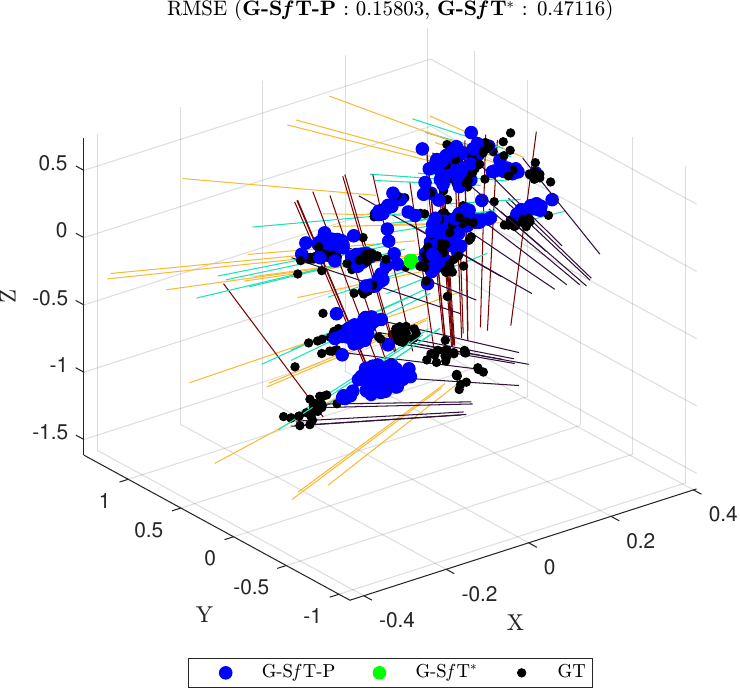}
  \label{fig:q35_mc_3_stp3}
\end{subfigure}%
\begin{subfigure}{0.2\textwidth}
  \centering
  \includegraphics[width=0.95\textwidth]{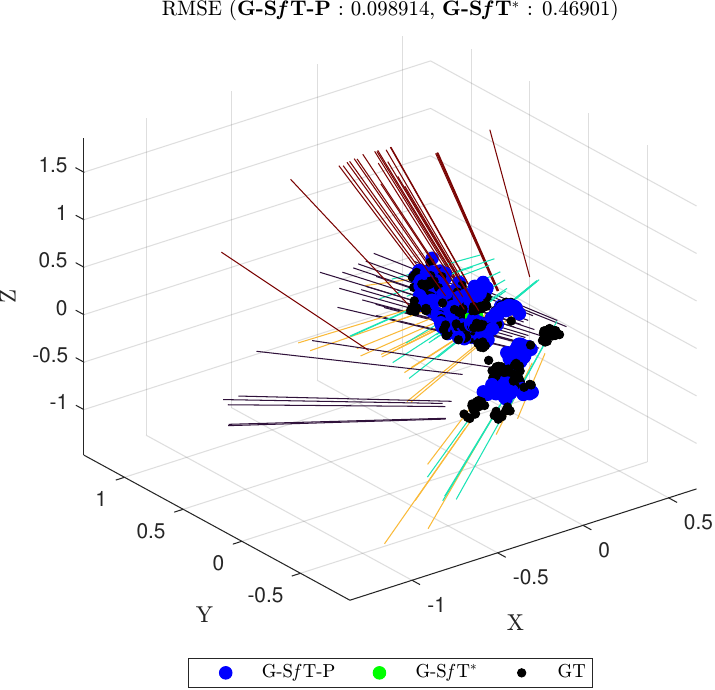}
  \label{fig:q35_mc_4_stp3}
\end{subfigure}%
\begin{subfigure}{0.2\textwidth}
  \centering
  \includegraphics[width=0.95\textwidth]{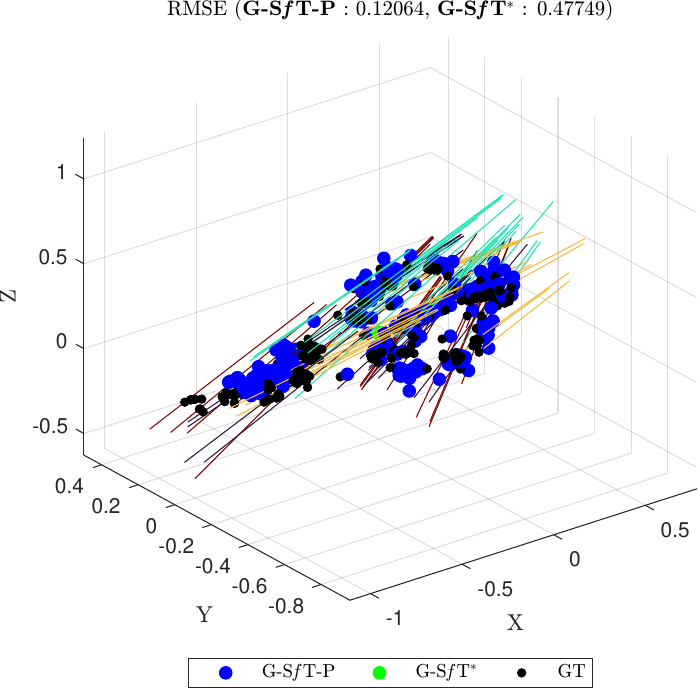}
  \label{fig:q35_mc_5_stp3}
\end{subfigure}%

\begin{subfigure}{0.2\textwidth}
  \centering
  \includegraphics[width=0.95\textwidth]{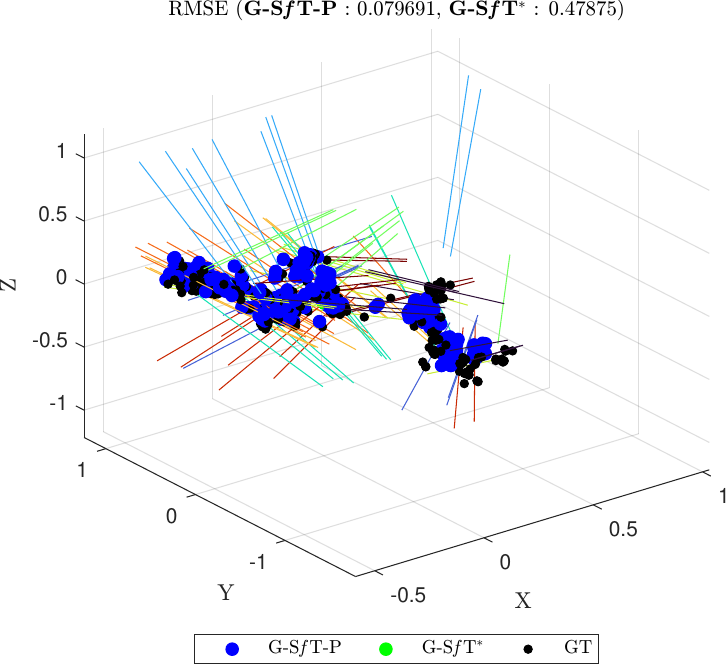}
  \label{fig:q35_mc_1_stp4}
\end{subfigure}%
\begin{subfigure}{0.2\textwidth}
  \centering
  \includegraphics[width=0.95\textwidth]{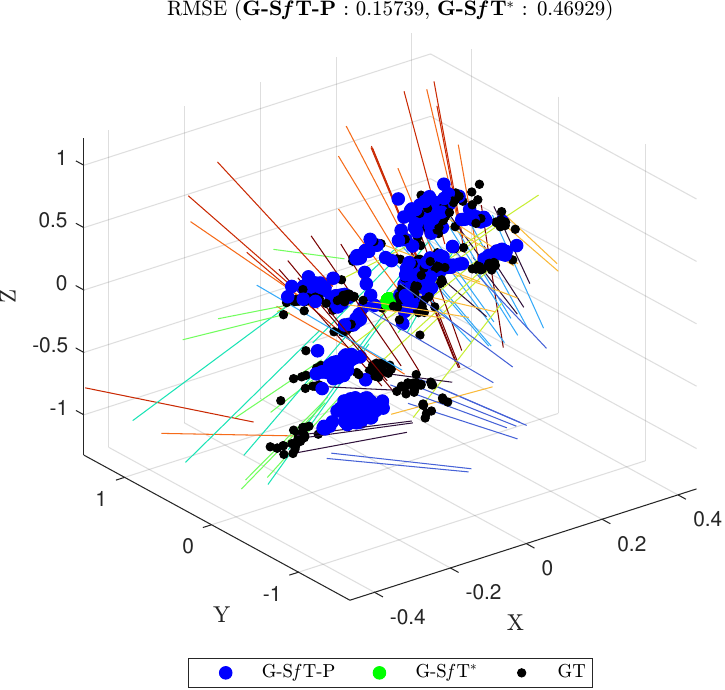}
  \label{fig:q35_mc_2_stp4}
\end{subfigure}%
\begin{subfigure}{0.2\textwidth}
  \centering
  \includegraphics[width=0.95\textwidth]{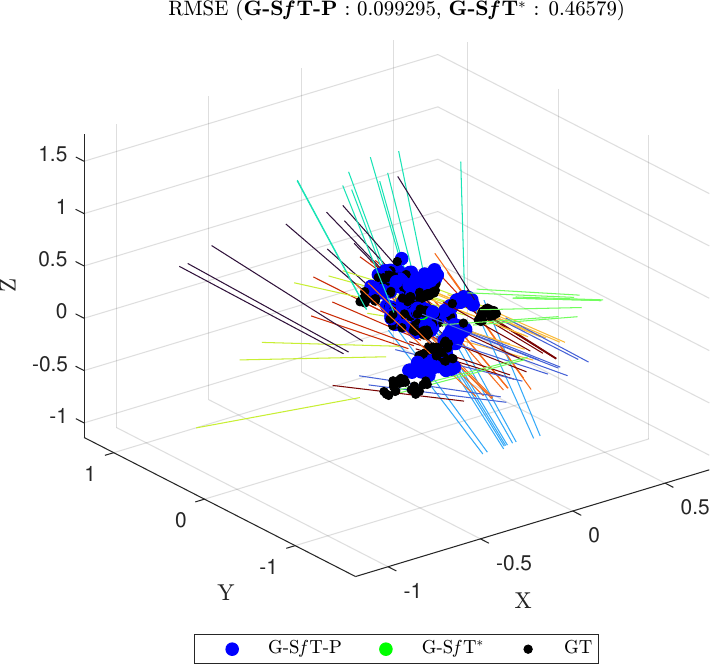}
  \label{fig:q35_mc_3_stp4}
\end{subfigure}%
\begin{subfigure}{0.2\textwidth}
  \centering
  \includegraphics[width=0.95\textwidth]{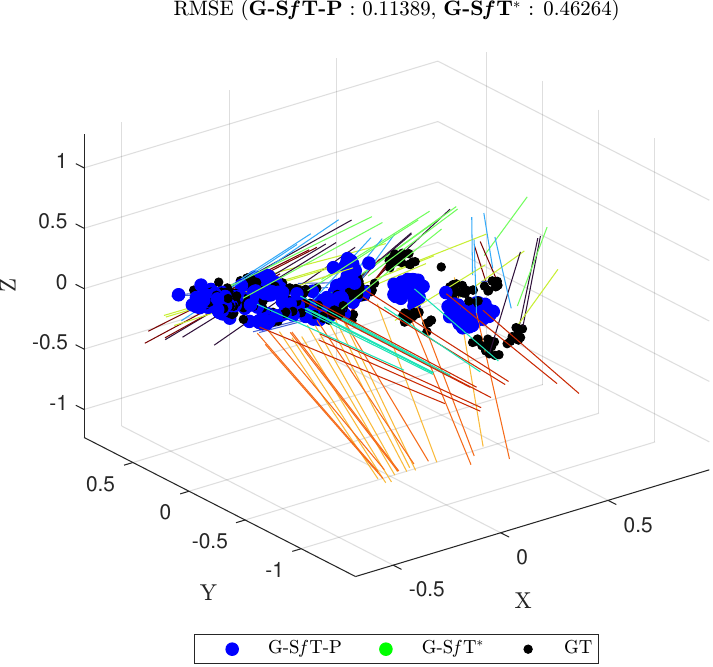}
  \label{fig:q35_mc_4_stp4}
\end{subfigure}%
\begin{subfigure}{0.2\textwidth}
  \centering
  \includegraphics[width=0.95\textwidth]{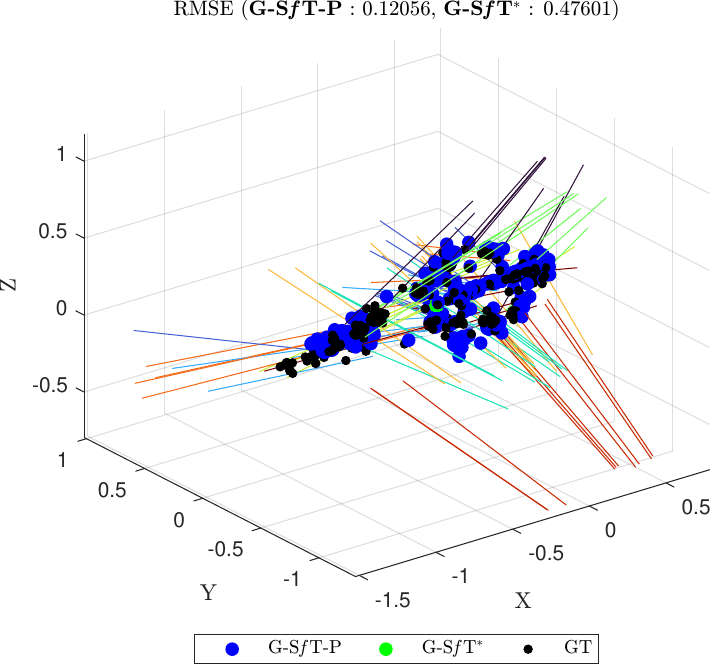}
  \label{fig:q35_mc_5_stp4}
\end{subfigure}%

\begin{subfigure}{0.2\textwidth}
  \centering
  \includegraphics[width=0.95\textwidth]{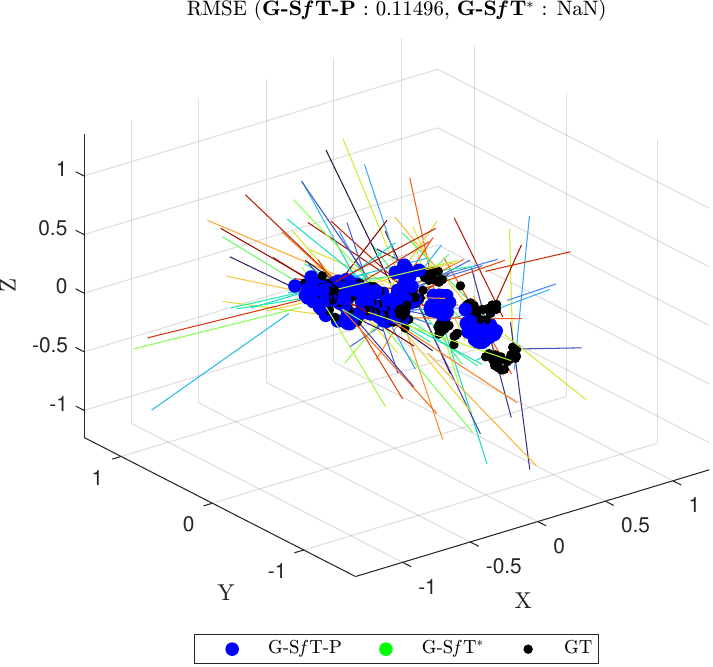}
  \label{fig:q35_mc_1_stp5}
\end{subfigure}%
\begin{subfigure}{0.2\textwidth}
  \centering
  \includegraphics[width=0.95\textwidth]{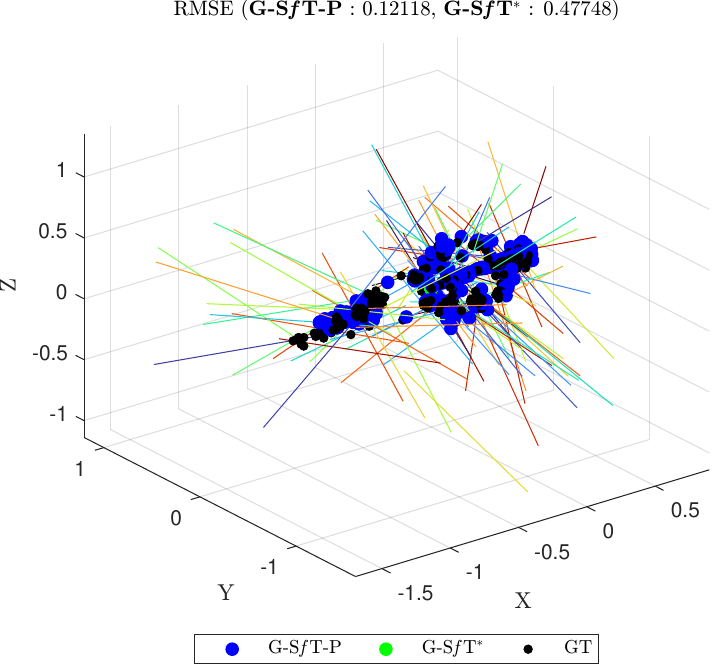}
  \label{fig:q35_mc_2_stp5}
\end{subfigure}%
\begin{subfigure}{0.2\textwidth}
  \centering
  \includegraphics[width=0.95\textwidth]{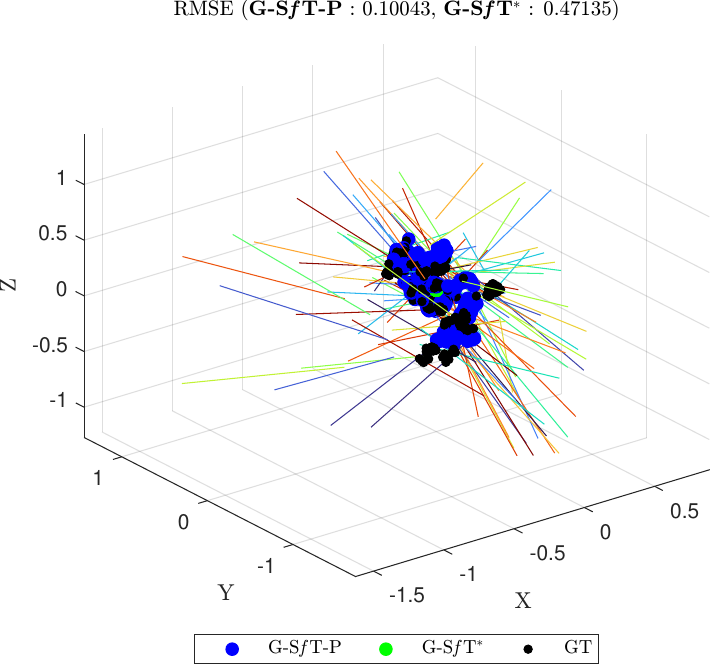}
  \label{fig:q35_mc_3_stp5}
\end{subfigure}%
\begin{subfigure}{0.2\textwidth}
  \centering
  \includegraphics[width=0.95\textwidth]{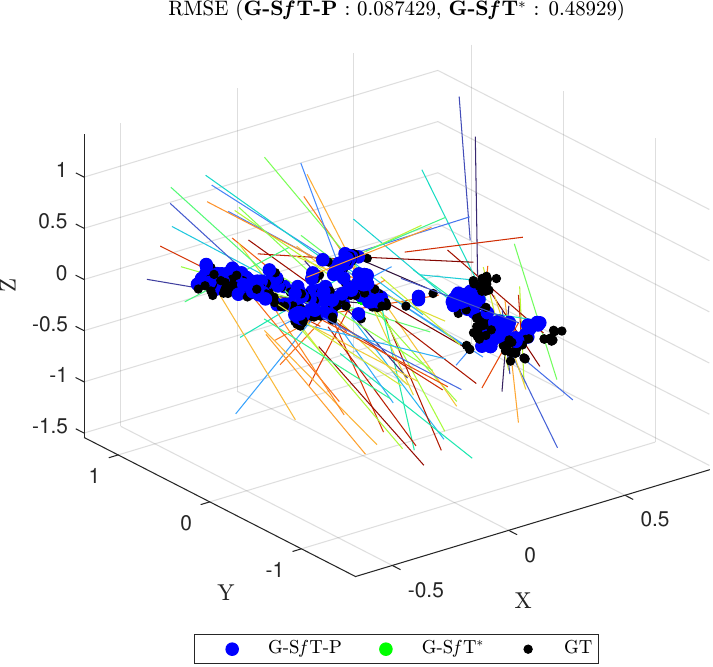}
  \label{fig:q35_mc_4_stp5}
\end{subfigure}%
\begin{subfigure}{0.2\textwidth}
  \centering
  \includegraphics[width=0.95\textwidth]{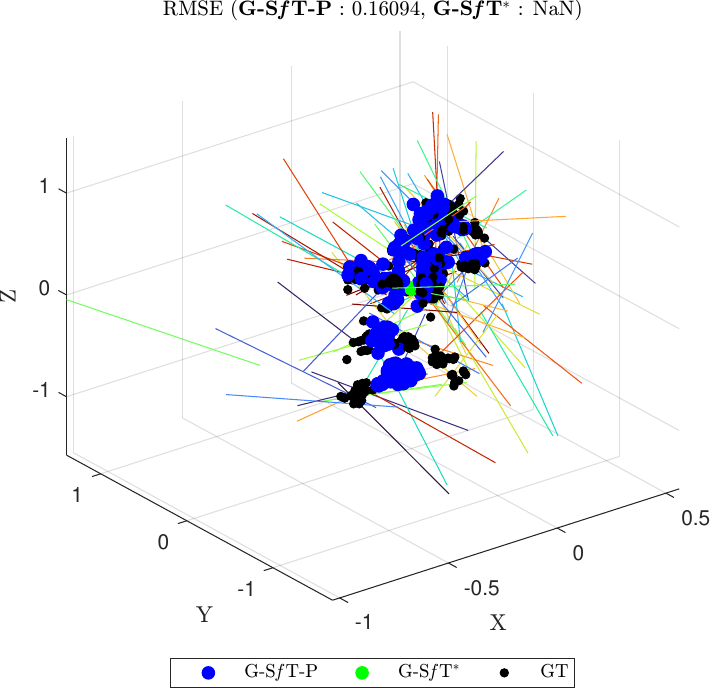}
  \label{fig:q35_mc_5_stp5}
\end{subfigure}%
\caption{Qualitative comparison of results from repeated \y{ns} and \y{nsc} on the generalised camera setup with {\tt subject-35}. The three rows show the configurations 3, 4, and 5, as described in \cref{sec_exp_gsftp}. The five columns are five different, randomly sampled experiments. The black keypoints are 3D \y{gt}, the blue keypoints are reconstruction using \y{nsc} and the green keypoints are one of the reconstruction of \y{ns}, this randomly sampled reconstruction is labelled {\y{ns}$^{\ast}$}. The lines incident on the reconstruction are sightlines, the lines sharing a common origin are of the same colour. The first-row/configuration-3 therefore has four groups of colours, the second-row/configuration-4 has ten groups of colours, the third-row/configuration-5 has one-hundred groups of colours on their respective sight-lines}
\label{fig_subj35_mc_qual}
\end{figure}

\begin{figure}
\centering
\begin{subfigure}{0.5\textwidth}
  \centering
  \includegraphics[width=0.95\textwidth]{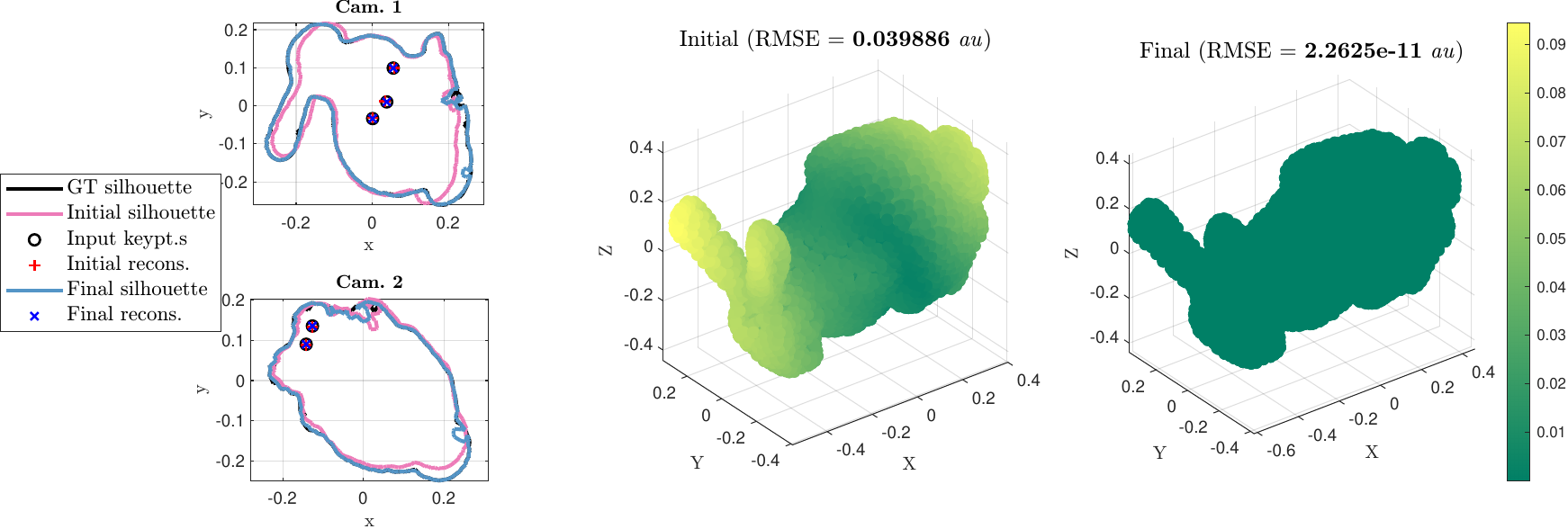}
  \label{fig:b_2_1}
\end{subfigure}%
\begin{subfigure}{0.5\textwidth}
  \centering
  \includegraphics[width=0.95\textwidth]{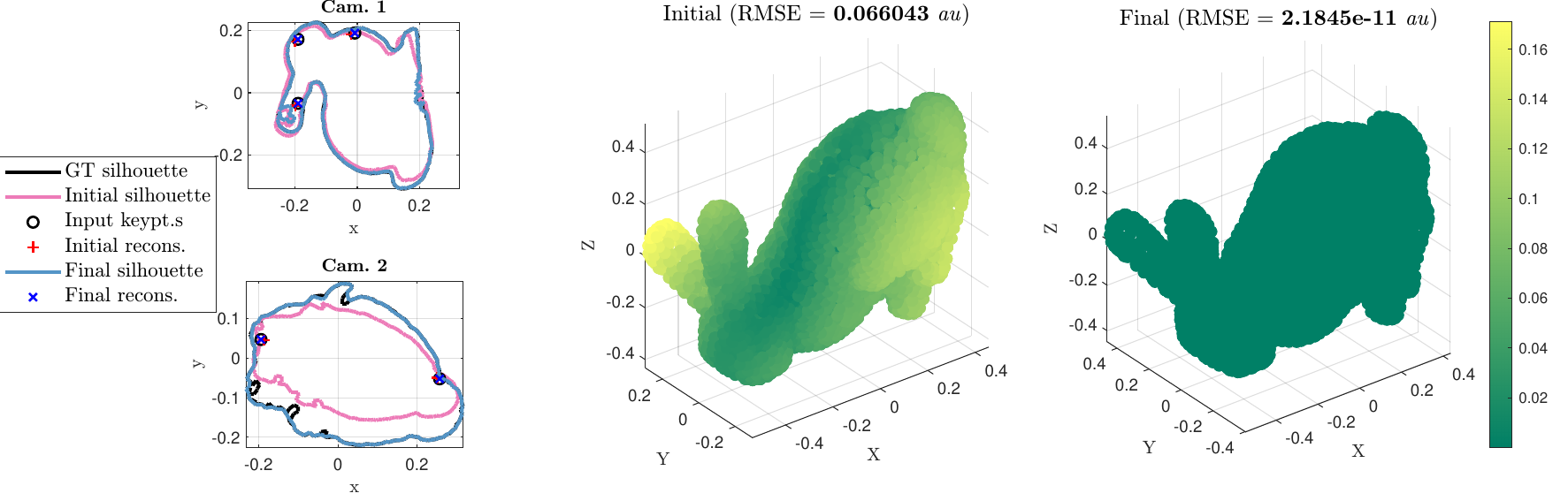}
  \label{fig:b_2_2}
\end{subfigure}%

\begin{subfigure}{0.5\textwidth}
  \centering
  \includegraphics[width=0.95\textwidth]{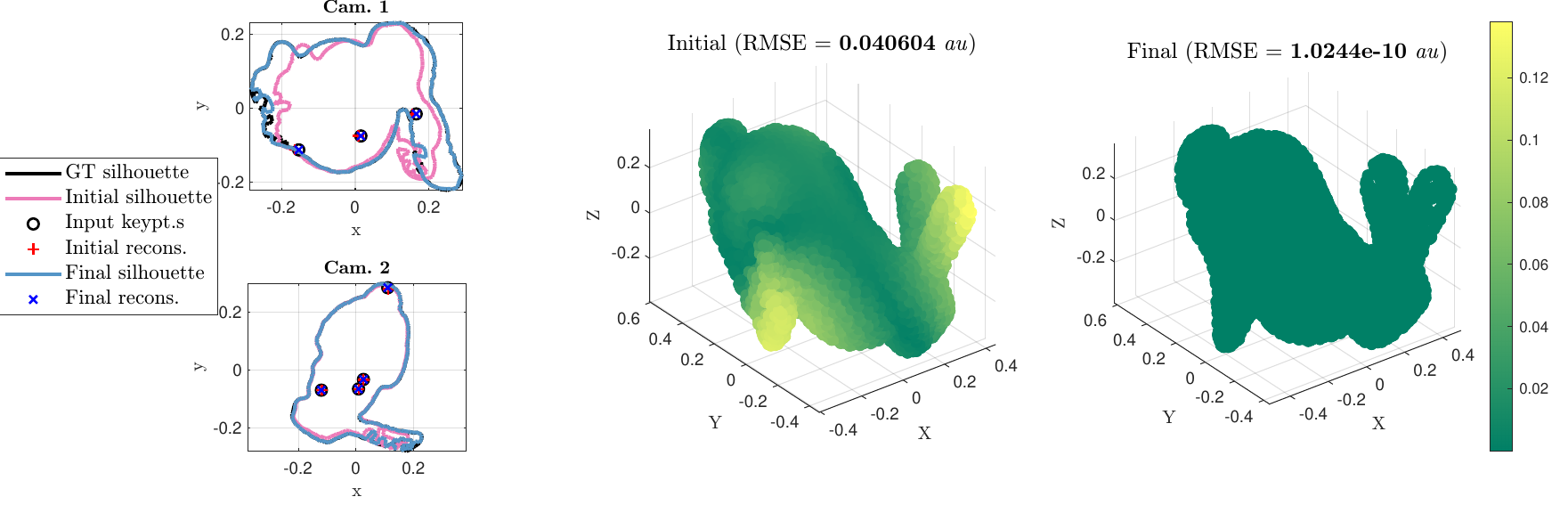}
  \label{fig:b_4_1}
\end{subfigure}%
\begin{subfigure}{0.5\textwidth}
  \centering
  \includegraphics[width=0.95\textwidth]{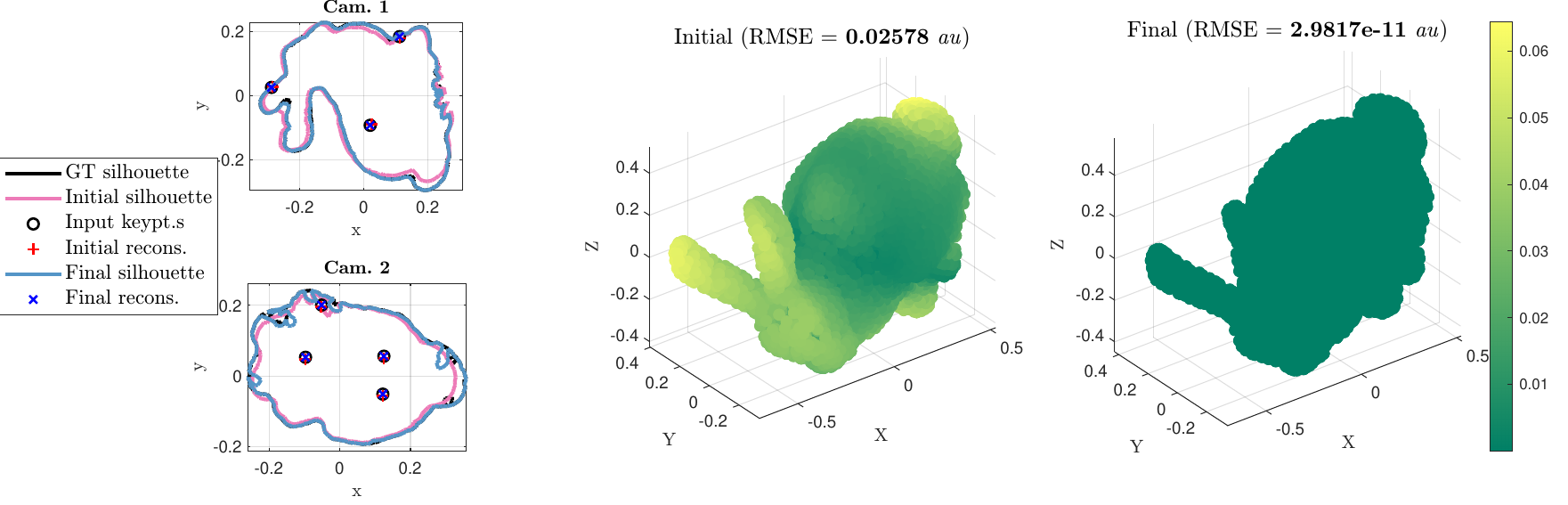}
  \label{fig:b_4_2}
\end{subfigure}%

\begin{subfigure}{0.5\textwidth}
  \centering
  \includegraphics[width=0.95\textwidth]{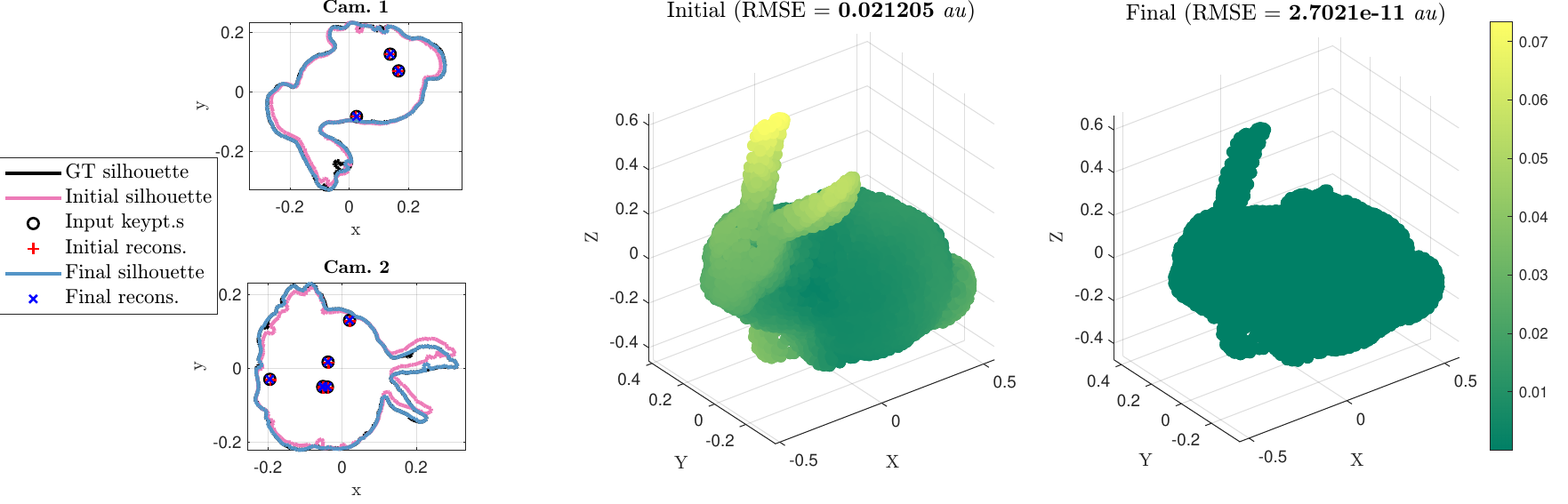}
  \label{fig:b_6_1}
\end{subfigure}%
\begin{subfigure}{0.5\textwidth}
  \centering
  \includegraphics[width=0.95\textwidth]{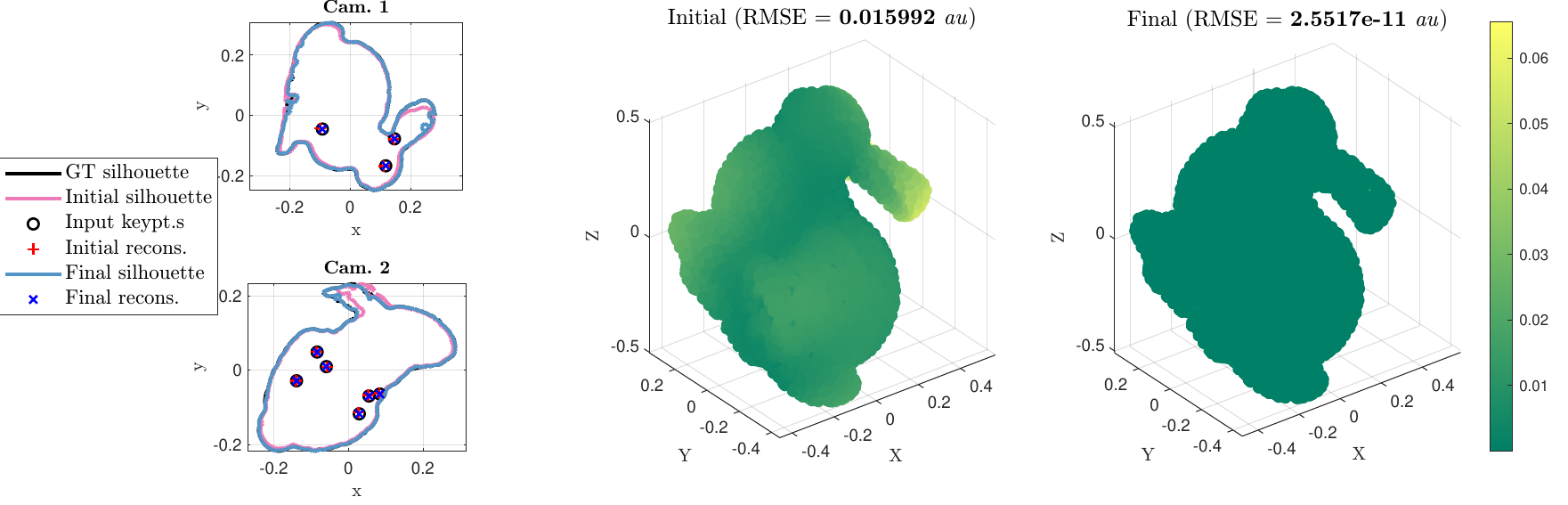}
  \label{fig:b_6_2}
\end{subfigure}%

\caption{Synthetic qualitative {\tt Stanford bunny}}
\label{fig:sil_synth_qual_bunny}
\end{figure}

\begin{figure}
    \centering
\begin{subfigure}{0.5\textwidth}
  \centering
  \includegraphics[width=0.95\textwidth]{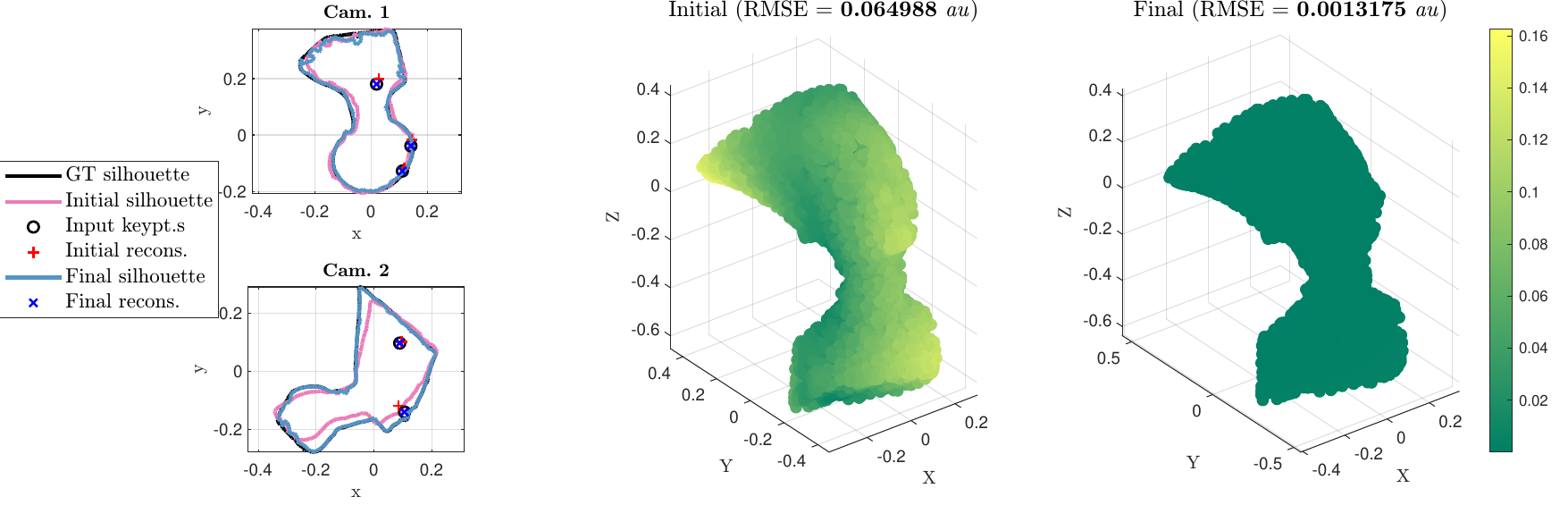}
  \label{fig:n_2_1}
\end{subfigure}%
\begin{subfigure}{0.5\textwidth}
  \centering
  \includegraphics[width=0.95\textwidth]{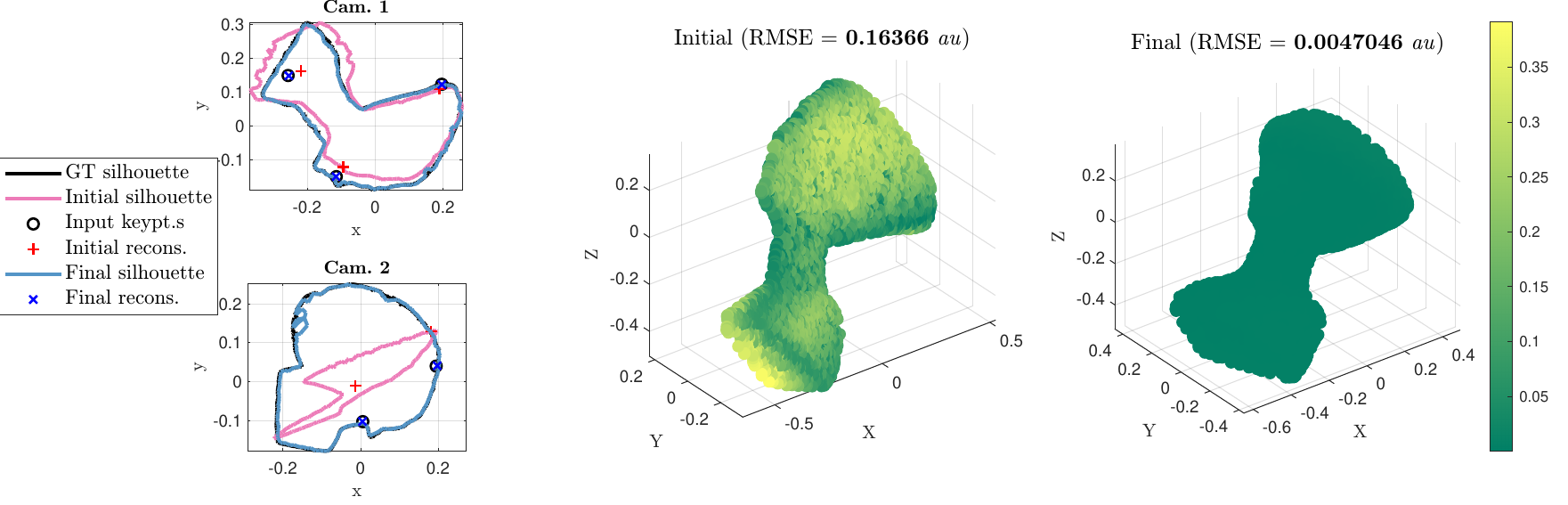}
  \label{fig:n_2_2}
\end{subfigure}%

\begin{subfigure}{0.5\textwidth}
  \centering
  \includegraphics[width=0.95\textwidth]{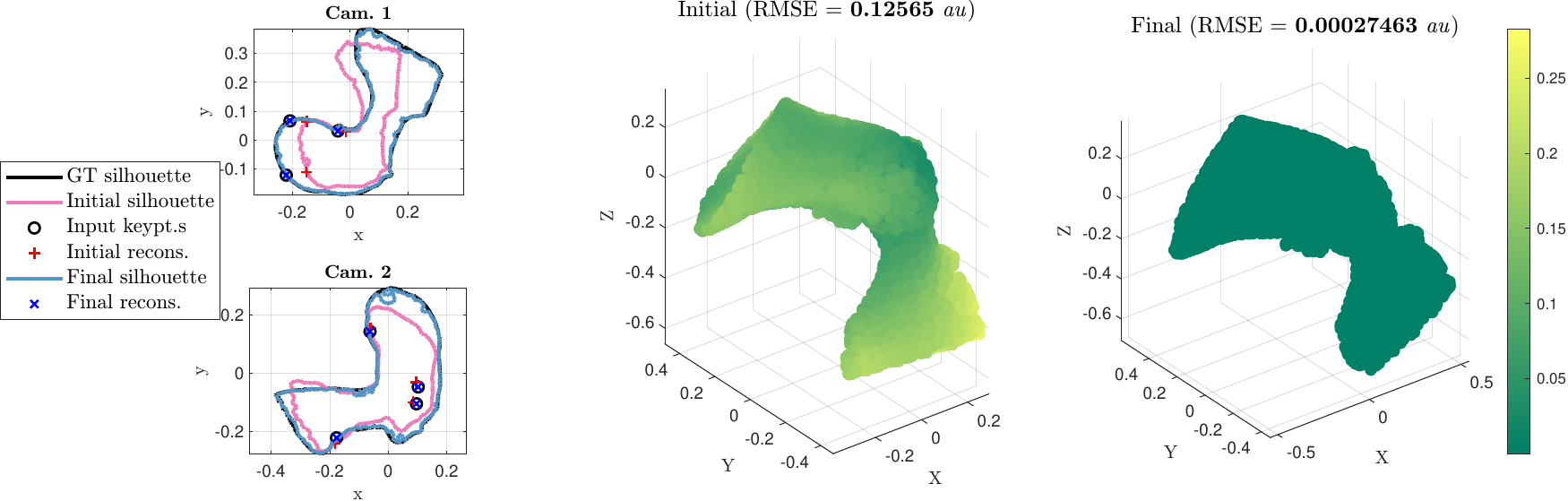}
  \label{fig:n_4_1}
\end{subfigure}%
\begin{subfigure}{0.5\textwidth}
  \centering
  \includegraphics[width=0.95\textwidth]{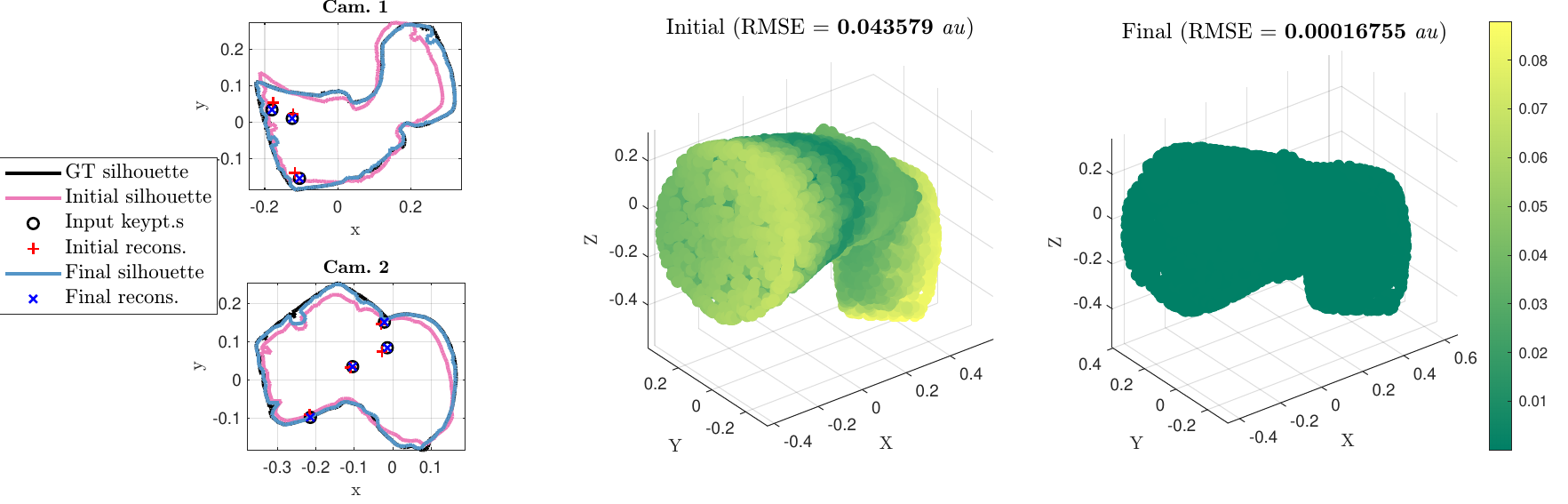}
  \label{fig:n_4_2}
\end{subfigure}%

\begin{subfigure}{0.5\textwidth}
  \centering
  \includegraphics[width=0.95\textwidth]{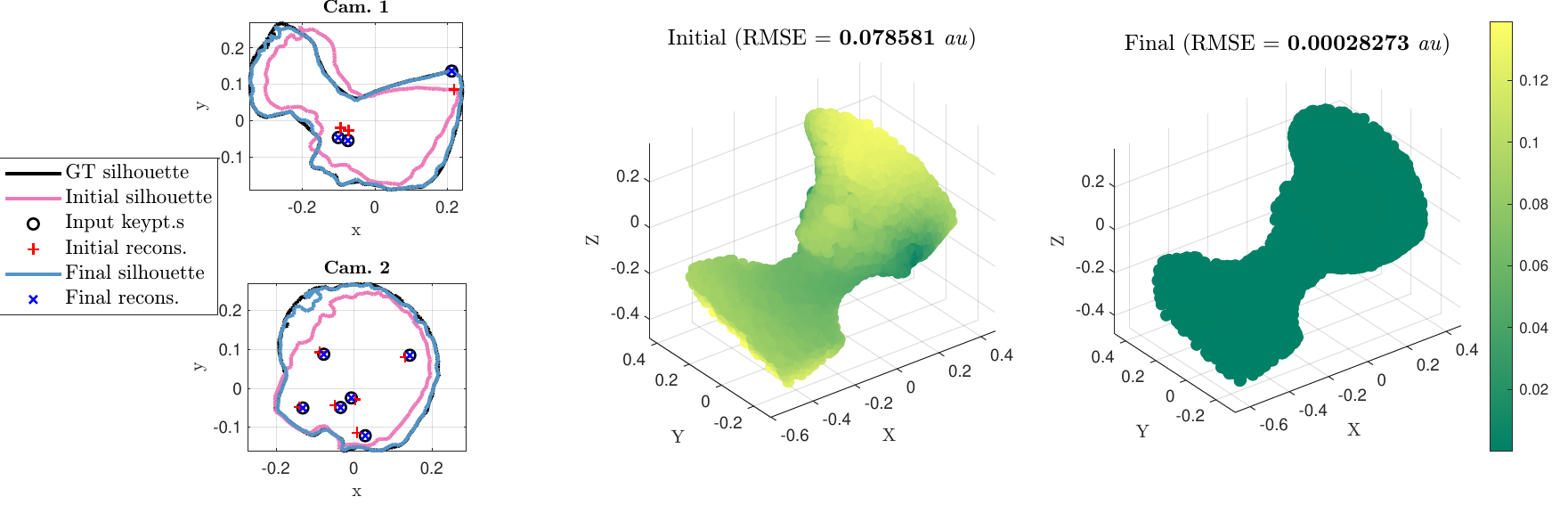}
  \label{fig:n_6_1}
\end{subfigure}%
\begin{subfigure}{0.5\textwidth}
  \centering
  \includegraphics[width=0.95\textwidth]{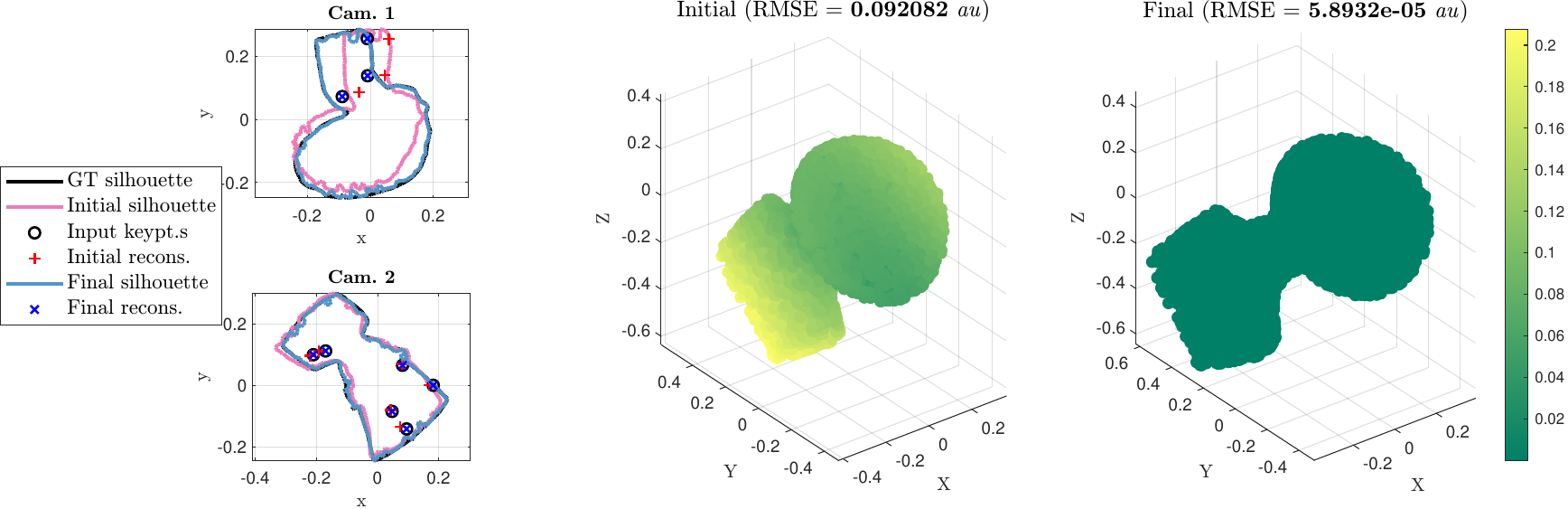}
  \label{fig:n_6_2}
\end{subfigure}%
\caption{Synthetic qualitative {\tt Nefertiti}}
\label{fig:sil_synth_qual_nefertiti}
\end{figure}

\begin{figure}
    \centering
\begin{subfigure}{0.5\textwidth}
  \centering
  \includegraphics[width=0.95\textwidth]{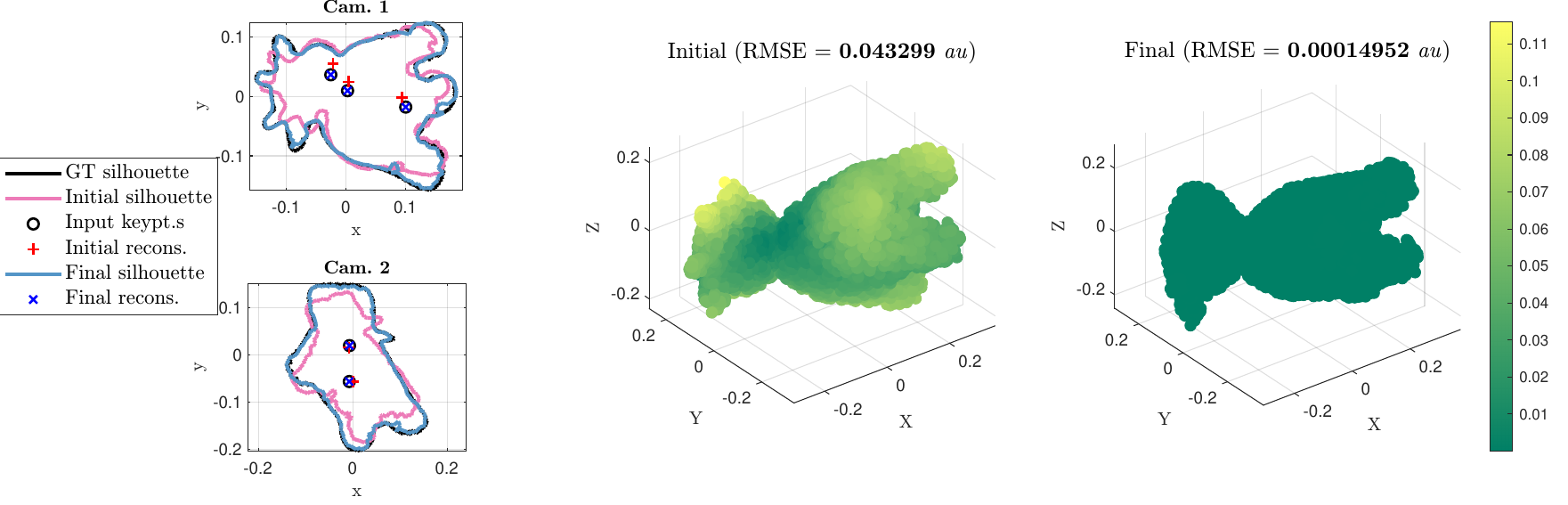}
  \label{fig:s_2_1}
\end{subfigure}%
\begin{subfigure}{0.5\textwidth}
  \centering
  \includegraphics[width=0.95\textwidth]{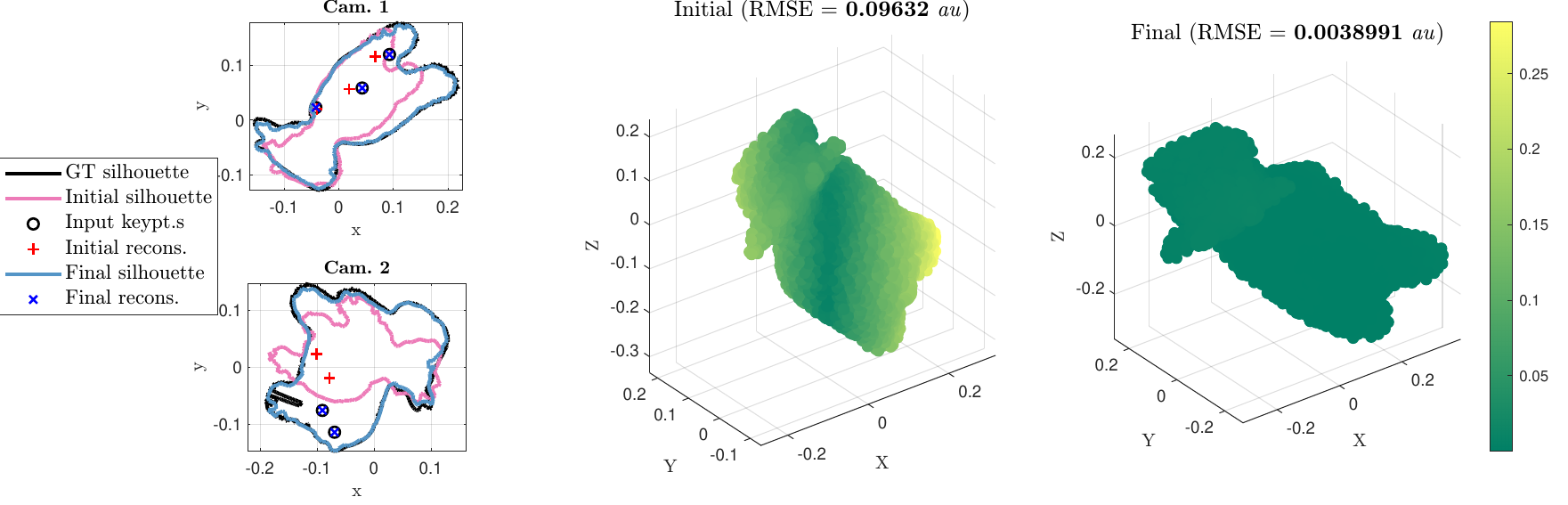}
  \label{fig:s_2_2}
\end{subfigure}%

\begin{subfigure}{0.5\textwidth}
  \centering
  \includegraphics[width=0.95\textwidth]{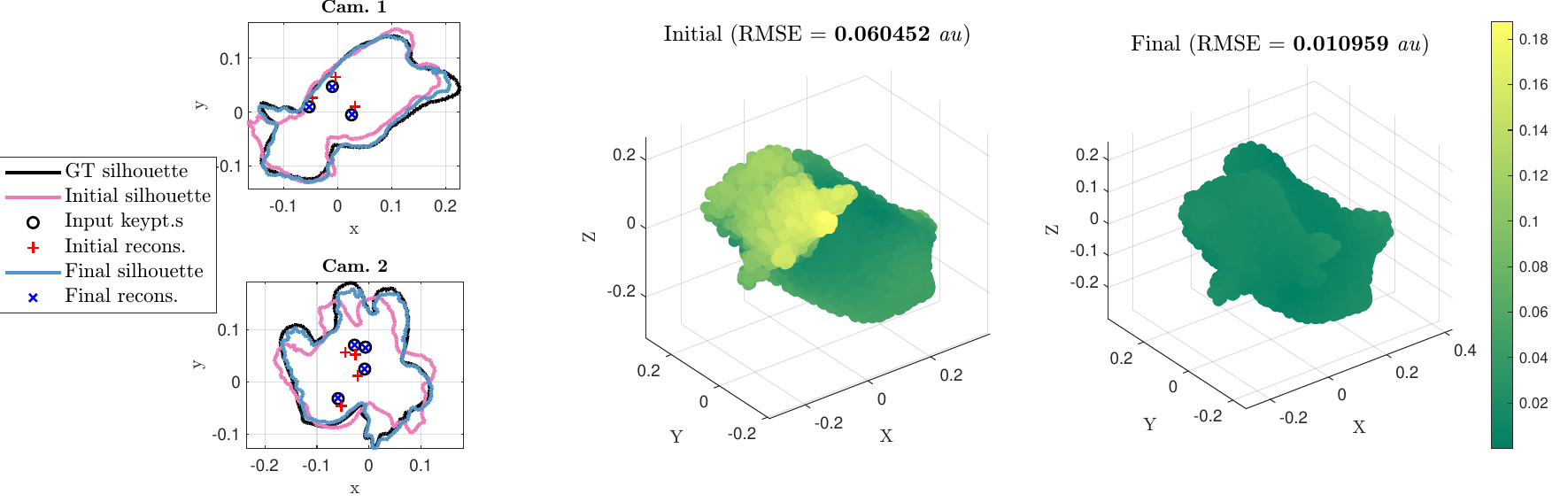}
  \label{fig:s_4_1}
\end{subfigure}%
\begin{subfigure}{0.5\textwidth}
  \centering
  \includegraphics[width=0.95\textwidth]{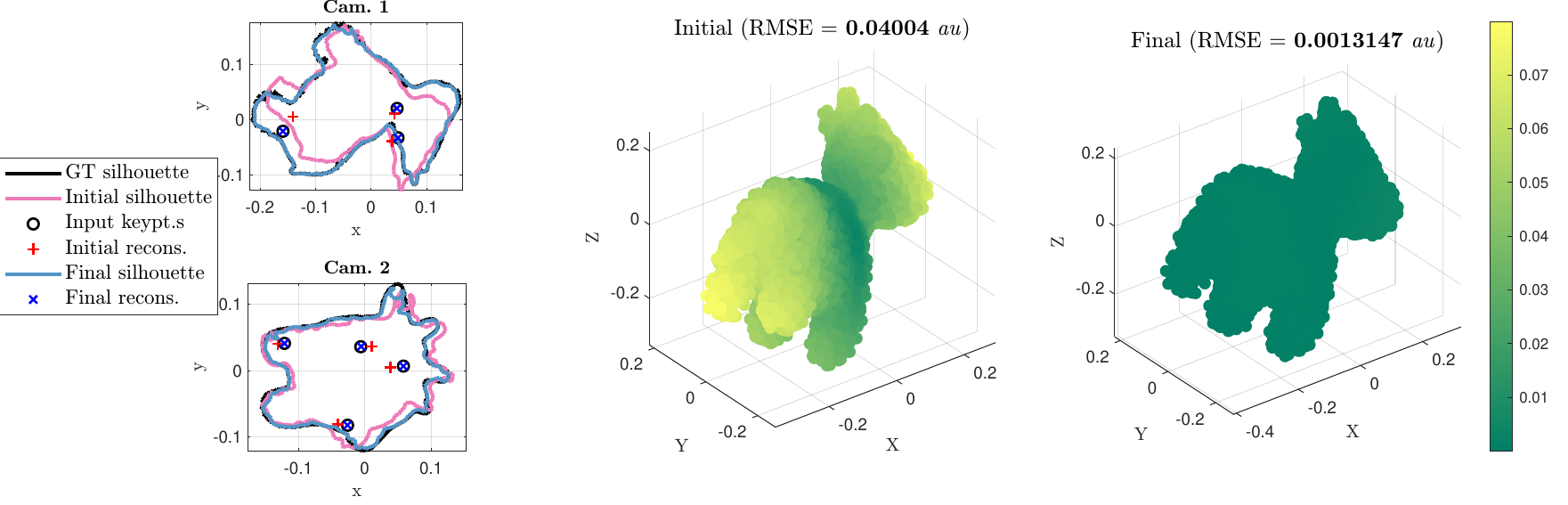}
  \label{fig:s_4_2}
\end{subfigure}%

\begin{subfigure}{0.5\textwidth}
  \centering
  \includegraphics[width=0.95\textwidth]{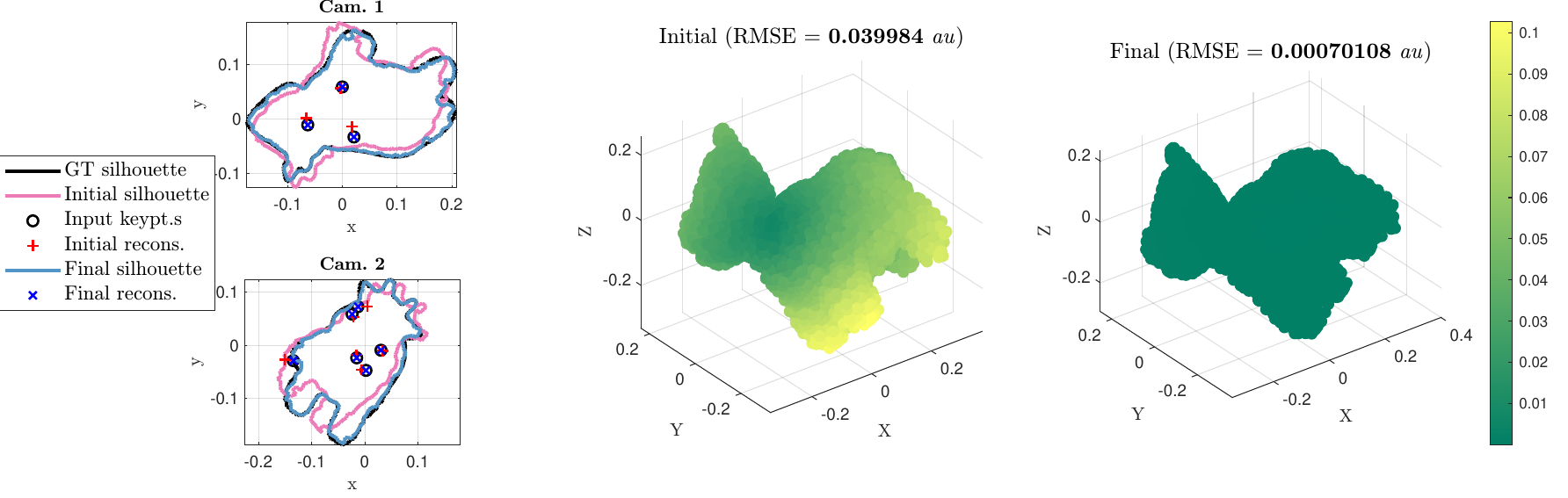}
  \label{fig:s_6_1}
\end{subfigure}%
\begin{subfigure}{0.5\textwidth}
  \centering
  \includegraphics[width=0.95\textwidth]{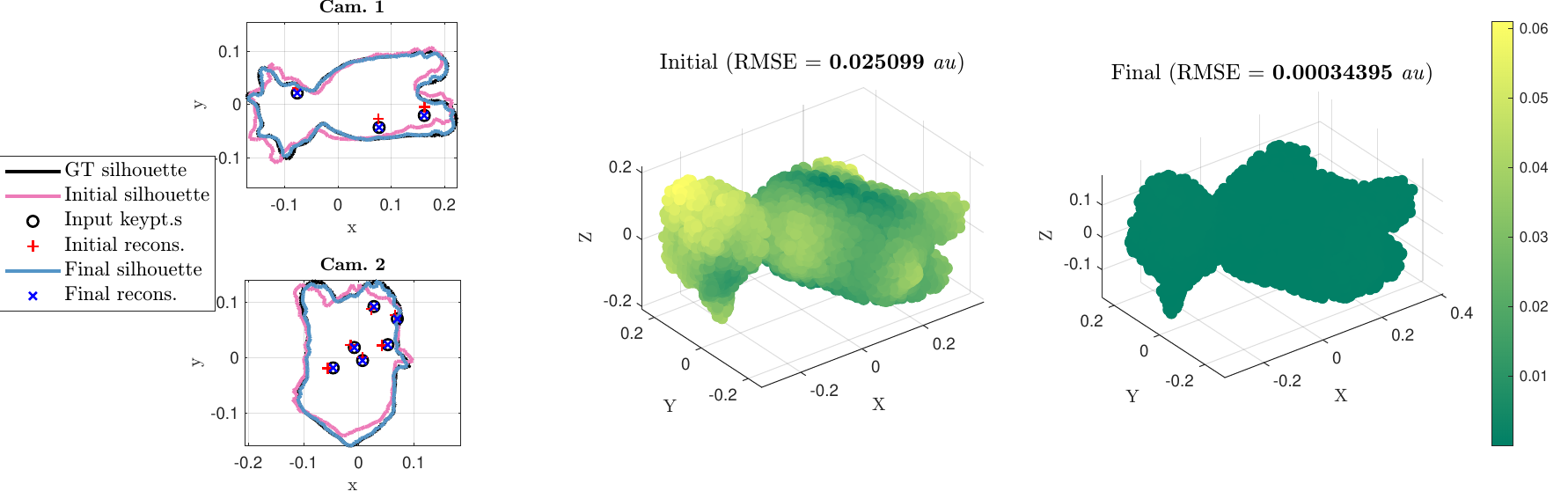}
  \label{fig:s_6_2}
\end{subfigure}%
\caption{Synthetic qualitative {\tt Spot}}
\label{fig:sil_synth_qual_spot}
\end{figure}

\section*{Appendices}
\appendix
We present the appendices below:

\section{Proof of \cref{prop_gauge_fix}}\label{prop_proof}
The reprojection cost, expanded, now writes as:
    \begin{equation}
        \begin{gathered}\label{reproj_cost_exp_2}
            \gamma\bigg(\Pi\Big( \mathbf{R}_x^{\top}\big(\mathbf{R}\mathbf{Q}_j + \mathbf{t} - \mathbf{t}_x)\big)\Big), \mathbf{p}_{x,j'}\bigg) = \| \mathbf{R}_x^{\top}\big(\mathbf{R}\mathbf{Q}_j + \mathbf{t} - \mathbf{t}_x\big) \times \mathfrak{d}_{x,j'}\|_1 \\
            = \| (\mathbf{R}_x^{\top} \mathbf{R}\mathbf{Q}_j \times \mathfrak{d}_{x,j'}) + (\mathbf{R}_x^{\top}\mathbf{t}\times \mathfrak{d}_{x,j'}) - (\mathbf{R}_x^{\top}\mathbf{t}_x\times \mathfrak{d}_{x,j'})\|_1.
        \end{gathered}
    \end{equation}
    \noindent It can be quickly verified that adding spurious rigid transformations like \cref{spurious_T} into \cref{reproj_cost_exp_2} results in:
    \begin{equation}
        \begin{gathered}
            \| (\mathbf{R}_x^{\top} \mathbf{R}' \mathbf{R}\mathbf{Q}_j \times \mathfrak{d}_{x,j'}) + (\mathbf{R}_x^{\top}\mathbf{R}'\mathbf{t}\times \mathfrak{d}_{x,j'}) - (\mathbf{R}_x^{\top}\mathbf{R}'\mathbf{t}_x\times \mathfrak{d}_{x,j'}) + (\mathbf{R}_x^{\top} \mathbf{t}'\times \mathfrak{d}_{x,j'}) \|_1 = \gamma\bigg(\Pi\Big( \mathbf{R}_x^{\top}\big(\mathbf{R}\mathbf{Q}_j + \mathbf{t})\big)\Big), \mathbf{p}_{x,j'}\bigg), \\~\text{iff:} ~ \mathbf{R}' = \mathbbm{1}_3, \mathbf{t}' = \mathbf{0}_{3 \times 1}.
        \end{gathered}
    \end{equation}

\section{Proof of \cref{threorem_1}}\label{theo_2}
 The vector product in the projection cost of \cref{eqn_pt_based_unext} can be distributed to express it as:
    \begin{equation}\label{proj_cost_expanded}
        \begin{gathered}
            \gamma_P\Big(\Pi\big(\mathbf{R}\mathbf{Q}_j + \mathbf{t})\big), \mathbf{p}_{x,j'}\Big) = \| \big(\mathbf{R}\mathbf{Q}_j + \mathbf{t} - \mathbf{C}_x\big) \times \mathbf{R}_x \mathfrak{d}_{x,j'}\|_1 \\ = \| \big(\mathbf{R}\mathbf{Q}_j \times \mathfrak{d}_{x,j'} \big) + \big( \mathbf{t} \times \mathfrak{d}_{x,j'} \big) - \big( \mathbf{C}_x \times \mathbf{R}_x\mathfrak{d}_{x,j'} \big)\|_1 , \quad \forall x \in [1,P], ~(j,j') \in \Omega_x
        \end{gathered}
    \end{equation}
\noindent We show that if we add a spurious rigid transformation $(\mathbf{R}', \mathbf{t}')$ to the object in world coordinates $\mathbf{R}\mathbf{Q}_j + \mathbf{t}$ and the viewpoint centres $\mathbf{C}_x$ in \cref{proj_cost_expanded}, i.e., 
\begin{equation}\label{spurious_T}
    \mathbf{R}\mathbf{Q}_j + \mathbf{t} \leftarrow \mathbf{R}' \mathbf{R} \mathbf{Q}_j + \mathbf{R}' \mathbf{t} + \mathbf{t}', \quad \text{and} \quad \mathbf{C}_x \leftarrow \mathbf{R}' \mathbf{C}_x + \mathbf{t}',
\end{equation}
\noindent the cost remains unchanged, which can be verified by expanding
\begin{equation}
    \begin{gathered}
            \gamma\Big(\Pi\big(\mathbf{R}\mathbf{Q}_j + \mathbf{t})\big), \mathbf{p}_{x,j'}\Big) = \| \big( ( \mathbf{R}' \mathbf{R}\mathbf{Q}_j) \times \mathfrak{d}_{x,j'}  \big) + \big( (\mathbf{R}' \mathbf{t} + \mathbf{t}') \times \mathfrak{d}_{x,j'}  \big) - \big( (\mathbf{R}' \mathbf{C}_x + \mathbf{t}') \times \mathbf{R}_x \mathfrak{d}_{x,j'} \big)  \|_1\\
             = \|  \mathbf{R}' \Big(\big(\mathbf{R}\mathbf{Q}_j + \mathbf{t} - \mathbf{C}_x\big) \times \mathbf{R}_x \mathfrak{d}_{x,j'} \Big)\|_1, \quad \forall \mathbf{R}' \in \mathbb{SO}(3),
    \end{gathered}
\end{equation}
\noindent since rotations are norm preserving. On the other hand, the regularizer $\rho(\bar{\mathbf{P}}_j, \mathbf{Q}_j)$ and the equality constraint in \cref{eqn_pt_based_unext} is invariant to global rigid transformations. We borrow the term \textit{gauge freedom} to denote such transformations that have no measurable effect from similar usage for pose estimation in robotics \cite{zhang2018comparison} and \y{sfm} \cite{mclauchlan1999gauge, triggs2000bundle}.

\section{Proof of \cref{lem_acc}}\label{proof_lem_3_app}

We complete this proof by assuming the case of $P=2$, the proof then generalises to any $P>2$. Note that the overall optimisation in \cref{eqn_pt_based_unext_2} can be split into $P=2$ separate and independent \y{sft} problems if the constraint $\mathbf{w}_{1,i} = \mathbf{w}_{2,i}, ~\forall i$ is dropped, we assume the first optimisation with $\mathbf{w}_1$ is {\tt optim-1} and the second optimisation with $\mathbf{w}_2$ is {\tt optim-2}. $\mathbf{w}_{1,i} = \mathbf{w}_{2,i}, \forall i$ is a crucial constraint ensuring shape consistency between viewpoints. Now suppose {\tt optim-1} has been supplied with $\tau$ correspondences, therefore in the absence of the equality constraints with $\mathbf{w}_2$, the primal of \cref{eqn_pt_based_unext_2} in {\tt optim-1} evaluates to $\nu$. However, if {\tt optim-2} is supplied with $\tau' < \tau$ correspondences, therefore in the absence of the equality constraints with $\mathbf{w}_1$, the primal of \cref{eqn_pt_based_unext_2} in {\tt optim-2} evaluates to $\nu' > \nu$ due to $\nu \propto \frac{1}{\eta}$, i.e., the cost is inversely proportional to accuracy (following the last clause of \cref{lem_acc}). With this background, if the constraint $\mathbf{w}_{1,i} = \mathbf{w}_{2,i}, \forall i$ is introduced back and the optimisation solved globally, the resulting primal cost would be $\nu''$ with $\nu' > \nu'' > \nu$, at least in the final \y{sdp} formulation of \cref{eqn_final_sdp_unknown}; since the primal is convex and sum of convex functions is convex, naturally generalising to the case of $P>2$. Therefore the global accuracy would be some $\eta' < \eta$ if $\nu \propto \frac{1}{\eta}$. 

\section{Verifying \y{emnc}}\label{taueta}
We offer an empirical evidence for the universality of \y{emnc} in correspondence-based \y{sft} by demonstrating its existence in two of the baseline approaches from literature in the methods {\tt CPB14I} and {\tt RS}, i.e., the two best state-of-the-art surface-based \y{sft} methods from our experiments in \cref{sec_addnl_analysis}, excluding our own proposed approaches. 

We utilise synthetic data to avoid contamination from spurious noise and to ensure high accuracy of \y{gt}. We use the synthetic surface generator from \cite{perriollat2013comp} to generate a random, ruled surface with bending angle of 20$^{\circ}$ and randomly roto-translate the resulting shape in space. We do \y{sft} with {\tt CPB14I} and {\tt RS} on the normalised image coordinates from the perspective projection of the resulting shape, the undeformed (flat) template is the flattening of this developable surface, obtained from \cite{perriollat2013comp}. The generated synthetic surface is attributed with 42 keypoints arranged in a $7 \times 6$ grid. From these keypoints, we subsample keypoints in 10 steps, starting from the sparsest 4 keypoints and going up to 40 keypoints. With these reconstructed 4 through 40 keypoints, we interpolate the entire set of 42 keypoints via a \y{tps} warp, e.g.: for 4 keypoints, we initialise the warp going from 4 keypoints in the undeformed template in $\mathbb{R}^2$ to the reconstructed 4 keypoints in $\mathbb{R}^3$ and then interpolate the remaining keypoints from this initialised warp. If \cref{assum_basic} holds, the final reconstruction accuracy, evaluated on all 42 keypoints, should increase monotonically with increasing sample size. We repeat each subsample step over 35 repeats, to discount the effect of accuracy variability due to the varying object pose. The results are shown in \cref{fig:taueta}.

\Cref{fig:te_1} shows the median of the reconstruction error in terms of \y{rms}. To make things clearer, \cref{fig:te_2} shows the median of the accuracy in terms of inverse \y{rms}. Despite significant outliers, especially from {\tt RS}, there is a clear and monotonic increase in accuracy with increasing correspondences, thus bolstering \cref{assum_basic}.

\begin{figure}[b]
\centering
\begin{subfigure}{0.4\textwidth}
  \centering
  \includegraphics[width=0.7\textwidth]{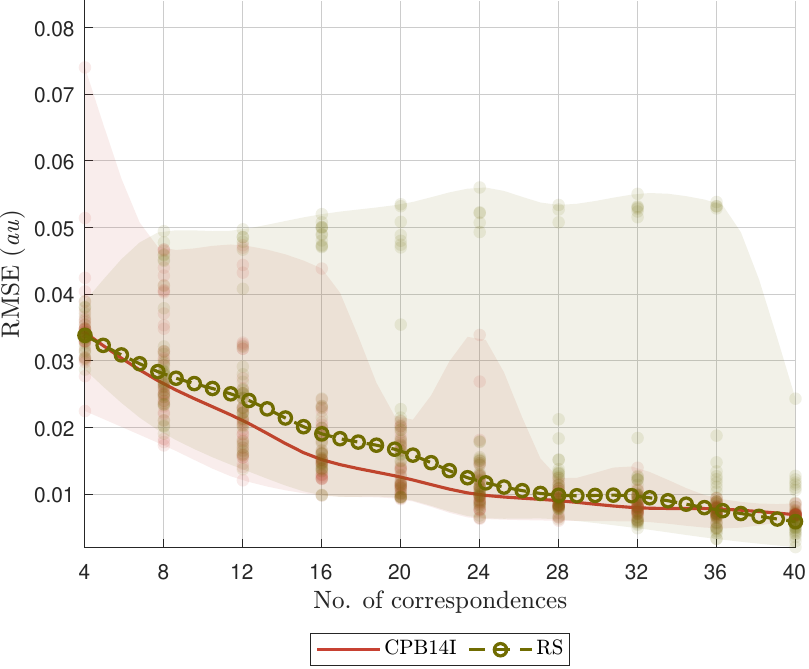}
  \caption{\y{rms} with increasing correspondences}
  \label{fig:te_1}
\end{subfigure}%
\begin{subfigure}{0.4\textwidth}
  \centering
  \includegraphics[width=0.65\textwidth]{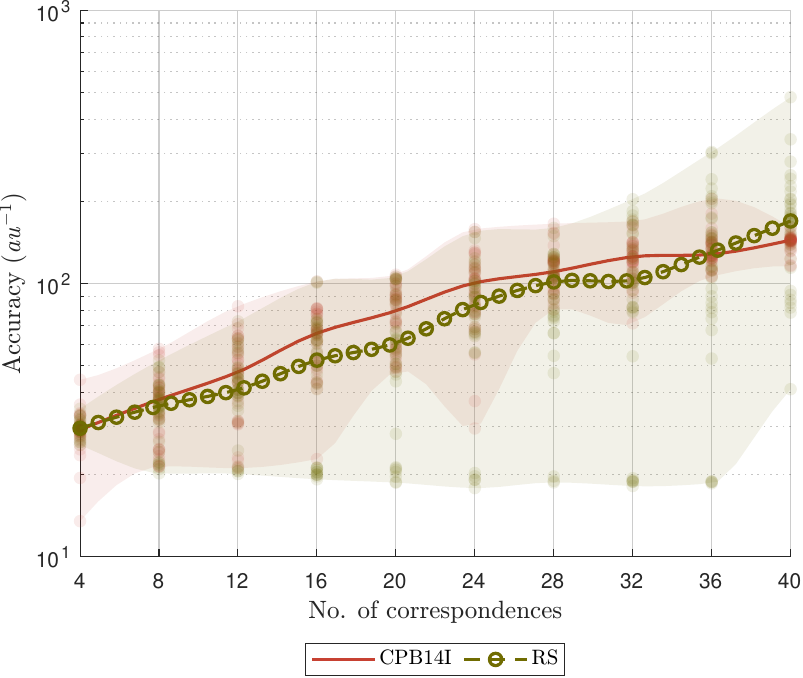}
  \caption{Accuracy with increasing correspondences}
  \label{fig:te_2}
\end{subfigure}%
\caption{Effect of increasing correspondences on global shape reconstruction error and accuracy}
\label{fig:taueta}
\end{figure}

\section{Sampling functions}\label{appen_sample}
We describe the various sampling functions used in our \y{sdp} formulations. First, we describe $\omega_a$, $\{\omega_{c,r}\}$, and $\{\omega_{d,r}\}$, since they are rather intuitive:
\begin{equation}
    \begin{gathered}
        \omega_a(\Delta_e) = \mathrm{tr}\Big( \mathrm{diag} \big( (\mathbf{0}_{1 \times 10}, \mathbf{1}_{1 \times M})^{\top} \big)\Delta_e\Big)
    \end{gathered}
\end{equation}

\begin{equation}
    \begin{gathered}
        \omega_{c,r}(\Delta_e) = \mathrm{tr}\big(\Delta_{e,[3r-1:3r+1],[3r-1:3r+1]} \big), \quad \forall r \in [1,3]
    \end{gathered}
\end{equation}

\begin{equation}
    \begin{gathered}
        \omega_{d,1}(\Delta_e) = \mathrm{tr}\big(\Delta_{e,[2:4],[5:7]} \big), \quad \omega_{d,2}(\Delta_e) = \mathrm{tr}\big(\Delta_{e,[2:4],[8:10]} \big), \quad \omega_{d,3}(\Delta_e) = \mathrm{tr}\big(\Delta_{e,[5:7],[8:10]} \big).
    \end{gathered}
\end{equation}

\noindent $\omega_b\Big(\Delta_e, \mathbf{p}_{x,j'}, \bar{\mathbf{P}}_j, \{\mathbf{P}_{i,j}\}, \mathbf{t}\Big)$ is derived by the following steps. First, the rotated mean point and the summed, rotated basis shapes are:

\begin{equation}
    \begin{gathered}
        \bar{\mathbf{P}}_{j}^{\perp} = \Big(\Delta_{e, [1:1], [2:10]} \big( \mathbbm{1}_{3} \otimes \bar{\mathbf{P}}_{j}\big)\Big)^{\top}, \quad \text{and} \quad \mathbf{P}_{j}^{\perp} = \sum_{i=1}^M \Big(\Delta_{e, [1:1], [2:10]} \big( \mathbbm{1}_{3} \otimes \bar{\mathbf{P}}_{i,j}\big)\Big)^{\top}
    \end{gathered}
\end{equation}

\noindent respectively. The final translated point is:
\begin{equation}
    \mathbf{P}_{j}^{\uparrow} = \big(\bar{\mathbf{P}}_{j}^{\perp} + \mathbf{P}_{j}^{\perp}\big) + \mathbf{t},
\end{equation}
\noindent and therefore, the final, linearised cost is:
\begin{equation}
    \begin{gathered}
        \omega_b\Big(\Delta_e, \mathbf{p}_{x,j'}, \bar{\mathbf{P}}_j, \{\mathbf{P}_{i,j}\}, \mathbf{t}\Big) = \big(\mathbf{P}_{j}^{\uparrow} - \mathbf{C}_x\big) \times \mathfrak{d}_{x,j'},
    \end{gathered}
\end{equation}

\noindent where $\mathfrak{d}_{x,j'} \in \mathbb{S}^2$ is the direction component of Pl\"{u}cker vectors corresponding to 2D image keypoint $\mathbf{p}_{x,j'}$ and $\mathbf{C}_x$ is the viewpoint centre.

{\color{revCol1}
\section{Challenges with physics-based deformation models}\label{app_challenge_phy}
 It is noteworthy that deformation models other than \y{ssm}, including physics-based paradigms such as isometry, conformality, equiareality, and volume-preservation, have not been integrated with camera pose estimation within a convex programming framework. The underlying challenge can be illustrated by analyzing the transformed shape $\mathbf{R} \mathbf{Q} + \mathbf{t}$. When $\mathbf{Q}$ is parameterized using physics-based models, such as the zeroth-order positional coordinates in $\mathbb{R}^3$ or depth values along sightlines, the emergence of multivariate monomials in $\mathbf{R} \mathbf{Q}$ grows exponentially with the number of matched keypoints, making the problem computationally challenging enough to render it impractical for realistic cases. For higher-order methods employing continuous surface representations, deformation typically relies on `control handles.' Realistic deformable surfaces often necessitate tens or even hundreds of such handles, leading to excessively large multivariate terms in $\mathbf{R} \mathbf{Q}$. The only common method for linearising such multivariate polynomials is the Lasserre's hierarchy of low-rank relaxations \cite{lasserre2009moments} which famously scales very poorly. Conversely, \y{ssm} represents one of the lowest-dimensional methodologies for deformation representation, thereby making it particularly advantageous for simultaneous optimization of pose and shape. The convex integration of physics-based models to achieve joint shape and pose estimation under perspective projection for deforming objects remains an yet unresolved computational challenge.}



\bibliographystyle{abbrvnat}
\bibliography{references}
\end{document}